\documentclass[aos]{imsart}
\usepackage{newtxtext}
\usepackage{fix-cm}

%% Packages
\RequirePackage{amsthm,amsmath,amsfonts,amssymb}
\RequirePackage[colorlinks, linkcolor=blue, citecolor=blue, urlcolor=blue]{hyperref}

\usepackage[authoryear]{natbib}

\bibliographystyle{apalike}
\RequirePackage{graphicx}
\usepackage{float}

\usepackage{pgfplots}
\usepackage{bbm}
\usepackage{latexsym}
\usepackage{caption}
\usepackage[margin=20pt]{subcaption}
\usepackage{hyperref, enumerate, xcolor}

\usepackage{tikz}
\usetikzlibrary{positioning}
\usepgflibrary{plotmarks} % LATEX and plain TEX and pure pgf
\usepgflibrary[plotmarks] % ConTEXt and pure pgf
\usetikzlibrary{plotmarks} % LATEX and plain TEX when using TikZ
\usetikzlibrary[plotmarks] % ConTEXt when using TikZ

\startlocaldefs

\theoremstyle{plain}

\newtheorem{theorem}{Theorem}
\newtheorem{lemma}{Lemma}
\newtheorem{proposition}{Proposition}
\newtheorem{corollary}{Corollary}

\theoremstyle{remark}
\newtheorem{definition}{Definition}
\newtheorem{example}{Example}
%% Please put your definitions here:        %%
\newtheorem{assumption}{Assumption}
\newtheorem{remark}{Remark}
\newtheorem{condition}{Condition}

\newcommand{\pr}{\mathbb{P}}
\newcommand{\E}{\mathbb{E}}

\newcommand\y[1]{%
  \colorbox{green!10}{$#1$}%
}

{%
  \ifodd\value{page}
    \leftmark % Title on odd pages
  \else
    \rightmark % Author on even pages
  \fi
}
\renewcommand{\leftmark}{J. FAN, C. GAO AND J. M. KLUSOWSKI}
\renewcommand{\rightmark}{ROBUST TRANSFER LEARNING}

\usepackage{color}
\usepackage{listings}
%%%%%%%%%%%%%%%%%%%%%%%%%%%%%%%%%%%%%%%%%%%%%%

\DeclareMathOperator*{\argmin}{arg\,min}

\endlocaldefs
\pgfplotsset{compat=1.18}
\begin{document}

\begin{frontmatter}
\title{Robust Transfer Learning with Unreliable Source Data}

%\runtitle{Robust Transfer Learning}
%\thankstext{T1}{A sample additional note to the title.}

\begin{aug}
\author{\fnms{Jianqing}~\snm{Fan}\ead[label=1]{jqfan@princeton.edu}},
\author{\fnms{Cheng}~\snm{Gao}\ead[label=2]{chenggao@princeton.edu}}
\and
\author{\fnms{Jason}~\snm{M. Klusowski}\ead[label=3]{jason.klusowski@princeton.edu}}

\address{Department of Operations Research and Financial Engineering, Princeton University \\
\printead[presep={}]{1,2,3}}
\end{aug}

\begin{abstract}
This paper addresses challenges in robust transfer learning stemming from ambiguity in Bayes classifiers and weak transferable signals between the target and source distribution. We introduce a novel quantity called the ``ambiguity level'' that measures the discrepancy between the target and source regression functions, propose a simple transfer learning procedure, and establish a general theorem that shows how this new quantity is related to the transferability of learning in terms of risk improvements.   Our proposed ``Transfer Around Boundary'' (TAB) method, with a threshold balancing the performance of target and source data, is shown to be both efficient and robust, improving classification while avoiding negative transfer. Moreover, we demonstrate the effectiveness of the TAB model on non-parametric classification and logistic regression tasks, achieving upper bounds which are optimal up to logarithmic factors. Simulation studies lend further support to the effectiveness of TAB. We also provide simple approaches to bound the excess misclassification error without the need for specialized knowledge in transfer learning.
\end{abstract}

\begin{keyword}[class=MSC]
\kwd[Primary ]{62F30}
\kwd[; Secondary ]{62C20, 62G35}
\end{keyword}

\begin{keyword}
\kwd{Transfer learning}
\kwd{nearest neighbors}
\kwd{logistic regression}
\kwd{classification}
\kwd{robust statistics}
\kwd{minimax rate}
\kwd{domain adaptation}
\end{keyword}

\end{frontmatter}
%%%%%%%%%%%%%%%%%%%%%%%%%%%%%%%%%%%%%%%%%%%%%%
%% Please use \tableofcontents for articles %%
%% with 50 pages and more                   %%
%%%%%%%%%%%%%%%%%%%%%%%%%%%%%%%%%%%%%%%%%%%%%%
%\tableofcontents

\section{Introduction}
\label{sec:intro}
Previous experiences can offer valuable information for learning new tasks. Human learners often transfer their existing knowledge gained from previous tasks to new and related ones. \emph{Transfer learning} refers to statistical learning tasks where a portion of the training data is generated from a similar but non-identical distribution to the data distribution for which we seek to make inferences about. The objective is then to transfer knowledge from such source data to improve learning in the related target task. Such problems, where there is a divergence between the data-generating distributions, arise in many real application problems, including computer vision \citep{li2020TLinCV,tzeng2017TLinCV}, natural language processing \citep{ruder-etal-2019-transfer,wang2015TLinNLP}, speech recognition \citep{huang2013TLinSpeechRecognition}, and genre classification \citep{choi2017TransferLF}. See \cite{storkey2009tlSurvey}, \cite{pan2009tlSurvey}, and \cite{weiss2016tlSurvey} for an overview. Also, similar problems have been studied by both statisticians and many other communities under different names, including label noise \citep{frenay2014labelNoise,scott2013labelNoise,cannings2020labelNoise,blanchard2017labelNoise,reeves2019labelNoise, scottZhang2019labelNoise}, domain adaptation \citep{Scott2018AGN, bendavid2010domainAdaptation,bendavid2010domainAdaptation2,mansour2009domainAdaptation}, multi-task learning \citep{caruana1997multitaskLearning,maurer2016multitaskLearning}, or distributional robustness \citep{sinha2018distributionalRobustness,christiansen2020distributionalRobustness}.

We focus here on the transfer learning setting in the context of binary classification since it is not only fundamental in statistical learning and has been extensively investigated in diverse contexts, but also because it provides a framework that is particularly conducive to algorithms that seek to exploit relationships between the target and source distributions. For transfer learning theory of linear regression models, see \cite{chen2013tlRegression,bastani2020tlRegression} under the setting of finite covariate dimensions or \cite{gross2016dataSharedLasso, ollier2017dataSharedLasso,ttcai2020highDimensional} under the high-dimensional regime with lasso-based penalties. See also \cite{tian2022tlGLMs} for generalized linear models and \cite{ttcai2022nonParametricRegression} for non-parametric regression.

To set up the framework, suppose that a labeled data sample (relatively \emph{small} in size, typically) is drawn from the $Q$, the target distribution we wish to make statistical inferences about. Also, let $P$ be the source distribution from which we wish to transfer knowledge. We suppose a labeled data sample (relatively \emph{large} in size, typically) is drawn from $P$. The corresponding random pairs of $Q$ and $P$ distributions are denoted by $(X,Y)$ and $(X^P,Y^P)$ on $\mathbb{R}^d\times \{0,1\}$ respectively. Let $\eta^Q,\eta^P:\mathbb{R}^d\rightarrow[0,1]$ denote the target and source regression functions, i.e.
$$\eta^Q(x)=Q(Y=1 \mid X=x)\text{ and }\eta^P(x)=P(Y^P=1 \mid X^P=x).$$
The key question is how much information or knowledge can be transferred from $P$ to $Q$ given observations from both distributions. Note that the Bayes classifier $f^*_Q(\cdot)=\mathbf{1}\{\eta^Q(\cdot)\geq \frac{1}{2}\}$ minimizes the misclassification rate $Q(Y\neq f(X))$ over all classifiers. Therefore, we define the excess risk of any empirical classifier $\hat{f}$ as $$\mathcal{E}_Q(\hat{f})=Q(Y\neq \hat{f}(X))-Q(Y\neq f^*_Q(X)).$$ 
The aforementioned key question leads us to the task of constructing an empirical classifier that accelerates the convergence of excess risk to zero  by utilizing labeled data samples from \emph{both} $Q$ and $P$.

\subsection{Related Literature}
Recent research has aimed to bridge the gap between limited theoretical understanding and significant practical achievements in transfer classification learning. To set up the problem, it is necessary to make some assumptions about the similarity between the target and source distributions, which are both useful for theory and practical implementation. Various approaches have been proposed and explored in the literature to measure this similarity, including divergence bounds, covariate shift, and label shift. Some methods focus on test error bounds that rely on measures of discrepancy between $Q$ and $P$, such as modified total-variation or R\'{e}nyi divergence, between the target and source distributions \citep{bendavid2010domainAdaptation,bendavid2010domainAdaptation2,mansour2009domainAdaptation,mansour2009renyiDivergence,germain2013domainAdaptation,cortes2019domainAdaptation}. This line of work produces distribution-free risk rates, primarily expressed in terms of $n_P$ alone, but the obtained rates do not converge to zero with increasing sample size. In other words, these frameworks cannot produce a faster convergence of excess risk if their proposed divergences are non-negligible. Nonetheless, consistent classification is proved achievable regardless of a non-negligible divergence even when $n_Q=0$, provided that certain additional structures on the target and source distributions are present \citep{bendavid2010domainAdaptation2}.

Two common additional structures include covariate shift and label shift (or posterior shift). Covariate shift \citep{gretton2008covariateShift,candela2009covariateShift,kpotufe2018covariateShift} considers scenarios where the conditional distributions of the response given the covariates are identical across $Q$ and $P$ or  $\eta^Q=\eta^P$, but marginal distributions of covariates are different. Label shift, on the other hand, assumes an identical or similar target and source marginal distributions $Q_X$ and $P_X$, but the conditional probabilities $\eta^Q$ and $\eta^P$ differ.

Most previous work in label shift can be divided into two branches. On one hand, some frameworks do not require identical Bayes classifiers, i.e., $(\eta^P-1/2)(\eta^Q-1/2)\geq 0$, but impose very specific relations between $\eta^Q$ and $\eta^P$, such as the literature on \emph{label noise} \citep{frenay2014labelNoise,scott2013labelNoise,cannings2020labelNoise,blanchard2017labelNoise,reeves2019labelNoise, nagarajan2018labelNoise, scottZhang2019labelNoise}. This literature typically considers the labels of the source domain as ``corrupted'' in a certain way compared to the true labels in the target distribution, with the corrupted labels usually coupled with the true labels given the same feature data $X$. For instance, a common assumption \citep{reeves2019labelNoise, nagarajan2018labelNoise} is that the $Y^P \mid X^P=x$ is equal to $Y \mid X=x$ up to a constant probability of label flipping, i.e., in our terminology
$$P(Y^P=1 \mid Y=0,X^P=X=x)=\pi_0\ \text{ and }\ P(Y^P=0 \mid Y=1,X^P=X=x)=\pi_1,$$
for some constants $\pi_0,\pi_1\in(0,1)$. Under this setting, $\eta^P=(1-\pi_0-\pi_1)\eta^Q+\pi_0$, which is linear in $\eta^Q$. Given knowledge of the specific form of $\eta^P$, it is feasible to modify learning algorithms to efficiently infer the target Bayes classifier. In another special type of label shift problem, \cite{maity2022labelShift} assumes that $P_{X\mid Y}=Q_{X\mid Y}$, which is convenient for estimating the joint distribution of $(X, Y)$. However, all their proposed estimators are tailored to fit specific assumptions and may not be applicable to more general relations between $\eta^Q$ and $\eta^P$. On the other hand, recent work such as \cite{ttcai2020classification} and \cite{hanneke2019onTheTargetValue} have introduced more general label shift settings that impose relatively mild and general conditions on the relation between $\eta^P$ and $\eta^Q$, in addition to assuming identical Bayes classifiers. \cite{ttcai2020classification} requires a lower-bounded signal strength of $\eta^P$ relative to $\eta^Q$, besides the assumption of identical Bayes classifiers, i.e.,
$$
\Big(\eta^P-\frac{1}{2}\Big) \Big(\eta^Q-\frac{1}{2}\Big) > 0, \qquad
\Big|\eta^P-\frac{1}{2}\Big|\geq C_\gamma\Big|\eta^Q-\frac{1}{2}\Big|^\gamma,$$ for some positive $\gamma$ and $C_\gamma$, and derive a faster risk convergence rate with the transfer exponent $\gamma$. 

The aforementioned approaches have limitations in that they rely on specific and often untestable assumptions. More importantly, they may not be effective in situations where both ambiguities of the source data hold, i.e., there are no strong relations between $\eta^Q$ and $\eta^P$ and discrepancies between the Bayes classifiers of the target and source domains still exist. The only other work of which we are aware that allows such general ambiguous source data is \cite{reeve2021adpative}. \cite{reeve2021adpative} assumes that $\eta^P$ can be well approximated by a set of regression functions $g_1(\eta^Q),\dots,g_{L^*}(\eta^Q)$ that are no less informative than a linear transformation of $\eta^Q$.

\subsection{Main Contribution} In contrast to existing work, this paper considers a scenario where the Bayes classifiers can arbitrarily differ without imposing further conditions on the source and target distributions. We now introduce an important quantity for capturing the relative information transferred from the source data, which we refer to as the signal strength:

$$
  s(x):=
\begin{cases}
 |\eta^P(x)-\frac{1}{2}|,\quad &\mathrm{sgn}\left(\eta^Q(x)-\frac{1}{2}\right)\times(\eta^P(x)-\frac{1}{2})\geq 0,\\
0,\quad &\text{otherwise.}   
\end{cases}
$$
Our main assumption involves measuring the ambiguity level of the source data based on the expected signal strength  around the classification boundary $\{x: |\eta^Q(x)-\frac{1}{2}|\leq z\}$ for small $z$:
$$ 
    \E_{(X,Y)\sim Q}\left[\Big|\eta^Q(X)-\frac{1}{2}\Big|\mathbf{1}\left\{s(X)\leq C_\gamma\Big|\eta^Q(X)-\frac{1}{2}\Big|^\gamma, \Big|\eta^Q(X)-\frac{1}{2}\Big|\leq z\right\}\right] \leq \varepsilon(z),
$$ 
given some constants $\gamma,C_\gamma>0$.   This quantity captures the inevitable risk from hard-to-classify boundary points, which have $\eta^Q$ values too close to $1/2$ and $\eta^P$ and show weak signal relative to $\eta^Q$. The explicit form of $\varepsilon(z)$ is provided for some special examples in Section \ref{sec:ambiguityBound}.
If $s(x)\geq C_\gamma|\eta^Q(x)-1/2|^\gamma$ covers all of $\Omega$, then $\varepsilon(z) = 0$ and the setting reduces to that of \cite{ttcai2020classification} with a strong relative signal of $\eta^P$ across the entire feature space $\Omega$. 

Given the measure above, we propose a simple but effective classifier that can surprisingly adapt to any level of ambiguity in the signal conveyed by $\eta^P$, named the \emph{Transfer Around Boundary} (TAB) classifier, or the TAB model:
$$
\hat{f}_{TAB}(x)=
\begin{cases}
\mathbf{1}\{\hat{\eta}^Q(x)\geq \frac{1}{2}\},\quad &\text{if }|\hat{\eta}^Q(x) - \frac{1}{2}|\geq \tau,\\
\hat{f}^P(x),\quad &\text{otherwise},
\end{cases}
$$
where $\hat{\eta}^Q$ is an estimate of $\eta^Q$ obtained by the target data and $\hat{f}^P$ is a classifier obtained by the source data. To clarify our decision-making process, we rely on $\hat{\eta}^Q$ as the final prediction when it deviates from $1/2$. Otherwise, we switch to the prediction made by the source data. Our TAB classifier is easy to implement and does not require careful classifier design.

In Section \ref{sec:convergenceResultByConcentration}, we present two related general convergence theorems to showcase this classifier, with a proper choice of $\tau>0$, utilizing both the relative signal of $\eta^P$ and the ambiguity level $\varepsilon(z)$ from the source data. The target data involved in this classifier is noteworthy for its dual role: not only does it conserve a classical upper bound on the excess risk if the source data is unreliable, but it also helps to rule out the source data ambiguity when $\eta^Q$ is far from $1/2$, considering only necessary ambiguity in the risk bounds.

We then apply our general convergence results to non-parametric classification as well as logistic regression to illustrate the potential of our transfer learning method on parametric classification. The results for these two important cases show that our method is indeed nearly minimax optimal and our measure of the ambiguity level is effective, extending results in previous literature.

\vspace{\baselineskip}
\noindent\textbf{Non-parametric Classification:} Suppose that $\eta^Q$ is $\beta$-smooth (Condition \ref{con:smooth}), margin assumption holds for $\eta^Q$ with parameter $\alpha$ (Assumption \ref{assum:margin}), and strong density condition (Condition \ref{con:density}) holds for $Q_X$ and $P_X$. Let $\Pi^{NP}$ denote the set of all such distribution pairs $(Q, P)$. 

At the moment, we consider the scenario where $\eta^P$ is sufficiently smooth. Suppose that $\Pi_S^{NP}$ is a subset of $\Pi^{NP}$ such that $\eta^P$ is $\beta_P$-smooth with $\beta_P \geq \gamma\beta$. Then, we show that the minimax excess risk satisfies
$$
\begin{aligned}
 &\left( n_P^{-\frac{\beta(1+\alpha)}{2\gamma\beta+d}}+\varepsilon \Bigl  (cn_Q^{-\frac{\beta}{2\beta+d}}\Bigr )\right)\land n_Q^{-\frac{\beta(1+\alpha)}{2\beta+d}}\lesssim \inf_{\hat{f}}\sup_{(Q,P)\in\Pi_S^{NP}}\E\mathcal{E}_Q(\hat{f})\\
 \lesssim&\left( n_P^{-\frac{\beta(1+\alpha)}{2\gamma\beta+d}}+\varepsilon\Bigl (c\log (n_Q\lor n_P) n_Q^{-\frac{\beta}{2\beta+d}}\Bigr )\right)\land \left(\log^{1+\alpha} (n_Q\lor n_P) n_Q^{-\frac{\beta(1+\alpha)}{2\beta+d}}\right),  
\end{aligned}
$$
for some constant $c>0$ that is independent of $\alpha$.

With the optimal choice of $\tau\asymp \log(n_Q\lor n_P) n_Q^{-\frac{\beta}{2\beta+d}}$, the upper bound we obtain using the TAB classifier with $K$-NN classifier components is optimal up to some logarithmic terms of $n_Q\lor n_P$. This indicates that a necessary and sufficient condition for the source data to improve the excess risk rate is a large source data sample size such that $n_P\gg n_Q^{\frac{2\gamma\beta+d}{2\beta+d}}$, paired with a small ambiguity level $\varepsilon(z)\ll z^{1+\alpha}$.

Additionally, we provide the optimal rate considering the ``band-like" ambiguity condition between $\eta^Q$ and $\eta^P$. A particularly interesting example is to assume that $\sup_{x\in\Omega}|\eta^P(x)-\eta^Q(x)|\leq \Delta$, as it allows for effective transfer learning  by ensuring only slight deviations across domains. In this case, the minimax optimal excess risk becomes $\left(n_P^{-\frac{\beta(1+\alpha)}{2\beta+d}}+\Delta^{1+\alpha}\right)\land n_Q^{-\frac{\beta(1+\alpha)}{2\beta+d}}$, up to some logarithmic terms of $n_Q\lor n_P$. Note that we do not require any smoothness condition on $\eta^P$ in the band-like ambiguity case.

\vspace{\baselineskip}
\noindent\textbf{Logistic Regression:} If the covariates follow the standard normal random design (see \eqref{eq:logisticFamily} for a precise definition),
and the target and source logistic regression coefficient pair $(\beta_Q,\beta_P)$ belongs to
$$
    \Theta(s,\Delta)=\left\{(\beta_Q,\beta_P):\|\beta_Q\|_0\leq s,\angle ( \beta_Q,\beta_P )\leq \Delta \right\},
$$
for some $0\leq\Delta\leq \pi/2$, where $\angle ( \beta_Q,\beta_P )$ denotes the angle between directions $\beta_Q$ and $\beta_P$, we show that the minimax optimal excess risk satisfies
$$
\left(\frac{s\log d}{n_P} + \Delta^2\right)\land \frac{s\log d}{n_Q}\lesssim \inf_{\hat{f}}\sup_{(Q,P)\in\Pi^{LR}}\E\mathcal{E}_Q(\hat{f})\lesssim \left(\frac{s\log d}{n_P} + \Delta^2\right)\land \left(\frac{s\log d}{n_Q}\log^2(n_Q\lor n_P)\right).
$$
Thus, when $n_P\gg n_Q$ and $\Delta\ll \frac{s\log d}{n_Q}$, the source data improves the convergence rate of the excess risk.  Importantly, we only assume a ``small cone condition'' between $\beta_Q$ and $\beta_P$, in contrast to the ``small contrast condition'' in previous works \citep{ttcai2020highDimensional,tian2022tlGLMs}, i.e., $\|\beta_Q-\beta_P\|_q$ is small for some $q\in[0,1]$. In addition, unlike the small contrast condition, which implicitly assumes sparsity patterns of $\beta_P$ through $l_q$ norms with $q \leq 1$, our constructed parametric space does not impose any sparsity conditions on $\beta_P$. In the setting without access to source data, our upper bound $\frac{s\log d}{n_Q}$ is tighter than the $(\frac{s\log d}{n_Q})^{\frac{2}{3}}$ bound obtained in Theorem 7 of \cite{abramovich2019logistic}, assuming the same margin parameter $\alpha=1$. 

As evidenced by the above two examples, the TAB classifier maintains the performance of the target data against an unreliable source and enhances the performance, if the source data sample size is large and the ambiguity is relatively small, i.e., $n_P\gg n_Q$ and $\Delta\ll \sqrt{\frac{s\log d}{n_Q}}$.

While the target excess risk provided by the target data, $\mathcal{E}_Q(\hat{f}^Q)$, has been extensively studied in the literature, much less is known about the target excess risk $\mathcal{E}_Q(\hat{f}^P)$ provided by the source data, which is one key component in deducing the excess risk bound. To fill this gap and gain better insight into the contribution of the source data, we present a general result in Section \ref{sec:extension} that provides a direct upper bound on $\mathcal{E}_Q(\hat{f}^P)$ in terms of $\mathcal{E}_P(\hat{f}^P)$, which can be obtained through conventional theoretical analysis. Therefore, it is feasible to bound the excess risk using conventional statistical learning tools, without requiring specialized expertise in transfer learning. By providing such a general and accessible theoretical framework, our approach has the potential to make transfer learning more accessible.

\subsection{Notation and Organization}
We introduce some notation to be used throughout the paper. For any $q\in[0,\infty]$ and vector $x=(x_1,\cdots,x_d)\in\mathbb{R}^d$, we write $\|x\|_q$ for its $l_q$ norm. We write $\|x\|$ or $\|x\|_2$ for the Euclidean norm of $x$, and, given $r>0$, we write $B(x,r)$ or $B_d(x,r)$ for the closed Euclidean sphere of radius $r$ centered at $x$. For two probability measures $\mu,\nu$ on any general space, if $\mu$ is absolutely continuous with respect to $\nu$, we write $\frac{d\mu}{d\nu}$ for the Radon-Nikodyn derivative of $\mu$ with respect to $\nu$. Write $\lambda$ as the Lebesgue measure on $\mathbb{R}^d$. Let $a\land b$ and $a\lor b$ denote the minimum and maximum of $a$ and $b$, respectively. Let $a_{n}\lesssim b_{n}$ denote $|a_{n}|\leq c|b_{n}|$ for some constant $c>0$ when $n$ is large enough. Let $a_{n}\gtrsim b_{n}$ denote $|a_{n}|\geq c|b_{n}|$ for some constant $c>0$ when $n$ is large enough. Let $a_{n}\asymp b_{n}$ denote $1/c\leq |a_{n}|/|b_{n}|\leq c$ for some constant $c>1$ when $n$ is large enough. Let $a_{n}\overset{n\rightarrow\infty}{\longrightarrow} \infty$ denote that $a_{n}$ tends to infinity with $n$ growing to infinity. Let $a_{n}\ll b_{n}$ denote $|a_{n}|/|b_{n}|\rightarrow 0$ when $n$ is large enough. Let $a_{n}\gg b_{n}$ denote $|b_{n}|/|a_{n}|\rightarrow 0$ when $n$ is large enough. Let $\lfloor a\rfloor$ be the maximum integer that is less equal than $a$ for any real value $a$. Finally, we assume $0^0=0$ for simplicity.

We state our main working assumptions and measure of ambiguity, called the \emph{ambiguity level}, in Section \ref{sec:model}. We provide two general convergence results on the risk of transfer learning in Section \ref{sec:convergenceResultByConcentration}. Section \ref{sec:extension} presents an approach to bound the signal transfer risk, a crucial part of general convergence results, in terms of the excess risk rate studied in the conventional statistical learning literature.
In Sections \ref{sec:nonPara} and \ref{sec:para}, we apply our results to non-parametric classification and logistic regression, respectively, and provide upper and lower bounds on the excess risk. In Section \ref{sec:simulation}, we present simulation results for non-parametric classification and logistic regression, supporting the theoretical properties of our proposed method.

\section{Model}
\label{sec:model}
\subsection{Problem Formulation}
\label{sec:problem}
For two Borel-measurable distributions $P$ and $Q$, both taking values in $\mathbb{R}^d\times\{0,1\}$, we observe two independent random samples, the \emph{source} data $\mathcal{D}_P=\{(X_1^P,Y_1^P)$, $\cdots,(X_{n_P}^P,Y_{n_P}^P)\}\overset{\text{iid}}{\sim}P$ and the \emph{target} data $\mathcal{D}_Q=\{(X_1,Y_1),\cdots,(X_{n_Q},Y_{n_Q})\}
\overset{\text{iid}}{\sim}Q$. Suppose that $n_P\overset{n_Q\rightarrow\infty}{\longrightarrow} \infty$. Our goal is to improve the target data empirical classifier by transferring useful information from the source data. Consider the marginal probability distributions of $X$ for the $P$ and $Q$ distributions, denoted by $P_X$ and $Q_X$ respectively. Let $\Omega_P:=\mathrm{supp}(P_X)$ and $\Omega:=\mathrm{supp}(Q_X)$ represent the support sets of $P_X$ and $Q_X$. The regression functions for the source and target distributions are respectively defined as follows:
\begin{equation}
\eta^P(x)=
\begin{cases}
P(Y^P=1 \mid X^P=x), & x \in \Omega_P,\\[6pt]
1/2, & \text{otherwise},
\end{cases}
\quad
\eta^Q(x)=
\begin{cases}
Q(Y^P=1 \mid X^P=x), & x \in \Omega,\\[6pt]
1/2, & \text{otherwise}.
\end{cases}
\label{eq:RegFuncPQ}
\end{equation}

The goal of a classification model is to forecast the label $Y$ based on the value $X$. The effectiveness of a decision rule $f:\mathbb{R}^d\rightarrow \{0,1\}$ is evaluated by its misclassification rate with respect to the target distribution, which is defined as follows:
$$R(f):=Q(Y\neq f(X)).$$
The Bayes classifier (or Bayes estimator, Bayes decision rule) $f^*_Q(x)=\mathbf{1}\{\eta^Q(x)\geq 1/2\}$ is the minimizer of $R(f)$ over all Borel functions defined on $\mathbb{R}^d$ and taking values in $\{0,1\}$. We similarly define the Bayes decision rule for $\eta^P$ that is $f^*_P(x)=\mathbf{1}\{\eta^P(x)\geq 1/2\}$. Since the Bayes decision rule $f^*_Q(x)$ is a minimizer of the misclassification rate $R(f)$, the performance of any empirical classifier $\hat{f}:\mathbb{R}^d\rightarrow\{0,1\}$ can be then measured by the excess risk (on the target distribution): 
\begin{equation}
    \mathcal{E}_Q(\hat{f})=R(\hat{f})-R(f^*_Q)=2\E_{(X,Y)\sim Q}\left[
    \Big|\eta_Q(X)-\frac{1}{2} \Big|\mathbf{1}\left\{\hat{f}(X)\neq f_Q^*(X)\right\}\right].
    \label{eq:excessRisk}
\end{equation}
The last equality is the dual representation of the excess risk \citep{gyorfi1978dual}. Given the excess risk defined in \eqref{eq:excessRisk}, the objective of transferring useful information from the source data can be reformulated as the task of constructing an empirical classifier that improves the excess risk, which is accomplished by utilizing labeled data samples drawn from both the target and source distributions.

The rate at which the excess risk $\mathcal{E}_Q(\hat{f})$ converges to zero depends on the assumptions made about the target and source distributions $(Q,P)$. In classification, a common assumption is the margin assumption \citep{atsybakov2007fastPlugin, tsybakov2004smoothDiscriminationAnalysis}. This assumption is used to measure the behavior of $Q_X$ with respect to the distance between $\eta^Q(X)$ and $1/2$, which is essential for determining the convergence rate of the excess risk.

\begin{assumption}[Margin]
There exists some constant $\alpha\geq 0,\ C_\alpha>0$ such that for any $t>0$, we have $Q_X(0<|\eta^Q(X)-1/2|\leq t)\leq C_\alpha t^\alpha$.
\label{assum:margin}
\end{assumption}

Instead of assuming identical Bayes classifiers for $\eta^Q$ and $\eta^P$ over $x\in\Omega$ as in some existing literature \citep{hanneke2019onTheTargetValue,ttcai2020classification}, we allow them to differ. This is a natural and necessary relaxation because, in almost every real-world application, although the target and source domains may exhibit high similarity, they usually still have slightly different decision boundaries. Therefore, it is crucial to assess the impact of an unreliable source distribution on the optimal rates of classification.

%%%%%%%%%%%%%%%%%%%%%%%%%%%%%%%%%%%%%%%%%%%%%%%%%%%%%%%%%%%%%%%%%%%%%%%%%%%%%%%%%%%%%%%%%%%%%%%%%%%%%%%%%%%%%%%%%%
\subsection{Source Data Ambiguity}
\label{subsec:ambiguity}
In this subsection, we provide a detailed discussion of the condition that characterizes the ambiguity of an unreliable source distribution. Here we introduce the signal strength function, which measures the relative signal of $\eta^P$ compared with $\eta^Q$. This function is crucial in capturing the efficacy of the source data for the classification task under $Q$.

\begin{definition}[Signal Strength] \label{def:signalStrength}
The signal strength of $\eta^P$ relative to $\eta^Q$ is defined as
$$
\begin{aligned}
  s(x):&=\left\{\mathrm{sgn}\Big(\eta^Q(x)-\frac{1}{2}\Big)\times\Big(\eta^P(x)-\frac{1}{2}\Big)\right\}\lor 0\\  
  &=
\begin{cases}
 |\eta^P(x)-\frac{1}{2}|,\quad &\mathrm{sgn}\left(\eta^Q(x)-\frac{1}{2}\right)\times(\eta^P(x)-\frac{1}{2})\geq 0,\\
0,\quad &\text{otherwise},   
\end{cases}
\end{aligned}
$$
for any $x\in\Omega$.
\end{definition}

It is reasonable to consider the signal strength as non-zero only when $(\eta^P(x)-1/2)(\eta^Q(x)-1/2)\geq 0$, indicating that the target and source data provide consistent information about the Bayes classification boundary. In this case, the source data $\eta^P(x)$ is beneficial for classifying the target data, and the signal strength is measured by $|\eta^P(x)-1/2|$. Conversely, when $(\eta^P(x)-1/2)(\eta^Q(x)-1/2)< 0$, the source data does not provide useful information for classifying $x$ in the target data, and the signal strength at $x$ is zero.

Next, we present the main working assumption that measures the ambiguity level of the source data, based on the signal strength. 

\begin{assumption}[Ambiguity Level] For some given \(\gamma ,C_\gamma>0\), the following condition holds for a specific continuous function \(\varepsilon(z;\gamma,C_\gamma)\) that is monotone increasing with \(z \in [0,1/2]\):
\begin{equation}
 \E_{(X,Y)\sim Q}\left[\Big|\eta^Q(X)-\frac{1}{2}\Big|\mathbf{1}\left\{s(X)\leq C_\gamma\Big|\eta^Q(X)-\frac{1}{2}\Big|^\gamma, \Big|\eta^Q(X)-\frac{1}{2}\Big|\leq z\right\}\right]\leq \varepsilon(z;\gamma,C_\gamma).
 \label{eq:ambiguity}
\end{equation}
We abbreviate $\varepsilon(z;\gamma,C_\gamma)$ as $\varepsilon(z)$ when there is no need to specify $\gamma$ and $C_\gamma$. 
\label{assum:ambiguity}
\end{assumption}

The expression in the expectation operator of \eqref{eq:ambiguity} can be divided into two factors. The first factor represents the distance between $\eta^Q(X)$ and $1/2$, which corresponds to the dual representation of the excess risk \eqref{eq:excessRisk}.  The second factor is crucial for understanding transfer learning ambiguity. On one hand, the constraint $s(X)\leq C_\gamma|\eta^Q(X)-1/2|^\gamma$ indicates a lack of strong signal from the source data relative to the target data. On the other hand, the constraint $|\eta^Q(X)-1/2|\leq z$ describes the hard-to-classify boundary points of the target distribution. The target data in this region cannot significantly aid in classification. Therefore, our indicator precisely captures the challenging boundary region where classification becomes difficult using either the target or source data.  As will be seen later, only the behavior around $z=0$ matters for the asymptotics of transfer learning.

The ambiguity level $\varepsilon(\cdot)$ allows for the presence of unreliable source data, as it accounts for situations where $\eta^P$ may not consistently provide a strong signal compared to $\eta^Q$. Additionally, $\varepsilon(\cdot)$ controls the ambiguity with respect to the distance from $1/2$. The larger the ambiguity level, the harder the classification task. Note that by the margin assumption, a trivial upper bound of the ambiguity level is $\varepsilon(z)=C_\alpha z^{1+\alpha}$.

%%%%%%%%%%%%%%%%%%%%%%%%%%%%%%%%%%%%%%%%%%%%%%%%%
\subsection{On the Ambiguity Level}
\label{sec:ambiguityBound}
To help clarify the ambiguity level assumption, we provide some examples where explicit formulas for the ambiguity level $\varepsilon(\cdot)$ are available, with a proper choice of $\gamma$ and $C_\gamma$.  We defer the case of two logistic regression models across the target and source distributions to Section \ref{sec:para}. 

\begin{example}[Perfect Source] \label{eg:perfectSource} Assume the condition of \emph{relative signal exponent} proposed in \cite{ttcai2020classification}, which amounts to $$s(x)\geq C_\gamma|\eta^Q(x)-\frac{1}{2}|^\gamma \quad \forall x\in\Omega.$$ Then Assumption \ref{assum:ambiguity} holds with $\varepsilon(z)\equiv 0$.

Specifically, if $\eta^P=\eta^Q$, it is straightforward to set $\gamma=1$ and $\varepsilon(\cdot)=0$. In this scenario, the Bayes classifiers are identical and $\eta^P$ gives strong signal compared to $\eta^Q$ over the whole support $\Omega$, so the ambiguity level is set as zero.
\end{example}

\begin{example}[Strong Signal over $\Omega_P$] \label{eg:STS} 
Instead of having a strong signal over the entire region as in Example \ref{eg:perfectSource}, here we have a strong signal over only a given region:
$$
s(x)\geq C_\gamma|\eta^Q(x)-\frac{1}{2}|^\gamma \quad \forall x\in\Omega_P.$$ 
Then Assumption \ref{assum:ambiguity} holds by taking
$$
    \varepsilon(z)\geq\E_{(X,Y)\sim Q}\left[\Big|\eta^Q(X)-\frac{1}{2}\Big|\mathbf{1}\left\{|\eta^Q(X)-\frac{1}{2}|\leq z, X\notin \Omega_P\right\}\right],
$$ 
i.e., the ambiguity level is controlled by the risk within the complement $\Omega/\Omega_P$. This corresponds to the common case where the source data is collected from a subpopulation with respect to the target distribution. 
\end{example}

\begin{example}[Strong Signal with Imperfect Transfer]  \label{eg:RSS}
Suppose that the transfer signal is strong, but the Bayes classifiers may differ, i.e.,
$$
   \Big|\eta^P(x)-\frac{1}{2}\Big|\geq C_\gamma \Big|\eta^Q(x)-\frac{1}{2}\Big|^\gamma,\quad \forall x\in \Omega,
$$ 
$$
    \Omega_R:=\left\{x\in\Omega:\Big(\eta^P(x)-\frac{1}{2}\Big)\Big(\eta^Q(x)-\frac{1}{2}\Big)<0\right\}\neq \emptyset.
$$
Then Assumption \ref{assum:ambiguity} holds if $$\varepsilon(z)\geq\E_{(X,Y)\sim Q}\left[\Big|\eta^Q(X)-\frac{1}{2}\Big|\mathbf{1}\left\{\Big|\eta^Q(X)-\frac{1}{2}\Big|\leq z, X\in \Omega_R\right\}\right].$$ In other words, the ambiguity level is determined by the risk associated with the region where different Bayes classifiers exist. A specific scenario that aligns with this example and utilizes $\gamma=C_\gamma=1$ can be expressed as $$\eta^P(x)=\eta^Q(x)\mathbf{1}\{x\notin \Omega_R\}+\left(1-\eta^Q(x)\right)\mathbf{1}\{x\in \Omega_R\},\quad \forall x\in \Omega,$$ where only the response values within $\Omega_R$ are flipped.
\end{example}

\begin{example}[Band-like Ambiguity] \label{eg:BA}
A further noteworthy scenario is when the probability distribution $\eta^P$ concentrates around a ``band" that is centered on an informative curve with respect to $\eta^Q$, but with some small deviation. A related situation is studied in \cite{reeve2021adpative}, where $\eta^P$ is approximated by a linear transfer function of $\eta^Q$. Suppose that there exists some band error constant $\Delta\geq 0$, which represents the deviation level, such that
\begin{equation}
 s(x)\geq C_\gamma\Big|\eta^Q(x)-\frac{1}{2}\Big|^\gamma-\Delta,
 \label{eq:BA}
\end{equation}
for any $x\in\Omega$. Then Assumption \ref{assum:ambiguity} holds with
\begin{equation}
\varepsilon(z;\gamma,C_\gamma/2)=\Big(C_\alpha z^{1+\alpha}\Big)\land \Big(2^{\frac{1+\alpha}{\gamma}}C_\alpha C_\gamma^{-\frac{1+\alpha}{\gamma}}\Delta^{\frac{1+\alpha}{\gamma}}\Big).
  \label{eq:BAAmbiguityLevel}
\end{equation}
\end{example}

Notably, the case of $\Delta=0$ degenerates into the perfect source scenario in Example \ref{eg:perfectSource}. For the proof of the statement in Example \ref{eg:BA}, see the Supplementary Material \citep{supp}. For simplicity, we assume that \eqref{eq:BA} holds over the entire feature space $\Omega$.

%To extend the setting that \eqref{eq:BA} holds only on a subset $\Omega_{BA}\subset \Omega$, one could add the risk term $\E_{(X,Y)\sim Q}\left[|\eta^Q(X)-\frac{1}{2}|\mathbf{1}\left\{|\eta^Q(X)-\frac{1}{2}|\leq z, X\notin \Omega_{BA}\right\}\right]$ to the existing risk term part $2^{\frac{1+\alpha}{\gamma}}C_\alpha C_\gamma^{-\frac{1+\alpha}{\gamma}}\Delta^{\frac{1+\alpha}{\gamma}}$.

Specifically, if we further assume
\begin{equation}
   \sup_{x\in\Omega}|\eta^P(x)-\eta^Q(x)|\leq \Delta, 
   \label{eq:etaPetaQClose}
\end{equation}
then the band-like ambiguity condition \eqref{eq:BA} holds with $\gamma=C_\gamma=1$. This condition is common and meaningful in reality because it reflects a scenario where, despite differences in data collection contexts, the underlying relationship between features and outcomes remains nearly identical across domains. By \eqref{eq:BAAmbiguityLevel}, Assumption \ref{assum:ambiguity} holds in this special case with
$$\varepsilon(z,1,1/2)=\Big(C_\alpha z^{1+\alpha}\Big)\land \Big(2^{1+\alpha}C_\alpha C_\gamma^{-(1+\alpha)}\Delta^{1+\alpha}\Big).$$

%%%%%%%%%%%%%%%%%%%%%%%%%%%%%%%%%%%%%%%%%%%%%%%%%%%%%%%%%%%%%%%%%%%%%%%%%%%%%%
\section{General Convergence Results}
\label{sec:generalConvergenceResult}
Let $\Pi$ be any given subset of distributions $(Q,P)$ satisfying that Assumptions \ref{assum:margin} and \ref{assum:ambiguity} with parameter $\alpha\geq 0$ and $\gamma, C_\alpha, C_\gamma>0$, and ambiguity level $\varepsilon(\cdot)$.  Our analysis focuses on the performance of any classifier when the target and source distribution pair $(Q,P)$ belongs to $\Pi$. This framework captures the essential information in the source data and how it can be used to improve the convergence rate of the excess risk.

\subsection{Performance of the TAB model}
\label{sec:convergenceResultByConcentration}
In preparation for the general result, it is necessary to define the risk learned by the source data over the region $s(x)\geq C_\gamma |\eta^Q(x)-1/2|^\gamma$ with strong signal strength.

\begin{definition}[Signal Transfer Risk] \label{def:STR}
Define the \emph{signal transfer risk} of the classifier $f$ with respect to parameters $\gamma,C_\gamma>0$ as
\begin{equation}
\xi(f;\gamma,C_\gamma):=\E_{(X,Y)\sim Q}\left[\Big|\eta^Q(X)-\frac{1}{2}\Big|\mathbf{1}\left\{f(X)\neq f_Q^*(X),s(X)\geq C_\gamma \Big|\eta^Q(X)-\frac{1}{2}\Big|^\gamma\right\}\right].
    \label{eq:STR}
\end{equation}
We abbreviate it as $\xi(f)$ when there is no need to specify $\gamma$ and $C_\gamma$.
\end{definition}

The signal transfer risk results from the classification of points belonging to the area where $s(x)\geq C_\gamma |\eta^Q(x)-1/2|^\gamma$. For simplicity, we assume the mild condition that $\sup_{(Q,P)\in\Pi}\E_{\mathcal{D}_P}\xi(\hat{f}^P)\gtrsim n_P^{-c}$ for some constant $c>0$ to prevent a convergence rate faster than polynomial. Due to the strong signals offered by the source data within this area, it is intuitively expected that the signal transfer risk can be reduced with the aid of the source data sample $\mathcal{D}_P$.
 
In this paper, the classifier derived from the target data is assumed to be a \emph{plug-in rule}, of the form $\mathbf{1}\{\hat{\eta}^Q(x)\geq 1/2\}$, where $\hat{\eta}^Q$ is an estimator of the regression function $\eta^Q$. By introducing a novel strategy called the \emph{TAB model}, the following result demonstrates a general approach to achieving a faster convergence rate of the excess risk.

\begin{theorem}
Let $\hat{\eta}^Q$ be an estimate of the regression function $\eta^Q$ and $\hat{f}^P$ be a classifier obtained by $\mathcal{D}_P$. Suppose there exist two sequences $\delta_Q, \delta_f$ such that $\delta_Q^{1+\alpha}\gtrsim n_Q^{-c}$ for some constant $c>0$ and $\alpha$ defined by the margin Assumption~\ref{assum:margin} and that with probability at least $1-\delta_f$,
    for any $x\in\Omega$ we have $\forall t>0$,
    \begin{equation}
        \sup_{(Q,P)\in\Pi}\pr_{\mathcal{D}_Q}\left(|\hat{\eta}^Q(x)-\eta^Q(x)| \geq t|X_{1:n_Q}\right)\leq C_1\exp\Big(- \Big(\frac{t}{\delta_Q}\Big)^2\Big),
        \label{eq:localConcentrationQ}
    \end{equation}
    for some constant $C_1>0$.
    Given the choice of $\tau\gtrsim \log(n_Q\lor n_P)\delta_Q$, the \emph{TAB classifier}
\begin{equation}
\hat{f}_{TAB}(x)=
\begin{cases}
\mathbf{1}\{\hat{\eta}^Q(x)\geq \frac{1}{2}\},\quad &\text{if }|\hat{\eta}^Q(x) - \frac{1}{2}|\geq \tau,\\
\hat{f}^P(x),\quad &\text{otherwise.}
\end{cases}
\label{eq:modelSelectedPlugInClassifier}
\end{equation}
satisfies, under the margin Assumption~\ref{assum:margin},
\begin{equation}
    \sup_{(Q,P)\in\Pi}\E_{(\mathcal{D}_Q,\mathcal{D}_P)} \mathcal{E}_Q(\hat{f}_{TAB})\lesssim \left(\sup_{(Q,P)\in\Pi}\E_{\mathcal{D}_P}\xi(\hat{f}^P)+ \varepsilon(2\tau)\right)\land \tau^{1+\alpha} +\delta_f.
    \label{eq:TLrate}
\end{equation}
\label{thm:general}
\end{theorem}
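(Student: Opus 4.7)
The plan is to invoke the dual representation \eqref{eq:excessRisk} and decompose the integrand $|\eta^Q(X) - 1/2|\mathbf{1}\{\hat{f}_{TAB}(X) \neq f^*_Q(X)\}$ according to two nested events: the concentration event $E$ (of probability $\geq 1 - \delta_f$) on which \eqref{eq:localConcentrationQ} holds uniformly over $x$ and $t$, and, within $E$, the ``target-confident'' region $A = \{|\hat{\eta}^Q(X) - 1/2| \geq \tau\}$. I will produce two separate bounds on $\E\mathcal{E}_Q(\hat{f}_{TAB})$---a pure target bound $\tau^{1+\alpha} + \delta_f$ and a transfer-augmented bound $\E_{\mathcal{D}_P}\xi(\hat{f}^P) + \varepsilon(2\tau) + \delta_f$---and take their minimum to recover \eqref{eq:TLrate}.

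On $E^c$, the trivial estimate $\mathcal{E}_Q \leq 1$ contributes at most $\delta_f$. On $E \cap A$, $\hat{f}_{TAB}$ follows the target plug-in rule, so any misclassification forces $|\hat{\eta}^Q(X) - \eta^Q(X)| \geq \tau$; taking $\E_{Y_{1:n_Q}}$ conditional on $X_{1:n_Q}$ via \eqref{eq:localConcentrationQ} with $t = \tau$, then applying Fubini against the test point $X$, produces a prefactor $C_1\exp(-(\tau/\delta_Q)^2) \leq (n_Q \lor n_P)^{-c\log(n_Q \lor n_P)}$, which is super-polynomially small in $n_Q \lor n_P$ and hence absorbed by both $\tau^{1+\alpha}$ (using the hypothesis $\delta_Q^{1+\alpha} \gtrsim n_Q^{-c}$, which implies $\tau^{1+\alpha} \gtrsim n_Q^{-c}$) and $\E\xi(\hat{f}^P) \gtrsim n_P^{-c'}$.

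On $E \cap A^c$, $\hat{f}_{TAB}(X) = \hat{f}^P(X)$; further split by $B = \{|\eta^Q(X) - 1/2| \leq 2\tau\}$. On $A^c \cap B^c$, the reverse triangle inequality forces $|\hat{\eta}^Q(X) - \eta^Q(X)| > \tau$, so the same concentration argument renders the contribution negligible. On $A^c \cap B$, for the target-only bound a layer-cake calculation using Assumption \ref{assum:margin} gives
$$\E_X\bigl[|\eta^Q(X) - 1/2|\,\mathbf{1}_B\bigr] = \int_0^{2\tau} Q_X\bigl(u \leq |\eta^Q(X) - 1/2| \leq 2\tau\bigr)\,du \lesssim \tau^{1+\alpha}.$$
For the transfer bound, partition $A^c \cap B$ by signal strength: on $\{s(X) \geq C_\gamma|\eta^Q(X) - 1/2|^\gamma\}$, discarding the indicators $\mathbf{1}_{A^c \cap B}$ dominates the contribution by $\xi(\hat{f}^P)$ as defined in \eqref{eq:STR}; on $\{s(X) < C_\gamma|\eta^Q(X) - 1/2|^\gamma\}$, Assumption \ref{assum:ambiguity} with $z = 2\tau$ bounds the contribution directly by $\varepsilon(2\tau)$. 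Averaging over $\mathcal{D}_P$ converts the $\xi$ term into $\E_{\mathcal{D}_P}\xi(\hat{f}^P)$.

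The main technical hurdle is the bookkeeping across the three sources of randomness ($X_{1:n_Q}$, $Y_{1:n_Q}$, and the test point $X$) together with $\mathcal{D}_P$: the concentration \eqref{eq:localConcentrationQ} is stated conditionally on $X_{1:n_Q}$ and only on the event $E$, so one must order the expectations carefully so that the exponential tail multiplies $\E_X|\eta^Q(X) - 1/2| \leq 1/2$ rather than being inflated by a uniform-in-$x$ argument. The polynomial lower bound $\delta_Q^{1+\alpha} \gtrsim n_Q^{-c}$ is precisely what ensures all super-polynomially small concentration residues disappear into the stated rate without requiring an additional error term.
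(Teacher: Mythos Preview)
Your proposal is correct and essentially matches the paper's proof: the paper derives Theorem~\ref{thm:general} by specializing the more abstract Theorem~\ref{thm:moreGeneral}, but the underlying decomposition (concentration failure, far-from-boundary region, near-boundary region with the $\Omega^+/\Omega^-$ signal-strength split) and the absorption of $C_1\exp(-(\tau/\delta_Q)^2)$ into the polynomial rates via $\delta_Q^{1+\alpha}\gtrsim n_Q^{-c}$ and $\E_{\mathcal{D}_P}\xi(\hat f^P)\gtrsim n_P^{-c}$ are the same. The only cosmetic difference is that the paper's primary split is by $E_0=\{|\eta^Q(X)-1/2|\ge 2\tau\}$ whereas yours is by the TAB switching event $A=\{|\hat\eta^Q(X)-1/2|\ge\tau\}$; on the concentration event these two decompositions coincide up to a region already shown to have negligible contribution.
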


We now explain every important term in the upper bound \eqref{eq:TLrate}.

\begin{itemize}
    \item The term $\sup_{(Q,P)\in\Pi}\E_{\mathcal{D}_P}\xi(\hat{f}^P)$ captures the risk transferred by the source data using a classifier $\hat{f}^P$, excluding the ambiguity component. It often exhibits a faster convergence rate than $\delta_Q^{1+\alpha}$ and quantifies the benefits of transfer learning.
    \item The term $\varepsilon(2\tau)$ quantifies the ambiguity level \emph{only when $\eta^Q$ is close to $1/2$}.
    Thus, the target data not only establishes an upper bound on the excess risk but also plays a critical role in reducing the risk caused by ambiguity in the source data.
    \item The threshold $\tau$ balances the classification ability of the target and source data by filtering out points easily classified with the target data.
For $\tau=0$, our approach mirrors \cite{atsybakov2007fastPlugin}, achieving an asymptotically lower excess risk than $\delta_Q^{1+\alpha}$ without source data.
Alternatively, for $\tau=1/2$, only the source data is used to construct the classifier, which is often the case in domain adaptation when no target data is available.
Our choice of $\tau$ between these extremes combines the advantages of the target and source data.
    \item The final term $\delta_f$ represents the probability of the concentration inequality \eqref{eq:localConcentrationQ} failing due to extreme realizations of $\mathcal{D}_Q$. Generally, it is significantly smaller than the other terms that upper bound the excess risk. For instance, it decays exponentially with respect to $n_Q$ in non-parametric classification using K-NN estimators (See Lemma 9.1 of \cite{ttcai2020classification}).
\end{itemize}

With the tight choice of \(\tau \asymp \log (n_Q \lor n_P) \delta_Q\) to asymptotically minimize the right-hand side of \eqref{eq:TLrate}, our upper bound cannot be worse than \(\delta_Q^{1+\alpha}\) in \cite{atsybakov2007fastPlugin}, up to some logarithmic terms.  This can happen when the transfer signal is too weak so that $\tau^{1+\alpha}$ in \eqref{eq:TLrate}.

Occasionally, the concentration of $\hat{\eta}^Q$ may not be exponential, as in \eqref{eq:localConcentrationQ}. To overcome this limitation, the following theorem generalizes Theorem \ref{thm:general} to allow any type of concentration property of $\hat{\eta}^Q$.

\begin{theorem}
Let $\hat{\eta}^Q$ be an estimate of the regression function $\eta^Q$, and let $\hat{f}^P$ be a classifier obtained from $\mathcal{D}_P$. Suppose that for some $\tau>0$, there exist functions $\delta(\cdot,\cdot),\ \delta_Q(\cdot,\cdot)$ such that for any $(Q,P)\in\Pi$, the concentration property
\begin{equation}
\pr_{\mathcal{D}_Q}\left(|\hat{\eta}^Q(x)-\eta^Q(x)| \geq \tau\right)\leq \delta_Q(n_Q,\tau)
\label{eq:generalConcentrationQ}
\end{equation}
holds for any $x\in\Omega^*\subset\Omega$ with $Q(\Omega^*)\geq 1-\delta(n_Q,\tau)$. Then the \emph{TAB classifier}
\begin{equation}
\hat{f}_{TAB}(x)=
\begin{cases}
\mathbf{1}\{\hat{\eta}^Q(x)\geq \frac{1}{2}\},\quad &\text{if }|\hat{\eta}^Q(x) - \frac{1}{2}|\geq \tau,\\
\hat{f}^P(x),\quad &\text{otherwise.}
\end{cases}
\end{equation}
satisfies
\begin{equation}
    \sup_{(Q,P)\in\Pi}\E_{(\mathcal{D}_Q,\mathcal{D}_P)} \mathcal{E}_Q(\hat{f}_{TAB})\lesssim \left(\sup_{(Q,P)\in\Pi}\E_{\mathcal{D}_P}\xi(\hat{f}^P)+ \varepsilon(2\tau)\right)\land \tau^{1+\alpha} +\delta_Q(n_Q,\tau) + \delta(n_Q,\tau).
    \label{eq:generalTLrate}
\end{equation}
\label{thm:moreGeneral}
\end{theorem}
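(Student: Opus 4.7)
The plan is to exploit the dual representation of the excess risk and decompose the domain according to whether the TAB rule defers to the target or source classifier. Starting from
$$\mathcal{E}_Q(\hat{f}_{TAB}) = 2\,\E_{(X,Y)\sim Q}\Big[|\eta^Q(X) - 1/2|\, \mathbf{1}\{\hat{f}_{TAB}(X) \neq f^*_Q(X)\}\Big],$$
I would first peel off the contribution from $X \notin \Omega^*$. Because $|\eta^Q - 1/2| \le 1/2$ and $Q(\Omega \setminus \Omega^*) \le \delta(n_Q, \tau)$, this piece is at most a constant multiple of $\delta(n_Q, \tau)$. The rest of the analysis lives on $\Omega^*$ where the pointwise concentration bound \eqref{eq:generalConcentrationQ} is available.

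On $\Omega^*$, introduce the ``good'' event $A_X = \{|\hat{\eta}^Q(X) - \eta^Q(X)| < \tau\}$ and note $\pr_{\mathcal{D}_Q}(A_X^c \mid X) \le \delta_Q(n_Q, \tau)$. The contribution from $A_X^c$ again inherits the trivial $|\eta^Q - 1/2| \le 1/2$ bound, giving at most $\delta_Q(n_Q, \tau)$ after integrating out $\mathcal{D}_Q$ and $X$. On $A_X$ itself, if $|\hat{\eta}^Q(X) - 1/2| \ge \tau$ then $\hat{\eta}^Q(X)$ and $\eta^Q(X)$ lie strictly on the same side of $1/2$, so the target plug-in rule used by $\hat{f}_{TAB}$ coincides with $f^*_Q$ and contributes zero. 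Consequently, all remaining excess-risk mass concentrates on the ``switch region'' $\{|\hat{\eta}^Q(X) - 1/2| < \tau\} \cap A_X$, on which $\hat{f}_{TAB} = \hat{f}^P$ and, by the triangle inequality, $|\eta^Q(X) - 1/2| < 2\tau$.

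The heart of the argument is then to bound the residual switch-region integral
$$\mathcal{R} := 2\,\E\Big[|\eta^Q(X) - 1/2|\, \mathbf{1}\{\hat{f}^P(X) \neq f^*_Q(X)\}\, \mathbf{1}\{|\eta^Q(X) - 1/2| < 2\tau\}\Big]$$
in two different ways and take the minimum. For the $\tau^{1+\alpha}$ branch, I drop the indicator $\mathbf{1}\{\hat{f}^P \neq f^*_Q\}$ and apply Assumption \ref{assum:margin} to bound $\E[|\eta^Q - 1/2|\, \mathbf{1}\{|\eta^Q - 1/2| < 2\tau\}]$ by a constant multiple of $\tau^{1+\alpha}$. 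For the signal-transfer branch, I split $\mathcal{R}$ according to whether $s(X) \ge C_\gamma |\eta^Q(X) - 1/2|^\gamma$: on the strong-signal part, drop the $|\eta^Q - 1/2| < 2\tau$ indicator to recover exactly $\E_{\mathcal{D}_P} \xi(\hat{f}^P)$ as in Definition \ref{def:STR}, while on the weak-signal part, drop the $\mathbf{1}\{\hat{f}^P \neq f^*_Q\}$ indicator and invoke Assumption \ref{assum:ambiguity} to obtain $\varepsilon(2\tau)$. Summing the two parts yields $\sup_{(Q,P)\in\Pi} \E_{\mathcal{D}_P} \xi(\hat{f}^P) + \varepsilon(2\tau)$.

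The proof is essentially a reorganization of the argument for Theorem \ref{thm:general}, with the sub-Gaussian tail of \eqref{eq:localConcentrationQ} replaced by the generic $\delta_Q(n_Q,\tau)$ and with an extra $\delta(n_Q,\tau)$ term absorbing the new restriction to $\Omega^*$. The main bookkeeping subtlety is preserving the $\land$ structure in \eqref{eq:generalTLrate}: one must bound the single quantity $\mathcal{R}$ twice and take the minimum at the end, rather than branching on which bound will dominate before the calculation. Beyond this, no new probabilistic tools are required; the only two hypotheses on $\hat{\eta}^Q$ that the argument uses are the pointwise closeness $|\hat{\eta}^Q - \eta^Q| < \tau$ and the set-level guarantee $Q(\Omega^*) \ge 1 - \delta(n_Q,\tau)$, both supplied directly by \eqref{eq:generalConcentrationQ}.
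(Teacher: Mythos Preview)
Your proposal is correct and follows essentially the same approach as the paper's proof: both decompose the excess risk by first isolating the concentration-failure event (contributing $\delta_Q(n_Q,\tau)+\delta(n_Q,\tau)$), then showing the TAB rule is exact when the target plug-in is confident, and finally bounding the switch-region integral via the margin assumption on one hand and the $\xi+\varepsilon$ split on the other. The only cosmetic difference is that the paper conditions first on $E_0=\{|\eta^Q-1/2|\ge 2\tau\}$ and takes the minimum with $\tau^{1+\alpha}$ on the $\Omega^+$ part alone (then adds $\varepsilon(2\tau)$, which is itself $\lesssim\tau^{1+\alpha}$), whereas you bound the full residual $\mathcal{R}$ twice and take the minimum afterward; up to constants these are equivalent.
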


This theorem holds for any given $\tau$.
The quantity $\delta_Q(n_Q,\tau)$ in \eqref{eq:generalConcentrationQ} controls the failure probability bound of the concentration inequality concerning $\mathcal{D}_Q$. In Theorem \ref{thm:general}, it is further incorporated into the probability of extreme realizations of $X_{1:n_Q}$ \eqref{eq:localConcentrationQ}, and the subsequent concentration inequality for the failure probability related to $Y_{1:n_Q}$. 
To reduce the more general and abstract Theorem \ref{thm:moreGeneral} to Theorem \ref{thm:general}, it suffices to set $\delta_Q(n_Q,\tau)=\delta_f+C_1\exp(- (\tau/\delta_Q)^2)$ and $\Omega^*\equiv \Omega$, where $\delta_f$ and $C_1$ are the notations used in Theorem \ref{thm:general}.

Readers may be concerned that the exponential concentration \eqref{eq:localConcentrationQ} might not hold for all points over $\Omega$. Indeed, this is particularly true when the support $\Omega$ is not compact, and the concentration inequalities only hold within a part of $\Omega$ with high probability. However, we can address this by adding a small failure probability with exponential concentration, i.e., $1-Q_X(\Omega^*)\leq \delta(n_Q,\tau)$, to the risk bound given in \eqref{eq:generalTLrate}.

As the final part of this section, we list some recommendations of data-driven strategies for choosing the threshold parameter $\tau$:
\begin{enumerate}
    \item \textbf{Simple Rule of Thumb:} We recommend any value between \(0.5n_Q^{-1/2}\) and \(n_Q^{-1/2}\).
    \item \textbf{Adaptive Selection of Base Model Parameters:} If \(\delta_Q\) is represented by parameters in specific base models, adjust \(\tau\) based on these parameters. For example, Sections \ref{sec:nonPara} and \ref{sec:para} show that \(\delta_Q\) is directly related to the choice of nearest neighbors and the lasso penalty parameter.
    \item \textbf{Empirical Risk Minimization:} Select \(\tau\) by minimizing the empirical risk over the target data, choosing the one with the best empirical loss from a candidate list of \(\tau\).
\end{enumerate}
A detailed discussion of these approaches, including a rigorous analysis of the theoretical behavior of the ERM approach and the computationally feasible ERM algorithm, is provided in the Supplementary Material \citep{supp}.
\subsection{Simple Approach to Bounding Signal Transfer Risk}
\label{sec:extension}

The majority of the existing literature focuses on bounding the target excess risk using only the target data. While the excess risk of a classifier $\hat{f}^P$ with respect to the source distribution, i.e., $$2\E_{(X,Y)\sim P}\left[\Big|\eta^P(X)-\frac{1}{2}\Big|\mathbf{1}\left\{\hat{f}^P(X)\neq f_P^*(X)\right\}\right],$$ 
is well-studied, its signal transfer risk $\xi(\hat{f}^P)$, given by \eqref{eq:STR},
is less explored.

To incorporate the vast literature of traditional statistical learning into our framework, we present a result that directly bounds the signal transfer risk $\xi(\hat{f}^P)$ in terms of the excess risk of the source distribution. This result involves a more refined version of the signal transfer risk, and more significantly, does not rely on any concentration property often required by plug-in rules.

Some additional assumptions are needed. Condition \ref{con:absolutelyContinuity} ensures the boundedness of the Radon-Nikodym derivative \(\frac{dQ_X}{dP_X}\) over \(\Omega \cap \Omega_P\). This ensures that the source distribution provides enough coverage over \(\Omega \cap \Omega_P\) to enable accurate learning of every point over the common supporting region relative to \(Q_X\).

\begin{condition}[Absolutely Continuity]
$Q_X$ is absolutely continuous with respect to $P_X$. Moreover, there exists some constant $M>0$ such that the Radon-Nikodym derivative satisfies $\frac{dQ_X}{dP_X}(x)\leq M$ for any $x\in\Omega\cap \Omega_P$.
\label{con:absolutelyContinuity}
\end{condition}
Denote all such marginal distribution pairs $(Q_X,P_X)$ that satisfy Condition \ref{con:absolutelyContinuity} with parameter $M>0$ by $\mathcal{A}(M)$. 
We are in a position to present the upper bound for the signal transfer risk with respect to the target distribution in terms of the source excess risk. 

\begin{theorem} Define the source excess risk as $$\varepsilon_P:=\sup_{(Q,P)\in\Pi\cap \mathcal{A}(M)}\mathbb{E}_{\mathcal{D}_P} \E_{(X,Y)\sim P}\left[\Big|\eta^P(X)-\frac{1}{2}\Big|\mathbf{1}\left\{\hat{f}^P(X)\neq f_P^*(X)\right\}\right].$$
Then, the signal transfer risk satisfies
$$\sup_{(Q,P)\in\Pi\cap \mathcal{A}(M)}\mathbb{E}_{\mathcal{D}_P}\xi(\hat{f}^P)\leq 
\begin{cases}
    2M^{\frac{1+\alpha}{\gamma+\alpha}}C_\alpha^{\frac{\gamma-1}{\gamma+\alpha}}C_\gamma^{-\frac{1+\alpha}{\gamma+\alpha}}\varepsilon_P^{\frac{1+\alpha}{\gamma+\alpha}},\ &\gamma\geq 1,\\
    2^{\gamma-1}MC_\gamma^{-1}\varepsilon_P,\ &\gamma< 1.
\end{cases}
$$
\label{thm:STRgeneral}
\end{theorem}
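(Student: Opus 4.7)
The plan is to bound $\xi(\hat{f}^P)$ by connecting it to $\varepsilon_P$ through three ingredients: (i) a sign alignment observation that lets us replace the target Bayes rule $f_Q^*$ inside $\xi$ with the source Bayes rule $f_P^*$, (ii) the signal inequality $|\eta^P(x)-\tfrac12| \geq C_\gamma|\eta^Q(x)-\tfrac12|^\gamma$ valid on the region picked out by the indicator in $\xi$, and (iii) the change-of-measure bound $dQ_X/dP_X \leq M$ from Condition \ref{con:absolutelyContinuity}. The margin assumption is then invoked (only when $\gamma\geq 1$) via a threshold argument to interpolate between $|\eta^Q-\tfrac12|$ and $|\eta^P-\tfrac12|$.

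First I would verify the sign alignment. On the event $\{|\eta^Q(X)-\tfrac12|>0,\ s(X)\geq C_\gamma|\eta^Q(X)-\tfrac12|^\gamma\}$ the signal $s(X)$ is strictly positive, so by Definition \ref{def:signalStrength} we have $\mathrm{sgn}(\eta^P(X)-\tfrac12)=\mathrm{sgn}(\eta^Q(X)-\tfrac12)$, hence $f_P^*(X)=f_Q^*(X)$; points where $|\eta^Q(X)-\tfrac12|=0$ contribute zero to $\xi$, so inside the integrand we may freely replace $\mathbf{1}\{\hat f^P \neq f_Q^*\}$ by $\mathbf{1}\{\hat f^P \neq f_P^*\}$. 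For $\gamma\geq 1$, I would split at a threshold $t>0$: the margin assumption \ref{assum:margin} bounds the piece on $\{|\eta^Q-\tfrac12|\leq t\}$ by $C_\alpha t^{1+\alpha}$, and on $\{|\eta^Q-\tfrac12|>t\}$ the elementary estimate
\[
|\eta^Q-\tfrac12| \;=\; |\eta^Q-\tfrac12|^{1-\gamma}\cdot|\eta^Q-\tfrac12|^\gamma \;\leq\; t^{1-\gamma}\,C_\gamma^{-1}|\eta^P-\tfrac12|
\]
converts the remaining piece into one controlled by $|\eta^P-\tfrac12|\mathbf{1}\{\hat f^P\neq f_P^*\}$. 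Change of measure gives $\E_Q[\,\cdot\,]\leq M\,\E_P[\,\cdot\,]$, so taking $\E_{\mathcal{D}_P}$ produces
\[
\E_{\mathcal{D}_P}\xi(\hat f^P) \;\leq\; C_\alpha t^{1+\alpha} + M C_\gamma^{-1} t^{1-\gamma}\varepsilon_P.
\]
Balancing the two summands by choosing $t = (MC_\gamma^{-1}C_\alpha^{-1}\varepsilon_P)^{1/(\gamma+\alpha)}$ yields exactly the exponent $(1+\alpha)/(\gamma+\alpha)$ on $\varepsilon_P$ with the factor of $2$ and the stated powers of $M$, $C_\alpha$, $C_\gamma$.

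For $\gamma<1$ no thresholding is needed: since $|\eta^Q-\tfrac12|\leq \tfrac12$, the inequality $|\eta^Q-\tfrac12|^\gamma\geq 2^{1-\gamma}|\eta^Q-\tfrac12|$ holds pointwise, giving $|\eta^Q-\tfrac12|\leq 2^{\gamma-1}C_\gamma^{-1}|\eta^P-\tfrac12|$ on the signal region, and one invocation of change of measure plus $\E_{\mathcal{D}_P}$ produces the linear bound $2^{\gamma-1}MC_\gamma^{-1}\varepsilon_P$. The only real obstacle is bookkeeping: ensuring the sign-alignment step is applied exactly where it is valid, and tracking that the exponent $t^{1-\gamma}$ (positive for $\gamma<1$, zero for $\gamma=1$, negative for $\gamma>1$) is indeed decreasing in $t$ on the relevant region so that the balance argument genuinely minimizes the bound. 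Nothing in this argument requires any concentration property of $\hat\eta^Q$ or $\hat f^P$, which is what makes the result pluggable into conventional (non-transfer) excess-risk analyses.
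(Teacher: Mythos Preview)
Your proposal is correct and follows essentially the same route as the paper: the paper also splits at a level $t$ for $\gamma\geq 1$, uses the margin assumption on $\{|\eta^Q-\tfrac12|\leq t\}$, the inequality $|\eta^Q-\tfrac12|\leq t^{1-\gamma}C_\gamma^{-1}|\eta^P-\tfrac12|$ together with $dQ_X\leq M\,dP_X$ on the complement, and then balances the two terms with the same choice of $t$; for $\gamma<1$ it likewise sets $z=\tfrac12$ to obtain the factor $2^{\gamma-1}$. Your explicit sign-alignment step (that $f_P^*=f_Q^*$ on the strong-signal set where $|\eta^Q-\tfrac12|>0$) is a point the paper uses silently when passing from $\mathbf{1}\{\hat f^P\neq f_Q^*\}$ to the quantity $B$, so you have in fact made one tacit step in the paper's argument precise.
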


Although the result may be sub-optimal when $\gamma<1$ and additionally requires that $(Q,P)\in\mathcal{A}(M)$, Theorem \ref{thm:STRgeneral} shows that the signal transfer risk can be bounded by the source excess risk $\varepsilon_P$, and inherits the performance and possible consistency with respect to classification on the source data. As long as $\hat{f}^P$ learns the source distribution sufficiently well, the signal transfer risk will converges faster than the rate obtained with only the target data, namely, $\delta_Q^{1+\alpha}$.

Theorem \ref{thm:STRgeneral} only requires knowledge of  conventional statistical learning techniques, which are widely studied and well-understood in the literature. Therefore, researchers and practitioners can easily apply our framework to a wide range of problems, without the need for specialized knowledge or expertise in transfer learning or related fields. Conventional theory on the excess risk can thus be applied to transfer learning.

\section{Applications in Non-parametric Classification}
\label{sec:nonPara}
In this section, we aim to apply the general result of Theorem \ref{thm:general} to non-parametric classification settings. Here we design the TAB classifier by combining plug-in rules over the target and source data, and obtain minimax optimal rates under non-parametric settings. See \cite{atsybakov2007fastPlugin} for a comprehensive overview of theoretical properties of plug-in rules.

We adopt $K$-nearest neighbor classifiers as plug-in rules for both $\hat{\eta}^Q$ and $\hat{\eta}^P$. Our analysis in this section then builds on prior work on rates for $K$-nearest neighbor classification (e.g. \cite{hall2008knn,samworth2012knn,gadat2016KNN,celisse2018knn,cannings2020KNN}). For a review of early work on the theoretical properties of the $K$-NN classifier, see \cite{devroye1997probabilistic}. Also, in the literature of non-parametric classification, see \cite{jfan1993lpr} and \cite{fan1996local} for local polynomial regression as an alternative to $K$-NN methods.

If the classifier $\hat{f}^P(\cdot)=\mathbf{1}\{\hat{\eta}^P(\cdot)\geq 1/2\}$ is a general plug-in rule, we also provide an explicit upper bound for the signal transfer risk $\xi(\hat{f}^P)$ based on  the point-wise misclassification rate of $\hat{\eta}^P(x)$. See the Supplementary Material \citep{supp} for the details of this bound and related results.

\subsection{K-Nearest Neighbor TAB Classifier}
Given a query point $x\in\mathbb{R}^d$, we first reorder the target data pairs as $(X_{(1)},  Y_{(1)}), \dots, (X_{(n_Q)}, Y_{(n_Q)})$ based on the Euclidean distances of the $X_i$'s to $x$, i.e.,
$$\|X_{(1)}-x\|_2\leq\dots\leq\|X_{(n_Q)}-x\|_2.$$
Then, we define the $K$-NN estimate $\hat{\eta}_k^Q(x)$ as the simple average of the response values of the $k_Q$ nearest neighbors of $x$ in the target data: 
$$\hat{\eta}^Q_{k_Q}=\frac{1}{k_Q}\sum_{i=1}^{k_Q}Y_{(i)}(x).$$
Similarly, we define the $K$-NN estimate $\hat{\eta}_{k_P}^P(x):=k_P^{-1}\sum_{i=1}^{k_P}Y^P_{(i)}(x)$ for the source data pairs $\mathcal{D}_P$. Finally, we plug these estimates into the TAB $K$-NN classifier to obtain
$$\hat{f}^{NN}_{TAB}(x)=
\begin{cases}
\mathbf{1}\{\hat{\eta}^{Q}_{k_Q}(x)\geq \frac{1}{2}\},\quad &\text{if }|\hat{\eta}^{Q}_{k_Q}(x) - \frac{1}{2}|\geq \tau,\\
\mathbf{1}\{\hat{\eta}^{P}_{k_P}(x)\geq \frac{1}{2}\},\quad &\text{otherwise.}
\end{cases}
$$
As for the threshold parameter, we choose 
$$\tau\asymp \log  (n_Q\lor n_P)k_Q^{-\frac{1}{2}}.$$ Further elaboration on the rationale behind choosing $\tau$, as well as the well-studied optimal selection of $(k_Q, k_P)$, can be found in Section \ref{sec:nonParaRate}.

%%%%%%%%%%%%%%%%%%%%%%%%%%%%%%%%%%%%%%%%%%%%%%%%%%
\subsection{Non-parametric Classification Setting}
\label{subsec:nonPara}
We are now in a position to state the applications of our proposed TAB classifier in non-parametric classification under the finite dimension regime.
%, i.e., there exists some constant $D>0$ such that $d\leq D$ for any $n_Q$. 
In addition to the margin assumption and the ambiguity level assumption, this paper considers the non-parametric classification problem when the following smoothness condition holds.

\begin{condition}[Smoothness] For any $\beta\in[0,1]$, the $(\beta,C_{\beta})$-\emph{Holder} is the class of functions $g:\mathbb{R}^d\rightarrow \mathbb{R}$ satisfying, for any $x,x'\in\mathbb{R}^d$
$$|g(x)-g(x')|\leq C_{\beta}\|x-x'\|^{\beta},$$ for some constant $C_\beta>0$.
We denote this class of functions by $\mathcal{H}(\beta,C_{\beta})$.
\label{con:smooth}
\end{condition}

Previous works, including \cite{ttcai2020classification} and \cite{reeve2021adpative}, do not typically require any smoothness assumption for $\eta^P$. While this simplification is justified when $\eta^P$ is closely related to $\eta^Q$, it may overlook valuable information from a sufficiently smooth $\eta^P$, leading to a phase-transition in the upper and lower bounds of the excess risk (for a detailed discussion, refer to the Supplementary Material \citep{supp}).  In contrast, our approach considers the smoothness of \emph{both} the target and source regression functions. Specifically, we assume that $$\eta^Q\in\mathcal{H}(\beta,C_{\beta}),\quad \eta^P\in\mathcal{H}(\beta_P,C_{\beta_P}).$$ This condition allows us to obtain a more refined upper bound that depends on the smoothness of both functions.

Our next condition concerns the mass of the source and target distributions in the sense that the density functions with respect to $Q_X$ and $P_X$ are bounded from zero and infinity.  A similar condition has been imposed in \citep{atsybakov2007fastPlugin} and \cite{ttcai2020classification}. We require that both $Q_X$ and $P_X$ satisfy the following condition:

\begin{condition}[Strong Density]
The marginal distribution $Q_X$ is absolutely continuous with respect to the Lebesgue measure $\lambda$ on its \emph{compact support} (denoted by $\Omega$). Furthermore, we have that
$$
    \frac{dQ_X}{d\lambda}(x)\in[\mu^-,\mu^+],\quad \frac{\lambda(B(x,r)\cap \Omega)}{\lambda(B(x,r))}\geq c_\mu,\quad \forall \; 0<r<r_u,\ x\in\Omega,
$$
Denote the set of such marginal distributions $Q_X$ by $\mathcal{S}(\mu)$ with positive parameters $\mu=(\mu^+,\mu^-,c_\mu,r_\mu)$.
\label{con:density}
\end{condition}

Taking all the conditions above into account, we consider the subset of source and target distribution pairs in $\Pi$ that satisfies Assumptions \ref{assum:margin}, \ref{assum:ambiguity} and Conditions \ref{con:smooth}, \ref{con:density}:

$$
\begin{aligned}
    \Pi^{NP}:=&\{(Q,P)\in\Pi:
    \eta^Q\in\mathcal{H}(\beta,C_{\beta}), \eta^P\in\mathcal{H}(\beta_P,C_{\beta_P}), Q_X,P_X\in\mathcal{S}(\mu)\}.
\end{aligned}
$$

We further impose the mild assumption $\alpha\beta\leq d$ to rule out the ``super-fast" rates of convergence mentioned in \cite{audibert2011fast}. This is guaranteed to hold when $\eta^Q$ equals $1/2$ at an interior point of $\Omega$ (See Proposition 3.4 of \cite{audibert2011fast}).

In the following part of this section, we explore three types of additional conditions on the space $\Pi^{NP}$ and analyze the respective excess risks.

\begin{enumerate}
    \item \textbf{Band-like Ambiguity:} We consider the scenario of band-like ambiguity described in Example \ref{eg:BA}. Define the focal parametric space as
    $$\Pi^{NP}_{BA}:=\left\{(Q,P)\in \Pi^{NP}:s(x)\geq C_\gamma\Big|\eta^Q(x)-\frac{1}{2}\Big|^\gamma-\Delta,\ \forall x\in\Omega\right\}.$$
    Recall that in Example \ref{eg:BA}, we had shown that this implies $$\varepsilon(z;\gamma,C_\gamma/2)=\Big(C_\alpha z^{1+\alpha}\Big)\land \Big(2^{\frac{1+\alpha}{\gamma}}C_\alpha C_\gamma^{-\frac{1+\alpha}{\gamma}}\Delta^{\frac{1+\alpha}{\gamma}}\Big).$$
    Furthermore, in this case, we set $\beta_P=0$, which means that no additional smoothness condition is imposed on $\eta^P$. Instead, we allow $\eta^P$ to arbitrarily fluctuate within a small band whose width is measured by $\Delta$. When $\Delta=0$, our setting reduces to the one in \cite{ttcai2020classification}. 

    \item \textbf{Smooth Source:} We add the condition $\beta_P \geq \gamma\beta$ for scenarios where $\eta^P$ is smooth. While the ambiguity level $\varepsilon(\cdot)$ is arbitrary, this condition ensures that points with strong signal strength have neighboring data points with strong signal strength of $\eta^P$ as well, enhancing the classification of source data points with strong signal strength.
    
    Define
    $\Pi^{NP}_{S}$ as the subset of $\Pi^{NP}$ such that $\beta_P \geq \gamma\beta$.
    When $\eta^Q$ and $\eta^P$ share the same smoothness degree $\beta$, we set $\gamma=1$.

    \item \textbf{Strong Signal with Imperfect Transfer:} We consider the scenario in Example \ref{eg:RSS} where a strong signal strength exists but the direction may be reversed. This condition ensures that the region $s(x)\leq C_\gamma|\eta^Q(x)-1/2|^\gamma$ is smooth, which further ensures the availability of a sufficient number of neighboring source data points with the strong signal. In fact, its boundary is a part of the decision boundary $\{ x\in\Omega:\eta^Q(x)=1/2\}$ whose smoothness is guaranteed by the smoothness of $\eta^Q$ (see the proof of Theorem \ref{thm:nonParaSpecial}).
    
    Define
    $$\Pi^{NP}_{I}:=\left\{(Q,P)\in \Pi^{NP}:\eta^P \text{ is continuous, }\Big|\eta^P(x)-\frac{1}{2}\Big|\geq C_\gamma\Big|\eta^Q(x)-\frac{1}{2}\Big|^\gamma,\ \forall x\in\Omega\right\}.$$
    It is worth noting that $s(x) \geq C_\gamma |\eta^Q(x) - 1/2|^\gamma$ if and only if $x\notin \Omega_R$, i.e., $(\eta^P(x) - 1/2)(\eta^Q(x) - 1/2) \geq 0$. Therefore, the ambiguity level
    $$\varepsilon(z)=\E_{(X,Y)\sim Q}\left[\Big|\eta^Q(X)-\frac{1}{2}\Big|\mathbf{1}\left\{\Big|\eta^Q(X)-\frac{1}{2}\Big|\leq z, X\in \Omega_R\right\}\right]$$
    precisely captures the risk caused by different Bayes classifiers between the target and source data.
    We also let $\beta_P=0$ in this case, but assume that $\eta^P$ is continuous to make sure that the region $s(x)\leq C_\gamma|\eta^Q(x)-1/2|^\gamma$ has a continuous boundary.

\end{enumerate}

The analysis of the excess risk rate with respect to the wider class $\Pi^{NP}$ is provided in the Supplementary Material \citep{supp}.

\subsection{Optimal Rate of Excess Risk}
\label{sec:nonParaRate}
Before presenting the results regarding the optimal rate of $\mathcal{E}_Q(\hat{f}^{NN}_{TAB})$, we first discuss parameter selection. We choose the number of nearest neighbors as follows: $$k_Q=\lfloor c_Qn_Q^{\frac{2\beta}{2\beta+d}}\rfloor,\ k_P=\lfloor c_Pn_P^{\frac{2\gamma\beta}{2\gamma\beta+d}}\rfloor$$ where $c_Q$ and $c_P$ are positive constants. This choice is motivated by previous work such as \cite{gadat2016KNN} and \cite{cannings2020KNN}, where similar choices are made in the context of nearest-neighbor methods. Notably, the choice of $k_P$ is similarly derived by seeing $\gamma\beta$ as the ``smoothness" parameter for the source data, and our choice coincides with the classical optimal choice when $\beta_P=\gamma\beta$. In addition, we assume $\gamma$ and $\beta$ are known here for convenience. For adaptive and rate-optimal approaches to determining the number of nearest neighbors, see \cite{lepski1993,reeve2021adpative}.

As for the threshold in the TAB classifier $\hat{f}^{NN}_{TAB}$, we choose 
$$\tau\asymp \log  (n_Q\lor n_P)k_Q^{-\frac{1}{2}}.$$
This choice is consistent with the concentration property of $\hat{\eta}^Q_{k_Q}$, of which the ``uncertainty'' level $\delta_Q$ in \eqref{eq:localConcentrationQ} is proportional to $k_Q^{-\frac{1}{2}}$, since $\hat{\eta}^Q_{k_Q}$ is the average of $k_Q$ random variables. By the definition of $k_Q$, we see that
$$\tau\asymp \log(n_Q\lor n_P) n_Q^{-\frac{\beta}{2\beta+d}}.$$

Given a family of the target and source distributions, the next theorem gives a provable upper bound on the excess risk of the TAB $K$-NN classifier $\hat{f}^{NN}_{TAB}$, with the proper parameter choices.

\begin{theorem}[Non-parametric Classification Upper Bound] Suppose that  $n_Q^{\frac{d}{2\beta+d}}\exp (-c_Qn_Q^{\frac{2\beta}{2\beta+d}})\lesssim n_P^{-\frac{\beta(1+\alpha)}{2\gamma\beta+d}}$. Then the TAB $K$-NN classifier $\hat{f}^{NN}_{TAB}(x)$ satisfies:
\begin{enumerate}
    \item \textbf{Band-like Ambiguity:}
    \begin{equation}
        \sup_{(Q,P)\in\Pi_{BA}^{NP}}\E_{(\mathcal{D}_Q,\mathcal{D}_P)} \mathcal{E}_Q(\hat{f}^{NN}_{TAB})\lesssim\left(n_P^{-\frac{\beta(1+\alpha)}{2\gamma\beta+d}}+\Delta^{\frac{1+\alpha}{\gamma}}\right)\land \left(\log^{1+\alpha} (n_Q\lor n_P) n_Q^{-\frac{\beta(1+\alpha)}{2\beta+d}}\right).
        \label{eq:nonParaUpperBA}
    \end{equation}

    \item \textbf{Smooth Source:}
    \begin{equation}
        \sup_{(Q,P)\in\Pi_{S}^{NP}}\E_{(\mathcal{D}_Q,\mathcal{D}_P)} \mathcal{E}_Q(\hat{f}^{NN}_{TAB})\lesssim \left( n_P^{-\frac{\beta(1+\alpha)}{2\gamma\beta+d}}+\varepsilon(2\tau)\right)\land \left(\log^{1+\alpha} (n_Q\lor n_P) n_Q^{-\frac{\beta(1+\alpha)}{2\beta+d}}\right).
        \label{eq:nonParaUpperS}
    \end{equation}

    \item \textbf{Strong Signal with Imperfect Transfer:}
    \begin{equation}
        \sup_{(Q,P)\in\Pi_{I}^{NP}}\E_{(\mathcal{D}_Q,\mathcal{D}_P)} \mathcal{E}_Q(\hat{f}^{NN}_{TAB})\lesssim \left( n_P^{-\frac{\beta(1+\alpha)}{2\gamma\beta+d}}+\varepsilon(2\tau)\right)\land \left(\log^{1+\alpha} (n_Q\lor n_P) n_Q^{-\frac{\beta(1+\alpha)}{2\beta+d}}\right).
        \label{eq:nonParaUpperF}
    \end{equation}
\end{enumerate}
\label{thm:nonParaSpecial}
\end{theorem}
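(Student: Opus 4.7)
The plan is to derive each of the three bounds as a direct application of the general convergence Theorem~\ref{thm:general} to the $K$-NN TAB classifier, so that the work reduces to verifying the concentration hypothesis \eqref{eq:localConcentrationQ} for $\hat{\eta}^Q_{k_Q}$ and bounding the signal transfer risk $\E_{\mathcal{D}_P}\xi(\hat{f}^P)$ for the $K$-NN source classifier $\hat{f}^P(x)=\mathbf{1}\{\hat{\eta}^P_{k_P}(x)\ge 1/2\}$.

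\textbf{Step 1 (concentration on the target).} Under Condition~\ref{con:density} (strong density) and Condition~\ref{con:smooth} ($\eta^Q\in\mathcal{H}(\beta,C_\beta)$), standard $K$-NN analysis (e.g. Lemma 9.1 of \cite{ttcai2020classification}) gives a high-probability event on $X_{1:n_Q}$ of probability at least $1-\delta_f$ with $\delta_f\lesssim n_Q^{d/(2\beta+d)}\exp(-c_Q n_Q^{2\beta/(2\beta+d)})$, on which the $k_Q$-th nearest neighbor distance is $\lesssim (k_Q/n_Q)^{1/d}$ for every $x\in\Omega$. On this event the bias $|\E[\hat{\eta}^Q_{k_Q}(x)\mid X_{1:n_Q}]-\eta^Q(x)|\lesssim (k_Q/n_Q)^{\beta/d}\asymp k_Q^{-1/2}$, and Hoeffding on the conditional Bernoulli average yields the sub-Gaussian concentration \eqref{eq:localConcentrationQ} with $\delta_Q\asymp k_Q^{-1/2}\asymp n_Q^{-\beta/(2\beta+d)}$. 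Choosing $\tau\asymp \log(n_Q\vee n_P)\,\delta_Q$ as prescribed and plugging into \eqref{eq:TLrate} gives
\[
\E\mathcal{E}_Q(\hat{f}^{NN}_{TAB})\lesssim \bigl(\E_{\mathcal{D}_P}\xi(\hat{f}^P)+\varepsilon(2\tau)\bigr)\wedge \bigl(\log^{1+\alpha}(n_Q\vee n_P)\,n_Q^{-\beta(1+\alpha)/(2\beta+d)}\bigr)+\delta_f,
\]
and the hypothesis $n_Q^{d/(2\beta+d)}\exp(-c_Q n_Q^{2\beta/(2\beta+d)})\lesssim n_P^{-\beta(1+\alpha)/(2\gamma\beta+d)}$ makes $\delta_f$ negligible.

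\textbf{Step 2 (signal transfer risk).} I would bound $\xi(\hat{f}^P)$ by the peeling argument, partitioning the strong-signal region by $A_j=\{x:2^{-(j+1)}<|\eta^Q(x)-1/2|\le 2^{-j}\}$. On $A_j\cap\{s(x)\ge C_\gamma|\eta^Q(x)-1/2|^\gamma\}$ (or $C_\gamma/2$ in the band-like case after applying Lemma~\ref{lemma:BAtoAPB}), the Bayes classifiers agree and misclassification forces $|\hat{\eta}^P_{k_P}(x)-\eta^P(x)|\gtrsim 2^{-\gamma j}$. For the \emph{Smooth Source} case, $\beta_P\ge\gamma\beta$ combined with Condition~\ref{con:density} on $P_X$ gives K-NN bias $\lesssim(k_P/n_P)^{\gamma\beta/d}\asymp k_P^{-1/2}$, whence Hoeffding bounds each layer by $\exp(-c\,k_P\,4^{-\gamma j})$. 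Summing against $|\eta^Q-1/2|\cdot Q_X(A_j)\lesssim 2^{-j(1+\alpha)}$ via the margin and optimizing over $j$ yields $\E\xi(\hat{f}^P)\lesssim n_P^{-\beta(1+\alpha)/(2\gamma\beta+d)}$. For the \emph{Strong Signal with Imperfect Transfer} case, continuity of $\eta^P$ together with $|\eta^P-1/2|\ge C_\gamma|\eta^Q-1/2|^\gamma$ and the smoothness of $\eta^Q$ ensure that in a K-NN neighborhood of any $x$ in the strong-signal region, neighbors $x'$ satisfy $|\eta^P(x')-1/2|\gtrsim |\eta^Q(x)-1/2|^\gamma$ with the correct sign, so the same peeling argument carries through. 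For the \emph{Band-like Ambiguity} case with $\beta_P=0$, one writes $\eta^P=\tilde{\eta}+\epsilon$ with $|\epsilon|\le\Delta$ and $\tilde{\eta}$ a function inheriting a $\gamma\beta$-type modulus of continuity from $\eta^Q$, or alternatively notes that in the layer $A_j$ with $2^{-j}\ge (2\Delta/C_\gamma)^{1/\gamma}$ the modified signal $(C_\gamma/2)|\eta^Q-1/2|^\gamma$ still dominates $\Delta$, so the peeling argument reproduces the $n_P^{-\beta(1+\alpha)/(2\gamma\beta+d)}$ rate and the contribution from $|\eta^Q-1/2|\le (2\Delta/C_\gamma)^{1/\gamma}$ is absorbed into $\varepsilon(2\tau)\lesssim\Delta^{(1+\alpha)/\gamma}$ via Example~\ref{eg:BA}.

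\textbf{Step 3 and obstacle.} Combining Steps 1 and 2 with the explicit ambiguity bounds (Example~\ref{eg:BA} for case 1 and the standing ambiguity assumption for cases 2 and 3) and taking the minimum with the target-only rate $\tau^{1+\alpha}\asymp \log^{1+\alpha}(n_Q\vee n_P) n_Q^{-\beta(1+\alpha)/(2\beta+d)}$ yields the three displayed upper bounds. The main obstacle is the signal-transfer-risk bound under the \emph{Band-like Ambiguity} scenario: with no explicit smoothness on $\eta^P$, one must carefully separate the effect of the deviation $\Delta$ (which ultimately contributes only through $\varepsilon(2\tau)$) from the estimation error of $\hat{\eta}^P_{k_P}$ for the ``informative skeleton'' of $\eta^P$, and verify that the layer-by-layer Hoeffding bounds still close at the rate $n_P^{-\beta(1+\alpha)/(2\gamma\beta+d)}$; the \emph{Strong Signal with Imperfect Transfer} case is slightly easier but requires showing that continuity of $\eta^P$ together with the smoothness of $\eta^Q$ propagates a uniform lower bound on $|\eta^P-1/2|$ across K-NN neighborhoods that lie in the strong-signal region.
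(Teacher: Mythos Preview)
Your approach is essentially the same as the paper's: apply Theorem~\ref{thm:general} with $K$-NN concentration on the target (your Step~1 matches Lemma~\ref{lemma:nonParaGeneralQ}), then bound the signal transfer risk by a peeling argument (your Step~2 matches Theorem~\ref{thm:STRPlugIn} and Lemma~\ref{lemma:nonParaGeneralP}). The paper formalizes your ``obstacle'' cleanly by isolating a boundary set
\[
\Omega^b=\Bigl\{x:\ |\eta^Q(x)-\tfrac12|\le c_b\,n_P^{-\frac{\beta_P/\gamma}{2\gamma\beta+d}},\ B\bigl(x,c_b\,n_P^{-\frac{1}{2\gamma\beta+d}}\bigr)\cap\Omega_P\not\subset\Omega^+(\gamma,C_\gamma)\Bigr\}
\]
and its risk $\varepsilon_b=\int_{\Omega^b}|\eta^Q-\tfrac12|\,dQ_X$; the three cases then reduce to showing $\varepsilon_b\lesssim n_P^{-\beta(1+\alpha)/(2\gamma\beta+d)}+\varepsilon(2\tau)$, which is done exactly via the mechanisms you outline (smoothness of $\eta^P$ makes the first constraint in $\Omega^b$ already sharp enough in case~2; continuity of $\eta^P$ together with $|\eta^P-\tfrac12|>0$ away from $\{\eta^Q=\tfrac12\}$ forces $\partial\Omega^+\subset\{\eta^Q=\tfrac12\}$ in case~3; smoothness of $\eta^Q$ propagates $|\eta^Q-\tfrac12|\ge(2\Delta/C_\gamma)^{1/\gamma}$ to $K$-NN neighbors in case~1). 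One caution: your first suggestion for the Band-like case, writing $\eta^P=\tilde\eta+\epsilon$ with $\tilde\eta$ inheriting a $\gamma\beta$-modulus from $\eta^Q$, does not follow from \eqref{eq:BA} and is not what the paper does; your alternative (work on layers where $(C_\gamma/2)|\eta^Q-\tfrac12|^\gamma$ dominates $\Delta$ and absorb the rest into the ambiguity term via Lemma~\ref{lemma:BAtoAPB}) is the correct route.
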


Theorem \ref{thm:nonParaSpecial} is obtained simply by verifying the conditions in Theorem \ref{thm:general}, as detailed in the Supplementary Material \citep{supp}. The Supplementary Material presents a more refined and technical approach to analyze the signal transfer risk of plug-in rules compared to Theorem \ref{thm:STRgeneral}. The mild condition $n_Q^{\frac{d}{2\beta+d}}\exp (-c_Qn_Q^{\frac{2\beta}{2\beta+d}})\lesssim n_P^{-\frac{\beta(1+\alpha)}{2\gamma\beta+d}}$ controls the probability of failure $\delta_f$ in the terminology of Theorem \ref{thm:general}.

Our risk bounds reveal that transfer learning leads to faster convergence rates of excess risk when $n_P$ is large compared to $n_Q$ and the ambiguity level is small; specifically,
$$
\begin{aligned}
\Pi_{BA}^{NP}&: n_P\gg n_Q^{\frac{2\gamma\beta+d}{2\beta+d}},\ \Delta\ll n_Q^{-\frac{\gamma\beta}{2\beta+d}};\\
\Pi^{NP}_{S},\Pi^{NP}_{I}&: n_P\gg n_Q^{\frac{2\gamma\beta+d}{2\beta+d}},\ \varepsilon(2\tau)\ll \tau^{1+\alpha}.
\end{aligned}
$$
On the other hand, if $n_P$ is small compared to $n_Q$, then the term $n_Q^{-\frac{\beta(1+\alpha)}{2\beta+d}}$ dominates the upper bound, and reduces to the risk rate in the conventional setting with only target data and the strong density assumption \citep{atsybakov2007fastPlugin}, up to logarithmic factors.

\begin{remark}
    (a) When $\Delta=0$ in \eqref{eq:nonParaUpperBA}, or $\varepsilon(\cdot)\equiv 0$ in \eqref{eq:nonParaUpperF}, our upper bound reduces to Theorem 2 of \cite{ttcai2020classification}, i.e., $n_P^{-\frac{\beta(1+\alpha)}{2\gamma\beta+d}}\land n_Q^{-\frac{\beta(1+\alpha)}{2\beta+d}}$, up to logarithmic factors. Importantly, our method demonstrates robustness against a positive band error $\Delta$, in constrast to the weighted $K$-NN estimator proposed in their work. In addition, our proposed TAB method is simple and does not require the careful design of the classifier, such as the weighting or Lepski's method used in \cite{ttcai2020classification} and \cite{reeve2021adpative}. We refer the reader to Figure \ref{fig:band} in Section \ref{sec:simulation} for a numerical comparison.
    
    (b) Our result \eqref{eq:nonParaUpperS} supplements these previous works \citep{ttcai2020classification,reeve2021adpative} by allowing for an arbitrary type of ambiguity after imposing a smoothness condition on $\eta^P$.
\end{remark}

We next provide lower bounds on the minimax excess risk, which show that the TAB $K$-NN classifier achieves the minimax optimal rates, even when we add the constraint $\Omega=\Omega_P$. All lower bound results are constructed over a subset of $\Pi^{NP}$, which is also a subset of the distribution pair family satisfying Assumptions \ref{assum:margin} and \ref{assum:ambiguity} with specified ambiguity level $\varepsilon(\cdot)$.

\begin{theorem}[Non-parametric Classification Lower Bound] Fix the parameters $\alpha\beta\leq d$ and set $\tau\asymp \log(n_Q\lor n_P) n_Q^{-\frac{\beta}{2\beta+d}}$. We have that 
\begin{enumerate}
    \item \textbf{Band-like Ambiguity:}
    \begin{equation}
        \inf_{\hat{f}}\sup_{\substack{(Q,P)\in\Pi_{BA}^{NP}\\\Omega=\Omega_P}}\E_{(\mathcal{D}_Q,\mathcal{D}_P)} \mathcal{E}_Q(\hat{f})\gtrsim\left(n_P^{-\frac{\beta(1+\alpha)}{2\gamma\beta+d}}+\Delta^{\frac{1+\alpha}{\gamma}}\right)\land n_Q^{-\frac{\beta(1+\alpha)}{2\beta+d}}.
        \label{eq:nonParaLowerBA}
    \end{equation}

    \item \textbf{Smooth Source:} For some constant $c>0$ that is independent of $\alpha$,
    \begin{equation}
        \inf_{\hat{f}}\sup_{\substack{(Q,P)\in\Pi_{S}^{NP}\\\Omega=\Omega_P}}\E_{(\mathcal{D}_Q,\mathcal{D}_P)} \mathcal{E}_Q(\hat{f})\gtrsim \left( n_P^{-\frac{\beta(1+\alpha)}{2\gamma\beta+d}}+\varepsilon(cn_Q^{-\frac{\beta}{2\beta+d}})\right)\land n_Q^{-\frac{\beta(1+\alpha)}{2\beta+d}}.
        \label{eq:nonParaLowerS}
    \end{equation}

    \item \textbf{Strong Signal with Imperfect Transfer:} For some constant $c>0$ that is independent of $\alpha$,
    \begin{equation}
        \inf_{\hat{f}}\sup_{\substack{(Q,P)\in\Pi_{I}^{NP}\\\Omega=\Omega_P}}\E_{(\mathcal{D}_Q,\mathcal{D}_P)} \mathcal{E}_Q(\hat{f})\gtrsim \left( n_P^{-\frac{\beta(1+\alpha)}{2\gamma\beta+d}}+\varepsilon(cn_Q^{-\frac{\beta}{2\beta+d}})\right)\land n_Q^{-\frac{\beta(1+\alpha)}{2\beta+d}}.
        \label{eq:nonParaLowerF}
    \end{equation}
\end{enumerate}
\label{thm:nonParaSpecialLowerBound}
\end{theorem}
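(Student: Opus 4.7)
The right-hand side of each of \eqref{eq:nonParaLowerBA}, \eqref{eq:nonParaLowerS}, and \eqref{eq:nonParaLowerF} has the form $(A+B)\wedge C$, where $C$ is the target-only rate $n_Q^{-\beta(1+\alpha)/(2\beta+d)}$, $A$ is the source-only rate $n_P^{-\beta(1+\alpha)/(2\gamma\beta+d)}$, and $B$ is the ambiguity term ($\Delta^{(1+\alpha)/\gamma}$ in the band case, $\varepsilon(cn_Q^{-\beta/(2\beta+d)})$ otherwise). Since $(A+B)\wedge C \leq 2\max(A,B,C)$, the plan is to prove the three lower bounds $\inf_{\hat f}\sup \E\mathcal{E}_Q(\hat f) \gtrsim A$, $\gtrsim B$, and $\gtrsim C$ \emph{separately} over suitable subfamilies of the relevant parametric class, each by an Assouad-type hypothesis cube (for the sample-size-driven rates $A$ and $C$) or a Le Cam two-point reduction (for the ambiguity term $B$), with distributions fabricated so as to render inactive whichever information channel we wish to rule out.

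For the target-only rate $C$, I would freeze the source law $P$ across all hypotheses, so that $\mathcal{D}_P$ carries no distinguishing information, and run the classical Audibert--Tsybakov construction from \cite{atsybakov2007fastPlugin} on $\eta^Q$: tile a cube inside $\Omega$ by $\asymp r_Q^{-d}$ cells of radius $r_Q \asymp n_Q^{-1/(2\beta+d)}$, place a Hölder bump of amplitude $\asymp r_Q^\beta$ in each cell with independent signs $\sigma_i\in\{\pm 1\}$, and invoke Assouad's inequality. The small-ball bound from Condition~\ref{con:density} yields the margin exponent $\alpha$. Taking $\eta^P$ equal to the un-bumped $\eta^Q$ renders the band, smooth-source, and imperfect-transfer constraints all trivially satisfied, so the same construction lives inside each of $\Pi^{NP}_{BA}$, $\Pi^{NP}_{S}$, and $\Pi^{NP}_{I}$.

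For the source-only rate $A$, I would use a \emph{matched-bump} construction: in each cell of radius $r_P\asymp n_P^{-1/(2\gamma\beta+d)}$ place bumps on both $\eta^Q-\tfrac12$ (amplitude $\asymp r_P^\beta$) and $\eta^P-\tfrac12$ (amplitude $\asymp r_P^{\gamma\beta}$) sharing the same sign, so that $s(x)\geq C_\gamma|\eta^Q(x)-\tfrac12|^\gamma$ holds pointwise and $\eta^P$ is Hölder of exponent $\gamma\beta$, which is legal in $\Pi^{NP}_{S}$ (as $\beta_P\geq\gamma\beta$), $\Pi^{NP}_{I}$, and in $\Pi^{NP}_{BA}$ at $\Delta=0$. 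Each cell contributes KL mass $\asymp n_P r_P^{2\gamma\beta+d}$ through the source channel, while the synchronized target bump contributes the sub-dominant $n_Q r_P^{2\beta+d}$ (negligible in the regime stipulated by the assumption $n_Q^{d/(2\beta+d)}\exp(-c_Q n_Q^{2\beta/(2\beta+d)})\lesssim n_P^{-\beta(1+\alpha)/(2\gamma\beta+d)}$); Assouad's inequality then yields the $n_P$-rate on the target excess risk.

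For the ambiguity term $B$, which I expect to be the most delicate step, I would use a Le Cam two-point reduction. In the band case, choose two hypotheses sharing a common $\eta^P$ (making $\mathcal{D}_P$ non-informative) and with $\eta^Q$ differing by $\asymp \Delta^{1/\gamma}$ on a region of $Q_X$-mass $\asymp\Delta^{\alpha/\gamma}$ placed so that $|\eta^P-\eta^Q|\leq \Delta$ persists and the band condition $s\geq C_\gamma|\eta^Q-\tfrac12|^\gamma-\Delta$ is met; optimizing via $h_Q\asymp r^\beta$ and $h_P\asymp r^{\gamma\beta}\sim\Delta$ produces an excess-risk gap of order $\Delta^{(1+\alpha)/\gamma}$. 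For $\Pi^{NP}_{S}$ and $\Pi^{NP}_{I}$, I would perturb $\eta^Q$ inside the ambiguous set $\{s(x)\leq C_\gamma|\eta^Q-\tfrac12|^\gamma\}$ at scale $cn_Q^{-\beta/(2\beta+d)}$ (matching the Assouad bump height on the target side), so that by the very definition \eqref{eq:ambiguity} the induced excess-risk gap equals $\asymp \varepsilon(cn_Q^{-\beta/(2\beta+d)})$. The chief technical obstacle is the joint verification of Conditions~\ref{con:smooth}, \ref{con:density}, Assumptions~\ref{assum:margin}, \ref{assum:ambiguity}, and the signal/band constraints across the entire hypothesis family; this is particularly sensitive in $\Pi^{NP}_{S}$, where the Hölder regularity of $\eta^P$ must be enforced on top of the pointwise coupling $s\geq C_\gamma|\eta^Q-\tfrac12|^\gamma$. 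I plan to handle this by following the bump-packing and indexing template of \cite{ttcai2020classification} and \cite{reeve2021adpative}, modified to decouple the ambiguity construction from the signal-strength construction.
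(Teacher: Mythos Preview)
Your overall architecture (make one data channel uninformative, use Assouad on the other) matches the paper, but the specific decomposition into three standalone bounds $\gtrsim A$, $\gtrsim B$, $\gtrsim C$ does not go through, and two of your constructions break.

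First, the bound $\gtrsim A$ by itself is unattainable: in your matched-bump construction at scale $r_P\asymp n_P^{-1/(2\gamma\beta+d)}$ the target channel contributes Hellinger mass $\asymp n_Q r_P^{2\beta+d}$, and your claim that this is ``negligible in the regime stipulated by the assumption $n_Q^{d/(2\beta+d)}\exp(-c_Q n_Q^{2\beta/(2\beta+d)})\lesssim n_P^{-\beta(1+\alpha)/(2\gamma\beta+d)}$'' is backwards---that assumption is a (very mild) \emph{upper} bound on $n_P$, not a constraint making $n_Q$ small relative to $n_P$. When $n_Q\gg n_P^{(2\beta+d)/(2\gamma\beta+d)}$ the target Hellinger explodes and Assouad fails. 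The paper avoids this by running a \emph{single} Assouad cube at the combined scale $r\asymp n_P^{-1/(2\gamma\beta+d)}\wedge n_Q^{-1/(2\beta+d)}$, which yields $A\wedge C$ in one shot over the perfect-source subfamily $\Pi_0^{NP}\subset\Pi_{BA}^{NP}\cap\Pi_S^{NP}\cap\Pi_I^{NP}$. Likewise your standalone $C$-construction with $\eta^P\equiv\tfrac12$ does \emph{not} live in the classes: in $\Pi_I^{NP}$ one needs $|\eta^P-\tfrac12|\ge C_\gamma|\eta^Q-\tfrac12|^\gamma$, which fails everywhere $\eta^Q\neq\tfrac12$; in $\Pi_{BA}^{NP}$ the band condition forces the bump height below $(\Delta/C_\gamma)^{1/\gamma}$, capping the rate at $\Delta^{(1+\alpha)/\gamma}$ rather than $C$; and in $\Pi_S^{NP}$ a constant $\eta^P$ makes the ambiguity level maximal, violating any prescribed $\varepsilon(z)\ll z^{1+\alpha}$. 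You do not need a separate $C$-bound once $A\wedge C$ is in hand.

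Second, a Le Cam two-point reduction is too coarse for the ambiguity term in the $\varepsilon$ case. The quantity $\varepsilon(\cdot)$ is an abstract monotone function, and a single two-point gap cannot realize an arbitrary prescribed ambiguity budget. The paper instead takes $\eta^P\equiv 1$ (so $s(x)=0$ on $\{\eta^Q<\tfrac12\}$) and runs Assouad with $m$ bumps at scale $r\asymp n_Q^{-1/(2\beta+d)}$, but with $m$ \emph{chosen} as $m\asymp \varepsilon(C_\beta r^\beta)/(w r^\beta)$ (rather than the maximal $m\asymp r^{\alpha\beta-d}$) so that the ambiguity-level constraint is tight; the resulting lower bound is then $m w r^\beta\asymp\varepsilon(cn_Q^{-\beta/(2\beta+d)})$. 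For the band case the paper uses $\eta^P\equiv\tfrac12$ and $r\asymp n_Q^{-1/(2\beta+d)}\wedge(\Delta/C_\gamma)^{1/(\gamma\beta)}$, which delivers $B\wedge C$ directly. In both cases the constructed family is easily checked to lie in all three subclasses simultaneously.
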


In the special case where $\sup_{x\in\Omega}|\eta^Q(x) - \eta^P(x)| \leq \Delta$, we can determine that the minimax optimal excess risk is $\left(n_P^{-\frac{\beta(1+\alpha)}{2\beta+d}}+\Delta^{1+\alpha}\right)\land n_Q^{-\frac{\beta(1+\alpha)}{2\beta+d}}$, up to logarithmic factors of $n_Q\lor n_P$. As long as
$$n_P\gg n_Q,\ \Delta\ll n_Q^{-\frac{\beta}{2\beta+d}},$$ the classifier will benefit from the source data. The proof of Theorem \ref{thm:nonParaSpecialLowerBound} reveals that condition \eqref{eq:etaPetaQClose}, although slightly stronger than the band-like ambiguity condition \eqref{eq:BA}, remains compatible with the lower bound construction as it ensures $\sup_{x\in\Omega}|\eta^Q(x) - \eta^P(x)| \leq \Delta$.

\section{Applications in Logistic Regression}
\label{sec:para}
Besides non-parametric classification, we also investigate the use of transfer learning in logistic regression models, which are a commonly used parametric approach in classification. Previous works such as \cite{zheng2019logistic} have studied the ``data enriched model" for logistic regression under a single-source setting, \cite{abramovich2019logistic} have explored sparse logistic regression in high-dimensional settings, and \cite{tian2022tlGLMs} have considered transfer learning in generalized linear models. Our goal is to reveal how incorporating an additional source logistic regression model with a different linear regression coefficient can enhance the convergence of the excess misclassification rate.

In this section, we extend to the high-dimensional regime, allowing \( d \to \infty \) as \( n_Q \to \infty \). Suppose the source and target distributions are logistic regression models given by
\begin{equation}
    \begin{aligned}
    &\text{Target data model: } \eta^Q(x)=\sigma({\beta_Q}^Tx)\\
    &\text{Source data model: } \eta^P(x)=\sigma({\beta_P}^Tx),
\end{aligned}
\label{eq:logisticModel}
\end{equation}
where two independent samples $(X_1,Y_1),\dots,(X_{n_Q},Y_{n_Q})
\overset{\text{iid}}{\sim}Q$ and $(X_1^P,Y_1^P),\dots, (X_{n_P}^P,Y_{n_P}^P)\overset{\text{iid}}{\sim}P$ are observed. To simplify the theoretical analysis, we assume that the marginal distributions $Q_X$ and $P_X$ are both $N(0,I_d)$, the $d$-dimension standard normal distribution. This marginal distribution is convenient when working with the restricted strong convexity condition (See \cite{negahban2009RSC}).  It can be extended to the more general distributions studied by \cite{dobriban2016regularity}.

Let $\angle ( \alpha,\beta )$ be the angle of two vectors $\alpha$ and $\beta$,  in the range of $0$ and $\pi/2$. Consider the following parametric space of the coefficient pair $(\beta_Q,\beta_P)$:
\begin{equation}
    \Theta(s,\Delta)=\left\{(\beta_Q,\beta_P):\|\beta_Q\|_0\leq s,\angle ( \beta_Q,\beta_P )\leq \Delta \right\},
    \label{eq:Theta}
\end{equation}
for some $s>0$ and $\Delta\in[0,\pi/2]$. The corresponding family of distribution pairs is then
\begin{equation}
\begin{aligned}
  \Pi^{LR}=\Pi^{LR}(s,\Delta,M)=\{&(Q,P):X,X^P\sim N(0,I_d),\eta^Q(x)=\sigma(\beta_Q^Tx),\\
  &\eta^P(x)=\sigma(\beta_P^Tx),(\beta_Q,\beta_P)\in\Theta(s,\Delta)
  \}.  
\end{aligned}
\label{eq:logisticFamily}
\end{equation}
To ensure the control of the ambiguity level, we impose a constraint on the angle between $\beta_Q$ and $\beta_P$ in \eqref{eq:Theta}, which must be smaller than a constant $\Delta$. Importantly, our constructed parametric space does not impose any sparsity conditions on $\beta_P$.

Given the family of logistic distribution pairs $\Pi^{LR}$, we show that $\Pi^{LR}$ is a subset of the overall distribution pair space $\Pi$ with $\alpha=1$ and $\varepsilon(z,1,m/\pi)\lesssim z^2\land \Delta^2$, provided that $\|\beta_P\|\geq m \|\beta_Q\|$ for some constant $m>0$. See the Supplementary Material \citep{supp} for a detailed proof.   

For model fitting, we minimize the negative Bernoulli likelihood function with lasso regularization terms to obtain $\hat{\beta}_Q$ and $\hat{\beta}_P$, i.e.,
\begin{equation}
\begin{aligned}
&\hat{\beta}_Q=\argmin_{\beta\in\mathbb{R}^p} \frac{1}{n_Q}\sum_{i=1}^{n_Q} \left\{\log(1+e^{X_i^T\beta})-Y_iX_i^T\beta\right\}+\lambda_Q\|\beta\|_1\\
&\hat{\beta}_P=\argmin_{\beta\in\mathbb{R}^p} \frac{1}{n_P}\sum_{i=1}^{n_P} \left\{\log(1+e^{{X_i^P}^T\beta})-Y_i^P{X_i^P}^T\beta\right\}+\lambda_P\|\beta\|_1,
\label{eq:minimization}
\end{aligned}
\end{equation}
where $\lambda_*\asymp \sqrt{\frac{\log d}{n_*}}$ for $*\in\{Q,P\}$. The corresponding target and source plug-in classifiers are then $\mathbf{1}\{\sigma(\hat{\beta}_Q^Tx)\geq 1/2\}=\mathbf{1}\{\hat{\beta}_Q^Tx\geq 0\}$ and $\mathbf{1}\{\hat{\beta}_P^Tx\geq 0\}$. Hence, the TAB logistic lasso classifier becomes
$$
\hat{f}^{LR}_{TAB}(x)=
\begin{cases}
\mathbf{1}\{\hat{\beta}_Q^Tx\geq 0\},\quad &\text{if }|\sigma(\hat{\beta}_Q^Tx) - 1/2|\geq \tau,\\
\mathbf{1}\{\hat{\beta}_P^Tx\geq 0\},\quad &\text{otherwise.}
\end{cases}
$$

By setting $\lambda_Q $, $\lambda_P$, and $\tau$ properly, the following theorem gives an excess risk upper bound.

\begin{theorem}
Assume that $L\leq m \|\beta_Q\|_2\leq \|\beta_P\|_2\leq U$ for some constants $L,U>0$ and $0<m\leq 1$. Suppose that for some constant $K>0$, we have $d^{K}\gtrsim \frac{n_Q\lor n_P}{s\log d},\ n_Q\gg \log \frac{n_P}{s\log d}$, and $n_Q\land n_P\gg s\log d$. Let $\hat{\beta}_Q,\hat{\beta}_P$ be obtained in \eqref{eq:minimization} with $$\lambda_Q= c_Q\sqrt{\frac{\log d}{n_Q}},\ \lambda_P= c_P\sqrt{\frac{\log d}{n_P}},$$ for some constants $c_Q,c_P\geq \sqrt{(K+1)}$. The TAB lasso classifier
with threshold $$\tau= c_\tau\sqrt{\frac{s\log d}{n_Q}}\log(n_Q\lor n_P),$$ for some constant $c_\tau>0$, satisfies
\begin{equation}
\begin{aligned}
    &\sup_{(Q,P)\in\Pi^{LR}}\E_{(\mathcal{D}_Q,\mathcal{D}_P)} \mathcal{E}_Q(\hat{f}^{LR}_{TAB})
    \lesssim \left(\frac{s\log d}{n_P}+\Delta^2\right)\land \left(\frac{s\log d}{n_Q}\log^2(n_Q\lor n_P)\right).
\end{aligned}
\label{eq:logisticUpper}
\end{equation}
\label{thm:logisticUpper}
\end{theorem}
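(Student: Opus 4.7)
The overall strategy is to cast Theorem \ref{thm:logisticUpper} as a corollary of Theorem \ref{thm:general} instantiated for the logistic family $\Pi^{LR}$. Lemma \ref{lemma:logisticVerification} already establishes that any $(Q,P) \in \Pi^{LR}$ falls into the framework of Section \ref{sec:model} with margin parameter $\alpha = 1$, transfer exponent $\gamma = 1$, and ambiguity level $\varepsilon(z) \lesssim z^2 \land \Delta^2$. With these parameters and $\tau \asymp \sqrt{s\log d/n_Q}\log(n_Q \lor n_P)$, the bound in Theorem \ref{thm:general} becomes $\bigl(\E\,\xi(\hat{f}^P) + \varepsilon(2\tau)\bigr) \land \tau^{1+\alpha} + \delta_f$, where $\varepsilon(2\tau) \lesssim \tau^2 \land \Delta^2$. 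Matching the stated rate therefore reduces to two tasks: (i) controlling the uncertainty $\delta_Q$ governing the concentration of $\hat{\eta}^Q$, and (ii) proving $\E\,\xi(\hat{f}^P) \lesssim s\log d/n_P + \Delta^2$.

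For task (i), standard analysis of $\ell_1$-penalized logistic regression under standard Gaussian design (where restricted strong convexity holds on the relevant sparsity cone) yields, with probability at least $1 - d^{-c}$, the bounds $\|\hat{\beta}_Q - \beta_Q\|_2 \lesssim \sqrt{s\log d/n_Q}$ and $\|\hat{\beta}_Q - \beta_Q\|_1 \lesssim s\sqrt{\log d/n_Q}$. Combined with $|\hat{\eta}^Q(x) - \eta^Q(x)| \leq \tfrac{1}{4}|(\hat{\beta}_Q - \beta_Q)^Tx|$ and the fact that $(\hat{\beta}_Q - \beta_Q)^T X \mid \mathcal{D}_Q \sim N\bigl(0,\|\hat{\beta}_Q - \beta_Q\|_2^2\bigr)$ under $X \sim N(0,I_d)$, the probability that $|\hat{\eta}^Q(X) - \eta^Q(X)| \geq \tau$ decays super-polynomially in $\log(n_Q\lor n_P)$. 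This sub-Gaussian tail is sharper than the pointwise-in-$x$ bound demanded by Theorem \ref{thm:general}, but the more general Theorem \ref{thm:moreGeneral} accommodates it by taking $\Omega^* = \Omega^*(\mathcal{D}_Q) = \{x : |(\hat{\beta}_Q - \beta_Q)^T x| \leq 4\tau\}$; on the good event for $\mathcal{D}_Q$ this set has $Q$-mass arbitrarily close to $1$, and the residual probabilities are absorbed into a negligible $\delta_f$.

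Task (ii) is the main obstacle: since $\beta_P$ is \emph{not} assumed sparse, a lasso on the source data gives no useful direct bound on $\|\hat{\beta}_P - \beta_P\|_2$. The key observation is that the small-cone condition $\angle(\beta_Q,\beta_P) \leq \Delta$ forces $\beta_P$ to be $\ell_2$-close to a scalar multiple of the sparse $\beta_Q$: taking $c^* := \beta_Q^T \beta_P / \|\beta_Q\|_2^2$, the vector $c^*\beta_Q$ is $s$-sparse and $\|\beta_P - c^*\beta_Q\|_2 = \|\beta_P\|_2 \sin\angle(\beta_P,\beta_Q) \leq U\Delta$. Applying a lasso oracle inequality for the logistic loss with $c^*\beta_Q$ as the sparse comparator then lets the approximation error $\Delta^2$ enter additively, producing $\|\hat{\beta}_P - \beta_P\|_2^2 \lesssim s\log d/n_P + \Delta^2$ on a high-probability event.

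With this estimator bound, $\xi(\hat{f}^P)$ can be controlled as follows. On the region $\{s(X) \geq C_\gamma |\eta^Q(X) - 1/2|\}$ the source and target Bayes classifiers agree, so $\xi(\hat{f}^P) \leq \E_Q\bigl[|\eta^Q(X) - \tfrac{1}{2}|\,\mathbf{1}\{\sgn(\hat{\beta}_P^T X) \neq \sgn(\beta_Q^T X)\}\bigr]$. Under $X \sim N(0,I_d)$ the sign-disagreement event is a wedge of opening angle $\theta := \angle(\hat{\beta}_P,\beta_Q)$; combining this with $|\eta^Q-1/2| \leq |\beta_Q^T X|/4$, a direct two-dimensional Gaussian integration produces a contribution of order $\|\beta_Q\|_2\,\theta^2$. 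The triangle inequality gives $\theta \leq \angle(\hat{\beta}_P,\beta_P) + \Delta \lesssim \|\hat{\beta}_P - \beta_P\|_2/L + \Delta$, whence $\E\,\xi(\hat{f}^P) \lesssim s\log d/n_P + \Delta^2$. Substituting both ingredients into Theorem \ref{thm:general} and taking the minimum with $\tau^{1+\alpha} = \tau^2 \asymp (s\log d/n_Q)\log^2(n_Q\lor n_P)$ yields the claim.
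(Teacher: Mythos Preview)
Your proposal is correct and follows essentially the same route as the paper: apply Theorem \ref{thm:moreGeneral} with the data-dependent $\Omega^*$ built from $\hat{\beta}_Q-\beta_Q$, use Lemma \ref{lemma:logisticVerification} for the margin/ambiguity parameters, and bound $\xi(\hat{f}^P)$ by a two-dimensional Gaussian wedge integral after controlling $\|\hat{\beta}_P-\beta_P\|_2$ via a sparse surrogate for $\beta_P$. The only cosmetic differences are that the paper takes the surrogate $h=(\|\beta_P\|/\|\beta_Q\|)\beta_Q$ rather than your orthogonal projection $c^*\beta_Q$, and bounds $\xi$ through the wedge $\angle(\hat{\beta}_P,\beta_P)$ rather than your more direct $\angle(\hat{\beta}_P,\beta_Q)$; both choices lead to the same $s\log d/n_P+\Delta^2$ rate.
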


The term $\frac{s\log d}{n_Q}$ is the classical risk term with access to only the target data, and $\frac{s\log d}{n_P}$ is the risk term transferred by the source data with an additional term $\Delta^2$ measuring the angle discrepancy between $\beta_Q$ and $\beta_P$. We see that knowledge from the source data can significantly improve the learning performance when $n_P$ is large and $\Delta$ is small, namely, 
$$
    n_P\gg n_Q,\quad \Delta\ll \sqrt{\frac{s\log d}{n_Q}}.
$$ 
It is worth noting that when \(\|\beta_Q\|\) and \(\|\beta_P\|\) are bounded away from zero, the small angle condition between $\beta_Q$ and $\beta_P$ considered in this paper is a more general assumption than the contrast assumption in \cite{tian2022tlGLMs}, which requires that $\beta_Q$ is sparse and the $l_q$-norm of the difference between $\beta_Q$ and $\beta_P$ is small for some $q\in[0,1]$.
In contrast, our result is applicable to a broader class of parameter spaces, allowing for $\beta_P$ to not only be non-sparse, but also differ significantly in norm from $\beta_Q$.

We choose this $\tau$ to ensure that, with high probability, we have $\|\hat{\beta}_Q-\beta_Q\|_2\lesssim \sqrt{s}\lambda_Q$. This, in turn, implies $$\Big|\sigma(\hat{\beta}_Q^Tx)-\sigma(\beta_Q^Tx)\Big|\lesssim \sqrt{s}\lambda_Q$$ with high probability with respect to $Q_X=N(0,I_d)$. The proof of Theorem \ref{thm:logisticUpper} then comes from verifying that $\sup_{(Q,P)\in \Pi^{LR}}\E_{\mathcal{D}_P}\xi(\mathbf{1}\{\hat{\beta}_P^Tx\geq 0\})\asymp \frac{s\log d}{n_P}+\Delta^2$ and $\varepsilon(z)\asymp z^2\land \Delta^2$ in Theorem \ref{thm:moreGeneral}. This indicates that even when $\beta_P$ is non-sparse, we can still obtain a reliable estimate of $\beta_P$ by incorporating a lasso regularizer, if in addition $\Delta$ is sufficiently small.

Theorem \ref{thm:logisticLower} below shows that the upper bound \eqref{eq:logisticUpper} in Theorem \ref{thm:logisticUpper} is optimal up to logarithmic factors of $n_Q\lor n_P$.
\begin{theorem} Suppose that $\frac{s\log d}{n_Q\lor n_P}\lesssim 1$. We have that
$$\inf_{\hat{f}}\sup_{(Q,P)\in\Pi^{LR}}\E\mathcal{E}_Q(\hat{f})\gtrsim
\left(\frac{s\log d}{n_P} + \Delta^2\right)\land \frac{s\log d}{n_Q}.$$

\label{thm:logisticLower}
\end{theorem}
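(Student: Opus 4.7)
The plan is to exhibit two adversarial sub-families of $\Pi^{LR}$ and combine the minimax lower bounds they yield via Fano's inequality: one will deliver $s\log d/(n_Q+n_P)$ and the other $\min(\Delta^2, s\log d/n_Q)$, and the maximum of the two matches the stated rate up to constants. Before any construction I would record, under the standard normal design, two calibration facts: (i) for any halfspace classifier $\mathbf{1}\{\hat\beta^T x\geq 0\}$ and any logistic target with bounded $\|\beta_Q\|_2$, the target excess risk satisfies $\mathcal{E}_Q \asymp \angle(\hat\beta,\beta_Q)^2$, with a symmetric variant $\mathcal{E}_{Q^{(k)}}(f)+\mathcal{E}_{Q^{(k')}}(f) \gtrsim \angle(\beta_Q^{(k)},\beta_Q^{(k')})^2$ that holds for arbitrary (not necessarily halfspace) classifiers $f$; and (ii) the per-sample KL between two logistic models is $\lesssim \|\beta-\tilde\beta\|_2^2 \asymp \angle(\beta,\tilde\beta)^2$. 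Together (i) and (ii) turn parameter-space packings into pairwise excess-risk separations and calibrated KL budgets.

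Sub-family A targets $s\log d/(n_Q+n_P)$. I take $\beta_P=\beta_Q$ (so the cone constraint is trivially satisfied) and let $\beta_Q$ range over $\beta_Q^{(S)} = e_1 + (\rho'/\sqrt{s-1})\sum_{j\in S} e_j$, indexed by size-$(s-1)$ subsets $S\subset\{2,\ldots,d\}$. A Gilbert--Varshamov packing extracts $\log M \gtrsim s\log(d/s)$ such subsets with pairwise Hamming distance $\gtrsim s$. Under this sub-family source and target samples merge into $n_Q+n_P$ i.i.d.\ logistic draws, and scaling $(\rho')^2 \asymp s\log d/(n_Q+n_P)$ matches pairwise parameter distance to the per-sample KL scale. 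The standard sparse-logistic Fano argument (in the spirit of \cite{abramovich2019logistic}) then yields
\[
 \inf_{\hat f}\sup_{(Q,P)\in\Pi^{LR}} \E\,\mathcal{E}_Q(\hat f) \;\gtrsim\; \frac{s\log d}{n_Q+n_P}.
\]

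Sub-family B targets $\min(\Delta^2, s\log d/n_Q)$. Fix $\beta_P = e_1$, choose a free parameter $\Delta'\in(0,\Delta]$, and for each size-$(s-1)$ subset $S\subset\{2,\ldots,d\}$ set
\[
 \beta_Q^{(S)} = \cos\Delta'\, e_1 + \frac{\sin\Delta'}{\sqrt{s-1}}\sum_{j\in S} e_j.
\]
Each $\beta_Q^{(S)}$ is $s$-sparse with unit norm and forms angle exactly $\Delta'\leq\Delta$ with $\beta_P$, so $(\beta_Q^{(S)},\beta_P)\in\Theta(s,\Delta)$. A Gilbert--Varshamov subset packing supplies $\log M \gtrsim s\log(d/s)$ hypotheses with pairwise Hamming distance $\gtrsim s$, and from the identity $\beta_Q^{(S)}\cdot\beta_Q^{(S')} = \cos^2\Delta' + (\sin^2\Delta')\,|S\cap S'|/(s-1)$ one reads off pairwise angles $\asymp \Delta'$, hence pairwise excess risk $\asymp(\Delta')^2$ by (i). Because $\beta_P$ is held fixed across hypotheses the source data contributes zero KL between them, and by (ii) the total target-data KL is $\lesssim n_Q(\Delta')^2$. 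Fano then yields $\inf\sup\,\E\,\mathcal{E}_Q \gtrsim (\Delta')^2$ whenever $n_Q(\Delta')^2 \leq c\, s\log(d/s)$; optimizing by $(\Delta')^2 = \min(\Delta^2,\, c\, s\log d/n_Q)$ produces
\[
 \inf_{\hat f}\sup_{(Q,P)\in\Pi^{LR}} \E\,\mathcal{E}_Q(\hat f) \;\gtrsim\; \min\!\Bigl(\Delta^2,\, \frac{s\log d}{n_Q}\Bigr).
\]

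Taking the maximum of the two bounds and case-splitting on whether $n_P \gtrless n_Q$ and whether $\Delta^2 \gtrless s\log d/n_Q$ then recovers $\bigl(s\log d/n_P + \Delta^2\bigr)\wedge (s\log d/n_Q)$ up to absolute constants. The main technical obstacle I anticipate is the construction in Sub-family B, where the packing must simultaneously respect the $s$-sparsity of $\beta_Q$, pin the angle $\angle(\beta_Q^{(S)},\beta_P)$ at the prescribed $\Delta'$, and guarantee pairwise angles of order $\Delta'$ across $\exp(\Omega(s\log(d/s)))$ hypotheses; routing all three constraints through a Gilbert--Varshamov packing on support subsets (rather than a generic spherical packing) is the delicate step, while everything else reduces to routine Fano bookkeeping combined with the excess-risk-to-angle calibration.
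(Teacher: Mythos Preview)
Your proposal is correct and follows essentially the same two-part strategy as the paper: Sub-family~A with $\beta_P=\beta_Q$ and sparse perturbations around $e_1$ to extract $s\log d/(n_Q\lor n_P)$, and Sub-family~B with $\beta_P=e_1$ fixed and $\beta_Q$ varying over an $s$-sparse packing inside the $\Delta$-cone to extract $\Delta^2\wedge(s\log d/n_Q)$, both executed via Fano with the excess-risk-to-angle calibration you call fact~(i) (the paper's Lemma~\ref{lemma:risk2angle}). The only cosmetic differences are that the paper uses a ternary-valued coordinate construction and the Raskutti--Wainwright--Yu packing lemma in place of your Gilbert--Varshamov support packing, and keeps $\beta_{Q,1}=1$ rather than normalizing to unit length, but these are equivalent parameterizations of the same argument.
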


The derivation of the lower bound involves two terms. The term $\frac{s\log d}{n_P} \land \frac{s\log d}{n_Q}$ represents the optimal convergence rate when the target and source distributions are identical, i.e., $\beta_P=\beta_Q$. The term $\Delta^2\land \frac{s\log d}{n_Q}$ corresponds to the choice $\beta_P=(1,0,0,\dots,0)$ in the lower bound construction, which imposes a sparsity constraint on $\beta_Q$ within a small cone.

It is worth noting that traditional lower bounds, typically derived from Fano's lemma, only consider minimax rates with respect to a distance metric. To show our lower bound of the excess risk $\E\mathcal{E}_Q(\hat{f}) $, we introduce a novel transformation that relates the excess risk to the angle difference of the linear coefficients. By applying Fano's lemma to this transformed quantity, we obtain the desired lower bound. We refer the reader to the Supplementary Material \citep{supp} for a detailed explanation.

\section{Simulation Studies}
\label{sec:simulation}
As mentioned earlier, the TAB classifier offers benefits in scenarios where $n_P$ is large and the ambiguity level is small, and prevents negative transfer when the source data lacks sufficient information to aid the classification. In this section, we present simulation studies to demonstrate the practical benefits of transfer learning and the TAB classifier. We separately consider the non-parametric classification and logistic regression settings.

\subsection{Non-parametric Classification Setting}
The setting we considered is as follows: $d=2,\ \Omega=\Omega_P=[0,1]^2,\ Q_X=P_X=\mathrm{Uniform}([0,1]^2)$, the uniform distribution over the square. For the target regression function, let $\eta^Q(x)=\eta^Q(x_1,x_2)=\frac{1}{2}+\frac{1}{10}\sin(2\pi(x_1+x_2))$. We have $\alpha=\beta=1$ for some constants $C_\alpha$ and $C_\beta$. Next, we consider two different non-parametric regression scenarios, by specifying the source regression function $\eta^P$ with different types of ambiguity.
\begin{enumerate}
\item \textbf{Band-like Ambiguity:} For $\Delta\in\{0,0.1,0.2,0.3,0.4,0.5,0.6\},\ \gamma\in\{0.5,1\}$:
    $$
    \eta^P(x)=%\begin{cases}
    \frac{1}{2}+ \left \{2\Big(\eta^Q(x)- \frac{1}{2}\Big)^\gamma-\Delta \right \}  \mbox{sgn}\left (\eta^Q(x)\geq \frac{1}{2} \right ).
%\frac{1}{2}-2\Big(\frac{1}{2}-\eta^Q(x)\Big)^\gamma+\Delta,\quad &\text{if }\eta^Q(x)<\frac{1}{2}.
  %  \end{cases}
    $$
Here, $\eta^P$ concentrates around an informative curve with respect to $\eta^Q$ with some ambiguity $\Delta$. In this case, we have $s(x)\geq |\eta^Q(x)-1/2|^\gamma-\Delta$.

\item \textbf{Partially Flipped Sine Functions:} For $r\in\{0, 0.05,0.1,0.15,0.2,0.25,0.3,0.35\},\ \gamma\in\{0.5,1\}$:
$$
\eta^P(x)=\eta^P(x_1,x_2)=
\begin{cases}
\frac{1}{2}-\frac{1}{5}\sin\Big(2\pi\frac{\{x_1+x_2\}}{r}\Big)^\gamma,\quad &\text{if }\{2x_1+2x_2\}\in[0,r],\\
\frac{1}{2}+\frac{1}{5}\sin\Big(2\pi\frac{\{x_1+x_2\}-r}{1-r}\Big)^\gamma,\quad &\text{if }\{2x_1+2x_2\}\in(r,1],
\end{cases}
$$
where $\{a\}=a-\lfloor a\rfloor$ represents the fractional part of a real value $a$. The following graph illustrates our setup of $\eta^P$. While keeping $\eta^P$ continuous with $\beta_P=1$, the ratio parameter $r$ creates an area where the Bayes classifier differs from the target distribution. The positive classification regimes are identical when $r=0$ and completely opposite when $r=1$. See Figure \ref{fig:visual} for a visualization.
\begin{figure}

\begin{center}\begin{tikzpicture}[scale=4]
    \draw[->] (0, 0.5) -- (2.2, 0.5) node[right] {$x_1+x_2$};
    \draw[->] (0, -0.2) -- (0, 1.2) node[above] {$\eta$};

    \foreach \x in {0,0.5,1,1.5,2}
        \draw (\x,0.52) -- (\x,0.48) node[below] {$\x$};

    \foreach \y in {0.4,0.6}
        \draw (-0.02,5*\y-2) -- (0.02,5*\y-2) node[left] {$\y$};

    \draw[domain=0:2, samples=100, smooth, thick, blue] plot (\x, {0.5 + 0.25*sin(360*\x)}) node[above right] {$\eta^Q$};

    \foreach \y in {0,1,2,3}
     \draw[domain=0.5*\y:0.5*\y+0.2, samples=100, smooth, thick, red] plot (\x, {0.5 - 0.5*(-1)^(\y)*sin(900*(\x-0.5*\y))});

     \foreach \y in {0,1,2}
     \draw[domain=0.5*\y+0.2:0.5*\y+0.5, samples=100, smooth, thick, red] plot (\x, {0.5 + 0.5*(-1)^(\y)*sin(600*(\x-0.5*\y-0.2))});

     \draw[domain=1.7:2, samples=100, smooth, thick, red] plot (\x, {0.5 -0.5*sin(600*(\x-1.7))}) node[below right] {$\eta^P$};

\end{tikzpicture}\end{center}
\caption{Illustration of $\eta^P$ in the second simulation setup with $\gamma=1$ and $r=0.4$.}\label{fig:visual}
\end{figure}
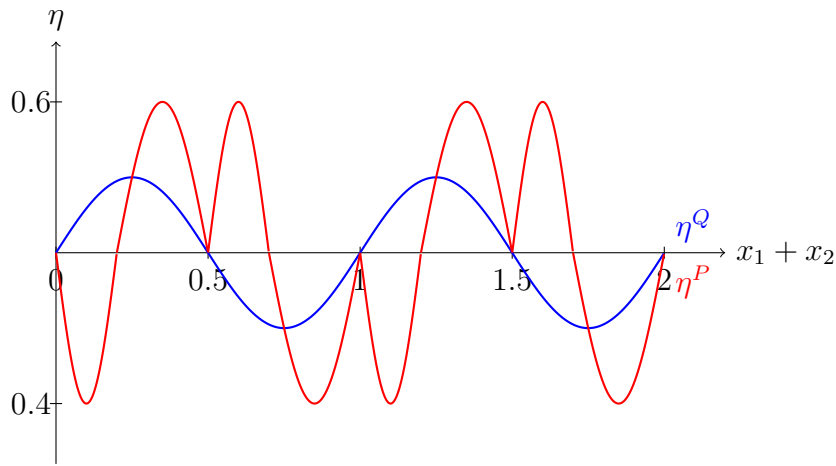
\end{enumerate}
In each scenario, we set $n_Q=200$ and $n_P=1000$, as a large $n_P$ is necessary to observe the benefits of transfer learning. Previous studies \cite{ttcai2020classification,reeve2021adpative} have shown that accuracy improves with increasing $n_P$. We also generate $50000$ independent test pairs from $Q$. For the TAB $K$-NN classifier, we choose $k_Q=\lfloor n_Q^{\frac{2\beta}{2\beta+d}}\rfloor=31,\ k_P=\lfloor n_P^{\frac{2\gamma\beta}{2\gamma\beta+d}}\rfloor$, and $\tau=0.05$. As mentioned before, $\beta = 1$.

We choose the simple $K$-NN classifiers on $Q-$data and $P$-data as two benchmarks for comparison. Moreover, we add the $K$-NN classifier on the pooled data, combining both the target and source with the nearest-neighbor parameter $k$ chosen by $5$-fold cross-validation. We also consider the weighted $K$-NN classifiers proposed in \cite{ttcai2020classification} with their indicated optimal weighting scheme $w_Q+w_P=1,\ w_P/w_Q=(n_Q+n_P^{\frac{2\beta+d}{2\gamma\beta+d}})^{\frac{(\gamma-1)\beta}{2\beta+d}}$.

Figure \ref{fig:band} shows that our TAB classifier is accurate when $\Delta$ is small, as does $K$-NN on $P$-data. Furthermore, our TAB classifier outperforms $K$-NN on $P$-data, $K$-NN on pooled data, and the weighted $K$-NN by a significant margin for large $\Delta$, demonstrating its ability to avoid negative transfer when the source data is unreliable. The pooling and weighting algorithm benchmarks improve the classification when $\Delta$ is small; however, they tend to break down when $\Delta$ is large.

\begin{figure}
    \centering
\begin{subfigure}{.45\textwidth}
  
\begin{tikzpicture}
\begin{axis}[
width=5cm,
   height=5cm,
   scale only axis,
   xmin=0, xmax=0.5,
   xtick={0.1,0.2,0.3,0.4,0.5},
   xticklabels={0.1,0.2,0.3,0.4,0.5},
   xmajorgrids,
   ymin=0.4, ymax=1,
   xlabel={$\Delta$},
   ylabel={Accuracy(\%)},
   ymajorgrids,
   title={$\gamma=0.5$},
   axis lines*=left,
   legend style ={ at={(1.03,1)}, 
        anchor=north west, draw=black, 
        fill=white,align=left},
    cycle list name=black white,
    smooth
]

    \addplot[dashed,color=red,mark=triangle*] coordinates{
    (0, 0.64718)
    (0.1, 0.64718)
    (0.2, 0.64718)
    (0.3, 0.64718)
    (0.4, 0.64718)
    (0.5, 0.64718)
   };
   %\addlegendentry{$Q$-KNN};

   \addplot [dashed,color=green!60!black,mark=square*]coordinates{
    (0, 0.81304)
    (0.1, 0.79472)
    (0.2, 0.74392)
    (0.3, 0.65546)
    (0.4, 0.55926)
    (0.5, 0.56430)
   };
   %\addlegendentry{$P$-KNN};

   \addplot [dashed,color=blue,mark=*]coordinates{
    (0, 0.78374)
    (0.1, 0.77166)
    (0.2, 0.74782)
    (0.3, 0.69208)
    (0.4, 0.63808)
    (0.5, 0.66174)
   };
   %\addlegendentry{TAB-KNN};

   \addplot [dashed,color=brown,mark=diamond*]coordinates{
    (0, 0.81308)
    (0.1, 0.79288)
    (0.2, 0.74052)
    (0.3, 0.66636)
    (0.4, 0.57894)
    (0.5, 0.58120)
   };
   %\addlegendentry{TAB-KNN};

   \addplot [dashed,color=purple,mark=pentagon*]coordinates{
    (0, 0.81244)
    (0.1, 0.79329)
    (0.2, 0.74098)
    (0.3, 0.66122)
    (0.4, 0.56274)
    (0.5, 0.56962)
   };
   %\addlegendentry{Weighted-KNN};

   \end{axis}
\end{tikzpicture}  
\end{subfigure}
\begin{subfigure}{.45\textwidth}
 \begin{tikzpicture}
\begin{axis}[
width=5cm,
   height=5cm,
   scale only axis,
   xmin=0, xmax=0.5,
   xtick={0.1,0.2,0.3,0.4,0.5},
   xticklabels={0.1,0.2,0.3,0.4,0.5},
   xmajorgrids,
   ymin=0.4, ymax=1,
   xlabel={$\Delta$},
   ylabel={Accuracy(\%)},
   ymajorgrids,
   title={$\gamma=1$},
   axis lines*=left,
   legend style ={ at={(1.03,1)}, 
        anchor=north west, draw=black, 
        fill=white,align=left, font=\footnotesize},
    cycle list name=black white,
    smooth
]

    \addplot[dashed,color=red,mark=triangle*] coordinates{
    (0, 0.64718)
    (0.1, 0.64718)
    (0.2, 0.64718)
    (0.3, 0.64718)
    (0.4, 0.64718)
    (0.5, 0.64718)
   };
   \addlegendentry{$Q$-KNN};

   \addplot [dashed,color=green!60!black,mark=square*]coordinates{
    (0, 0.79320)
    (0.1, 0.68216)
    (0.2, 0.58854)
    (0.3, 0.52104)
    (0.4, 0.49588)
    (0.5, 0.49920)
   };
   \addlegendentry{$P$-KNN};

   \addplot [dashed,color=blue,mark=*]coordinates{
    (0, 0.77558)
    (0.1, 0.70362)
    (0.2, 0.68716)
    (0.3, 0.64970)
    (0.4, 0.63746)
    (0.5, 0.64076)
   };
   \addlegendentry{TAB-KNN};

   \addplot [dashed,color=brown,mark=diamond*]coordinates{
    (0, 0.79410)
    (0.1, 0.69130)
    (0.2, 0.61706)
    (0.3, 0.53824)
    (0.4, 0.50048)
    (0.5, 0.49848)
   };
   \addlegendentry{Pooled-KNN};

   \addplot [dashed,color=purple,mark=pentagon*]coordinates{
    (0, 0.75414)
    (0.1, 0.62790)
    (0.2, 0.61036)
    (0.3, 0.55458)
    (0.4, 0.52130)
    (0.5, 0.50242)
   };
   \addlegendentry{Weighted-KNN};

   \end{axis}
\end{tikzpicture}   
\end{subfigure}
    \caption{Accuracy of the TAB $K$-NN classifiers under the band-like ambiguity scenario. We experiment with different values of $\Delta$ for a given $\gamma=0.5$ and $1$. Blue: TAB $K$-NN classifier; Red: $K$-NN classifier on only $Q$-data; Green: $K$-NN classifier on only $P$-data; Brown: $K$-NN classifer on pooled data.}
    \label{fig:band}
\end{figure}
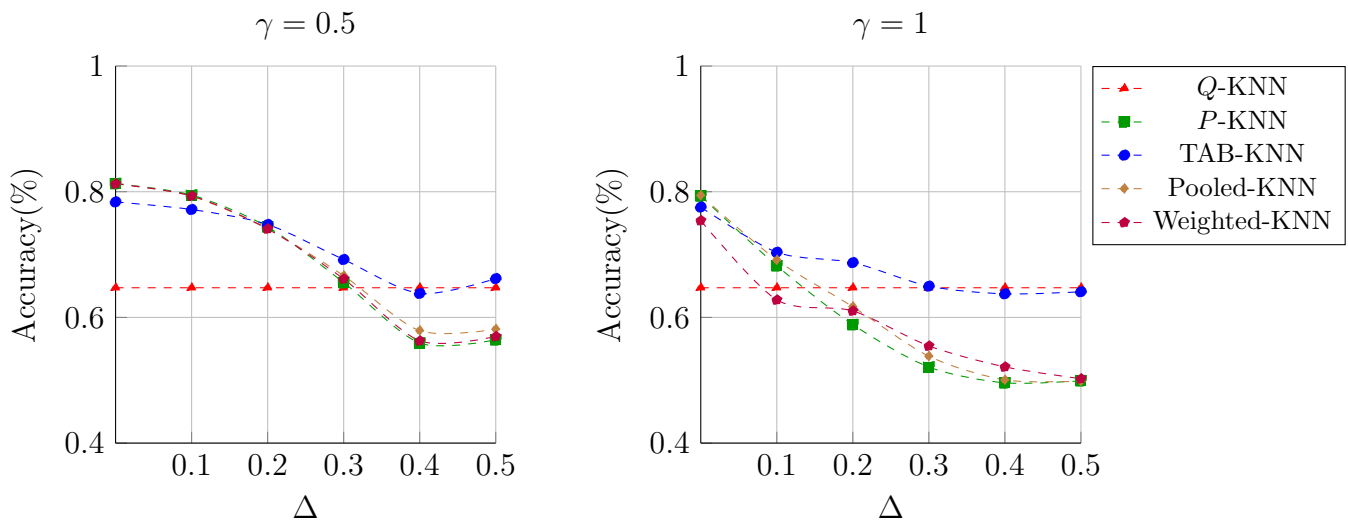

Additionally, Figure \ref{fig:flipped} shows that our TAB classifier improves the accuracy when the ambiguity level, depicted by $r$, is small. In addition, our TAB classifier significantly outperforms the other three benchmarks when the ambiguity level is too large to benefit from transfer learning, conserving the classification ability using only $Q$-data.

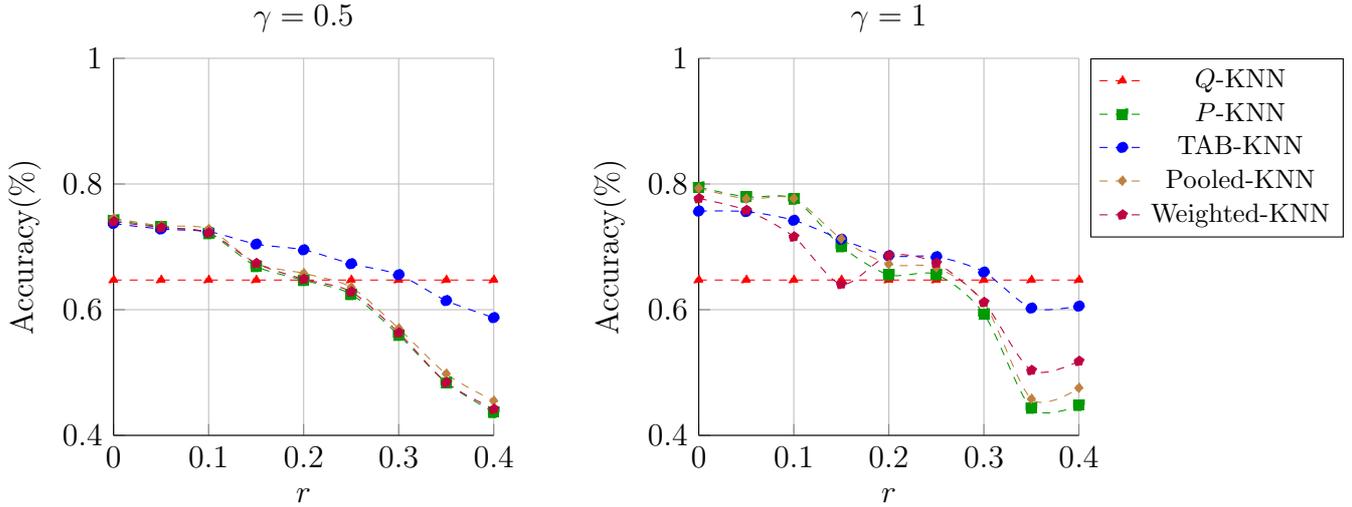
\begin{figure}
    \centering
\begin{subfigure}{.45\textwidth}
  
\begin{tikzpicture}
\begin{axis}[
width=5cm,
   height=5cm,
   scale only axis,
   xmin=0, xmax=0.4,
   xtick={0,0.1,0.2,0.3,0.4},
   xticklabels={0,0.1,0.2,0.3,0.4},
   xmajorgrids,
   ymin=0.4, ymax=1,
   xlabel={$r$},
   ylabel={Accuracy(\%)},
   ymajorgrids,
   title={$\gamma=0.5$},
   axis lines*=left,
   legend style ={ at={(1.03,1)}, 
        anchor=north west, draw=black, 
        fill=white,align=left},
    cycle list name=black white,
    smooth
]

    \addplot[dashed,color=red,mark=triangle*] coordinates{
    (0, 0.64718)
    (0.05, 0.64718)
    (0.1, 0.64718)
    (0.15, 0.64718)
    (0.2, 0.64718)
    (0.25, 0.64718)
    (0.3, 0.64718)
    (0.35, 0.64718)
    (0.4, 0.64718)
   };
   %\addlegendentry{$Q$-KNN};

   \addplot [dashed,color=green!60!black,mark=square*]coordinates{
(0, 0.74268)
(0.05, 0.73176)
(0.1, 0.72138)
(0.15, 0.66906)
(0.2, 0.64688)
(0.25, 0.62452)
(0.3, 0.5596)
(0.35, 0.48404)
(0.4, 0.43674)
   };

   \addplot [dashed,color=blue,mark=*]coordinates{
(0, 0.73718)
(0.05, 0.7283)
(0.1, 0.72416)
(0.15, 0.70436)
(0.2, 0.69522)
(0.25, 0.6731)
(0.3, 0.65572)
(0.35, 0.6144)
(0.4, 0.58734)
   };

   \addplot [dashed,color=brown,mark=diamond*]coordinates{
(0, 0.74542)
(0.05, 0.73346)
(0.1, 0.72822)
(0.15, 0.67368)
(0.2, 0.65778)
(0.25, 0.63562)
(0.3, 0.57008)
(0.35, 0.49794)
(0.4, 0.45482)
   };

      \addplot [dashed,color=purple,mark=pentagon*]coordinates{
(0, 0.74024)
(0.05, 0.73074)
(0.1, 0.72238)
(0.15, 0.67374)
(0.2, 0.64862)
(0.25, 0.62846)
(0.3, 0.56324)
(0.35, 0.48414)
(0.4, 0.44178)
   };

   \end{axis}
\end{tikzpicture}  
\end{subfigure}
\begin{subfigure}{.45\textwidth}
 \begin{tikzpicture}
\begin{axis}[
width=5cm,
   height=5cm,
   scale only axis,
   xmin=0, xmax=0.4,
   xtick={0,0.1,0.2,0.3,0.4},
   xticklabels={0,0.1,0.2,0.3,0.4},
   xmajorgrids,
   ymin=0.4, ymax=1,
   xlabel={$r$},
   ylabel={Accuracy(\%)},
   ymajorgrids,
   title={$\gamma=1$},
   axis lines*=left,
   legend style ={ at={(1.03,1)}, 
        anchor=north west, draw=black, 
        fill=white,align=left, font=\footnotesize},
    cycle list name=black white,
    smooth
]

    \addplot[dashed,color=red,mark=triangle*] coordinates{
    (0, 0.64718)
    (0.05, 0.64718)
    (0.1, 0.64718)
    (0.15, 0.64718)
    (0.2, 0.64718)
    (0.25, 0.64718)
    (0.3, 0.64718)
    (0.35, 0.64718)
    (0.4, 0.64718)
   };
   \addlegendentry{$Q$-KNN};

   \addplot [dashed,color=green!60!black,mark=square*]coordinates{
(0, 0.79488)
(0.05, 0.77944)
(0.1, 0.77628)
(0.15, 0.70062)
(0.2, 0.65558)
(0.25, 0.6558)
(0.3, 0.59326)
(0.35, 0.44362)
(0.4, 0.44808)
   };
   \addlegendentry{$P$-KNN};

   \addplot [dashed,color=blue,mark=*]coordinates{
(0, 0.7569)
(0.05, 0.75602)
(0.1, 0.74222)
(0.15, 0.71196)
(0.2, 0.68592)
(0.25, 0.68432)
(0.3, 0.65984)
(0.35, 0.60254)
(0.4, 0.60564)
   };
   \addlegendentry{TAB-KNN};

   \addplot [dashed,color=brown,mark=diamond*]coordinates{
(0, 0.79334)
(0.05, 0.77628)
(0.1, 0.77668)
(0.15, 0.71274)
(0.2, 0.67226)
(0.25, 0.66718)
(0.3, 0.61122)
(0.35, 0.45782)
(0.4, 0.4753)
   };
   \addlegendentry{Pooled-KNN};

      \addplot [dashed,color=purple,mark=pentagon*]coordinates{
(0, 0.77700)
(0.05, 0.75814)
(0.1, 0.71598)
(0.15, 0.64050)
(0.2, 0.68628)
(0.25, 0.67404)
(0.3, 0.61174)
(0.35, 0.50350)
(0.4, 0.51808)
   };
   \addlegendentry{Weighted-KNN};

   \end{axis}
\end{tikzpicture}   
\end{subfigure}
    \caption{Accuracy of the TAB $K$-NN classifiers under the scenario with partially flipped sine functions. We experiment with different values of the ratio parameter $r$ for a given $\gamma=0.5$ and $1$. Blue: TAB $K$-NN classifier; Red: $K$-NN classifier on only $Q$-data; Green: $K$-NN classifier on only $P$-data; Brown: $K$-NN classifer on pooled data.  }
    \label{fig:flipped}
\end{figure}

\subsection{Logistic Regression Setting}
We next consider the scenario where both $\eta^Q$ and $\eta^P$ follow the logistic models \eqref{eq:logisticModel}, and the parameters are estimated by \eqref{eq:minimization}. We set $n_Q=200,\ n_P=500,\ Q_X=P_X=N(0,I_d)$, and simulate $50000$ test data points. The linear coefficient is given by 
$$
    \beta_Q=(0.5\cdot\mathbf{1}_s,\mathbf{0}_{d-s}),\quad \beta_P=(1.5\cdot\mathbf{1}_s,\frac{\|\beta_Q\|}{\sqrt{d-s}}\tan\Delta \cdot\mathbf{1}_{d-s}),
$$ 
where $\mathbf{1}_s$  and $\mathbf{0}_s$, respectively, denotes a vector of all $1$s and a vector of all $0$s with size $s$. Here, we set $s=10$, which is small compared to $d$. For the source distribution, $\beta_P$ could be treated as a rotated version of $3\beta_Q$, with an angle of exactly $\Delta$ between them. The range of $\Delta$ is chosen in the set $\{0, 0.2, 0.4, 0.6, 0.8, 1, 1.2, 1.4, 1.6\}$, which gradually approaches $\pi/2$. We simulate $50000$ test data points for each value of $\Delta$. For the TAB threshold, we also choose $\tau=0.05$. The lasso regularization parameter is selected by $5$-fold cross-validation and chosen to be the largest $\lambda$ at which the MSE is within one standard error of the minimum MSE, namely,  $\lambda_{1se}$.

In addition to our proposed TAB classifier, we compare three benchmarks for performance: logistic regression with lasso penalty on $Q$-data, $P$-data, and pooled data. Figure \ref{fig:logistic} demonstrates that our TAB classifier achieves high accuracy when the angle between $\beta_Q$ and $\beta_P$ is small. Interestingly, as evidence of its robustness, our classifier retains some classification ability with the target data even when the angle is large, while the benchmark classifiers based on $P$-data and pooled data suffer from negative transfer.

\begin{figure}
    \centering
 \begin{tikzpicture}
\begin{axis}[
width=8cm,
   height=6cm,
   scale only axis,
   xmin=0, xmax=2,
   xtick={0,0.5,1,1.5,2},
   xticklabels={0,0.5,1,1.5,2},
   xmajorgrids,
   ymin=0.4, ymax=0.8,
   xlabel={$\Delta$},
   ylabel={Accuracy(\%)},
   ymajorgrids,
   axis lines*=left,
   legend style ={ at={(1.03,1)}, 
        anchor=north west, draw=black, 
        fill=white,align=left},
    cycle list name=black white,
    smooth
]

    \addplot[dashed,color=red,mark=triangle*] coordinates{
    (0, 0.66594)
(0.25, 0.66594)
(0.5, 0.66594)
(0.75, 0.66594)
(1, 0.66594)
(1.25, 0.66594)
(1.5, 0.66594)
(1.75, 0.66594)
   };
   \addlegendentry{$Q$-Lasso Logistic};

   \addplot [dashed,color=green!60!black,mark=square*]coordinates{
    (0, 0.73674)
(0.25, 0.73472)
(0.5, 0.71608)
(0.75, 0.70018)
(1, 0.5758)
(1.25, 0.54726)
(1.5, 0.50326)
(1.75, 0.4693)
   };
   \addlegendentry{$P$-Lasso Logistic};

   \addplot [dashed,color=blue,mark=*]coordinates{
(0, 0.72776)
(0.25, 0.72266)
(0.5, 0.71138)
(0.75, 0.706)
(1, 0.67704)
(1.25, 0.64846)
(1.5, 0.63306)
(1.75, 0.63782)
   };
   \addlegendentry{TAB-Lasso Logistic};

   \addplot [dashed,color=brown,mark=diamond*]coordinates{
(0, 0.73794)
(0.25, 0.73012)
(0.5, 0.73378)
(0.75, 0.72)
(1, 0.68256)
(1.25, 0.64046)
(1.5, 0.55328)
(1.75, 0.52656)
   };
   \addlegendentry{Pooled-Lasso Logistic};

   \end{axis}
\end{tikzpicture}   
    \caption{Accuracy of the TAB logistic classifier with lasso penalty. We conduct experiments with difference choices of the angle $\Delta\in[0,\pi/2]$. Blue: TAB logistic classifiers with lasso penalty; Red: Logistic classifier with lasso penalty on only $Q$-data; Green: Logistic classifier with lasso penalty on only $P$-data; Brown: Logistic classifier with lasso penalty on pooled data.}
    \label{fig:logistic}
\end{figure}
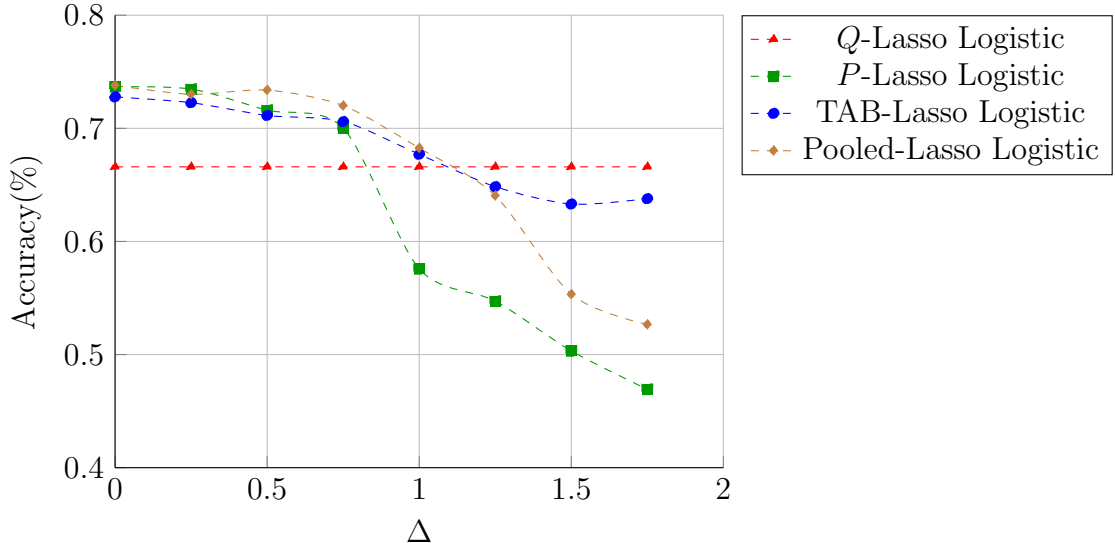
\section{Conclusion}
In this paper, we have proposed a new approach to transfer learning that is robust against an unreliable source distribution with arbitrary ambiguity in the source data. Our work uses a different way of transferring the source data and, in particular, encompasses the non-parametric setting in \cite{ttcai2020classification} and \cite{reeve2021adpative} and parametric setting in \cite{ttcai2020highDimensional}. By introducing the ambiguity level, our approach enables us to understand the circumstances under which we can improve classification performance, given source data with potential ambiguity. Our proposed TAB classifier, with a threshold $\tau$ that balances the performance of both the target and source data, is shown to be both \emph{efficient} and \emph{robust}, as the excess risk improves for a reliable source distribution and avoids negative transfer with an unreliable source distribution. Furthermore, we provide simple approaches to bound the signal transfer risk, a key component of the excess risk in our general convergence result, in terms of the conventional excess risk extensively studied in the literature of statistical learning.

We then demonstrate the power of our approach on specific classification tasks, with a focus on non-parametric classification and logistic regression settings. The upper bounds are shown to be optimal up to some logarithmic factors and are more general than previous work on transfer learning. Simulation studies provide numerical evidence for these two classification tasks.

There are several promising avenues for future research that may build on the contributions of this paper. One potential direction is to consider an extension of the signal strength and ambiguity level that incorporates a translation parameter, i.e., 
$$
  s_\kappa(x):=
\begin{cases}
 |\eta^P(x)-\frac{1}{2}-\kappa|,\quad &\mathrm{sgn}\left(\eta^Q(x)-\frac{1}{2}-\kappa\right)\times(\eta^P(x)-\frac{1}{2})\geq 0,\\
0,\quad &\text{otherwise.}   
\end{cases}
$$
$$\E_{(X,Y)\sim Q}\left[\Big|\eta^Q(X)-\frac{1}{2}\Big|\mathbf{1}\left\{s_\kappa(X)\leq C_\gamma\Big|\eta^Q(X)-\frac{1}{2}\Big|^\gamma\leq C_\gamma z^\gamma\right\}\right]\leq \varepsilon_\kappa(z;\gamma,C_\gamma).$$
This extension is natural since the decision boundary $\{x\in\Omega:\eta^Q(x)=1/2\}$ may be similar to $\{x\in\Omega:\eta^P(x)=1/2+\kappa\}$ (they are identical under the settings of \cite{reeves2019labelNoise} and \cite{abramovich2019logistic}). We conjecture that an additional estimation error of $(\frac{\log n_Q}{n_Q})^{\frac{1+\alpha}{2+\alpha}}$ may be incurred due to the presence of an unknown $\kappa$, and an empirical risk minimization procedure after obtaining $\hat{\eta}^P$ such as  
$$\hat{\kappa}=\argmin_{\kappa\in[-\frac{1}{2},\frac{1}{2}]}\frac{1}{n}\sum_{i=1}^n\mathbf{1}\left\{\mathbf{1}\Big\{\hat{\eta}^P(X_i)\geq \frac{1}{2}+\kappa\Big\}\neq Y_i\right\}$$
may be required to obtain a corrected version of $\hat{f}^P$ over the target data.

Another direction is to develop an adaptive and generic procedure for selecting the threshold $\tau$ while preserving optimality. There are three possible directions. Firstly, it is an open question whether simple Empirical Risk Minimization (ERM) methods for choosing $\tau$ still keeps the upper bound optimal. Secondly, Lepski's method \citep{lepski1993} may offer a solution for maintaining optimal rates with an adaptive choice of $\tau$. Thirdly, we conjecture that choosing $\tau\asymp \log(n_Q\lor n_P)k_Q^{-1/2}$, where $\hat{\eta}^Q$ is an $M$-estimator consisting of a simple average of $k_Q$ terms, is helpful for obtaining an optimal rate.

As a final future direction, the conditions presented in the non-parametric and logistic model settings could be relaxed to some extent, for instance, by considering non-compact feature spaces with sub-Gaussian conditions or other marginal distribution assumptions (e.g., Assumption A4 in \cite{gadat2016KNN}). Our proposed TAB classifier is expected to perform well in these settings without a significant modification of the framework, and may offer advantages over other case-specific estimators in the transfer learning literature.

%%%%%%%%%%%%%%%%%%%%%%%%%%%%%%%%%%%%%%%%%%%%%%
%% Support information, if any,             %%
%% should be provided in the                %%
%% Acknowledgements section.                %%
%%%%%%%%%%%%%%%%%%%%%%%%%%%%%%%%%%%%%%%%%%%%%%
\begin{acks}[Acknowledgments]
Fan is supported by ONR grant N00014-22-1-2340, and NSF grants DMS-2052926, DMS-2053832 and DMS-2210833.
Klusowski is supported by NSF grants CAREER DMS-2239448, DMS-2054808, and HDR TRIPODS CCF-1934924.
\end{acks}

%%%%%%%%%%%%%%%%%%%%%%%%%%%%%%%%%%%%%%%%%%%%%%
%% Supplementary Material, including data   %%
%% sets and code, should be provided in     %%
%% {supplement} environment with title      %%
%% and short description. It cannot be      %%
%% available exclusively as external link.  %%
%% All Supplementary Material must be       %%
%% available to the reader on Project       %%
%% Euclid with the published article.       %%
%%%%%%%%%%%%%%%%%%%%%%%%%%%%%%%%%%%%%%%%%%%%%%
\begin{supplement}
\stitle{Supplement to ``Robust Transfer Learning with Unreliable Source Data"}
\sdescription{Please refer to the Supplementary Material for a more detailed explanation of certain topics and proofs of the results.}
\end{supplement}

\bibliography{File/reference}

\newpage
\title{Supplement to ``Robust Transfer Learning with Unreliable Source Data"}
\begin{aug}
\author{\fnms{Jianqing}~\snm{Fan}\ead[label=e1]{jqfan@princeton.edu}},
\author{\fnms{Cheng}~\snm{Gao}\ead[label=e2]{chenggao@princeton.edu}}
\and
\author{\fnms{Jason}~\snm{M. Klusowski}\ead[label=e3]{jason.klusowski@princeton.edu}}

\address{Department of Operations Research and Financial Engineering, Princeton University\printead[presep={,\ }]{e1,e2,e3}}
\end{aug}
\maketitle

   Appendix \ref{sec:appA} gives further illustration on certain topics in the main text of \cite{submitted}. Appendix \ref{sec:appB} - \ref{sec:appAux} collect detailed proofs of all results.

\begin{appendix}
\section{Supplementary Results}
\label{sec:appA}
\subsection{Data-driven Approaches to Determine the Threshold Parameter}
\label{sec:adaptiveTau}
In this section, we analyze the following three recommendations for adaptively choosing the threshold parameter $\tau$ in practice:

\textbf{I. Simple rule of thumb.} Given the target sample size \(n_Q\), we recommend any value between \(0.5n_Q^{-1/2}\) and \(n_Q^{-1/2}\). In Section \ref{sec:simulation}, the choice of \(\tau = 0.05\) also falls into this interval and works well.

\textbf{II. Adjust \(\tau\) from the (adaptive) selection of base model parameters.} According to the general convergence results, a tight choice is \(\tau \asymp \log(n_Q \lor n_P) \delta_Q\), where \(\delta_Q\) is the concentration level determined by the classification model. Therefore, any insight into the characteristics of \(\delta_Q\), including its order and representing parameters, and data-driven methods to estimate these parameters, will help construct a good estimate of \(\tau\) that depends on a good estimate of \(\delta_Q\).

For the cases in our application studies in Sections \ref{sec:nonPara} and \ref{sec:para}, further investigation on the concentration properties of $K$-NN and logistic models shows:
$$
\begin{aligned}
&(K\text{-NN}) \quad \tau \asymp \log(n_Q \lor n_P)k_Q^{-1/2}, \\
&(\text{Logistic}) \quad \tau \asymp \log(n_Q \lor n_P)\lambda_Q.
\end{aligned}
$$
Therefore, any good estimate of \(k_Q\) in the $K$-NN case and \(\lambda_Q\) in the logistic case directly leads to a good estimate of \(\tau\). After simple testing in simulations, we find that a good choice of the constant term is 0.2, i.e.,
$$
\begin{aligned}
&(K\text{-NN}) \quad \tau = 0.2 \log(n_Q \lor n_P)k_Q^{-1/2}, \\
&(\text{Logistic}) \quad \tau = 0.2 \log(n_Q \lor n_P)\lambda_Q.
\end{aligned}
$$
To find good choices for \(k_Q\) and \(\lambda_Q\), besides the theoretical recommendations proposed in our work, one can also use data-driven approaches such as empirical risk minimization and cross-validation. In Section \ref{sec:simulation}, we use the theoretical recommendation for determining \(k_Q\) and 5-fold cross-validation techniques to determine \(\lambda_Q\). For detailed analysis on the theoretical behavior of cross-validation, see \cite{celisse2015TheoreticalAO} for the $K$-NN model and \cite{chetverikov2021lasso} for the lasso-penalized models.

\textbf{III. Empirical Risk Minimization.} A more reliable approach than the rule of thumb might be choosing the best candidate for the parameter $\tau$ from a purely data-driven method. Denote the TAB classifiers with the same classifier components \(\mathbf{1}\{\hat{\eta}^Q(x) \geq 1/2\}\) and \(\hat{f}^P(x)\), but different choices of \(\tau\), as:
$$
\hat{f}_{TAB}(x;\tau_i)=
\begin{cases}
\mathbf{1}\{\hat{\eta}^Q(x)\geq \frac{1}{2}\},\quad &\text{if }|\hat{\eta}^Q(x) - \frac{1}{2}|\geq \tau_i,\\
\hat{f}^P(x),\quad &\text{otherwise},
\end{cases}\quad i=1,2,\dots,M
$$
given a candidate list \(0\leq\tau_1<\tau_2<\dots<\tau_M\leq 1/2\) for some \(M > 0\).
Following a similar proof approach as in \cite{tsybakov2004optimal} and \cite{reeve2021adpative}, the following lemma controls the additional error incurred by selecting the best \(\tau\) using $0-1$ empirical risk minimization:
\begin{lemma}[Empirical Risk Minimization]
\label{lemma:erm}
Suppose that Assumption \ref{assum:margin} holds with $\alpha\geq 0,\ C_\alpha>0$. Let $$j^*\in\mathrm{argmin}_{1\leq j\leq M}\frac{1}{n_Q}\sum_{i=1}^{n_Q}\mathbf{1}\{\hat{f}_{TAB}(X_i;\tau_j)\neq Y_i\}$$ with $\hat{f}_{TAB}^{ERM}(\cdot)=\hat{f}_{TAB}(\cdot;\tau_{j^*})$. Also define $\mathcal{E}_Q^*:=\min_{1\leq j\leq M}\mathcal{E}_Q(\hat{f}_{TAB}(\cdot;\tau_j))$. Then, for any $\delta\in(0,1]$, we have
$$
\sup_{(Q,P)\in\Pi}\pr_{\mathcal{D}_Q}\Big\{\mathcal{E}_Q(\hat{f}^{ERM}_{TAB})\geq 2\mathcal{E}_Q^*+64C_\alpha^{\frac{1}{2+\alpha}}\Big(\frac{\log(2M/\delta)}{n}\Big)^{\frac{1+\alpha}{2+\alpha}}\Big\}\leq \delta.
$$
Furthermore, we have
$$\sup_{(Q,P)\in\Pi}\E\mathcal{E}_Q(\hat{f}^{ERM}_{TAB})\lesssim \mathcal{E}_Q^* + \Big(\frac{\log(2M)}{n}\Big)^{\frac{1+\alpha}{2+\alpha}}.$$
\end{lemma}
Based on Lemma \ref{lemma:erm}, it is natural to ask if we can design a reasonably sized candidate list that contains a good choice of $\tau$ to effectively upper bound the excess risk as shown in Theorem \ref{thm:general}. The following theorem confirms this by uniformly selecting the parameters with equal step size:
\begin{theorem}[Empirical Risk Minimization]
\label{thm:erm}
Keep the assumptions and notations made in Theorem \ref{thm:general} and Lemma \ref{lemma:erm}. For any constant $-c>0$ such that $n_Q^{c}\ll \delta_Q$, if we choose the candidate list as
$$M=\lfloor n_Q^{c}/2 \rfloor,\quad \tau_1=n_Q^{-c},\quad \tau_{j+1}=\tau_j+n_Q^{-c},\quad j=1,2,\dots,M-1,$$ then when $n_Q$ is sufficiently large, we have
$$
\sup_{(Q,P)\in\Pi}\pr_{\mathcal{D}_Q}\Big\{\mathcal{E}_Q(\hat{f}^{ERM}_{TAB})\geq 2\mathcal{E}_Q^{TAB}+64C_\alpha^{\frac{1}{2+\alpha}}\Big(\frac{\log(n_Q^c/\delta)}{n}\Big)^{\frac{1+\alpha}{2+\alpha}}\Big\}\leq \delta,
$$
and
$$\sup_{(Q,P)\in\Pi}\E\mathcal{E}_Q(\hat{f}^{ERM}_{TAB})\lesssim \mathcal{E}_Q^{TAB} + \Big(\frac{\log(n_Q^c)}{n}\Big)^{\frac{1+\alpha}{2+\alpha}},$$
where $\mathcal{E}_Q^{TAB}\lesssim \left(\sup_{(Q,P)\in\Pi}\E_{\mathcal{D}_P}\xi(\hat{f}^P)+ \varepsilon(2\log(n_Q\lor n_P)\delta_Q)\right)\land \tau^{1+\alpha} +\delta_f$.
\end{theorem}
By comparison with the risk upper bound in Theorem \ref{thm:general}, the bound in Theorem \ref{thm:erm} incurs an additional term of the order \( n_Q^{-\frac{1+\alpha}{2+\alpha}} \) up to logarithmic factors, making the upper bound potentially sub-optimal. However, we conjecture that this additional risk term due to ERM will not significantly hurt empirical performance, and \(\hat{f}^{ERM}_{TAB}\) can achieve similar accuracy to the original \(\hat{f}_{TAB}\) with a theoretically reasonable choice in practice. Additionally, the term \( n_Q^{-\frac{1+\alpha}{2+\alpha}} \) is dominated by \( n_Q^{-\frac{\beta(1+\alpha)}{2\beta+d}} \) and \(\sqrt{\frac{s \log d}{n_Q}}\), the conventional rate without source data in the $K$-NN and logistic modeling cases. Therefore, under the regime of a large \(n_P\) and a small ambiguity level, we could still improve the excess risk convergence rate and gain from transfer learning from a theoretical perspective.

Besides the theoretical guarantee of the excess risk upper bound, it is also meaningful to discuss obtaining the \(\hat{f}^{ERM}_{TAB}\) from an algorithmic perspective. Although minimizing the empirical risk with respect to the \(0-1\) loss, as considered by \cite{audibert2011fast, tsybakov2004optimal, tsybakov2004smoothDiscriminationAnalysis}, is generally computationally infeasible, we can still obtain the ERM TAB classifier through a linear search due to the simple and special structure of the focal parameter \(\tau\) in our model. Note that for any $j=1,2,\dots,M-1$, we have
$$\hat{f}_{TAB}(x;\tau_{j+1})=\hat{f}_{TAB}(x;\tau_{j})=\hat{f}^P(x)\quad \text{if}\quad |\hat{\eta}^Q(x)-1/2|\in[0,\tau_j).$$
$$\hat{f}_{TAB}(x;\tau_{j+1})=\hat{f}_{TAB}(x;\tau_{j})=\mathbf{1}\{\hat{\eta}^Q(x)\geq 1/2\}\quad \text{if}\quad |\hat{\eta}^Q(x)-1/2|\in[\tau_{j+1},1/2].$$
Thus, we have
$$\hat{f}_{TAB}(x;\tau_{j+1})=\hat{f}_{TAB}(x;\tau_{j})\quad \text{if}\quad |\hat{\eta}^Q(x)-1/2|\in[0,\tau_j)\cup[\tau_{j+1},1/2],$$
which deduces
$$
\begin{aligned}
&\mathbf{1}\{\hat{f}_{TAB}(x;\tau_{j+1})\neq Y_i\}-\mathbf{1}\{\hat{f}_{TAB}(x;\tau_{j})\neq Y_i\}\\
=&\begin{cases}
0,\quad &\text{if }|\hat{\eta}^Q(x)-1/2|\in[0,\tau_j)\cup[\tau_{j+1},1/2],\\
\mathbf\{\mathbf{1}\{\hat{\eta}^Q(x)\geq 1/2\}\neq f^P(x)\}, &\text{if }|\hat{\eta}^Q(x)-1/2|\in[\tau_j,\tau_{j+1}).  
\end{cases}
\end{aligned}
$$
Therefore, we have
$$
\begin{aligned}
&\sum_{i=1}^{n_Q}\mathbf{1}\{\hat{f}_{TAB}(X_i;\tau_{j+1})\neq Y_i\}-\mathbf{1}\{\hat{f}_{TAB}(X_i;\tau_{j})\neq Y_i\}\\
=&\sum_{i:|\eta^Q(x)-1/2|\in[\tau_j,\tau_{j+1})}\mathbf{1}\{\mathbf{1}\{\hat{\eta}^Q(x)\geq 1/2\}\neq f^P(x)\}=:\mathrm{diff}_j.
\end{aligned}
$$
Therefore, we could create \(M-1\) variables \(\mathrm{diff}_1, \dots, \mathrm{diff}_{M-1}\), initialized as zero, and compute them with a simple traversal over the target data. Then, we could immediately obtain the ERM classifier by calculating all \(\frac{1}{n_Q}\sum_{i=1}^{n_Q}\mathbf{1}\{\hat{f}_{TAB}(X_i; \tau_j) \neq Y_i\}\) using the following procedure:
\begin{itemize}
    \item \textbf{Step I.} Calculate $\{\mathrm{diff}_j\}_{1\leq j\leq M-1}$ using a simple search over all target data $\{(X_i,Y_i)\}_{1\leq i\leq n_Q}$.\\
    \emph{Time complexity:} $O(n_Q)$, \emph{Space complexity:} $O(M)$.
    \item \textbf{Step II.} Calculate $\frac{1}{n_Q}\sum_{i=1}^{n_Q}\mathbf{1}\{\hat{f}_{TAB}(X_i;\tau_{1})\neq Y_i\}$.\\
    \emph{Time complexity:} $O(n_Q)$, \emph{Space complexity:} $O(1)$.
    \item \textbf{Step III.} Calculate $\frac{1}{n_Q}\sum_{i=1}^{n_Q}\mathbf{1}\{\hat{f}_{TAB}(X_i;\tau_{j})\neq Y_i\}$ for all $1\leq j \leq M$ using the following formula:
    $$\frac{1}{n_Q}\sum_{i=1}^{n_Q}\mathbf{1}\{\hat{f}_{TAB}(X_i;\tau_{j+1})\neq Y_i\}=\frac{1}{n_Q}\sum_{i=1}^{n_Q}\mathbf{1}\{\hat{f}_{TAB}(X_i;\tau_{j})\neq Y_i\}+\mathrm{diff}_j/n_Q,\quad j=1,2,\dots,M-1.$$
    \emph{Time complexity:} $O(M)$, \emph{Space complexity:} $O(M)$.
    \item \textbf{Step IV.} Obtain $j^*\in\mathrm{argmin}_{1\leq j\leq M}\frac{1}{n_Q}\sum_{i=1}^{n_Q}\mathbf{1}\{\hat{f}_{TAB}(X_i;\tau_j)\neq Y_i\}$ and $\hat{f}_{TAB}^{ERM}(\cdot)=\hat{f}_{TAB}(\cdot;\tau_{j^*})$.\\
    \emph{Time complexity:} $O(M)$, \emph{Space complexity:} $O(1)$.
\end{itemize}

\subsection{General Signal Transfer Risk Bound for Plug-in Rules}
\label{sec:pluginSTR}
Suppose that $\hat{f}^P$ can be expressed in the form of a plug-in rule, i.e., $\hat{f}^P(\cdot)=\mathbf{1}\{\hat{\eta}^P(\cdot)\geq \frac{1}{2}\}$, where $\hat{\eta}^P(\cdot)$ is an estimate of the regression function $\eta^P(\cdot)$. For notational simplicity, define the area with strong signal strength as
$$\Omega^+(\gamma,C_\gamma):=\{x\in\Omega:s(x)\geq C_\gamma |\eta^Q(x)-\frac{1}{2}|^\gamma\}.$$
Since $\Omega^+(\gamma,C_\gamma)$ indicates the informative area where $\eta^P$ provides strong signal relative to $\eta^Q$, $\hat{\eta}^P$ could also give indications on $\eta^Q$ if it approximates $\eta^P$ well.

In order to obtain a complete convergence rate for the signal transfer risk, it is necessary to impose some conditions on the behavior of $\hat{\eta}^P$. The following result requires that $\hat{\eta}^P$ recovers $\eta^P$ in the sense that the misclassification rate is upper bounded.

\begin{theorem}
Let $\hat{\eta}^P$ be an estimator of the regression function $\eta^P$ depending on $\mathcal{D}_P$. Suppose that two $n_P$-sequences $\delta_P, \delta_{P,f}$ satisfy that, with probability at least $1-\delta_{P,f}$,  for some constant $C_2>0$,
\begin{equation}
    \sup_{(Q,P)\in \Pi}\pr_{\mathcal{D}_P}\left((\hat{\eta}^P(x)-\frac{1}{2})(\eta^Q(x)-\frac{1}{2})<0|X^P_{1:n_P}\right)\leq C_2\exp\left(- (\frac{C_\gamma|\eta^Q(x)-\frac{1}{2}|^{\gamma}}{\delta_P})^2\right)
    \label{eq:misclassP}
\end{equation}
holds except at $\Omega^b\subset \Omega^+(\gamma,C_\gamma)$. Then we have
$$\sup_{(Q,P)\in \Pi}\E_{\mathcal{D}_P}\xi(\hat{f}^P)\lesssim \delta_P^{\frac{1+\alpha}{\gamma}}+\delta_{P,f}+\delta_P^b,$$
where $\delta_P^b=\sup_{(Q,P)\in\Pi}\int_{\Omega^b}|\eta^Q(x)-\frac{1}{2}|dQ_X$.
\label{thm:STRPlugIn}
\end{theorem}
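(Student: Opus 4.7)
The plan is to decompose $\E_{\mathcal{D}_P}\xi(\hat f^P)$ via Fubini, apply the pointwise concentration inequality \eqref{eq:misclassP} inside the integral, and then integrate the resulting exponential tail against $|\eta^Q(x)-1/2|$ using the margin assumption. First I would unfold \eqref{eq:STR} as
\[
\E_{\mathcal{D}_P}\xi(\hat f^P)=\int_{\Omega^+(\gamma,C_\gamma)}\Bigl|\eta^Q(x)-\tfrac12\Bigr|\,\pr_{\mathcal{D}_P}\!\bigl(\hat f^P(x)\neq f_Q^*(x)\bigr)\,dQ_X(x),
\]
and observe that on $\Omega^+(\gamma,C_\gamma)$, the condition $s(x)\ge C_\gamma|\eta^Q(x)-1/2|^\gamma>0$ forces $\eta^P(x)-1/2$ and $\eta^Q(x)-1/2$ to share sign whenever $\eta^Q(x)\neq 1/2$. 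Consequently the event $\{\hat f^P(x)\neq f_Q^*(x)\}$ coincides with $\{(\hat\eta^P(x)-1/2)(\eta^Q(x)-1/2)<0\}$ outside a set of $Q_X$-measure that contributes zero to the integrand, which is exactly the event controlled by \eqref{eq:misclassP}.

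Next, I would split the region of integration as $\Omega^+(\gamma,C_\gamma)=(\Omega^+(\gamma,C_\gamma)\setminus\Omega^b)\cup\Omega^b$. On $\Omega^b$, no concentration control is available, so bounding the misclassification probability trivially by $1$ yields a contribution of at most $\int_{\Omega^b}|\eta^Q(x)-1/2|\,dQ_X\le\delta_P^b$. On the complement, I would condition on $X^P_{1:n_P}$: with probability at least $1-\delta_{P,f}$ the bound \eqref{eq:misclassP} holds, while the complementary event contributes at most $\delta_{P,f}\cdot\E|\eta^Q(X)-1/2|\le\delta_{P,f}/2$. Taking the unconditional expectation then reduces the problem to bounding
\[
I:=\int_{\Omega}h(x)\exp\!\Bigl(-\bigl(C_\gamma h(x)^\gamma/\delta_P\bigr)^2\Bigr)dQ_X(x),\qquad h(x):=\bigl|\eta^Q(x)-\tfrac12\bigr|.
\]

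For $I$, I would invoke the margin assumption through integration by parts against the sub-CDF $\tilde F(t):=Q_X(0<h\le t)\le C_\alpha t^\alpha$. Writing $c:=(C_\gamma/\delta_P)^2$, the boundary term reduces to $\tfrac12 e^{-c/2^{2\gamma}}\tilde F(1/2)$, which is super-polynomially small in $\delta_P$. The remaining integral $\int_0^{1/2}\tilde F(t)e^{-ct^{2\gamma}}(1+2c\gamma t^{2\gamma})\,dt$ becomes a gamma-function expression under the substitution $u=ct^{2\gamma}$, giving $I\lesssim c^{-(1+\alpha)/(2\gamma)}=(\delta_P/C_\gamma)^{(1+\alpha)/\gamma}$. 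Multiplying by $C_2$ and summing with the previous two contributions yields the claimed bound $\E_{\mathcal{D}_P}\xi(\hat f^P)\lesssim \delta_P^{(1+\alpha)/\gamma}+\delta_{P,f}+\delta_P^b$.

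I expect the main technical obstacle to be the careful handling of $I$: the margin assumption delivers only a CDF-level estimate, not a density bound, so one cannot directly substitute it into the integrand and must instead either integrate by parts as above or, equivalently, run a layer-cake decomposition with threshold $r\asymp\delta_P^{1/\gamma}$ to balance the polynomial and exponential pieces. A secondary check is that the gamma-function constants depend only on $\alpha,\gamma,C_\alpha,C_\gamma,C_2$ and are uniform over $(Q,P)\in\Pi$, which is what allows the final bound to be stated with a single implicit constant in $\lesssim$.
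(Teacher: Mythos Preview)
Your proposal is correct and follows essentially the same decomposition as the paper's proof: split off $\Omega^b$ (contributing $\delta_P^b$), condition on the failure event of probability $\le\delta_{P,f}$, and then control the remaining integral using the margin assumption. The only cosmetic difference is that the paper bounds your integral $I$ via a dyadic peeling over layers $A_j=\{2^{j-1}\delta_P^{1/\gamma}<|\eta^Q-1/2|\le 2^j\delta_P^{1/\gamma}\}$ rather than your integration-by-parts route---which you yourself flag as an equivalent layer-cake argument.
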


Theorem \ref{thm:STRPlugIn} does not explicitly require the concentration property for $\hat{\eta}^P$ around the true regression function $\hat{\eta}^P$. Instead, it bounds the misclassification rate for a given query point by $C_2\exp(-(\frac{|\eta^P(x)-1/2|}{\delta_P})^2)$ which is consistent with the signal provided by $\eta^P$ relative to $\eta^Q$ within the signal transfer set.

Similar to $\delta_f$, the quantity $\delta_{P,f}$ denotes the (typically small) probability that the misclassification rate bound \eqref{eq:misclassP} fails. To account for the case where the query point is near the boundary of the signal transfer set $\Omega^+(\gamma,C_\gamma)$, and the estimate $\hat{\eta}^P(x)$ may be influenced by its neighborhood outside the set, we introduce a new term $\delta_P^b$ in addition to $\delta_{P,f}$. This allows for the possibility of failure of the misclassification rate bound \eqref{eq:misclassP} near the boundary of the signal transfer set. This situation may occur in some non-parametric methods, such as $K$-NN models, and needs to be considered in theoretical analysis.

The following simple proposition shows that \eqref{eq:misclassP} is weaker than the classical exponential concentration inequality.

\begin{proposition}
Let $\hat{\eta}^P$ be an estimator of the regression function $\eta^P$ depending on $\mathcal{D}_P$. For any $x\in\Omega^+(\gamma,C_\gamma)$, the event that for any $t>0$,
$$\sup_{(Q,P)\in\Pi}\pr_{\mathcal{D}_P}\left(|\hat{\eta}^P(x)-\eta^P(x)|\geq t|X^P_{1:n_P}\right)\leq C_2 \exp\left(-(\frac{t}{\delta_P})^2\right)$$
belongs to the event that \eqref{eq:misclassP} holds.
\label{prop:concentrationPtoMisclassP}
\end{proposition}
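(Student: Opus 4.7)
The plan is to establish a pointwise event inclusion that lets us transfer the estimation concentration bound on $|\hat\eta^P(x) - \eta^P(x)|$ directly into a sign-mismatch bound. Fix any $x \in \Omega^+(\gamma, C_\gamma)$, so by definition $s(x) \geq C_\gamma|\eta^Q(x)-1/2|^\gamma$. Assuming the non-trivial case $\eta^Q(x) \neq 1/2$, the defining formula for $s(x)$ forces $\mathrm{sgn}(\eta^Q(x)-1/2) \times (\eta^P(x)-1/2) \geq 0$ (otherwise $s(x) = 0$ would violate the lower bound), and moreover
\[
  \Bigl|\eta^P(x)-\tfrac{1}{2}\Bigr| \;=\; s(x) \;\geq\; C_\gamma\Bigl|\eta^Q(x)-\tfrac{1}{2}\Bigr|^\gamma.
\]

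The central step is the set inclusion
\[
  \Bigl\{(\hat\eta^P(x)-\tfrac{1}{2})(\eta^Q(x)-\tfrac{1}{2}) < 0\Bigr\} \;\subseteq\; \Bigl\{|\hat\eta^P(x)-\eta^P(x)| \geq C_\gamma|\eta^Q(x)-\tfrac{1}{2}|^\gamma\Bigr\}.
\]
This follows because, on the event on the left, $\hat\eta^P(x)-1/2$ has the opposite sign from $\eta^Q(x)-1/2$, hence the opposite sign from $\eta^P(x)-1/2$. Therefore $\hat\eta^P(x)$ and $\eta^P(x)$ lie on opposite sides of $1/2$, which gives $|\hat\eta^P(x)-\eta^P(x)| \geq |\eta^P(x)-1/2| \geq C_\gamma|\eta^Q(x)-1/2|^\gamma$ by the bound displayed above.

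Applying the assumed exponential concentration at $t = C_\gamma|\eta^Q(x)-1/2|^\gamma$ to the larger event then yields
\[
  \pr_{\mathcal{D}_P}\!\Bigl((\hat\eta^P(x)-\tfrac{1}{2})(\eta^Q(x)-\tfrac{1}{2}) < 0 \,\big|\, X_{1:n_P}^P\Bigr) \;\leq\; C_2 \exp\!\left(-\Bigl(\tfrac{C_\gamma|\eta^Q(x)-1/2|^\gamma}{\delta_P}\Bigr)^{\!2}\right),
\]
which is exactly \eqref{eq:misclassP}. Taking the supremum over $(Q,P)\in\Pi$ preserves the inequality since the event inclusion above holds deterministically at each $x$.

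There is no real obstacle here; the only subtlety is the boundary case $\eta^Q(x) = 1/2$, which makes the LHS probability equal to zero while the RHS reduces to $C_2$, so the bound is trivially valid. The argument is essentially a two-line geometric observation about points lying on opposite sides of $1/2$, combined with the monotonicity of probability under event inclusion.
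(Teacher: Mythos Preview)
Your proof is correct and follows essentially the same approach as the paper's: both establish the event inclusion $\{(\hat\eta^P(x)-\tfrac12)(\eta^Q(x)-\tfrac12)<0\}\subseteq\{|\hat\eta^P(x)-\eta^P(x)|\geq |\eta^P(x)-\tfrac12|\}$ and then invoke $|\eta^P(x)-\tfrac12|\geq C_\gamma|\eta^Q(x)-\tfrac12|^\gamma$ on $\Omega^+(\gamma,C_\gamma)$ to plug in $t=C_\gamma|\eta^Q(x)-\tfrac12|^\gamma$. Your version is in fact more careful than the paper's terse two-line argument, since you explicitly justify the sign reasoning and handle the degenerate case $\eta^Q(x)=\tfrac12$.
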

\begin{proof}
The result is straightforward by observing that $$\pr_{\mathcal{D}_P}((\hat{\eta}^P(x)-\frac{1}{2})(\eta^Q(x)-\frac{1}{2})<0|X^P_{1:n_P})\leq \pr_{\mathcal{D}_P}(|\hat{\eta}^P(x)-\eta^P(x)|\geq |\eta^P(x)||X^P_{1:n_P}),$$ of which the latter term is less than $C_2\exp (-(\frac{|\eta^P(x)|}{\delta_P})^2)$. Since we require that $x\in\Omega^+(\gamma,C_\gamma)$, it is also less than $C_2\exp(- (\frac{C_\gamma|\eta^Q(x)-1/2|^{\gamma}}{\delta_P})^2)$.
\end{proof}

\subsection{Non-parametric Classification with General Form of Ambiguity}
\label{sec:nonParaGeneral}
In this section, we focus on the general parametric space $\Pi^{NP}$ under the non-parametric classification scenario. Under this scenario, it is necessary to discuss the relationship between the parameters $\alpha,\gamma,\beta$ and $\beta_P$. We claim that, if $\gamma\beta\geq \beta_P$, we can rule out a ``super-smooth" source regression function $\eta^P$. The motivation behind this assumption is that if $\gamma\beta< \beta_P$, the smoothness of $\eta^P$ will potentially restrict the family of the target distribution $Q$.
To see this,  we define the family of target distributions as follows: $$\Pi^{NP}_Q:=\{Q:Q\in\mathcal{M}(\alpha,C_\alpha), \eta^Q\in\mathcal{H}(\beta,C_\beta),Q_X\in\mathcal{S}(\mu)\},$$ where $\mathcal{M}(\alpha,C_\alpha)$ denotes the set of distributions that satisfies the margin assumption (Assumption \ref{assum:margin}) with parameters $\alpha\geq 0$ and $C_\alpha>0$. The following proposition shows that when $\gamma\beta< \beta_P$, $\Pi^{NP}_Q$ is not a subset of the slice of $\Pi^{NP}$ on $Q$ when setting $\varepsilon(z)\equiv 0$.

\begin{proposition} \label{prop:sliceQ}
Suppose $\Pi^{NP}$ satisfies $\varepsilon(z)\equiv 0$. Then,
\begin{enumerate}
    \item If $\gamma\beta\geq \beta_P$, then $\Pi^{NP}_Q= \{Q:(Q,P)\in\Pi^{NP}\}$ for $C_{\beta_P}\geq C_\gamma 2^{-\gamma}\lor 2C_\gamma C_\beta^{\frac{\beta_P}{\beta}}2^{\gamma-\beta_P/\beta}$.
    \item If $\gamma\beta< \beta_P$, then $\Pi^{NP}_Q\not\subset \{Q:(Q,P)\in\Pi^{NP}\}$.
\end{enumerate}
\end{proposition}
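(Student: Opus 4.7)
The plan is to handle Part 1 by explicit construction and Part 2 by exhibiting a counterexample. For Part 1, given any $Q\in\Pi^{NP}_Q$, I will construct an admissible $P$ by choosing $P_X=Q_X$ (so $P_X\in\mathcal{S}(\mu)$ automatically) and defining the source regression function via a sign-preserving power of the target regression function,
$$
\eta^P(x) := \tfrac{1}{2} + C_\gamma\,\mathrm{sgn}\!\bigl(\eta^Q(x)-\tfrac{1}{2}\bigr)\,\bigl|\eta^Q(x)-\tfrac{1}{2}\bigr|^\gamma.
$$
This immediately yields Bayes-classifier alignment, the identity $s(x)=C_\gamma|\eta^Q(x)-1/2|^\gamma$, and hence $\varepsilon\equiv 0$ as in Example~\ref{eg:perfectSource}; the boundedness constraint $\eta^P\in[0,1]$ is exactly $C_\gamma 2^{-\gamma}\le 1/2$, which is implicit in the first term of the condition on $C_{\beta_P}$. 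The heart of the matter is checking $\eta^P\in\mathcal H(\beta_P,C_{\beta_P})$. Writing $h(t)=C_\gamma|t|^\gamma\mathrm{sgn}(t)$ and using that $h$ is $\gamma$-Hölder with constant $C_\gamma$ for $\gamma\le 1$, one gets
$$
|\eta^P(x)-\eta^P(x')|\le C_\gamma|\eta^Q(x)-\eta^Q(x')|^\gamma\le C_\gamma C_\beta^\gamma\|x-x'\|^{\gamma\beta};
$$
combined with the uniform bound $|\eta^P(x)-\eta^P(x')|\le C_\gamma 2^{1-\gamma}$, a case split at the crossover scale $r_0=(2^{1-\gamma}/C_\beta^\gamma)^{1/(\gamma\beta)}$ shows that the stated condition on $C_{\beta_P}$ upgrades $\gamma\beta$-Hölder regularity to $\beta_P$-Hölder regularity whenever $\beta_P\le\gamma\beta$. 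For $\gamma>1$, where composition with $h$ only delivers $\beta$-Hölder smoothness, I instead use the auxiliary construction $\eta^P(x)=\tfrac{1}{2}+\mathrm{sgn}(\eta^Q(x)-\tfrac{1}{2})\cdot C_\gamma C_\beta^\gamma\,\rho(x)^{\gamma\beta}$ built from the $1$-Lipschitz distance $\rho(x)=\mathrm{dist}(x,\{\eta^Q=\tfrac12\})$ to the Bayes decision boundary: the signal-strength bound follows from $|\eta^Q(x)-\tfrac12|\le C_\beta\rho(x)^\beta$ (Hölder continuity of $\eta^Q$), and Hölder regularity of $\eta^P$ reduces to that of $\rho^{\gamma\beta}$, with the opposite-sign case handled by the inequality $\rho(x)+\rho(x')\le\|x-x'\|$ that holds whenever $x,x'$ lie on opposite sides of the decision boundary.

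For Part 2, I exhibit a single counterexample. Take $d=1$, $\Omega=[-1,1]$, $Q_X$ uniform, and
$$
\eta^Q(x)=\tfrac{1}{2}+c\,\mathrm{sgn}(x)|x|^\beta
$$
with $c$ small enough that $\eta^Q\in[0,1]$. Routine checks (Hölder continuity with constant $c$, the margin assumption with any $\alpha\le 1/\beta$, and strong density for uniform $Q_X$) confirm $Q\in\Pi^{NP}_Q$. Now assume for contradiction that some $P$ gives $(Q,P)\in\Pi^{NP}$ with $\varepsilon\equiv 0$, so $s(x)\ge C_\gamma|\eta^Q(x)-\tfrac12|^\gamma$ holds $Q_X$-a.s.\ and $\eta^P\in\mathcal H(\beta_P,C_{\beta_P})$. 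Sign alignment together with continuity of $\eta^P$ (since $\beta_P>0$) forces $\eta^P(0)=\tfrac12$, so for small $x>0$,
$$
C_\gamma c^\gamma x^{\gamma\beta}\le|\eta^P(x)-\tfrac12|=|\eta^P(x)-\eta^P(0)|\le C_{\beta_P}\,x^{\beta_P},
$$
giving $x^{\gamma\beta-\beta_P}\le C_{\beta_P}/(C_\gamma c^\gamma)$. When $\gamma\beta<\beta_P$ the left-hand side diverges as $x\to 0^+$, a contradiction.

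The main obstacle is the bookkeeping in Part 1, particularly the $\gamma>1$ regime where the composition $h\circ\eta^Q$ is limited to the smoothness of $\eta^Q$ itself; the signed-distance construction sidesteps this by exploiting that $\rho$ is automatically $1$-Lipschitz irrespective of the regularity of $\{\eta^Q=\tfrac12\}$. Part 2 is comparatively direct once the sharp $\beta$-Hölder sign-flip counterexample is identified.
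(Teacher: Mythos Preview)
Your argument is sound, and Part~2 is essentially the paper's counterexample. For Part~1 the paper takes a more uniform route: it uses a single distance-based construction $\eta^P(x)=\tfrac12+C\,\mathrm{sgn}(\eta^Q(x)-\tfrac12)\,\rho(x)^{\beta_P}$ with exponent~$\beta_P$ rather than~$\gamma\beta$, valid for all $\gamma>0$ at once. That choice makes the $\beta_P$-H\"older verification a one-liner (since $\rho$ is $1$-Lipschitz and $t\mapsto t^{\beta_P}$ is $\beta_P$-H\"older for $\beta_P\le 1$), while the signal-strength inequality $C\rho(x)^{\beta_P}\ge C_\gamma|\eta^Q(x)-\tfrac12|^\gamma$ follows from $|\eta^Q(x)-\tfrac12|\le C_\beta\rho(x)^\beta$, the bound $|\eta^Q-\tfrac12|\le\tfrac12$, and $\beta_P/\beta\le\gamma$; this is precisely where the threshold $C=C_\gamma C_\beta^{\beta_P/\beta}2^{\gamma-\beta_P/\beta}$ in the stated condition on $C_{\beta_P}$ arises. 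Your two constructions are correct in spirit but do not recover this specific constant, and your $\gamma>1$ branch needs the decision boundary $\{\eta^Q=\tfrac12\}$ to be nonempty (the paper handles the edge cases $\beta=0$ and $\{\eta^Q=\tfrac12\}=\emptyset$ separately before invoking the distance construction). One small slip: the odd extension $h(t)=C_\gamma|t|^\gamma\mathrm{sgn}(t)$ is $\gamma$-H\"older with constant $2^{1-\gamma}C_\gamma$, not $C_\gamma$, because for $t>0>s$ one has $t^\gamma+|s|^\gamma\le 2^{1-\gamma}(t+|s|)^\gamma$ but not $t^\gamma+|s|^\gamma\le(t+|s|)^\gamma$ when $\gamma<1$.
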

Proposition \ref{prop:sliceQ} implies that when $\beta_P$ is sufficiently large, a too-small $\gamma$ imposes an additional smoothness condition on $Q$. To avoid this issue, we assume that a large enough $\gamma\geq \beta_P/\beta$. The upper bound results is then as follows.

\begin{theorem}[Non-parametric Classification Upper Bound, General Case] 
Suppose that \\$n_Q^{\frac{d}{2\beta+d}}\exp (-c_Qn_Q^{\frac{2\beta}{2\beta+d}})\lesssim n_P^{-\frac{\beta(1+\alpha)}{2\gamma\beta+d}}$.  Then the TAB $K$-NN classifier $\hat{f}^{NN}_{TAB}(x)$ satisfies

\begin{itemize}
    \item $\gamma\beta\geq \beta_P:$ For any $(Q,P)\in\Pi^{NP}$,
    
    \begin{equation}
\begin{aligned}
     \E_{(\mathcal{D}_Q,\mathcal{D}_P)} \mathcal{E}_Q(\hat{f}^{NN}_{TAB})&\lesssim \left(n_P^{-\frac{\beta(1+\alpha)}{2\gamma\beta+d}}+\varepsilon_b+\varepsilon(2\tau)\right)\land \left(\log^{1+\alpha} (n_Q\lor n_P) n_Q^{-\frac{\beta(1+\alpha)}{2\beta+d}}\right)\\
      &\lesssim \left( n_P^{-\frac{\beta_P(1+\alpha)/\gamma}{2\gamma\beta+d}}+\varepsilon(2\tau)\right)\land \left(\log^{1+\alpha} (n_Q\lor n_P) n_Q^{-\frac{\beta(1+\alpha)}{2\beta+d}}\right),
\end{aligned}
\label{eq:nonParaUpper}
\end{equation}
where $\varepsilon_b= \int_{\Omega^b}|\eta^Q(X)-\frac{1}{2}|dQ_X$ and the boundary of the signal transfer set is defined as
    $$\Omega^b:=\{x\in\Omega: |\eta^Q(x)-\frac{1}{2}|\leq c_b n_P^{-\frac{\beta_P/\gamma}{2\gamma\beta+d}},
 B(x,c_bn_P^{-\frac{1}{2\gamma\beta+d}})\cap \Omega\cap\Omega_P \not\subset \Omega^+(\gamma,C_\gamma)\}
    $$
    for some constant $c_b>0$.
    \item $\gamma\beta< \beta_P:$ If we choose $k_P=\lfloor c_P n_P^{\frac{2\beta_P}{2\beta_P+d}}\rfloor$ for any $c_P>0$ instead, then
    \begin{equation}
        \sup_{(Q,P)\in\Pi^{NP}}\E_{(\mathcal{D}_Q,\mathcal{D}_P)} \mathcal{E}_Q(\hat{f}^{NN}_{TAB})\lesssim \left( n_P^{-\frac{\beta_P(1+\alpha)/\gamma}{2\beta_P+d}}+\varepsilon(2\tau)\right)\land \left(\log^{1+\alpha} (n_Q\lor n_P) n_Q^{-\frac{\beta(1+\alpha)}{2\beta+d}}\right).
        \label{eq:nonParaUpper2}
    \end{equation}
\end{itemize}

\label{thm:nonParaGeneral}
\end{theorem}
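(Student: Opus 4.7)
The plan is to apply the general TAB bound in Theorem \ref{thm:general} with $\hat{\eta}^Q = \hat{\eta}^Q_{k_Q}$ and $\hat{f}^P = \mathbf{1}\{\hat{\eta}^P_{k_P} \geq \tfrac12\}$, and then control the signal transfer risk $\E_{\mathcal{D}_P}\xi(\hat{f}^P)$ through the plug-in result in Theorem \ref{thm:STRPlugIn}. For the target side, I would first verify the conditional sub-Gaussian concentration \eqref{eq:localConcentrationQ} for the $K$-NN estimator on $\mathcal{D}_Q$: with $k_Q = \lfloor c_Q n_Q^{2\beta/(2\beta+d)}\rfloor$ and the strong density Condition \ref{con:density}, Hoeffding's inequality conditional on the design produces $\delta_Q \asymp k_Q^{-1/2} = n_Q^{-\beta/(2\beta+d)}$, while a Chernoff bound on $|\mathcal{D}_Q \cap B(x, r)|$ at the typical $K$-NN radius $\|X_{(k_Q)}-x\|$ controls the design failure on an event of probability $\delta_f \lesssim n_Q^{d/(2\beta+d)}\exp(-ck_Q)$. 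The hypothesis $n_Q^{d/(2\beta+d)} \exp(-c_Q n_Q^{2\beta/(2\beta+d)}) \lesssim n_P^{-\beta(1+\alpha)/(2\gamma\beta+d)}$ then guarantees that $\delta_f$ is dominated by the first rate term.

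For the source side I would establish the misclassification inequality \eqref{eq:misclassP} off the boundary layer $\Omega^b$ defined in the statement. Fix $x \in \Omega^+(\gamma,C_\gamma)\setminus\Omega^b$; by definition of $\Omega^b$, either $|\eta^Q(x)-\tfrac12| > c_b n_P^{-\beta_P/(\gamma(2\gamma\beta+d))}$ or $B(x, c_b n_P^{-1/(2\gamma\beta+d)}) \cap \Omega \cap \Omega_P \subset \Omega^+(\gamma,C_\gamma)$ (in fact both will matter). Condition \ref{con:density} ensures that the ball of radius $\asymp n_P^{-1/(2\gamma\beta+d)}$ contains at least $k_P = \lfloor c_P n_P^{2\gamma\beta/(2\gamma\beta+d)}\rfloor$ source samples with overwhelming probability, so all $k_P$ nearest source neighbors of $x$ lie in $\Omega^+$ and carry the correct sign of $\eta^P-\tfrac12$ relative to $\eta^Q(x)$. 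The $\beta_P$-Holder smoothness of $\eta^P$ then bounds the conditional bias of $\hat{\eta}^P_{k_P}$ by $C_{\beta_P}(k_P/n_P)^{\beta_P/d} \asymp n_P^{-\beta_P/(2\gamma\beta+d)}$, which by the second defining inequality of $\Omega^b$ is dominated by $\tfrac12 C_\gamma|\eta^Q(x)-\tfrac12|^\gamma$. A conditional Hoeffding bound therefore yields \eqref{eq:misclassP} with $\delta_P \asymp k_P^{-1/2} = n_P^{-\gamma\beta/(2\gamma\beta+d)}$, and Theorem \ref{thm:STRPlugIn} gives
$$\E_{\mathcal{D}_P}\xi(\hat{f}^P) \;\lesssim\; \delta_P^{(1+\alpha)/\gamma} + \varepsilon_b + \delta_{P,f} \;\asymp\; n_P^{-\beta(1+\alpha)/(2\gamma\beta+d)} + \varepsilon_b,$$
with $\delta_{P,f}$ absorbed by the same linking hypothesis relating $n_Q$ and $n_P$.

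Plugging these estimates into Theorem \ref{thm:general} with $\tau \asymp \log(n_Q\lor n_P)\, n_Q^{-\beta/(2\beta+d)}$ yields the first inequality of \eqref{eq:nonParaUpper}. To derive the second (looser) form, a layer-cake bound against the margin assumption gives $\varepsilon_b \leq C_\alpha\bigl(c_b n_P^{-\beta_P/(\gamma(2\gamma\beta+d))}\bigr)^{1+\alpha} \asymp n_P^{-\beta_P(1+\alpha)/(\gamma(2\gamma\beta+d))}$; combined with the observation $\gamma\beta\geq\beta_P \Rightarrow \beta/(2\gamma\beta+d) \geq \beta_P/(\gamma(2\gamma\beta+d))$, both the variance term and $\varepsilon_b$ collapse into $n_P^{-\beta_P(1+\alpha)/(\gamma(2\gamma\beta+d))}$. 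For the case $\gamma\beta<\beta_P$, I would instead use the classical balanced choice $k_P = \lfloor c_P n_P^{2\beta_P/(2\beta_P+d)}\rfloor$, which equalizes bias and variance at $n_P^{-\beta_P/(2\beta_P+d)}$; an analogous verification of \eqref{eq:misclassP} with a correspondingly redefined boundary layer (again absorbed by the margin) gives \eqref{eq:nonParaUpper2}. The main technical obstacle lies precisely in this boundary-layer analysis: one must simultaneously guarantee that the $k_P$ nearest neighbors of every $x \notin \Omega^b$ lie inside $\Omega^+(\gamma,C_\gamma)$ (so they carry the correct sign) and that the resulting $K$-NN bias is dominated by the signal $C_\gamma|\eta^Q(x)-\tfrac12|^\gamma$ itself, and this dual requirement is exactly what pins down the radius $c_b n_P^{-1/(2\gamma\beta+d)}$ and the signal threshold $c_b n_P^{-\beta_P/(\gamma(2\gamma\beta+d))}$ appearing in the definition of $\Omega^b$.
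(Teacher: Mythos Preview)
Your approach is the paper's: verify the $Q$-side concentration \eqref{eq:localConcentrationQ} for $\hat{\eta}^Q_{k_Q}$ (this is Lemma~\ref{lemma:nonParaGeneralQ}), control the signal transfer risk through Theorem~\ref{thm:STRPlugIn} after establishing \eqref{eq:misclassP} off the boundary layer $\Omega^b$ (this is Lemma~\ref{lemma:nonParaGeneralP}), feed both into Theorem~\ref{thm:general}, and finally absorb $\varepsilon_b$ via the margin assumption to pass to the looser displayed bound.

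There is one point your sketch blurs, and it is exactly the ``main technical obstacle'' you flag. The two conditions defining $\Omega^b$ are conjunctive, so $x\in\Omega^+\setminus\Omega^b$ gives you a \emph{disjunction}, and the paper treats the two resulting sub-cases by \emph{different} mechanisms, not simultaneously. When $|\eta^Q(x)-\tfrac12|>c_b\,n_P^{-\beta_P/(\gamma(2\gamma\beta+d))}$ (the paper's set $\Omega^+_1$), your $\beta_P$-H\"older bias argument is exactly right: the bias $\asymp n_P^{-\beta_P/(2\gamma\beta+d)}$ is dominated by $|\eta^P(x)-\tfrac12|\ge C_\gamma|\eta^Q(x)-\tfrac12|^\gamma$. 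But when only the ball containment holds (the paper's $\Omega^+_2$), $|\eta^Q(x)-\tfrac12|$ may be arbitrarily small, so the $\beta_P$-bias need not be dominated by the signal, and ``neighbors carry the correct sign'' is only qualitative. Here the paper does \emph{not} use $\eta^P$-smoothness at all: since every neighbor $X^P_{(i)}\in\Omega^+$ satisfies $|\eta^P(X^P_{(i)})-\tfrac12|\ge C_\gamma|\eta^Q(X^P_{(i)})-\tfrac12|^\gamma$, one invokes the $\beta$-smoothness of $\eta^Q$ to get the quantitative lower bound $|\bar{\eta}^P_{k_P}(x)-\tfrac12|\ge 2^{-\gamma}C_\gamma|\eta^Q(x)-\tfrac12|^\gamma$ whenever $|\eta^Q(x)-\tfrac12|$ exceeds the $\eta^Q$-smoothness scale $\asymp n_P^{-\beta/(2\gamma\beta+d)}$; below that scale the right side of \eqref{eq:misclassP} is bounded away from zero and the inequality holds trivially by enlarging $C_2$. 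This separation into $\Omega^+_1$ and $\Omega^+_2$, with $\eta^P$-smoothness driving the first and $\eta^Q$-smoothness driving the second, is the missing link in your sketch; once it is in place your outline coincides with the paper's proof in every other respect, including the $\gamma\beta<\beta_P$ case.
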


When $\gamma\beta\geq \beta_P$, the optimal ``smoothness" parameter for the source data should be $\gamma\beta$, provided by $\Omega^+(\gamma, C_\gamma)$, as our upper bound (except the boundary risk term $\varepsilon_b$) and the choice of $k_P$ do not involve $\beta_P$. In other words, our upper bound automatically smooths $\eta^P$ implicitly through the connection between the area with strong signal strength and $\eta^Q$. However, if $\beta_P>\gamma\beta$, the upper converges faster as the smoothness of $\eta^P$ also has implicit information about $\eta^Q$.

We next provide lower bound results for the parameter space $\Pi^{NP}$. While the scenario of $\gamma\beta<\beta_P$ causes a ``phase transition" transition with a smaller upper bound, our obtained lower bound is even smaller under this scenario and does not match the lower bound. How to develop a optimal and robust upper bound methodology under this case is still open.

\begin{theorem}[Non-parametric Classification Lower Bound, General Case] (a) Fix the parameters in the definition of $\Pi^{NP}$ with $\alpha\beta\leq d,\gamma\beta\geq\beta_P$. We have that for some constant $c>0$,
$$
\inf_{\hat{f}}\sup_{\substack{(Q,P)\in\Pi^{NP}\\\Omega=\Omega_P}}\E\mathcal{E}_Q(\hat{f})
\gtrsim  \left( n_P^{-\frac{\beta(1+\alpha)}{2\gamma\beta+d}}+\varepsilon(cn_Q^{-\frac{\beta}{2\beta+d}})\right)\land n_Q^{-\frac{\beta(1+\alpha)}{2\beta+d}}.
$$
\noindent (b) Fix the parameters in the definition of $\Pi^{NP}$ with $\max\{\alpha\beta,\alpha\beta_P/\gamma\}\leq d,\gamma\beta<\beta_P$. We have that for some constant $c>0$,
$$
\inf_{\hat{f}}\sup_{\substack{(Q,P)\in\Pi^{NP}\\\Omega=\Omega_P}}\E\mathcal{E}_Q(\hat{f})
\gtrsim  \left( n_P^{-\frac{\beta_P(1+\alpha)/\gamma}{2\beta_P+d}}+\varepsilon(cn_Q^{-\frac{\beta}{2\beta+d}})\right)\land  n_Q^{-\frac{\beta_P(1+\alpha)/\gamma}{2\beta_P/\gamma+d}}.
$$
\label{thm:nonParaGeneralLowerBound}
\end{theorem}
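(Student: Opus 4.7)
The plan is to lower-bound the minimax excess risk by constructing two separate sub-families of $\Pi^{NP}$ (within the required configuration $\Omega = \Omega_P$), then combining their individual Assouad-based lower bounds via an elementary inequality relating $\max(A \wedge C, B)$ to $\min(A+B, C)$.

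First I would set up the standard bump apparatus. Take $Q_X = P_X$ uniform on $\Omega = [0,1]^d$ so Condition \ref{con:density} is trivially met, partition a subregion of $\Omega$ into $m$ disjoint balls $B_i = B(x_i, r)$ on a regular grid, fix a smooth compactly-supported profile $\phi:\mathbb{R}^d \to [0,1]$ with a flat plateau near the origin, and for each $\sigma \in \{0,1\}^m$ set $\eta^Q_\sigma(x) = \frac{1}{2} + (-1)^{\sigma_i} C_\beta r^{\beta'} \phi((x-x_i)/r)$ on $B_i$ and $\frac{1}{2}$ outside. The effective scale $\beta'$ equals $\beta$ in part (a) but $\beta_P/\gamma > \beta$ in part (b); elevating $\eta^Q$'s smoothness in part (b) is unavoidable, since the $\beta_P$-Hölder envelope caps $|\eta^P - \frac{1}{2}|$ at $r^{\beta_P}$, which cannot support the signal-strength identity $s(x) \asymp |\eta^Q - \frac{1}{2}|^\gamma$ unless $|\eta^Q - \frac{1}{2}|$ is shrunk to $r^{\beta_P/\gamma}$.

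For the first sub-family (informative source), choose $\eta^P_\sigma$ with scale $r^{\gamma\beta}$ in part (a) or $r^{\beta_P}$ in part (b) on the same bumps with matching sign, so the signal-strength condition holds and the ambiguity integral vanishes. Neighboring hypotheses (differing in a single coordinate of $\sigma$) contribute, by independence and Pinsker, a KL budget of order $n_Q r^{2\beta'+d} + n_P r^{2\gamma\beta'+d}$ to the joint law of $(\mathcal{D}_Q, \mathcal{D}_P)$. Combined with the dual representation $\mathcal{E}_Q \gtrsim r^{\beta'+d} \cdot d_H(\hat\sigma, \sigma)$, Assouad's lemma delivers a lower bound of order $m r^{\beta'+d}$. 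Saturating the margin via $m \asymp r^{\alpha\beta'-d}$ (legitimate under the stated $\alpha\beta' \leq d$) and choosing the smallest $r$ consistent with the KL sum being $O(1)$ yields $n_P^{-\beta(1+\alpha)/(2\gamma\beta+d)} \wedge n_Q^{-\beta(1+\alpha)/(2\beta+d)}$ in part (a), and its analogue with $\beta$ replaced by $\beta_P/\gamma$ in part (b). For the second sub-family (ambiguous source), instead fix $\eta^P \equiv \frac{1}{2}$ (so $s \equiv 0$ everywhere) and keep the target bumps at scale $r \asymp n_Q^{-1/(2\beta+d)}$ with $\beta' = \beta$. The integral appearing in Assumption \ref{assum:ambiguity} is then $\asymp m r^{\beta+d}$ for $z \geq r^\beta$, so the constraint $m r^{\beta+d} \lesssim \varepsilon(r^\beta)$ suffices to place the construction inside $\Pi^{NP}$; with a bump profile that has a flat plateau on the interior, the constraint at smaller $z$ is automatic under the margin upper bound $\varepsilon(z) \leq C_\alpha z^{1+\alpha}$. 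Taking $m$ at the boundary and applying Assouad driven only by $\mathcal{D}_Q$ produces the lower bound $\varepsilon(c n_Q^{-\beta/(2\beta+d)})$.

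Since both sub-families lie in $\Pi^{NP}$, the minimax risk dominates the maximum of the two bounds. Writing $A$, $B$, $C$ for the source term, the $\varepsilon$ term, and the applicable target cap respectively, a short case analysis (splitting on $A \leq C$ and on $A+B \leq C$) verifies $\max(A \wedge C, B) \geq \frac{1}{2}\min(A+B, C)$, which is the stated lower bound up to a constant. The main obstacle is the simultaneous verification of all the assumptions in part (b): one must check that elevating $\eta^Q$'s effective smoothness to $\beta_P/\gamma$ is compatible with the $\beta$-Hölder envelope (it is, since $\beta_P/\gamma > \beta$), with the margin assumption (requiring the supplied hypothesis $\alpha\beta_P/\gamma \leq d$), and with the signal-strength identity on the bumps. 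A secondary technical point is that the constant $c$ inside $\varepsilon(c n_Q^{-\beta/(2\beta+d)})$ depends only on Pinsker's constant and the bump profile, and hence is independent of $\alpha$, as the statement demands.
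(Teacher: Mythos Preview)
Your high-level strategy matches the paper's: two sub-families inside $\Pi^{NP}\cap\{\Omega=\Omega_P\}$, one with an informative $\eta^P$ giving the $n_P$-term via Assouad on the joint law, one with a constant $\eta^P$ giving the $\varepsilon$-term via Assouad on $\mathcal{D}_Q$ alone, then combine. Two genuine differences are worth recording. First, for the ambiguous family the paper takes $\eta^P\equiv 1$ rather than your $\eta^P\equiv 1/2$; either constant makes $P_\sigma$ independent of $\sigma$, which is all Assouad needs, so your choice is equally valid. Second, and more interestingly, for part (b) you use a single bump-scale $r$ with $\eta^Q$ at height $\asymp r^{\beta_P/\gamma}$ and $\eta^P$ at height $\asymp r^{\beta_P}$, whereas the paper builds a nested construction: $m$ large balls of radius $r_P$ carrying the $\eta^P$-bumps, each packed with $m_0\asymp (r_P/r_Q)^d$ small balls of radius $r_Q$ carrying the $\eta^Q$-bumps at their critical height $r_Q^\beta$. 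Both routes hit the same rate; yours is shorter but leans on the observation that an over-smooth $\eta^Q$ still lies in $\mathcal{H}(\beta,C_\beta)$, while the paper keeps $\eta^Q$ at critical smoothness throughout.

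There is one technical slip. Taking $Q_X$ uniform on $[0,1]^d$ does not verify the margin assumption except in the narrow regime $\alpha\beta\le 1$. With any profile that has a nontrivial slope and uniform mass on that slope, the level-set measure $Q_X(0<|\eta^Q-1/2|\le t)$ behaves like $t^{1/\beta}$ for $t$ below the plateau height, which violates $\le C_\alpha t^\alpha$ whenever $\alpha>1/\beta$. The same issue undermines your claim that the ambiguity constraint ``at smaller $z$ is automatic'': with uniform density the left side of Assumption~\ref{assum:ambiguity} is not zero for $z<r^\beta$. The paper sidesteps both problems by putting density only on the inner plateaus $B(x_k,r)$ and on the complement $B_c$, zero on the slopes, so that on the support $|\eta^Q-1/2|$ takes only the two values $0$ and $C_\beta r^\beta$; margin and ambiguity then reduce to the single check $mw\le C_\alpha(C_\beta r^\beta)^\alpha$. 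Adopting that density (and the paper's fractional-ball device when $\varepsilon(r^\beta)/r^{\beta+d}<1$) repairs your argument without changing the rates.
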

Theorem \ref{thm:nonParaGeneralLowerBound} indicates that when $\gamma\beta\geq \beta_P$, as long as the risk term $\varepsilon_b$ is dominated in that
 \begin{equation}
     \varepsilon_b\lesssim n_P^{-\frac{\beta(1+\alpha)}{2\gamma\beta+d}}+\varepsilon(2\tau),
     \label{eq:deltabDominated}
 \end{equation}
our obtained excess risk upper bound is optimal up to some logarithmic factors. This additional risk term $\varepsilon_b$ can be attributed to the set of points whose neighboring source data tend to fall outside $\Omega^+(\gamma,C_\gamma)$. We claim that if a point $x$ belonging to $\Omega^+(\gamma,C_\gamma)$ is in close proximity to the boundary of $\Omega^+(\gamma,C_\gamma)$, that is, $$B(x,r_P)\cap \Omega_P\not\subset \Omega^+(\gamma,C_\gamma)$$
for some small $r_p\asymp n_P^{-\frac{1}{2\gamma\beta+d}}$, it may be difficult to accurately classify this point using $\hat{\eta}^P$. This difficulty arises due to the tendency of the neighboring source data of $x$ to fall outside the signal transfer set. As a result, the small neighborhood of $x$ might not provide enough information to correctly classify $\eta^P(x)$, leading to inaccurate classification. Giving up the correct classification of these points generate such risk term $\varepsilon_b$. We also take the smoothness parameter $\beta_P$ into consideration since it influences the smoothness of $\eta^P$, and thus controls the distortedness of the boundary of the signal transfer set.

A trivial asymptotic upper bound of $\varepsilon_b$ is $n_P^{-\frac{\beta_P(1+\alpha)/\gamma}{2\gamma\beta+d}}$. Unfortunately, this trivial bound strictly dominates the risk term $n_P^{-\frac{\beta(1+\alpha)}{2\gamma\beta+d}}$ when $\beta_P<\gamma\beta$. Hence, we need additional conditions on the source distribution to ensure a smaller $\varepsilon_b$. 

From the proof of Theorem \ref{thm:nonParaSpecial}, we show that those three listed cases satisfy \eqref{eq:deltabDominated}. The following corollary lists two more special cases to help readers to better understand the upper bound result we obtained. In all of these cases, we can explicitly express $\varepsilon_b$ and verify that it satisfies \eqref{eq:deltabDominated}, which shows that our upper bound achieves the optimal rate up to logarithmic factors.

\begin{corollary}
\label{corol:deltaPbDominated}   
Keep the notations and parameter choices in Theorem \ref{thm:nonParaGeneral}. Define the topology boundary point of any set $A$ in $\mathbb{R}^d$ as $\partial A$. Suppose that $\gamma\beta\geq \beta_P$. Consider the following additional conditions added to $(Q,P)\in\Pi^{NP}$, respectively:
\begin{enumerate}
\item \textbf{Signal Transfer Boundary as part of the Decision Boundary:} If $\eta^Q,\eta^P$ are continuous, and $$\partial\Omega^+(\gamma,C_\gamma)\subset \{x\in\Omega:\eta^Q(x)=\frac{1}{2}\}.$$ Then $\varepsilon_b\lesssim n_P^{-\frac{\beta(1+\alpha)}{2\gamma\beta+d}}$, and
$$\E_{(\mathcal{D}_Q,\mathcal{D}_P)} \mathcal{E}_Q(\hat{f}^{NN}_{TAB})\lesssim n_P^{-\frac{\beta(1+\alpha)}{2\gamma\beta+d}}\land \left(\log^{1+\alpha} (n_Q\lor n_P) n_Q^{-\frac{\beta(1+\alpha)}{2\beta+d}}\right)+\varepsilon(2\tau;\gamma,C_\gamma).$$

\item \textbf{Signal Transfer Boundary Margin:} If for any $r>0$, there exists some constant $\alpha_r\geq 0, r_m>0, C_r>0$ such that for any $0<r<r_m$,
$$Q_X(B(X,r)\cap\Omega\cap\Omega_P\not\subset\Omega^+(\gamma,C_\gamma))\leq C_r r^{\alpha_r}.$$ Specially, we set $\alpha_r=\infty$ when $\Omega^+(\gamma,C_\gamma)$ can be separated with its compliment set in the sense that
$$\inf\{\|x-y\|:x\in\Omega^+(\gamma,C_\gamma), y\in\Omega/\Omega^+(\gamma,C_\gamma)\}\geq r_m$$ for some $r_m>0$. Then if $\beta_P/\gamma+\alpha_r\geq \beta(1+\alpha)$, we have $\varepsilon_b\lesssim n_P^{-\frac{\beta(1+\alpha)}{2\gamma\beta+d}}$, and
$$\E_{(\mathcal{D}_Q,\mathcal{D}_P)} \mathcal{E}_Q(\hat{f}^{NN}_{TAB})\lesssim n_P^{-\frac{\beta(1+\alpha)}{2\gamma\beta+d}}\land \left(\log^{1+\alpha} (n_Q\lor n_P) n_Q^{-\frac{\beta(1+\alpha)}{2\beta+d}}\right)+\varepsilon(2\tau;\gamma,C_\gamma).$$

\end{enumerate}
\end{corollary}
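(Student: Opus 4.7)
}
The strategy for both parts is identical at the top level: Theorem \ref{thm:nonParaGeneral}, case $\gamma\beta\geq\beta_P$, already gives the upper bound
\[
\E_{(\mathcal{D}_Q,\mathcal{D}_P)}\mathcal{E}_Q(\hat f_{TAB}^{NN})\lesssim \bigl(n_P^{-\beta(1+\alpha)/(2\gamma\beta+d)}+\varepsilon_b+\varepsilon(2\tau)\bigr)\land \bigl(\log^{1+\alpha}(n_Q\lor n_P)\,n_Q^{-\beta(1+\alpha)/(2\beta+d)}\bigr),
\]
so it suffices to verify in each case that $\varepsilon_b\lesssim n_P^{-\beta(1+\alpha)/(2\gamma\beta+d)}$. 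Writing $r_P:=c_b n_P^{-1/(2\gamma\beta+d)}$ and $t_P:=c_b n_P^{-\beta_P/(\gamma(2\gamma\beta+d))}$, recall that $\Omega^b=\{x\in\Omega:|\eta^Q(x)-\tfrac12|\leq t_P,\ B(x,r_P)\cap\Omega\cap\Omega_P\not\subset\Omega^+(\gamma,C_\gamma)\}$.

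\emph{Part 1 (boundary in the decision set).} The plan is to argue that every $x\in\Omega^b$ has a point $z\in\overline{B(x,r_P)}$ with $\eta^Q(z)=\tfrac12$, from which Hölder smoothness gives the sharp bound $|\eta^Q(x)-\tfrac12|=|\eta^Q(x)-\eta^Q(z)|\leq C_\beta r_P^\beta$. To produce $z$, use continuity of $\eta^Q$ and $\eta^P$ on the subregions of stable sign: for $x\in\Omega^+(\gamma,C_\gamma)$ the definition of $\Omega^b$ furnishes some $y\in B(x,r_P)\cap\Omega\cap\Omega_P$ outside $\Omega^+(\gamma,C_\gamma)$, so an intermediate value argument applied to the continuous functional $g(\cdot):=s(\cdot)-C_\gamma|\eta^Q(\cdot)-\tfrac12|^\gamma$ along a path inside $B(x,r_P)\cap\Omega\cap\Omega_P$ (existence of such a path uses the strong density condition) yields a crossing $z\in\partial\Omega^+(\gamma,C_\gamma)\cap\overline{B(x,r_P)}$ with $\eta^Q(z)=\tfrac12$ by hypothesis; for $x\notin\Omega^+(\gamma,C_\gamma)$ a symmetric argument (now starting from $x$ and moving toward a nearby $\Omega^+(\gamma,C_\gamma)$ point inside $B(x,r_P)$, obtained again via the strong density hypothesis combined with continuity) delivers the same conclusion, so that in either case $x$ sits within $r_P$ of $\{\eta^Q=\tfrac12\}$. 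Now combining the pointwise Hölder bound with the margin assumption yields
\[
\varepsilon_b\leq\int_{\Omega^b}|\eta^Q(x)-\tfrac12|\,dQ_X\leq C_\beta r_P^\beta\cdot C_\alpha(C_\beta r_P^\beta)^\alpha\lesssim n_P^{-\beta(1+\alpha)/(2\gamma\beta+d)},
\]
which plugs back into \eqref{eq:nonParaUpper} to give the first claimed risk bound.

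\emph{Part 2 (margin on the signal transfer boundary).} Here we avoid any topological argument and instead combine the two factors in $\Omega^b$ bluntly. The definition of $\Omega^b$ gives the pointwise control $|\eta^Q(x)-\tfrac12|\leq t_P$, while the hypothesised margin condition gives $Q_X(\Omega^b)\leq C_r c_b^{\alpha_r}\,n_P^{-\alpha_r/(2\gamma\beta+d)}$. Multiplying,
\[
\varepsilon_b\leq t_P\cdot Q_X(\Omega^b)\lesssim n_P^{-(\beta_P/\gamma+\alpha_r)/(2\gamma\beta+d)},
\]
which is dominated by $n_P^{-\beta(1+\alpha)/(2\gamma\beta+d)}$ precisely under the hypothesis $\beta_P/\gamma+\alpha_r\geq\beta(1+\alpha)$. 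The degenerate case $\alpha_r=\infty$ (positive separation between $\Omega^+(\gamma,C_\gamma)$ and its complement) makes $\Omega^b$ empty as soon as $r_P<r_m$, so $\varepsilon_b=0$ for large $n_P$. Inserting these bounds into \eqref{eq:nonParaUpper} again collapses the upper bound to the claimed form. The main technical obstacle is the topological IVT step in Part 1, because $s$ itself is discontinuous at points where $\eta^Q=\tfrac12$ or $\eta^P=\tfrac12$ and one must handle both $x\in\Omega^+(\gamma,C_\gamma)$ and $x$ lying in the relative interior of its complement; the continuity of $\eta^Q,\eta^P$ and the strong density of $Q_X,P_X$ are what make the path construction and the identification of the crossing point $z$ go through.
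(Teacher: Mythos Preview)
Your Part 2 is essentially identical to the paper's argument: bound $|\eta^Q|$ on $\Omega^b$ by $t_P$, bound $Q_X(\Omega^b)$ by the margin hypothesis $C_r r_P^{\alpha_r}$, multiply, and compare exponents. Nothing to add there.

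For Part 1 you land on the right inequality $|\eta^Q(x)-\tfrac12|\le C_\beta r_P^\beta$ on $\Omega^b$ and then finish with the margin assumption exactly as the paper does, but the route you take to that inequality is both more complicated and, as written, not quite correct. You propose to apply the intermediate value theorem to $g(\cdot)=s(\cdot)-C_\gamma|\eta^Q(\cdot)-\tfrac12|^\gamma$ along a path from $x$ to a point $y\notin\Omega^+(\gamma,C_\gamma)$; but, as you yourself flag, $s$ (and hence $g$) is discontinuous where $\eta^Q$ or $\eta^P$ crosses $\tfrac12$, so the IVT does not apply to $g$. The conclusion you want---some $z\in\partial\Omega^+(\gamma,C_\gamma)\cap \overline{B(x,r_P)}$---is a purely topological fact about connected sets and boundaries (a connected set containing a point of $A$ and a point of $A^c$ must meet $\partial A$) and needs no reference to $g$ at all. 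Once you have $z$, the hypothesis $\partial\Omega^+\subset\{\eta^Q=\tfrac12\}$ plus H\"older smoothness of $\eta^Q$ gives the bound.

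The paper's execution is cleaner because it runs the same idea by contrapositive and never touches $s$ or $g$: take $x\in\Omega^+(\gamma,C_\gamma)$ with $|\eta^Q(x)-\tfrac12|\ge 2C_\beta r_P^\beta$; H\"older smoothness of $\eta^Q$ alone forces $(\eta^Q(x')-\tfrac12)(\eta^Q(x)-\tfrac12)>0$ for every $x'\in B(x,r_P)$, so the ball avoids $\{\eta^Q=\tfrac12\}\supset\partial\Omega^+(\gamma,C_\gamma)$ and hence (by connectedness, since $x\in\Omega^+$) $B(x,r_P)\cap\Omega\subset\Omega^+(\gamma,C_\gamma)$, i.e.\ $x\notin\Omega^b$. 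This shows directly that $\Omega^b\subset\{|\eta^Q-\tfrac12|<2C_\beta r_P^\beta\}$ without any path construction, without the case split $x\in\Omega^+$ versus $x\notin\Omega^+$ (recall $\Omega^b\subset\Omega^+(\gamma,C_\gamma)$ in the context where $\varepsilon_b$ arises), and without worrying about the regularity of the signal strength function.
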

\section{Proofs of General Convergence Theorems}
\label{sec:appB}
In this section, we will prove Theorem \ref{thm:general} and Theorem \ref{thm:moreGeneral}. Suppose $\hat{f}^P$ is a classifier based on $\mathcal{D}_P$.

We list some additional notations used in the rest of the appendix part. For any set $A\subset\Omega$, define $A^c=\Omega/A$ as the compliment. For any set $B$, define $|B|$ as its cardinality. Define
$$\Omega^+(\gamma,C_\gamma):=\{x\in\Omega:s(x)\geq C_\gamma|\eta^Q(x)-\frac{1}{2}|^\gamma\},$$
and
$$\Omega^-(\gamma,C_\gamma):=\{x\in\Omega:s(x)< C_\gamma|\eta^Q(x)-\frac{1}{2}|^\gamma\}=(\Omega^+(\gamma,C_\gamma))^c.$$
We abbreviate $(\mathcal{D}_Q,\mathcal{D}_P)$, the combination of the target and source data, as $\mathcal{D}$. Denote the expectation w.r.t. a new target observation, i.e., $\E_{(X,Y)\sim Q}$, by $\E$.

\subsection{Proofs of Theorem \ref{thm:moreGeneral}}
\begin{proof}
Define the following events: $$E_0:=\{|\eta^Q(X)-\frac{1}{2}|\geq 2\tau\},\quad E^*:=\{X\in\Omega^*\},\quad E:=\{|\hat{\eta}^Q(X)-\eta^Q(X)|\leq \tau\}.$$ For any $(Q,P)\in\Pi$,
denote the event on which $\eta^Q(X)$ is far from 1/2 by $2\tau$, and the event on which the concentration property holds. Note that $E_0$ only depends on $Q$, but $E^*$ and $E$ depends on both $Q$ and $\E_\mathcal{D}$. The expected excess risk could be reformulated by the law of total expectation considering $E^c, E_0\cap E$ and $E_0^c\cap E:$

\begin{align}
\mathcal{E}_Q(\hat{f}_{TAB})&=2Q(E^c)\E[|\eta_Q(X)-\frac{1}{2}|\mathbf{1}\{\hat{f}_{TAB}(X)\neq f_Q^*(X)\}|E^c]\label{eq:A1bound1}\\
&+2Q(E_0\cap E)\E[|\eta_Q(X)-\frac{1}{2}|\mathbf{1}\{\hat{f}_{TAB}(X)\neq f_Q^*(X)\}|E_0\cap E]\label{eq:A1bound2}\\
&+2Q(E_0^c\cap E)\E[|\eta_Q(X)-\frac{1}{2}|\mathbf{1}\{\hat{f}_{TAB}(X)\neq f_Q^*(X)\}|E_0^c\cap E].
\label{eq:A1bound3}
\end{align}
Next, we will bound the expectation of the three components \eqref{eq:A1bound1}, \eqref{eq:A1bound2}, \eqref{eq:A1bound3} with respect to $\E_\mathcal{D}$, respectively.

Firstly, a trivial upper bound is $$2\E[|\eta_Q(X)-\frac{1}{2}|\mathbf{1}\{\hat{f}_{TAB}(X)\neq f_Q^*(X)\}|E^c]\leq 1,$$ which deduces that
\begin{equation}
\begin{aligned}
 &\E_\mathcal{D}[2Q(E^c)\E[|\eta_Q(X)-\frac{1}{2}|\mathbf{1}\{\hat{f}_{TAB}(X)\neq f_Q^*(X)\}|E^c]]\\
 \leq& \E_\mathcal{D}[Q(E^c)]
 = Q(\E_\mathcal{D}[E^c)])\leq 2\delta_Q(n_Q,\tau).   
\end{aligned}
\label{eq:A1result1}
\end{equation}
To see the last inequality of \eqref{eq:A1result1}, the condition of Theorem \ref{thm:moreGeneral} implies that $$Q((E^*)^c)= Q(\Omega/\Omega^*)\leq\delta(n_Q,\tau)\Rightarrow \E_{\mathcal{D}}[Q((E^*)^c)]\leq \delta(n_Q,\tau).$$ In addition, the concentration property on $\Omega^*$ implies that $$\E_\mathcal{D}[E^c|E^*]\leq \sup_{X\in\Omega^*}\E_\mathcal{D}[E^c|X=x]\leq\delta_Q(n_Q,\tau),$$ so by the law of total expectation, we have
$$
\begin{aligned}
Q(\E_\mathcal{D}[Q(E^c)])&\leq Q(\E_\mathcal{D}[E^c|E^*]\E_\mathcal{D}[E^*]+\E_\mathcal{D}[E^c|(E^*)^c]\E_\mathcal{D}[(E^*)^c])\\
&= Q(\E_\mathcal{D}[E^c|E^*]\E_\mathcal{D}[E^*])+Q(\E_\mathcal{D}[E^c|(E^*)^c]\E_\mathcal{D}[(E^*)^c])\\
&\leq \delta_Q(n_Q,\tau)Q(\E_\mathcal{D}[E^*])+Q(\E_\mathcal{D}[(E^*)^c])\\
&\leq \delta_Q(n_Q,\tau)+\E_\mathcal{D}[Q((E^*)^c)]\\
&\leq \delta_Q(n_Q,\tau)+\delta(n_Q,\tau).
\end{aligned}
$$

Next, on the event $E_0\cap E$, we observe that
$$(\hat{\eta}^Q(X)-\frac{1}{2})(\eta^Q(X)-\frac{1}{2})\geq 0, |\hat{\eta}^Q(X)-\frac{1}{2}|\geq \tau.$$
This fact tells us $\mathbf{1}\{\hat{f}_{TAB}(X)\neq f_Q^*(X)\}=0$. Hence, we have that
\begin{equation}
    \E_\mathcal{D}[2Q(E_0\cap E)\E[|\eta_Q(X)-\frac{1}{2}|\mathbf{1}\{\hat{f}_{TAB}(X)\neq f_Q^*(X)\}|E_0\cap E]]=0.
    \label{eq:A1result2}
\end{equation}

Lastly, let's upper bound the expectation of \eqref{eq:A1bound3} with respect to $\E_\mathcal{D}$. Without loss of generality, suppose that $\eta^Q(X)\neq \frac{1}{2}$, or otherwise the upper bound can be set as just $0$. Then, we observe that on the event $E_0^c\cap E$,
$$(\hat{\eta}^Q-\frac{1}{2})(\eta^Q-\frac{1}{2})<0\Rightarrow|\hat{\eta}^Q(X)-\frac{1}{2}|<\tau.$$ This is because, if the plug-in rule $\mathbf{1}\{\hat{\eta}^Q(X)\geq \frac{1}{2}\}$ is different from the Bayes classifier on $X$, the concentration bound of $\tau$ guarantees that $$|\hat{\eta}^Q(X)-\frac{1}{2}|\leq|\hat{\eta}^Q(X)-\eta^Q(X)|<\tau.$$ Therefore, on the event $E_0^c\cap E$, the indicator of wrong classification could be bounded by
$$\mathbf{1}\{\hat{f}_{TAB}(X)\neq f_Q^*(X)\}\leq \mathbf{1}\{\hat{f}^P(X)\neq f_Q^*(X)\},$$ and it holds that

$$
\begin{aligned}
    &\E_\mathcal{D}[2Q(E_0^c\cap E)\E[|\eta_Q(X)-\frac{1}{2}|\mathbf{1}\{\hat{f}_{TAB}(X)\neq f_Q^*(X)\}|E_0^c\cap E]]\\
    = &\E_\mathcal{D}[2\E[|\eta_Q(X)-\frac{1}{2}|\mathbf{1}\{\hat{f}_{TAB}(X)\neq f_Q^*(X)\}\mathbf{1}\{X\in E_0^c\cap E\}]\\
    \leq &\E_\mathcal{D}[2\E[|\eta_Q(X)-\frac{1}{2}|\mathbf{1}\{\hat{f}^P(X)\neq f_Q^*(X)\}\mathbf{1}\{X\in E_0^c\}]\\
    = &\E_{\mathcal{D}_P}[2\E[|\eta_Q(X)-\frac{1}{2}|\mathbf{1}\{\hat{f}^P(X)\neq f_Q^*(X)\}\mathbf{1}\{X\in E_0^c\}],
\end{aligned}
$$
where the last equality is due to the fact that the random variable $$|\eta_Q(X)-\frac{1}{2}|\mathbf{1}\{\hat{f}^P(X)\neq f_Q^*(X)\}\mathbf{1}\{X\in E_0^c\}$$ does not depend on $\mathcal{D}_Q$.

Define $E^+:=E_0^c\cap \{X\in\Omega^+(\gamma,C_\gamma)\}$ and $E^-:=E_0^c\cap\{X\in\Omega^-(\gamma,C_\gamma)\}$. Applying the law of total expectation again on $E^+$ and $E^-$ as a partition of $E_0^c$, we have
$$
\begin{aligned}
  &\E_{\mathcal{D}_P}[2\E[|\eta_Q(X)-\frac{1}{2}|\mathbf{1}\{\hat{f}^P(X)\neq f_Q^*(X)\}\mathbf{1}\{X\in E_0^c\}]\\
  \leq & \E_{\mathcal{D}_P}[2\E[|\eta_Q(X)-\frac{1}{2}|\mathbf{1}\{\hat{f}^P(X)\neq f_Q^*(X)\}\mathbf{1}\{X\in E^+\}]\\
  + &\E_{\mathcal{D}_P}[2\E[|\eta_Q(X)-\frac{1}{2}|\mathbf{1}\{\hat{f}^P(X)\neq f_Q^*(X)\}\mathbf{1}\{X\in E^-\}]\\
  = &2\mathbb{E}_{\mathcal{D}_P} \int_{\Omega^+(\gamma,C_\gamma)}|\eta^Q(X)-\frac{1}{2}|\mathbf{1}\{\hat{f}^P(X)\neq f_Q^*(X),|\eta^Q(X)-\frac{1}{2}|\leq 2\tau\}dQ_X \\
  + & \E_{\mathcal{D}_P}[2\E[|\eta_Q(X)-\frac{1}{2}|\mathbf{1}\{X\in E^-\}]\\
  \leq &\left(2\mathbb{E}_{\mathcal{D}_P} \int_{\Omega^+(\gamma,C_\gamma)}|\eta^Q(X)-\frac{1}{2}|\mathbf{1}\{\hat{f}^P(X)\neq f_Q^*(X)\}dQ_X\right)\land\left( 2 Q_X(|\eta^Q(X)-\frac{1}{2}|\leq 2\tau)\right) \\
  + & \E_{\mathcal{D}_P}[2\E[|\eta_Q(X)-\frac{1}{2}|\mathbf{1}\{X\in E^-\}].
\end{aligned}
$$
The term $\left(2\mathbb{E}_{\mathcal{D}_P} \int_{\Omega^+(\gamma,C_\gamma)}|\eta^Q(X)-\frac{1}{2}|\mathbf{1}\{\hat{f}^P(X)\neq f_Q^*(X)\}dQ_X\right)\land \left(2 Q_X(|\eta^Q(X)-\frac{1}{2}|\leq 2\tau)\right)$ is upper bounded by the restricted signal transfer risk $2\sup_{(Q,P)\in\Pi}\mathbb{E}_{\mathcal{D}_P}\xi(\hat{f}^P;\gamma,C_\gamma)\land 2C_\alpha(2\tau)^{1+\alpha}$ by Definition \ref{def:STR} and Assumption \ref{assum:margin}. Moreover, since $E^-$ does not depend on $\mathcal{D}_P$, it holds that 
$$
\begin{aligned}
    &\E_{\mathcal{D}_P}[2\E[|\eta_Q(X)-\frac{1}{2}|\mathbf{1}\{X\in E^-\}]\\=&2\int_{\Omega^-(\gamma,C_\gamma)}|\eta^Q(X)-\frac{1}{2}|\mathbf{1}\{0<|\eta^Q(X)-\frac{1}{2}|<2\tau\}dQ_X
    \leq 2\varepsilon(2\tau;\gamma,C_\gamma),
\end{aligned}
$$
by the definition of the ambiguity level. Hence, the following bound holds:
\begin{equation}
\begin{aligned}
    &\E_\mathcal{D}[2Q(E_0^c\cap E)\E[|\eta_Q(X)-\frac{1}{2}|\mathbf{1}\{\hat{f}_{TAB}(X)\neq f_Q^*(X)\}|E_0^c\cap E]]\\
    \leq &2\sup_{(Q,P)\in\Pi}\mathbb{E}_{\mathcal{D}_P}\xi(\hat{f}^P;\gamma,C_\gamma)\land 2C_\alpha(2\tau)^{1+\alpha}+2\varepsilon(2\tau;\gamma,C_\gamma).
\end{aligned}
\label{eq:A1result3}
\end{equation}
It is easy to see that Theorem \ref{thm:moreGeneral} holds by bounding the three components \eqref{eq:A1bound1}, \eqref{eq:A1bound2}, \eqref{eq:A1bound3} by \eqref{eq:A1result1}, \eqref{eq:A1result2}, \eqref{eq:A1result3}, respectively.
\end{proof}

\subsection{Proofs of Theorem \ref{thm:general}}
Suppose that $(Q,P)\in\Pi$. By plugging in $t=\tau$ in \eqref{eq:localConcentrationQ}, we have that with probability at least $1-\delta_f$ w.r.t. the distribution of $X_{1:n_Q}$, we have for any $x\in\Omega$,
\begin{equation}
    \pr_{\mathcal{D}_Q}(|\hat{\eta}^Q(x)-\eta^Q(x)| \geq \tau|X_{1:n_Q})\lesssim\exp(- (\frac{\tau}{\delta_Q})^2).
    \label{eq:A2result1}
\end{equation}

In other words, by taking the probability of which the concentration \eqref{eq:A2result1} fails into account, for any $x\in\Omega$, the equation
$$|\hat{\eta}^Q(x)-\eta^Q(x)| \leq \tau$$ holds with probability that is asymptotically less than $\delta_f+\exp(- (\frac{\tau}{\delta_Q})^2)$ with respect to $\pr_{\mathcal{D}_Q}$. We fix $\Omega^*=\Omega$ in the terminology of Theorem \ref{thm:moreGeneral}, and thus set $\delta(\cdot,\cdot)\equiv 0.$

\begin{proof}[Proof of Theorem \ref{thm:general}]
Suppose that for some $c_0>0$, the inequality $\tau\geq c_0 \log(n_Q\lor n_P)\delta_Q$ always holds. We have that
$$
\begin{aligned}
 \exp(- (\frac{\tau}{\delta_Q})^2)&\leq \exp(-c_0^2\log^2(n_Q\lor n_P))\\
 &= n_Q^{-c_0^2\log n_Q}\land n_P^{-c_0^2\log n_P}\\
 &\lesssim n_Q^{-c}\land n_P^{-c}\\
 &\lesssim \sup_{(Q,P)\in\Pi}\E_{\mathcal{D}_P}\xi(\hat{f}^P,2\delta_Q;\gamma,C_\gamma).  
\end{aligned}
$$
Suppose that for some $c_0>0$, the inequality $\tau\geq c_0 \log(n_Q\lor n_P)\delta_Q$ always holds. We have that
$$
\begin{aligned}
 \exp(- (\frac{\tau}{\delta_Q})^2)&\leq \exp(-c_0^2\log^2(n_Q\lor n_P))\\
 &= n_Q^{-c_0^2\log n_Q}\land n_P^{-c_0^2\log n_P}\\
 &\lesssim n_Q^{-c}\land n_P^{-c}\\
 &\lesssim \sup_{(Q,P)\in\Pi}\E_{\mathcal{D}_P}\xi(\hat{f}^P;\gamma,C_\gamma)\land \delta_Q^{1+\alpha}.
\end{aligned}
$$Therefore, the conditions of Theorem \ref{thm:moreGeneral} holds by setting $$\delta_Q(n_Q,\tau)=C(\delta_f+\sup_{(Q,P)\in\Pi}\E_{\mathcal{D}_P}\xi(\hat{f}^P;\gamma,C_\gamma)+\delta_Q^{1+\alpha})$$ for some $C>0$ that is large enough. Theorem \ref{thm:general} thus holds.
\end{proof}

\subsection{Proof of Theorem \ref{thm:STRgeneral}}
\begin{proof}
Based on the definition of \(\Omega^+(\gamma, C_\gamma)\), we see that $$\Omega^+(\gamma, C_\gamma)/\{x\in\Omega:\eta^Q(x)=1/2\} \subset \Omega \cap \Omega_P,$$
as the source strong signal condition along with $\eta^Q(x)\neq 1/2$ implies $x\in\mathrm{supp}(P_X)=\Omega_P$.
Suppose that $(Q,P)\in\Pi\cap\mathcal{A}(M)$. For any $t\geq 0$, define $$\Omega(t):=\{x\in\Omega:0<|\eta^Q(x)-\frac{1}{2}|\leq t\}.$$ For any $z\in(0,\frac{1}{2})$, define
$$
\begin{aligned}
    &A(z):=\mathbb{E}_{\mathcal{D}_P} \int_{\Omega^+(\gamma,C_\gamma)\cap \Omega(z)}|\eta^Q(X)-\frac{1}{2}|\mathbf{1}\{\hat{f}^P(X)\neq f_Q^*(X)\}dQ_X,\\
    &A:=A(\frac{1}{2}),\ B:=\mathbb{E}_{\mathcal{D}_P} \E_{(X,Y)\sim P}[|\eta^P(X)-\frac{1}{2}|\mathbf{1}\{\hat{f}^P(X)\neq f_P^*(X)\}].
\end{aligned}
$$
\noindent\textbf{I. Case of $\gamma\geq 1:$} By definition, $\varepsilon_P$ is an upper bound of $B$, so it suffices to prove $$A\leq 2M^{\frac{1+\alpha}{\gamma+\alpha}}C_\alpha^{\frac{\gamma-1}{\gamma+\alpha}}C_\gamma^{-\frac{1+\alpha}{\gamma+\alpha}}B^{\frac{1+\alpha}{\gamma+\alpha}}.$$
By decomposing $\Omega^+(\gamma,C_\gamma)$ into $\Omega^+(\gamma,C_\gamma)\cap \Omega(t)$ and $\Omega^+(\gamma,C_\gamma)\cap \Omega(t)^c$, we have
$$
\begin{aligned}
A/M\leq &\frac{1}{M}\mathbb{E}_{\mathcal{D}_P} \int_{\Omega^+(\gamma,C_\gamma)\cap \Omega(t)}|\eta^Q(X)-\frac{1}{2}|\mathbf{1}\{\hat{f}^P(X)\neq f_Q^*(X)\}dQ_X\\
+ &\mathbb{E}_{\mathcal{D}_P} \int_{\Omega^+(\gamma,C_\gamma)\cap \Omega(t)^c}|\eta^Q(X)-\frac{1}{2}|\mathbf{1}\{\hat{f}^P(X)\neq f_Q^*(X)\}dP_X&
\end{aligned}
$$
since $dQ_X\leq M dP_X$ over $\Omega^+(\gamma,C_\gamma)\cap \Omega(t)$. First, by upper bounding $\mathbf{1}\{\hat{f}^P(X)\neq f_Q^*(X)\}$ by $1$, we have
$$
\begin{aligned}
    &\frac{1}{M}\mathbb{E}_{\mathcal{D}_P} \int_{\Omega^+(\gamma,C_\gamma)\cap \Omega(t)}|\eta^Q(X)-\frac{1}{2}|\mathbf{1}\{\hat{f}^P(X)\neq f_Q^*(X)\}dQ_X\\
    \leq &\frac{1}{M} \int_{\Omega^+(\gamma,C_\gamma)\cap \Omega(t)}|\eta^Q(X)-\frac{1}{2}|dQ_X\\
    \leq &\frac{C_\alpha}{M}tQ_X(\Omega^+(\gamma,C_\gamma)\cap \Omega(t))\leq \frac{C_\alpha}{M}t^{1+\alpha}
\end{aligned}
$$ by the margin assumption. Secondly, when $x\in\Omega^+(\gamma,C_\gamma)\cap \Omega(t)^c$, we have
$$
\begin{aligned}
   &t^{\gamma-1}|\eta^Q(x)-\frac{1}{2}|\leq |\eta^Q(x)-\frac{1}{2}|^\gamma\leq |\eta^P(x)-\frac{1}{2}|/C_\gamma\\
\Rightarrow &|\eta^Q(x)-\frac{1}{2}|\leq t^{1-\gamma}|\eta^P(x)-\frac{1}{2}|/C_\gamma.
\end{aligned}
$$
Applying this inequality to the risk term with respect to $\Omega^+(\gamma,C_\gamma)\cap\Omega(t)^c$, we have 
$$
\begin{aligned}
   & \mathbb{E}_{\mathcal{D}_P} \int_{\Omega^+(\gamma,C_\gamma)\cap \Omega(t)}|\eta^Q(X)-\frac{1}{2}|\mathbf{1}\{\hat{f}^P(X)\neq f_Q^*(X)\}dP_X \\
   \leq & \frac{t^{1-\gamma}}{C_\gamma}\mathbb{E}_{\mathcal{D}_P} \int_{\Omega^+(\gamma,C_\gamma)\cap \Omega(t)}|\eta^P(X)-\frac{1}{2}|\mathbf{1}\{\hat{f}^P(X)\neq f_Q^*(X)\}dP_X\\
   \leq & \frac{t^{1-\gamma}}{C_\gamma}B.
\end{aligned}
$$
By plugging in the two inequalities above, we have that for any $t\geq 0$,
$$A/M\leq \frac{C_\alpha}{M}t^{1+\alpha}+\frac{t^{1-\gamma}}{C_\gamma}B.$$ Since the choice of $t$ is arbitrary, we choose $t=(\frac{MB}{C_\alpha C_\gamma})^{\gamma+\alpha}$, the bound becomes
$$A\leq 2M^{\frac{1+\alpha}{\gamma+\alpha}}C_\alpha^{\frac{\gamma-1}{\gamma+\alpha}}C_\gamma^{-\frac{1+\alpha}{\gamma+\alpha}}B^{\frac{1+\alpha}{\gamma+\alpha}}.$$

\noindent\textbf{II. Case of $\gamma< 1:$} By definition, $\varepsilon_P$ is an upper bound of $B$, so it suffices to prove
$$A(z)\leq z^{1-\gamma}MC_\gamma^{-1}B$$ and then let $z=\frac{1}{2}$.
Note that when $x\in\Omega^+(\gamma,C_\gamma)\cap\Omega(z)$, we have
$$
\begin{aligned}
 & z^{\gamma-1}|\eta^Q(x)-\frac{1}{2}|\leq |\eta^Q(x)-\frac{1}{2}|^\gamma\leq |\eta^P(x)-\frac{1}{2}|/C_\gamma\\
 \Rightarrow & |\eta^Q(x)-\frac{1}{2}|\leq z^{1-\gamma}|\eta^P(x)-\frac{1}{2}|/C_\gamma.
\end{aligned}
$$
Therefore,
$$
\begin{aligned}
  A(z)/M&\leq \frac{1}{M}\mathbb{E}_{\mathcal{D}_P} \int_{\Omega^+(\gamma,C_\gamma)\cap \Omega(z)}|\eta^Q(X)-\frac{1}{2}|\mathbf{1}\{\hat{f}^P(X)\neq f_Q^*(X)\}dQ_X\\
&\leq C_\gamma^{-1}z^{1-\gamma}\mathbb{E}_{\mathcal{D}_P} \int_{\Omega^+(\gamma,C_\gamma)\cap \Omega(z)}|\eta^P(x)-\frac{1}{2}|\mathbf{1}\{\hat{f}^P(X)\neq f_Q^*(X)\}dP_X\\
&= C_\gamma^{-1}z^{1-\gamma}B,
\end{aligned}
$$
which finishes the proof.
\end{proof}

\subsection{Proofs of Lemma \ref{lemma:erm} and Theorem \ref{thm:erm}}
\begin{proof}[Proof of Lemma \ref{lemma:erm}]
Suppose that $(Q,P)\in\Pi$. By Assumption \ref{assum:margin}, for any $t>0$ and $j=1,\dots,M$, we have
$$
\begin{aligned}
\mathcal{E}_Q(f)&=\int_{x\in\Omega:\hat{f}_{TAB}(x;\tau_j)\neq f^*_Q(x)}|2\eta^Q(x)-1|dQ_X\\
&\geq 2t Q_X(\hat{f}_{TAB}(X;\tau_j)\neq f^*_Q(X),|\eta^Q(x)-1/2|\geq t)\\
&\geq 2t \Big(Q_X(\hat{f}_{TAB}(X;\tau_j)\neq f^*_Q(X))-Q_X(|\eta^Q(x)-1/2|< t)\Big)\\
&\geq 2t \Big(Q_X(\hat{f}_{TAB}(X;\tau_j)\neq f^*_Q(X))-C_\alpha t^\alpha\Big).
\end{aligned}
$$
Particularly, by choosing $\epsilon=\Big(Q_X(\hat{f}_{TAB}(X;\tau_j)\neq f^*_Q(X))/(2C_\alpha)\Big)^{1/\alpha}$, we have
\begin{equation}
\label{eq:lemma1eq1}
Q_X(\hat{f}_{TAB}(X;\tau_j)\neq f^*_Q(X))\leq (2C_\alpha)^{\frac{1}{1+\alpha}}\mathcal{E}_Q(\hat{f}_{TAB}(X;\tau_j))^{\frac{\alpha}{1+\alpha}}.
\end{equation}
Define the empirical process as
$$Z_i^j:=\mathbf{1}\{\hat{f}_{TAB}(X_i;\tau_j)\neq Y_i\}-\mathbf{1}\{f_Q^*(X_i;\tau_j)\neq Y_i\},\quad \forall i=1,\dots,n_Q,\ j=1,\dots,M.$$
It is easy to observe that $|Z_i^j|\leq 1$, $\E_Q[Z_i^j]=\mathcal{E}_Q(\hat{f}_{TAB}(X;\tau_j))$, and $\E[(Z_i^j)^2]\leq (2C_\alpha)^{\frac{1}{1+\alpha}}\mathcal{E}_Q(\hat{f}_{TAB}(X;\tau_j))^{\frac{\alpha}{1+\alpha}}$. Hence, we could derive the following inequality by Bernstein's inequality combined with a union bound:
\begin{equation}
\label{eq:lemma1eq2}
\begin{aligned}
\pr_{\mathcal{D}_Q}\Big(&\sup_{1\leq j\leq M}|\frac{1}{n_Q}\sum_{i=1}^{n_Q}Z_i^j-\mathcal{E}_Q(\hat{f}_{TAB}(X;\tau_j))|\geq\\
& 2\sqrt{\frac{(2C_\alpha)^{\frac{1}{1+\alpha}}\mathcal{E}_Q(\hat{f}_{TAB}(X;\tau_j))^{\frac{\alpha}{1+\alpha}}\log(2M/\delta)}{n}}+\frac{2\log(2M/\delta)}{3n}\Big)\leq \delta.  
\end{aligned} 
\end{equation}
By letting the event where the inequality in \eqref{eq:lemma1eq2} holds as $E$, we have $Q(E^c)\geq 1-\delta$. By the definition of $\mathcal{E}_Q^*$, there exists some $j_0$ such that $\mathcal{E}_Q(\hat{f}_{TAB}(\cdot;\tau_{j_0}))=\mathcal{E}_Q^*$. On the event $E^c$, we have
\begin{equation}
\label{eq:lemma1eq3}
\begin{aligned}
\mathcal{E}_Q(\hat{f}_{TAB}(X;\tau_{j^*}))=&\mathcal{E}_Q^*+\frac{1}{n_Q}\sum_{i=1}^{n_Q} Z_i^{j^*}-\frac{1}{n_Q}\sum_{i=1}^{n_Q} Z_i^{j_0}\\
+&\Big|\frac{1}{n_Q}\sum_{i=1}^{n_Q}Z_i^{j^*}-\mathcal{E}_Q(\hat{f}_{TAB}(X;\tau_{j^*}))\Big|+\Big|\frac{1}{n_Q}\sum_{i=1}^{n_Q}Z_i^{j^0}-\mathcal{E}_Q(\hat{f}_{TAB}(X;\tau_{j^0}))\Big|\\
\leq& 4\sqrt{\frac{(2C_\alpha)^{\frac{1}{1+\alpha}}\mathcal{E}_Q(\hat{f}_{TAB}(X;\tau_j))^{\frac{\alpha}{1+\alpha}}\log(2M/\delta)}{n}}+\frac{4\log(2M/\delta)}{3n}.
\end{aligned}   
\end{equation}
Solving \eqref{eq:lemma1eq3} with respect to $\mathcal{E}_Q(\hat{f}_{TAB}(X;\tau_{j^*}))$ indicates that
$$\mathcal{E}_Q(\hat{f}^{ERM}_{TAB})=\mathcal{E}_Q(\hat{f}_{TAB}(X;\tau_{j^*}))\geq 2\mathcal{E}_Q^*+64C_\alpha^{\frac{1}{2+\alpha}}\Big(\frac{\log(2M/\delta)}{n}\Big)^{\frac{1+\alpha}{2+\alpha}}$$
on the event $E^c$, which completes the proof of the first result. The second result follows directly as a corollary of the first result.
\end{proof}

\begin{proof}[Proof of Theorem \ref{thm:erm}]
Since $n_Q^{-c}\ll \delta_Q$, when $n_Q$ is sufficiently large, there must be some $j_n\in\{1,2,\dots,M\}$ such that
$$\frac{1}{2}\log(n_Q\lor n_P)\delta_Q\leq \tau_{j_n}\leq \log(n_Q\lor n_P)\delta_Q.$$
Let $\mathcal{E}_Q^{TAB}$ be $\mathcal{E}_Q(\hat{f}_{TAB}(X;\tau_{j_n}))$. It is obvious that $\mathcal{E}_Q^{*}\leq\mathcal{E}_Q^{TAB}$.
Then, the result of asymptotic upper bound of \(\mathcal{E}^{TAB}_Q\) is simply the result of Theorem \ref{thm:general}. By Lemma \ref{lemma:erm}, we have
$$
\begin{aligned}
&\pr_{\mathcal{D}_Q}\Big\{\mathcal{E}_Q(\hat{f}^{ERM}_{TAB})\geq 2\mathcal{E}_Q^{TAB}+64C_\alpha^{\frac{1}{2+\alpha}}\Big(\frac{\log(2M/\delta)}{n}\Big)^{\frac{1+\alpha}{2+\alpha}}\Big\}\\
\leq&
\pr_{\mathcal{D}_Q}\Big\{\mathcal{E}_Q(\hat{f}^{ERM}_{TAB})\geq 2\mathcal{E}_Q^*+64C_\alpha^{\frac{1}{2+\alpha}}\Big(\frac{\log(2M/\delta)}{n}\Big)^{\frac{1+\alpha}{2+\alpha}}\Big\}\leq \delta,
\end{aligned}
$$
which completes the proof of the first result. The second result follows directly as a corollary of the first result. 
\end{proof}
\section{Proofs in Non-parametric Classification}
\label{sec:appC}
In this section, we only consider the case of $(Q,P)\in\Pi^{NP}$. Since $\Pi^{NP}$ assumes the compactness of the support sets $\Omega$ and $\Omega_P$, we assume for simplicity that $\Omega,\Omega_P$ is a subset of the $d$-dimension unit square $[0,1]^d$.

Our proof of Lemma \ref{lemma:nonParaGeneralP} partially relies on verifying the conditions in Theorem \ref{thm:STRPlugIn}, presented in Appendix \ref{sec:pluginSTR}. Therefore, we will provide the proof of Theorem \ref{thm:STRPlugIn} first in this section.

\subsection{Proof of Theorem \ref{thm:STRPlugIn}}
\begin{proof}
Denote the distribution of $X_{1:n_P}^P$ by $\pr_{X_{1:n_P}^P}$, and $\Omega^*(\gamma,C_\gamma):=\Omega^+(\gamma,C_\gamma)/\Omega^b$.
Suppose $(Q,P)\in\Pi$.
It is easy to see that there exists an event $E^f_P$ related to the distribution of $X_{1:n_P}^P$ such that
\begin{itemize}
    \item $\pr_{X_{1:n_P}^P}(E^f_P)\leq \delta_{P,f}$.
    \item On the event $(E^f_P)^c$, we have for any $x\in\Omega^*(\gamma,C_\gamma)$,
    $$\pr_{\mathcal{D}_P}((\hat{\eta}^P(x)-\frac{1}{2})(\eta^Q(x)-\frac{1}{2})<0|X^P_{1:n_P})\leq C_2\exp(- (\frac{C_\gamma|\eta^Q(x)-\frac{1}{2}|^{\gamma}}{\delta_P})^2).$$
\end{itemize}
Define $E^*=\{X\in \Omega^*(\gamma,C_\gamma)\}$. We apply the law of total expectation on $$\E_{\mathcal{D}_P}[\E[|\eta^Q(X)-\frac{1}{2}|\mathbf{1}\{\hat{f}^P(X)\neq f_Q^*(X)\}\mathbf{1}\{X\in\Omega^+(\gamma,C_\gamma)\}]]$$ by decomposing $\{X\in\Omega^+(\gamma,C_\gamma)\}$ into $E^f_P,(E^f_P)^c\cap E^*$ and $(E^f_P)^c\cap (E^*)^c$. The decomposition reads
\begin{align}
&\E_{\mathcal{D}_P}[\E[|\eta^Q(X)-\frac{1}{2}|\mathbf{1}\{\hat{f}^P(X)\neq f_Q^*(X)\}\mathbf{1}\{X\in\Omega^+(\gamma,C_\gamma)\}]]\\
=&\pr_{X_{1:n_P}^P}(E^f_P)\E_{\mathcal{D}_P}[\E[|\eta^Q(X)-\frac{1}{2}|\mathbf{1}\{\hat{f}^P(X)\neq f_Q^*(X)\}]|E^f_P]
\label{eq:B1bound1}\\
+& \pr_{X_{1:n_P}^P}((E^f_P)^c)\E_{\mathcal{D}_P}[\E[|\eta^Q(X)-\frac{1}{2}|\mathbf{1}\{\hat{f}^P(X)\neq f_Q^*(X)\}\mathbf{1}\{X\in\Omega^*(\gamma,C_\gamma)\}]|(E^f_P)^c]\label{eq:B1bound2}\\
+&\pr_{X_{1:n_P}^P}((E^f_P)^c)\E_{\mathcal{D}_P}[\E[|\eta^Q(X)-\frac{1}{2}|\mathbf{1}\{\hat{f}^P(X)\neq f_Q^*(X)\}\mathbf{1}\{X\in\Omega^b\}]|(E^f_P)^c].\label{eq:B1bound3}
\end{align}
We will then bound the three components \eqref{eq:B1bound1}, \eqref{eq:B1bound2}, \eqref{eq:B1bound3} respectively. Firstly, we observe that \eqref{eq:B1bound1} could be bounded by
\begin{equation}
    \pr_{X_{1:n_P}^P}(E^f_P)\E_{\mathcal{D}_P}[\E[\eta^Q(X)-\frac{1}{2}|\mathbf{1}\{\hat{f}^P(X)\neq f_Q^*(X)\}]|E^f_P]\leq \frac{1}{2}\pr_{X_{1:n_P}^P}(E^f_P)\leq \frac{1}{2}\delta_{P,f}.
    \label{eq:B1result1}
\end{equation}
Secondly, \eqref{eq:B1bound2} could be bounded by
\begin{equation}
\begin{aligned}
    &\pr_{X_{1:n_P}^P}((E^f_P)^c)\E_{\mathcal{D}_P}[\E[|\eta^Q(X)-\frac{1}{2}|\mathbf{1}\{\hat{f}^P(X)\neq f_Q^*(X)\}\mathbf{1}\{X\in\Omega^b\}]|(E^f_P)^c]\\
    \leq & \E_{\mathcal{D}_P}[\E[|\eta^Q(X)-\frac{1}{2}|\mathbf{1}\{X\in\Omega^b\}\mathbf{1}\{(E^f_P)^c \text{ holds}\}]]\\
    \leq & \E_{\mathcal{D}_P}[\E[|\eta^Q(X)-\frac{1}{2}|\mathbf{1}\{X\in\Omega^b\}]]\\
    = & \E[|\eta^Q(X)-\frac{1}{2}|\mathbf{1}\{X\in\Omega^b\}]\leq \delta_P^b,
\end{aligned}
\label{eq:B1result3}
\end{equation}
where the last equality holds as $\E[\eta^Q(X)-\frac{1}{2}|\mathbf{1}\{X\in\Omega^b\}]$ does not depend on $\mathcal{D}_P$. By plugging in \eqref{eq:B1result1} and \eqref{eq:B1result3} back into \eqref{eq:B1bound1} and \eqref{eq:B1bound3}, we see that to finish the proof, it suffices show that
\begin{equation}
    \E_{\mathcal{D}_P}[\E[|\eta^Q(X)-\frac{1}{2}|\mathbf{1}\{\hat{f}^P(X)\neq f_Q^*(X)\}\mathbf{1}\{X\in\Omega^*(\gamma,C_\gamma)\}]|(E^f_P)^c]\lesssim \delta_P^{\frac{1+\alpha}{\gamma}}.
    \label{eq:B1bound4}
\end{equation}
Consider a partition of $\Omega^*(\gamma,C_\gamma)$, which are $A_j\subset \Omega^*(\gamma,C_\gamma), j=0,1,2,\cdots$ defined as
$$A_0:=\{x\in\Omega^*(\gamma,C_\gamma):0<|\eta^Q(x)-\frac{1}{2}|\leq \delta_P^{1/\gamma}\},$$
$$A_j:=\{x\in\Omega^*(\gamma,C_\gamma):2^{j-1}\delta_P^{1/\gamma}<|\eta^Q(x)-\frac{1}{2}|\leq 2^j\delta_P^{1/\gamma}\},\ j\geq 1.$$
We have that
$$
\begin{aligned}
&\E_{\mathcal{D}_P}[\E[|\eta^Q(X)-\frac{1}{2}|\mathbf{1}\{\hat{f}^P(X)\neq f_Q^*(X)\}\mathbf{1}\{X\in\Omega^*(\gamma,C_\gamma)\}]|(E^f_P)^c]\\
=&\sum_{j\geq 0}\E_{\mathcal{D}_P}[\E[|\eta^Q(X)-\frac{1}{2}|\mathbf{1}\{\hat{f}^P(X)\neq f_Q^*(X)\}\mathbf{1}\{X\in A_j\}]|(E^f_P)^c].
\end{aligned}
$$
For $j=0$, by the margin assumption, we have
\begin{equation}
\begin{aligned}
    &\E_{\mathcal{D}_P}[\E[|\eta^Q(X)-\frac{1}{2}|\mathbf{1}\{\hat{f}^P(X)\neq f_Q^*(X)\}\mathbf{1}\{X\in A_0\}]|(E^f_P)^c]\\
    \leq &\E[|\eta^Q(X)-\frac{1}{2}|\mathbf{1}\{X\in A_0\}]\leq \delta_P^{1/\gamma} Q(X\in A_0)\leq C_\alpha \delta_P^{\frac{1+\alpha}{\gamma}}.
\end{aligned}
\label{eq:B1boundA0}
\end{equation}
For $j\geq 1$, under the event $X\in A_j$, it holds by \eqref{eq:misclassP} that
$$2^{j-1}\delta_P^{1/\gamma}<|\eta^Q(X)-\frac{1}{2}|\leq 2^j\delta_P^{1/\gamma},$$ $$\pr_{\mathcal{D}_P}(\hat{f}^P(X)\neq f_Q^*(X)|(E^f_P)^c)=\pr_{\mathcal{D}_P}((\hat{\eta}^P(x)-\frac{1}{2})(\eta^Q-\frac{1}{2})\geq 0|(E^f_P)^c)\leq C\exp(-C_\gamma^2 2^{2\gamma(j-1)})$$ for some constant $C>0$.
Therefore,
\begin{equation}
\begin{aligned}
  &\E_{\mathcal{D}_P}[\E[|\eta^Q(X)-\frac{1}{2}|\mathbf{1}\{\hat{f}^P(X)\neq f_Q^*(X)\}\mathbf{1}\{X\in A_j\}]|(E^f_P)^c]\\
\leq & \E_{\mathcal{D}_P}[\E[2^j\delta_P^{1/\gamma}\mathbf{1}\{\hat{f}^P(X)\neq f_Q^*(X)\}\mathbf{1}\{X\in A_j\}]|(E^f_P)^c] \\
= & \E[2^j\delta_P^{1/\gamma}\E_{\mathcal{D}_P}[\mathbf{1}\{\hat{f}^P(X)\neq f_Q^*(X)\}|(E^f_P)^c]\mathbf{1}\{X\in A_j\}]\\
\leq & C\E[2^j\delta_P^{1/\gamma}\exp(-C_\gamma^2 2^{2\gamma(j-1)})\mathbf{1}\{X\in A_j\}]\\
\leq & C2^j\delta_P^{1/\gamma}\exp(-C_\gamma^2 2^{2\gamma(j-1)})Q(X\in A_j)\\
\leq & C2^j\delta_P^{1/\gamma}\exp(-C_\gamma^2 2^{2\gamma(j-1)})2^{\alpha j}\delta^{\frac{\alpha}{\gamma}}= C2^{(1+\alpha)j}\exp(-C_\gamma^2 2^{2\gamma(j-1)})\delta_P^{\frac{1+\alpha}{\gamma}},
\end{aligned}
\label{eq:B1boundAj}
\end{equation}
where the last inequality is derived by the margin assumption.

Summing all terms above in \eqref{eq:B1boundA0} and \eqref{eq:B1boundAj}, we have that 
$$
\begin{aligned}
&\E_{\mathcal{D}_P}[\E[|\eta^Q(X)-\frac{1}{2}|\mathbf{1}\{\hat{f}^P(X)\neq f_Q^*(X)\}\mathbf{1}\{X\in\Omega^*(\gamma,C_\gamma)\}]|(E^f_P)^c]\\
\leq & \delta_P^{\frac{1+\alpha}{\gamma}}(C_\alpha+\sum_{j\geq 1}C2^{(1+\alpha)j}\exp(-C_\gamma^2 2^{2\gamma(j-1)}))\lesssim \delta_P^{\frac{1+\alpha}{\gamma}},
\end{aligned}
$$
since $\sum_{j\geq 1}C2^{(1+\alpha)j}\exp(-C_\gamma^2 2^{2\gamma(j-1)})<\infty$. Therefore, \eqref{eq:B1bound4} holds and we finish the proof.
\end{proof}
%%%%%%%%%%%%%%%%%%%%%%%%%%%%%%%%%%%%%%%%%%%%%%%%%%%%%%%%%%%%
\subsection{Proof of Theorem \ref{thm:nonParaGeneral}}
Next, we will prove the excess risk upper bound of the TAB $K$-NN classifier. Let $\beta_P^*=\max\{\gamma\beta,\beta_P\}$. Define $\Omega^*(\gamma,C_\gamma):=\Omega^+(\gamma,C_\gamma)/\Omega^b$, and $\delta_P:=n_P^{-\frac{\beta_P^*}{2\beta_P^*+d}}$. To simplify the analysis, we assume that $n_Q$ is large enough so that
\begin{equation}
\frac{1}{2}c_Qn_Q^{\frac{2\beta}{2\beta+d}}\leq k_Q\leq c_Qn_Q^{\frac{2\beta}{2\beta+d}},\quad \frac{1}{2}c_Pn_P^{\frac{2\beta_P^*}{2\beta_P^*+d}}\leq k_P\leq c_Pn_P^{\frac{2\beta_P^*}{2\beta_P^*+d}},
\label{eq:knnAsymp}
\end{equation}
 where $k_Q=\lfloor c_Qn_Q^{\frac{2\beta}{2\beta+d}}\rfloor$ and $k_P=\lfloor c_Pn_P^{\frac{2\beta_P^*}{2\beta_P^*+d}}\rfloor$ are the chosen number of nearest neighbors in $\mathcal{D}_Q$ and $\mathcal{D}_P$, respectively, and $c_Q$ and $c_P$ are constants. This assumption is valid since our results are shown in the asymptotic regime, and we assume that $n_P\overset{n_Q\rightarrow\infty}{\longrightarrow}\infty$.

 Given the condition of \eqref{eq:knnAsymp}, the $K$-NN distance bound (Lemma \ref{lemma:knnBound}) shows that there exists a constant $c_D>0$ such that with probability at least $1-\frac{c_D}{c_Q}n_Q^{\frac{d}{2\beta+d}}\exp(-c_Qn_Q^{\frac{2\beta}{2\beta+d}})$ w.r.t. the distribution of $X_{1:n_Q}$, we have
\begin{equation}
    \|X_{(k_Q)}(x)-x\|\leq c_D(\frac{k_Q}{n_Q})^{\frac{1}{d}}\leq c_Dc_Q^{\frac{1}{d}}n_Q^{-\frac{1}{2\beta+d}},\quad \forall x\in\Omega.
    \label{eq:prop3knnQ}
\end{equation}
 In addition, with probability at least $1-\frac{c_D}{c_P}n_P^{\frac{d}{2\beta_P^*+d}}\exp(-c_Pn_P^{\frac{2\beta_P^*}{2\beta_P^*+d}})$ w.r.t. the distribution of $X^P_{1:n_P}$, we have
 \begin{equation}
   \|X^P_{(k_P)}(x)-x\|\leq c_D(\frac{k_P}{n_P})^{\frac{1}{d}}\leq c_Dc_P^{\frac{1}{d}}n_P^{-\frac{1}{2\beta_P^*+d}},\quad \forall x\in\Omega_P.
   \label{eq:prop3knnP}
 \end{equation}
Denote $E_Q$ and $E_P$ as the event that \eqref{eq:prop3knnQ} and \eqref{eq:prop3knnP} hold, respectively. Note that $E_Q$ and $E_P$ depend on the distribution of $X_{1:n_Q}$ and $X^P_{1:n_P}$ respectively. From the probability bound, we have
$$\pr_{\mathcal{D}_Q}(E_Q^c)\leq \frac{c_D}{c_Q}n_Q^{\frac{d}{2\beta+d}}\exp(-c_Qn_Q^{\frac{2\beta}{2\beta+d}}),\quad \pr_{\mathcal{D}_P}(E_P^c)\leq \frac{c_D}{c_P}n_P^{\frac{d}{2\beta^*_P+d}}\exp(-c_Qn_P^{\frac{2\beta^*_P}{2\beta^*_P+d}}).$$

The proof of Theorem \ref{thm:nonParaGeneral} is obtained simply by verifying the conditions in Theorem \ref{thm:general}. The next two lemmas aim to prove such conditions:

\begin{lemma}
\label{lemma:nonParaGeneralQ}
By choosing $k_Q=\lfloor c_Qn_Q^{\frac{2\beta}{2\beta+d}}\rfloor$ for any constant $c_Q>0$, the regression function estimates $\hat{\eta}^{Q}_{k_Q}(x)$ has the property that there exists constants $c_1,c_2,c_3>0$ such that with probability at least $1-c_1 n_Q^{\frac{d}{2\beta+d}}\exp (-c_Qn_Q^{\frac{2\beta}{2\beta+d}})$ w.r.t. the distribution of $X_{1:n_Q}:=(X_1,\cdots,X_{n_Q})$, for any $x\in\Omega$ we have $\forall t>0$,
    \begin{equation}
        \sup_{(Q,P)\in\Pi^{NP}}\pr_{\mathcal{D}_Q}(|\hat{\eta}^Q(x)-\eta^Q(x)| \geq t|X_{1:n_Q})\leq c_2\exp(- (\frac{t}{c_3n_Q^{-\frac{\beta}{2\beta+d}}})^2).
        \label{eq:knnConcentrationQ}
    \end{equation}
\end{lemma}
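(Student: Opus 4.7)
The plan is to use the standard bias-variance decomposition for the $K$-NN regression estimate. Conditional on $X_{1:n_Q}$, I write
$$\hat{\eta}^Q_{k_Q}(x) - \eta^Q(x) = \underbrace{\frac{1}{k_Q}\sum_{i=1}^{k_Q}\bigl(Y_{(i)}(x) - \eta^Q(X_{(i)}(x))\bigr)}_{V(x)} + \underbrace{\frac{1}{k_Q}\sum_{i=1}^{k_Q}\bigl(\eta^Q(X_{(i)}(x)) - \eta^Q(x)\bigr)}_{B(x)},$$
and control $V(x)$ and $B(x)$ separately.

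First, I would pass to the high-probability event $E_Q$ from the $K$-NN distance bound recalled just before the lemma (Lemma \ref{lemma:knnBound}): under the strong density condition, with probability at least $1 - \tfrac{c_D}{c_Q} n_Q^{\frac{d}{2\beta+d}} \exp(-c_Q n_Q^{\frac{2\beta}{2\beta+d}})$ over $X_{1:n_Q}$, every $x \in \Omega$ satisfies $\|X_{(k_Q)}(x) - x\| \leq c_D c_Q^{1/d}\, n_Q^{-\frac{1}{2\beta+d}}$. On $E_Q$, the $(\beta, C_\beta)$-Holder property of $\eta^Q$ gives a deterministic uniform bias bound $|B(x)| \leq C_\beta (c_D c_Q^{1/d})^\beta\, n_Q^{-\frac{\beta}{2\beta+d}} =: b_0$, and this bound is independent of $(Q,P) \in \Pi^{NP}$ since it depends only on the parameters defining the class.

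For the variance term, conditional on $X_{1:n_Q}$ the labels $Y_{(i)}(x)$ are independent $\mathrm{Bernoulli}(\eta^Q(X_{(i)}(x)))$ bounded in $[0,1]$, so Hoeffding's inequality yields $\pr_{\mathcal{D}_Q}(|V(x)| \geq s \mid X_{1:n_Q}) \leq 2 \exp(-2 k_Q s^2)$ for every $s > 0$. Combining on $E_Q$: for $t \geq 2 b_0$, setting $s = t/2$ gives $\pr_{\mathcal{D}_Q}(|\hat{\eta}^Q_{k_Q}(x) - \eta^Q(x)| \geq t \mid X_{1:n_Q}) \leq 2 \exp(-\tfrac{1}{2} k_Q t^2)$, and using $k_Q \asymp n_Q^{\frac{2\beta}{2\beta+d}}$ this matches the target form $c_2 \exp(-(t / (c_3 n_Q^{-\beta/(2\beta+d)}))^2)$; for $t < 2 b_0$, the inequality is vacuous provided $c_2$ is chosen so that $c_2 \exp(-(2 b_0 / (c_3 n_Q^{-\beta/(2\beta+d)}))^2) \geq 1$, which is automatic since $2 b_0 / (c_3 n_Q^{-\beta/(2\beta+d)})$ is bounded by a constant depending only on the class parameters.

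The only real obstacle is the uniform $K$-NN distance bound over all $x \in \Omega$, which is exactly where the strong density condition (Condition \ref{con:density}) enters; this is packaged in the auxiliary $K$-NN distance lemma cited above and will be taken as given. After that, the bias-variance argument is standard, and no union bound over $x$ or $t$ is required because the bias bound is deterministic and uniform once $E_Q$ holds, while Hoeffding is applied pointwise in $x$ and $t$; the $\sup_{(Q,P) \in \Pi^{NP}}$ is then harmless because all constants produced depend only on $(\alpha, \beta, C_\beta, \mu, c_Q, c_D)$.
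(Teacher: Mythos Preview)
Your proposal is correct and follows essentially the same route as the paper: the paper also conditions on the high-probability event $E_Q$ from the $K$-NN distance bound (Lemma~\ref{lemma:knnBound}), uses the $(\beta,C_\beta)$-H\"older property to bound the bias $|\bar{\eta}^Q_{k_Q}(x)-\eta^Q(x)|$ uniformly by a constant multiple of $n_Q^{-\beta/(2\beta+d)}$, applies Hoeffding conditionally on $X_{1:n_Q}$ for the variance term, and then handles the small-$t$ regime by noting the exponential right-hand side is bounded below so a sufficiently large $c_2$ makes the inequality vacuous there. The decomposition, the key lemma invoked, and the two-regime treatment of $t$ all match.
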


\begin{lemma}
\label{lemma:nonParaGeneralP}
Let $\beta_P^*=\max\{\gamma\beta,\beta_P\}$. By choosing $k_P=\lfloor c_Pn_P^{\frac{2\beta_P^*}{2\beta_P^*+d}}\rfloor$ for any constant $c_P>0$, the regression function estimates $\hat{\eta}^{P}_{k_P}(x)$ has the property that there exists constants $c_b>0$ such that for any $\Pi^{NP,s}\subset \Pi^{NP}$, we have$$\xi(\hat{f}^P;\gamma,C_\gamma,\Pi^{NP,s})\lesssim n_P^{-\frac{\beta_P^*(1+\alpha)/\gamma}{2\beta_P^*+d}}+\varepsilon_b,$$$$\varepsilon_b= \sup_{(Q,P)\in\Pi^{NP,s}}\int_{\Omega^b}|\eta^Q(X)-\frac{1}{2}|dQ_X,$$ where the boundary of the signal transfer set is defined as
    $$\Omega^b:=\{x\in\Omega: |\eta^Q(x)-\frac{1}{2}|\leq c_b n_P^{-\frac{\beta_P/\gamma}{2\beta_P^*+d}},
 B(x,c_bn_P^{-\frac{1}{2\beta_P^*+d}})\cap \Omega_P\not\subset \Omega^+(\gamma,C_\gamma)\}
    $$
    for some constant $c_b>0$.
\end{lemma}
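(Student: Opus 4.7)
The plan is to reduce to Theorem \ref{thm:STRPlugIn} by setting $\delta_P:=c_*n_P^{-\beta_P^*/(2\beta_P^*+d)}$ for a suitable constant $c_*$, so that $\delta_P^{(1+\alpha)/\gamma}$ matches the target rate $n_P^{-\beta_P^*(1+\alpha)/(\gamma(2\beta_P^*+d))}$ and the boundary term $\delta_P^b$ in that theorem is precisely the $\varepsilon_b$ in the present lemma. The only substantive task is then to verify the pointwise misclassification bound \eqref{eq:misclassP} on $\Omega^*(\gamma,C_\gamma):=\Omega^+(\gamma,C_\gamma)\setminus\Omega^b$, while keeping the failure probability $\delta_{P,f}$ negligible.

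First I would invoke the uniform $K$-NN distance bound \eqref{eq:prop3knnP}: on the event $E_P$, which has failure probability $O(n_P^{d/(2\beta_P^*+d)}\exp(-c_Pn_P^{2\beta_P^*/(2\beta_P^*+d)}))$, the inequality $\|X^P_{(i)}(x)-x\|\leq r_P:=c_Dc_P^{1/d}n_P^{-1/(2\beta_P^*+d)}$ holds for every $x\in\Omega_P$ and every $i\leq k_P$. Conditioning on $X^P_{1:n_P}$ and writing $\bar{\eta}^P(x):=k_P^{-1}\sum_{i=1}^{k_P}\eta^P(X^P_{(i)}(x))$, Hoeffding's inequality for averages of bounded independent Bernoullis gives $\pr_{\mathcal{D}_P}(|\hat{\eta}^P_{k_P}(x)-\bar{\eta}^P(x)|\geq t\mid X^P_{1:n_P})\leq 2\exp(-2k_Pt^2)$. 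Since $k_P\asymp\delta_P^{-2}$, this produces the Gaussian tail of scale $\delta_P$ required by \eqref{eq:misclassP}; the remaining work is to control the bias $\bar{\eta}^P(x)-\tfrac{1}{2}$ so that it has the correct sign and magnitude at least $\gtrsim|\eta^Q(x)-\tfrac{1}{2}|^\gamma$.

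The definition of $\Omega^b$ precisely splits this bias control into two regimes for $x\in\Omega^*(\gamma,C_\gamma)$. In regime (A), $|\eta^Q(x)-\tfrac{1}{2}|>c_bn_P^{-\beta_P/(\gamma(2\beta_P^*+d))}$, so the H\"older smoothness of $\eta^P$ gives $|\bar{\eta}^P(x)-\eta^P(x)|\leq C_{\beta_P}r_P^{\beta_P}\lesssim n_P^{-\beta_P/(2\beta_P^*+d)}$, which is dominated by $\tfrac{1}{2}|\eta^P(x)-\tfrac{1}{2}|\geq\tfrac{1}{2}C_\gamma|\eta^Q(x)-\tfrac{1}{2}|^\gamma$ once $c_b$ is chosen large enough, and Hoeffding converts this into the exponential bound in \eqref{eq:misclassP}. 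In regime (B), $B(x,r_P)\cap\Omega_P\subset\Omega^+(\gamma,C_\gamma)$, so on $E_P$ every neighbor $X^P_{(i)}(x)$ satisfies $\mathrm{sgn}(\eta^P(X^P_{(i)})-\tfrac{1}{2})=\mathrm{sgn}(\eta^Q(X^P_{(i)})-\tfrac{1}{2})$ with $|\eta^P(X^P_{(i)})-\tfrac{1}{2}|\geq C_\gamma|\eta^Q(X^P_{(i)})-\tfrac{1}{2}|^\gamma$; combining this with the $\beta$-H\"older smoothness of $\eta^Q$ and $\|X^P_{(i)}-x\|\leq r_P$ forces $\bar{\eta}^P(x)-\tfrac{1}{2}$ to share the sign of $\eta^Q(x)-\tfrac{1}{2}$ with magnitude at least $C_\gamma 2^{-\gamma}|\eta^Q(x)-\tfrac{1}{2}|^\gamma$ whenever $|\eta^Q(x)-\tfrac{1}{2}|\geq 2C_\beta r_P^\beta$, and when $|\eta^Q(x)-\tfrac{1}{2}|<2C_\beta r_P^\beta$ the exponent in \eqref{eq:misclassP} is $O(1)$ so the bound is automatic for any $C_2\geq e$. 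Plugging these verifications into Theorem \ref{thm:STRPlugIn} then yields $\xi(\hat{f}^P;\gamma,C_\gamma,\Pi^{NP,s})\lesssim\delta_P^{(1+\alpha)/\gamma}+\varepsilon_b$, which is the claimed bound.

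The main obstacle is regime (B) in the case $\beta_P<\gamma\beta$: the H\"older smoothness of $\eta^P$ alone is too weak to bound the bias at the desired scale, and one must ``upgrade'' the effective smoothness to $\gamma\beta$ by transferring the regularity of $\eta^Q$ through the signal-strength inequality $|\eta^P-\tfrac{1}{2}|\geq C_\gamma|\eta^Q-\tfrac{1}{2}|^\gamma$ at every neighbor. This is exactly what forces the exponent $\beta_P^*=\max\{\gamma\beta,\beta_P\}$ into both $r_P$ and the final rate, and it also explains why the points whose neighborhood leaks out of $\Omega^+(\gamma,C_\gamma)$---where \emph{neither} smoothness mechanism is available---must be excised into $\Omega^b$ and absorbed into the residual $\varepsilon_b$ rather than analyzed directly.
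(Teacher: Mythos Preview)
Your proposal is correct and tracks the paper's own proof closely: both reduce to Theorem~\ref{thm:STRPlugIn} by verifying the pointwise misclassification bound~\eqref{eq:misclassP} on $\Omega^+(\gamma,C_\gamma)\setminus\Omega^b$ via exactly your two regimes (the paper calls them $\Omega^+_1$ and $\Omega^+_2$), using the $K$-NN distance event $E_P$, Hoeffding's inequality, and the H\"older regularity of $\eta^P$ in regime~(A) or of $\eta^Q$ transferred through the signal-strength inequality in regime~(B).

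One small step that would fail as written: your ``automatic'' claim in regime~(B), that the exponent in~\eqref{eq:misclassP} is $O(1)$ whenever $|\eta^Q(x)-\tfrac12|<2C_\beta r_P^\beta$, requires $(r_P^\beta)^\gamma\lesssim\delta_P$, i.e.\ $\gamma\beta\geq\beta_P^*$, which holds only when $\beta_P^*=\gamma\beta$. In the case $\beta_P>\gamma\beta$ the ratio $(r_P^\beta)^\gamma/\delta_P\asymp n_P^{(\beta_P-\gamma\beta)/(2\beta_P+d)}\to\infty$, so that step breaks. The paper handles this case differently: it observes that when $\beta_P>\gamma\beta$ the regime-(A) threshold $c_b n_P^{-\beta_P/(\gamma(2\beta_P+d))}$ is asymptotically \emph{smaller} than $2C_\beta r_P^\beta$, so any $x\in\Omega^+_2$ with $|\eta^Q(x)-\tfrac12|\geq 2C_\beta r_P^\beta$ already lies in $\Omega^+_1$ and is covered by regime~(A); points with $|\eta^Q(x)-\tfrac12|$ below the regime-(A) threshold then genuinely have an $O(1)$ exponent. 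The fix is thus just to replace your automatic-bound threshold $2C_\beta r_P^\beta$ by the minimum of the two thresholds, and the rest of your argument goes through unchanged.
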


\begin{proof}[Proof of Lemma \ref{lemma:nonParaGeneralQ}]
Define the simple average of the regression functions of the $k_Q$ neighbors of $x$ as $$\bar{\eta}^Q_{k_Q}(x):=\frac{1}{k_Q}\sum_{i=1}^{k_Q}\eta^Q(Y_{(i)}(x)).$$ It is easy to see that $\bar{\eta}^Q_{k_Q}(x)$ depends on the distribution of $X_{1:n_Q}$, and the conditional expectation
$$\E_{\mathcal{D}_Q}[\hat{\eta}^Q_{k_Q}(x)|X_{1:n_Q}]=\bar{\eta}^Q_{k_Q}(x).$$
Recall that Condition \ref{con:smooth} states that for any $x,y\in\mathbb{R}^d$, we have
$$|\eta^Q(x)-\eta^Q(y)|\leq C_\beta\|x-y\|^\beta.$$
Therefore, on the event $E_Q$, we have
$$
\begin{aligned}
|\bar{\eta}^Q_{k_Q}(x)-\eta^Q(x)|&\leq \frac{1}{k_Q}\sum_{i=1}^{k_Q}|\eta^Q(Y_{(i)}(x))-\eta^Q(x)|\\
&\leq C_\beta\frac{1}{k_Q}\sum_{i=1}^{k_Q}\|X_{(i)}(x)-x\|^\beta\\
&\leq C_\beta c_D^\beta c_Q^{\frac{\beta}{d}}n_Q^{-\frac{\beta}{2\beta+d}}.
\end{aligned}
$$
On the other hand, conditioning on $X_{1:n_Q}$, the quantities $\{Y_{(i)}(x)-\eta^Q(Y_{(i)}(x))\}_{i=1,\cdots,k_Q}$ are independent with mean $0$. Hence, the Hoeffding's inequality tells that for any $t\geq 0$
\begin{equation}
\pr_{\mathcal{D}_Q}(|\hat{\eta}^Q_{k_Q}(x)-\bar{\eta}^Q_{k_Q}(x)|\geq t|X_{1:n_Q})\leq 2\exp(-\frac{2t^2}{k_Q})\leq 2\exp(-\frac{2}{c_Q}(\frac{t}{n_Q^{-\frac{\beta}{2\beta+d}}})^2).
\label{eq:prop3knnHoeffdingQ}
\end{equation}
On the event $E_Q,\ \bar{\eta}^Q_{k_Q}(x)$ falls into the interval $[\eta^Q(x)- C_\beta c_D^\beta c_Q^{\frac{\beta}{d}}n_Q^{-\frac{\beta}{2\beta+d}},\eta^Q(x)+ C_\beta c_D^\beta c_Q^{\frac{\beta}{d}}n_Q^{-\frac{\beta}{2\beta+d}}]$, so with probability at least $1-\frac{c_D}{c_Q}n_Q^{\frac{d}{2\beta+d}}\exp(-c_Qn_Q^{\frac{2\beta}{2\beta+d}})$ w.r.t. the distribution of $X_{1:n_Q}$, we have for any $t\geq 2C_\beta c_D^\beta c_Q^{\frac{\beta}{d}}n_Q^{-\frac{\beta}{2\beta+d}}$,
\begin{equation}
\begin{aligned}
\pr_{\mathcal{D}_Q}(|\hat{\eta}^Q_{k_Q}(x)-\eta^Q(x)|\geq t|X_{1:n_Q})&\leq \pr_{\mathcal{D}_Q}(|\hat{\eta}^Q_{k_Q}(x)-\bar{\eta}^Q_{k_Q}(x)|\geq t/2|X_{1:n_Q})\\
&\leq 2\exp(-\frac{1}{2c_Q}(\frac{t}{n_Q^{-\frac{\beta}{2\beta+d}}})^2).
\label{eq:prop3first1}
\end{aligned}
\end{equation}
In addition, for any $t\in[0,2C_\beta c_D^\beta c_Q^{\frac{\beta}{d}}n_Q^{-\frac{\beta}{2\beta+d}}]$, since $\exp(-\frac{1}{2c_Q}(\frac{t}{n_Q^{-\frac{\beta}{2\beta+d}}})^2)$ are bounded below from $0$, we have
\begin{equation}
    \pr_{\mathcal{D}_Q}(|\hat{\eta}^Q_{k_Q}(x)-\eta^Q(x)|\geq t|X_{1:n_Q})\leq C\exp(-\frac{1}{2c_Q}(\frac{t}{n_Q^{-\frac{\beta}{2\beta+d}}})^2).
    \label{eq:prop3first2}
\end{equation}
for some $C>0$ large enough. Combining \eqref{eq:prop3first1} and \eqref{eq:prop3first2}, the first statement holds with $c_1=\frac{c_D}{c_Q}$ and $c_2=\max\{C,2\}$. 
\end{proof}

\begin{proof}[Proof of Lemma \ref{lemma:nonParaGeneralP}]
Consider any $(Q,P)\in\Pi^{NP,s}$. Suppose that $x\in\Omega^+(\gamma,C_\gamma)$. Similarly, define the simple average of the regression functions of the $k_P$ neighbors of $x$ as $$\bar{\eta}^P_{k_P}(x):=\frac{1}{k_P}\sum_{i=1}^{k_P}\eta^Q(Y^P_{(i)}(x)),$$ which depends on the distribution of $X^P_{1:n_P}$, and the conditional expectation
$$\E_{\mathcal{P}_Q}[\hat{\eta}^P_{k_P}(x)|X_{1:n_P}]=\bar{\eta}^P_{k_P}(x).$$
Again, Condition \ref{con:smooth} gives that for any $x,y\in\mathbb{R}^d$, we have
$$|\eta^P(x)-\eta^P(y)|\leq C_{\beta_P}\|x-y\|^{\beta_P}.$$
Therefore, on the event $E_P$ we have
\begin{equation}
\begin{aligned}
|\bar{\eta}^P_{k_P}(x)-\eta^P(x)|&\leq \frac{1}{k_P}\sum_{i=1}^{k_P}|\eta^P(Y^P_{(i)}(x))-\eta^P(x)|\\
&\leq C_{\beta_P}\frac{1}{k_P}\sum_{i=1}^{k_P}\|X^P_{(i)}(x)-x\|^{\beta_P}\\
&\leq C_{\beta_P} c_D^{\beta_P} c_P^{\frac{{\beta_P}}{d}}n_P^{-\frac{{\beta_P}}{2{\beta^*_P}+d}}.
\end{aligned}
\label{eq:prop3knnBoundP}
\end{equation}

Setting $c_b\geq(2C_{\beta_P} c_D^{\beta_P} c_Q^{\frac{{\beta_P}}{d}}/C_\gamma)^{\frac{1}{\gamma}}$, \eqref{eq:prop3knnBoundP} implies that on the event $E_P$, if $|\eta^Q(x)-\frac{1}{2}|\geq c_b n_P^{-\frac{\beta_P/\gamma}{2\beta_P^*+d}}$, we have
$$|\eta^P(x)-\frac{1}{2}|\geq 2C_{\beta_P} c_D^{\beta_P} c_Q^{\frac{{\beta_P}}{d}}n_P^{-\frac{{\beta_P}}{2{\beta^*_P}+d}}\Rightarrow (\bar{\eta}^P_{k_P}(x)-\frac{1}{2})(\eta^P(x)-\frac{1}{2})\geq 0, |\bar{\eta}^P_{k_P}(x)-\frac{1}{2}|\geq \frac{1}{2}|\eta^P(x)-\frac{1}{2}|.$$
The Hoeffding's inequality then tells that for any $t\geq 0$
\begin{equation}
\pr_{\mathcal{D}_P}(|\hat{\eta}^P_{k_P}(x)-\bar{\eta}^P_{k_P}(x)|\geq t|X^P_{1:n_P})\leq 2\exp(-\frac{2t^2}{k_P})\leq 2\exp(-\frac{2}{c_P}(\frac{t}{n_P^{-\frac{\beta^*_P}{2\beta^*_P+d}}})^2).
\label{eq:prop3knnHoeffdingP}
\end{equation}
Define $$\Omega^+_1(\gamma,C_\gamma):=\{x\in\Omega^+(\gamma,C_\gamma):|\eta^Q(x)-\frac{1}{2}|\geq c_b n_P^{-\frac{\beta_P/\gamma}{2\beta_P^*+d}}\}.$$ If $x\in\Omega^+_1(\gamma,C_\gamma)$, then on the event $E_P$, we have
\begin{equation}
\begin{aligned}
\pr_{\mathcal{D}_P}((\hat{\eta}^P(x)-\frac{1}{2})(\eta^Q(x)-\frac{1}{2})<0|X^P_{1:n_P})&\leq \pr_{\mathcal{D}_P}(|\hat{\eta}^P_{k_P}(x)-\bar{\eta}^P_{k_P}(x)|\geq \frac{1}{2}|\eta^P(x)-\frac{1}{2}||X^P_{1:n_P})\\
&\leq 2\exp\left(-(\frac{C_\gamma|\eta^Q(x)-\frac{1}{2}|^\gamma}{\sqrt{\frac{c_P}{2}}n_P^{-\frac{\beta^*_P}{2\beta^*_P+d}}})^2\right).
\end{aligned}
\label{eq:prop3knnHoeffdingP1}
\end{equation}
On the other hand, setting $c_b\geq c_Dc_P^{\frac{1}{d}}$, \eqref{eq:prop3knnP} implies that if $B(x,c_bn_P^{-\frac{1}{2\beta^*_P+d}})\cap \Omega_P\subset \Omega^+(\gamma,C_\gamma)$, on the event $E_P$ we have
$$X^P_{(i)}(x)\in\Omega^+(\gamma,C_\gamma)\Rightarrow |\eta^P(X^P_{(i)}(x))-\frac{1}{2}|\geq C_\gamma|\eta^Q(X^P_{(i)}(x))-\frac{1}{2}|^\gamma.$$
Define $$\Omega^+_2(\gamma,C_\gamma):=\{x\in\Omega^+(\gamma,C_\gamma):B(x,c_bn_P^{-\frac{1}{2\beta^*_P+d}})\cap\Omega\cap \Omega_P\subset \Omega^+(\gamma,C_\gamma)\}.$$ If $x\in\Omega^+_2(\gamma,C_\gamma)$ satisfies $\eta^Q(x)-1/2\geq 2C_\beta c_D^{\beta}c_P^{\frac{\beta}{d}}n_P^{-\frac{\beta}{2\beta_P^*+d}}$, then on the event $E_P$, we have
$$
\begin{aligned}
\bar{\eta}^P_{k_P}(x)-\frac{1}{2}&=\frac{1}{k_P}\sum_{i=1}^{k_P}(\eta^P(X^P_{(i)}(x))-\frac{1}{2})\\
&\geq C_\gamma\frac{1}{k_P}\sum_{i=1}^{k_P}(\eta^Q(X^P_{(i)}(x))-\frac{1}{2})^\gamma\\
&\geq C_\gamma\frac{1}{k_P}\sum_{i=1}^{k_P}(\eta^Q(x)-\frac{1}{2}-C_\beta\|X^P_{(i)}(x)-x\|^\beta)^\gamma\\
&\geq C_\gamma\frac{1}{k_P}\sum_{i=1}^{k_P}(\eta^Q(x)-\frac{1}{2}-C_\beta(c_Dc_P^{\frac{1}{d}}n_P^{-\frac{1}{2\beta^*_P+d}})^\beta)^\gamma\\
&\geq C_\gamma \frac{1}{2^\gamma}(\eta^Q(x)-\frac{1}{2})^\gamma.
\end{aligned}
$$
The Hoeffding's inequality then tells that
\begin{equation}
\begin{aligned}
\pr_{\mathcal{D}_P}((\hat{\eta}^P(x)-\frac{1}{2})(\eta^Q(x)-\frac{1}{2})<0|X^P_{1:n_P})&\leq \pr_{\mathcal{D}_P}(|\hat{\eta}^P_{k_P}(x)-\bar{\eta}^P_{k_P}(x)|\geq C_\gamma \frac{1}{2^\gamma}|\eta^Q(x)-\frac{1}{2}|^\gamma|X^P_{1:n_P})\\
&\leq 2\exp\left(-(\frac{C_\gamma|\eta^Q(x)-\frac{1}{2}|^\gamma}{2^\gamma\sqrt{\frac{c_P}{2}}n_P^{-\frac{\beta^*_P}{2\beta^*_P+d}}})^2\right).
\end{aligned}
\label{eq:prop3knnHoeffdingP2}
\end{equation}
\eqref{eq:prop3knnHoeffdingP2} holds for $\eta^Q(x)-\frac{1}{2}\leq -2C_\beta c_D^{\beta}c_P^{\frac{\beta}{d}}n_P^{-\frac{\beta}{2\beta_P^*+d}}$ with the similar argument. To summarize, if $x\in\Omega^+_2(\gamma,C_\gamma)$ satisfies $|\eta^Q(x)-\frac{1}{2}|\geq 2C_\beta c_D^{\beta}c_P^{\frac{\beta}{d}}n_P^{-\frac{\beta}{2\beta_P^*+d}}$, then on the event $E_P$, we have
$$\pr_{\mathcal{D}_P}((\hat{\eta}^P(x)-\frac{1}{2})(\eta^Q(x)-\frac{1}{2})<0|X^P_{1:n_P})\leq 2\exp(-(\frac{C_\gamma|\eta^Q(x)-\frac{1}{2}|^\gamma}{2^\gamma\sqrt{\frac{c_P}{2}}n_P^{-\frac{\beta^*_P}{2\beta^*_P+d}}})^2).$$

We finish the proof of this lemma divided into two cases:

\noindent\textbf{I. Case of $\gamma\beta \geq \beta_P:$} If $x\in\Omega^+_2(\gamma,C_\gamma)$ satisfies $$|\eta^Q(x)-\frac{1}{2}|\geq 2C_\beta c_D^{\beta}c_P^{\frac{\beta}{d}}n_P^{-\frac{\beta}{2\beta_P^*+d}},$$ then $\exp(-(\frac{C_\gamma|\eta^Q(x)-\frac{1}{2}|^\gamma}{2^\gamma\sqrt{\frac{c_P}{2}}n_P^{-\frac{\beta^*_P}{2\beta^*_P+d}}})^2)$ is bounded below from $0$, which means on the event $E_P$,
$$\pr_{\mathcal{D}_P}((\hat{\eta}^P(x)-\frac{1}{2})(\eta^Q(x)-\frac{1}{2})<0|X^P_{1:n_P})\leq C\exp(-(\frac{C_\gamma|\eta^Q(x)-\frac{1}{2}|^\gamma}{2^\gamma\sqrt{\frac{c_P}{2}}n_P^{-\frac{\beta^*_P}{2\beta^*_P+d}}})^2)$$ for some constant $C>0$ large enough. In other words, the exponential concentration holds within the whole $\Omega^+_2(\gamma,C_\gamma)$. Combining this concentration bound with \eqref{eq:prop3knnHoeffdingP1} and \eqref{eq:prop3knnHoeffdingP2}, we see that by setting that $c_b=\max\{(2C_{\beta_P} c_D^{\beta_P} c_Q^{\frac{{\beta_P}}{d}}/C_\gamma)^{\frac{1}{\gamma}},c_Dc_P^{\frac{1}{2}}\},$
\begin{equation}
   \pr_{\mathcal{D}_P}((\hat{\eta}^P(x)-\frac{1}{2})(\eta^Q(x)-\frac{1}{2})<0|X^P_{1:n_P})\leq \max\{C,2\} \exp(-(\frac{C_\gamma|\eta^Q(x)-\frac{1}{2}|^\gamma}{2^\gamma\sqrt{\frac{c_P}{2}}n_P^{-\frac{\beta^*_P}{2\beta^*_P+d}}})^2)
   \label{eq:prop3Hoeffding}
\end{equation}
holds with probability at least $1-\frac{c_D}{c_P}n_P^{\frac{d}{2\beta^*_P+d}}\exp(-c_Qn_P^{\frac{2\beta^*_P}{2\beta^*_P+d}})$ w.r.t. the distribution of $X^P_{1:n_P}$ for any $x\in\Omega^+_1(\gamma,C_\gamma)\cup \Omega^+_2(\gamma,C_\gamma)=\Omega^+(\gamma,C_\gamma)/\Omega^b$. The statement then holds by directly applying Theorem \ref{thm:STRPlugIn} with the setting of \begin{equation}
    \delta_P=2^\gamma\sqrt{\frac{c_P}{2}}n_P^{-\frac{\beta^*_P}{2\beta^*_P+d}},\ \delta_{P,f}=\pr_{\mathcal{D}_P}[E_P^c],\ \delta_P^b=\varepsilon_b.
    \label{eq:prop3setting}
\end{equation}

\noindent\textbf{II. Case of $\gamma\beta < \beta_P:$} Without loss of generality, we suppose that $n_Q$ is large enough such that
$$2C_\beta c_D^{\beta}c_P^{\frac{\beta}{d}}n_P^{-\frac{\beta}{2\beta_P^*+d}}\geq (2C_{\beta_P} c_D^{\beta_P} c_Q^{\frac{{\beta_P}}{d}}/C_\gamma)^{\frac{1}{\gamma}}n_P^{-\frac{\beta_P/\gamma}{2\beta_P^*+d}}.$$ Hence, if $x\in\Omega^+_2(\gamma,C_\gamma)$ satisfies $$|\eta^Q(x)-\frac{1}{2}|\geq 2C_\beta c_D^{\beta}c_P^{\frac{\beta}{d}}n_P^{-\frac{\beta}{2\beta_P^*+d}},$$ then by setting $c_b=\max\{(2C_{\beta_P} c_D^{\beta_P} c_Q^{\frac{{\beta_P}}{d}}/C_\gamma)^{\frac{1}{\gamma}},c_Dc_P^{\frac{1}{2}}\}$, it holds that $x\in\Omega^+_1(\gamma,C_\gamma)$. Therefore, \eqref{eq:prop3Hoeffding} holds with probability at least $1-\frac{c_D}{c_P}n_P^{\frac{d}{2\beta^*_P+d}}\exp(-c_Qn_P^{\frac{2\beta^*_P}{2\beta^*_P+d}})$ w.r.t. the distribution of $X^P_{1:n_P}$ for any $x\in\Omega^+_1(\gamma,C_\gamma)\cup \Omega^+_2(\gamma,C_\gamma)=\Omega^+(\gamma,C_\gamma)/\Omega^b$. The statement then holds by directly applying Theorem \ref{thm:STRPlugIn} with the same setting as the one in \eqref{eq:prop3setting}.
\end{proof}
\begin{proof}[Proof of Theorem \ref{thm:nonParaGeneral}]
Consider any $(Q,P)\in\Pi^{NP}$. If $\gamma\beta\geq \beta_P$, the proof is straightforward by applying the results derived in Lemma \ref{lemma:nonParaGeneralQ} and \ref{lemma:nonParaGeneralP} to Theorem \ref{thm:general} by setting $\Pi^{NP,s}=\{(Q,P)\}$ and $\beta_P^*=\gamma\beta$.

If $\gamma\beta<\beta_P$, the proof is straightforward by applying the results derived in Lemma \ref{lemma:nonParaGeneralQ} and \ref{lemma:nonParaGeneralP} to Theorem \ref{thm:general} by setting $\Pi^{NP,s}=\{(Q,P)\}$ and $\beta_P^*=\beta_P$. To illustrate the reason why the risk term $\varepsilon_b$ in Lemma \ref{lemma:nonParaGeneralP} is asymptotically dominated by $n_P^{-\frac{\beta_P(1+\alpha)/\gamma}{2\beta_P+d}}$, by Assumption \ref{assum:margin} we have
$\varepsilon_b\leq c_bn_P^{-\frac{\beta_P/\gamma}{2\beta_P+d}}Q_X(\Omega_b)\leq C_\alpha c_b^{1+\alpha} n_P^{-\frac{\beta_P(1+\alpha)/\gamma}{2\beta_P+d}}$, which finishes the proof.
\end{proof}

\subsection{Proofs of Theorem \ref{thm:nonParaSpecial}}
For convenience, we repeat the definitions that $$\varepsilon_b= \int_{\Omega^b}|\eta^Q(X)-\frac{1}{2}|dQ_X$$ and the boundary of the signal transfer set is defined as
    $$\Omega^b:=\{x\in\Omega: |\eta^Q(x)-\frac{1}{2}|\leq c_b n_P^{-\frac{\beta_P/\gamma}{2\beta_P^*+d}},
 B(x,c_bn_P^{-\frac{1}{2\beta_P^*+d}})\cap \Omega\cap\Omega_P \not\subset \Omega^+(\gamma,C_\gamma)\}
    $$
    for some constant $c_b>0$. From Theorem \ref{thm:nonParaGeneral}, which is the upper bound result for the general parametric space $\Pi^{NP}$, it suffices to show that $\varepsilon_b\lesssim n_P^{-\frac{\beta}{2\gamma\beta+d}}$ in each case listed in Theorem \ref{thm:nonParaSpecial} to prove the results.
\begin{proof}

\noindent\textbf{Band-like Ambiguity:} By Lemma \ref{lemma:BAtoAPB}, Assumption \ref{assum:ambiguity} holds with $$\varepsilon(z;\gamma,C_\gamma/2)=\left(C_\alpha z^{1+\alpha}\right)\land \left(2^{\frac{1+\alpha}{\gamma}}C_\alpha C_\gamma^{-\frac{1+\alpha}{\gamma}}\Delta^{\frac{1+\alpha}{\gamma}}\right).$$ From Theorem \ref{thm:nonParaGeneral}, it suffices to show that $\varepsilon_b\lesssim \Delta^{\frac{1+\alpha}{\gamma}}+n_P^{-\frac{\beta(1+\alpha)}{2\gamma\beta+d}}$ to prove the excess risk upper bound.

Following the same proof in Lemma \ref{lemma:BAtoAPB}, we have that
$$\{x\in\Omega:|\eta^Q(x)-\frac{1}{2}|\geq (\frac{2\Delta}{C_\gamma})^{\frac{1}{\gamma}}\}\subset\Omega^+(\gamma,C_\gamma/2).$$ Suppose that $x\in\Omega^+(\gamma,C_\gamma/2)$ satisfies $|\eta^Q(x)-\frac{1}{2}|\geq (\frac{2\Delta}{C_\gamma})^{\frac{1}{\gamma}} + C_\beta c_b^\beta n_P^{-\frac{\beta}{2\gamma\beta+d}}$. For any $x'\in B(x,c_bn_P^{-\frac{1}{2\gamma\beta+d}})$, we have
$$
|\eta^Q(x')-\eta^Q(x)|\leq C_\beta\|x'-x\|^\beta\leq C_\beta c_b^\beta n_P^{-\frac{\beta}{2\gamma\beta+d}}\Rightarrow |\eta^Q(x')-\frac{1}{2}|\geq (\frac{2\Delta}{C_\gamma})^{\frac{1}{\gamma}},
$$
which further implies that $x\in\Omega^+(\gamma,C_\gamma/2)$. In other words,
$$\Omega_b\subset \{x\in\Omega^+(\gamma,C_\gamma/2):|\eta^Q(x)-\frac{1}{2}|< (\frac{2\Delta}{C_\gamma})^{\frac{1}{\gamma}} + C_\beta c_b^\beta n_P^{-\frac{\beta}{2\gamma\beta+d}}\},$$ so by Assumption \ref{assum:margin}, it holds that
$$
\begin{aligned}
  \varepsilon_b&\leq \left((\frac{2\Delta}{C_\gamma})^{\frac{1}{\gamma}} + C_\beta c_b^\beta n_P^{-\frac{\beta}{2\gamma\beta+d}}\right) Q_X(\Omega_b)\\
  &\leq C_\alpha \left((\frac{2\Delta}{C_\gamma})^{\frac{1}{\gamma}} + C_\beta c_b^\beta n_P^{-\frac{\beta}{2\gamma\beta+d}}\right)^{1+\alpha}\\
  &\lesssim \Delta^{\frac{1+\alpha}{\gamma}}+n_P^{-\frac{\beta(1+\alpha)}{2\gamma\beta+d}}.   
\end{aligned}
$$
The proof is then finished given the inequality above.\\

\noindent\textbf{Smooth Source:} Suppose $(Q,P)\in\Pi^{NP}_S$. By Assumption \ref{assum:margin}, since $\beta_P=\gamma\beta$, we have
$\varepsilon_b\leq c_bn_P^{-\frac{\beta}{2\gamma\beta+d}}Q_X(\Omega_b)\leq C_\alpha c_b^{1+\alpha} n_P^{-\frac{\beta(1+\alpha)}{2\gamma\beta+d}}$, which finishes the proof.\\

\noindent\textbf{Strong Signal with Imperfect Transfer:} Suppose $(Q,P)\in\Pi^{NP}_F$. For any $x\in\Omega,r>0$ such that $$\eta^Q(x')\neq \frac{1}{2},\quad \forall x'\in B(x,r),$$ i.e., $B(x,r)$ does not intersect with the decision boundary $\{x\in\Omega:\eta^Q(x)=\frac{1}{2}\}$, by continuity of $\eta^Q$ we see that either $\eta^Q(x')> \frac{1}{2}$ or $\eta^Q(x')< \frac{1}{2}$ for any $x'\in B(x,r)$.

We claim that either $$B(x,r)\cap\Omega\subset \Omega^+(\gamma,C_\gamma)$$ or $$B(x,r)\cap\Omega\subset \Omega/\Omega^+(\gamma,C_\gamma).$$ Otherwise, suppose for any two points $x_1,x_2\in B(x,r)\cap\Omega$, we have $x_1\in\Omega^+(\gamma,C_\gamma)$ but $x_2\in\Omega/\Omega^+(\gamma,C_\gamma)$. Since $|\eta^P(x)-\frac{1}{2}|\geq C_\gamma |\eta^Q(x)-\frac{1}{2}|^\gamma$ holds for any $x\in\Omega$, we have $$
\begin{aligned}
   &(\eta^P(x_1)-\frac{1}{2})(\eta^Q(x_1)-\frac{1}{2})>0,\ (\eta^P(x_2)-\frac{1}{2})(\eta^Q(x_2)-\frac{1}{2})<0,\\ \Rightarrow &(\eta^P(x_1)-\frac{1}{2})(\eta^P(x_2)-\frac{1}{2})<0. 
\end{aligned}$$
Therefore, there exists some $\lambda\in(0,1)$ such that 
$$\eta^P(\lambda x_1+(1-\lambda)x_2)=\frac{1}{2},$$
which contradicts with the facts that $\eta^Q(\lambda x_1+(1-\lambda)x_2)\neq \frac{1}{2}$ and $|\eta^P(x)-\frac{1}{2}|\geq C_\gamma |\eta^Q(x)-\frac{1}{2}|^\gamma$ for any $x\in\Omega$.

Since the choice of $x\in\Omega$ and $r>0$ are arbitrary, we conclude that any sphere in $\Omega$ that does not intersect with the decision boundary $\{x\in\Omega:\eta^Q(x)=\frac{1}{2}\}$ is a subset of either $\Omega^+(\gamma,C_\gamma)$ or $\Omega/\Omega^+(\gamma,C_\gamma)$. Therefore, the signal transfer boundary is a part of the decision boundary, i.e., $$\partial\Omega^+(\gamma, C_\gamma)\subset \{x\in\Omega:\eta^Q(x)=\frac{1}{2}\},$$ which is a sufficient condition for $\varepsilon_b\lesssim n_P^{-\frac{\beta(1+\alpha)}{2\gamma\beta+d}}$ as shown in the first case of Corollary \ref{corol:deltaPbDominated}.

\end{proof}

\subsection{Proofs of Corollary \ref{corol:deltaPbDominated}}
\begin{proof}
\noindent\textbf{Signal Transfer Boundary as part of the Decision Boundary:} Suppose that $x\in\Omega^+(\gamma,C_\gamma)$ satisfies $|\eta^Q(x)-\frac{1}{2}|\geq 2 C_\beta c_b^\beta n_P^{-\frac{\beta}{2\gamma\beta+d}}$. For any $x'\in B(x,c_bn_P^{-\frac{1}{2\gamma\beta+d}})$, we have
$$
|\eta^Q(x')-\eta^Q(x)|\leq C_\beta\|x'-x\|^\beta\leq C_\beta c_b^\beta n_P^{-\frac{\beta}{2\gamma\beta+d}}\leq \frac{1}{2}|\eta^Q(x)-\frac{1}{2}|,
$$
which further deduces that
\begin{equation}
(\eta^Q(x')-\frac{1}{2})(\eta^Q(x)-\frac{1}{2})>0,\quad \forall x'\in B(x,c_bn_P^{-\frac{1}{2\gamma\beta+d}}).
\label{eq:corol1state6}
\end{equation}
Since $\partial\Omega^+(\gamma,C_\gamma)\subset \{x\in\Omega:\eta^Q(x)=\frac{1}{2}\}$, \eqref{eq:corol1state6} implies that $$B(x,c_bn_P^{-\frac{1}{2\gamma\beta+d}})\cap\Omega\subset \Omega^+(\gamma,C_\gamma)\Rightarrow x\notin \Omega_b.$$ In other words,
$$\Omega_b\subset \{x\in\Omega^+(\gamma,C_\gamma):|\eta^Q(x)-\frac{1}{2}|< 2 C_\beta c_b^\beta n_P^{-\frac{\beta}{2\gamma\beta+d}}\},$$ so by Assumption \ref{assum:margin}, it holds that
$$
  \varepsilon_b\leq 2 C_\beta c_b^\beta n_P^{-\frac{\beta}{2\gamma\beta+d}} Q_X(\Omega_b)\\
  \leq C_\alpha (2 C_\beta c_b^\beta)^{1+\alpha} n_P^{-\frac{\beta(1+\alpha)}{2\gamma\beta+d}}\\
  \lesssim n_P^{-\frac{\beta(1+\alpha)}{2\gamma\beta+d}},
$$
which finishes the proof.

\noindent\textbf{Signal Transfer Boundary Margin:} Suppose that $n_P$ is large enough such that
$$c_bn_P^{-\frac{1}{2\gamma\beta+d}}\leq r_m.$$ It is obvious that
$$\varepsilon_b\leq c_b n_P^{-\frac{\beta_P/\gamma}{2\gamma\beta+d}} Q_X(B(X,c_bn_P^{-\frac{1}{2\gamma\beta+d}})\cap\Omega\cap\Omega_P\not\subset\Omega^+(\gamma,C_\gamma))\leq C_{r}c_b^{1+\alpha_r}n_P^{-\frac{\beta_P/\gamma+\alpha_r}{2\gamma\beta+d}},$$
of which the last term is asymptotically less than $n_P^{-\frac{\beta(1+\alpha)}{2\gamma\beta+d}}$ given that $\beta_P/\gamma+\alpha_r\geq \beta(1+\alpha)$.
\end{proof}

\subsection{Proof of Theorem \ref{thm:nonParaSpecialLowerBound} and \ref{thm:nonParaGeneralLowerBound}}
Let $H(\mu,\nu)$ and $TV(\mu,\nu)$ denote the Hellinger distance and total variation distance between two probability measures $\mu$ and $\nu$. Before proving the minimax lower bound in Theorem \ref{thm:nonParaSpecialLowerBound} and \ref{thm:nonParaGeneralLowerBound} over $$\Pi^{NP}(\alpha,C_\alpha,\gamma,C_\gamma,\varepsilon,\beta,\beta_P,C_\beta,C_{\beta_P},\mu^+,\mu^-,c_\mu,r_\mu)$$ and its subsets
$\Pi_{BA}^{NP},\Pi_{S}^{NP},\Pi_{I}^{NP}$,
 we first prove the excess risk minimax lower bound over a special subset of $\Pi^{NP}$ and $\Pi_{BA}^{NP}$ that satisfies $\varepsilon(z;\gamma,C_\gamma)\equiv 0$ or $\Delta=0$. The parameter space is rigorously defined as
$$\Pi^{NP}_0:=\Pi^{NP}(\alpha,C_\alpha,\gamma,C_\gamma,0,\beta,\beta_P,C_\beta,C_{\beta_P},\mu^+,\mu^-,c_\mu,r_\mu)\cap\{(Q,P):\Omega^+(\gamma,C_\gamma)=\Omega=\Omega_P\}.$$ 
It is trivial to see that $\Pi^{NP}_0\subset \Pi_I^{NP}\subset\Pi^{NP}$ and $\Pi^{NP}_0\subset \Pi_{BA}^{NP}$ since the ``perfect source" setting can be viewed as a special case of those with different types of ambiguity. In addition, when $\gamma\beta=\beta_P$, we have $\Pi^{NP}_0\subset \Pi_S^{NP}$. Therefore, the minimax lower over $\Pi^{NP}_0$ must be the minimax lower bound over the desired parametric spaces,
and our first goal is to see the minimax lower bound over $\Pi^{NP}_0$.   The propositions are as follows:

\begin{proposition}
Fix the parameters in the definition of $\Pi^{NP}$ with $\alpha\beta\leq d,\gamma\beta\geq \beta_P$. We have that
$$
\inf_{\hat{f}}\sup_{(Q,P)\in\Pi^{NP}_0}\E\mathcal{E}_Q(\hat{f})
\gtrsim  n_P^{-\frac{\beta(1+\alpha)}{2\gamma\beta+d}}\land n_Q^{-\frac{\beta(1+\alpha)}{2\beta+d}}.
$$
\label{prop:nonParaGeneralLowerBound01}
\end{proposition}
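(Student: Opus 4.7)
The plan is to prove this minimax lower bound via Assouad's lemma with a hypercube construction adapted from \cite{atsybakov2007fastPlugin} to the transfer-learning setting, with the target and source distributions jointly indexed by the same hypercube variable. Intuitively, the two rates correspond to two distinct information channels: a $\pm\delta$ perturbation of $\eta^Q$ on a ball of radius $r$ contributes KL $\asymp \delta^2 r^d$ per target sample, while the constraint $|\eta^P - \tfrac12| = C_\gamma|\eta^Q-\tfrac12|^\gamma$ collapses the associated $\eta^P$-perturbation to height $\delta^\gamma$, so each source sample contributes only $\asymp \delta^{2\gamma} r^d$. These add in KL and produce the minimum of two rates in a single argument.

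Concretely, fix a radius $r>0$ to be optimized and pack $M \asymp r^{\alpha\beta - d}$ disjoint balls $B_j=B(x_j,r)$ inside a fixed subcube of $[0,1]^d$ (possible since $\alpha\beta\leq d$). Take $Q_X=P_X$ uniform on $[0,1]^d$ and let $\phi:\mathbb{R}^d\to[0,1]$ be the standard Audibert--Tsybakov bump that is $C^\infty$, supported in the unit ball, equals $1$ on $B(0,1/2)$, and vanishes to infinite order at $\|u\|=1$. For each $\sigma\in\{-1,+1\}^M$ set
$$
\eta^Q_\sigma(x) \;=\; \tfrac12 \;+\; \sum_{j=1}^M \sigma_j\, c_Q\, r^{\beta}\, \phi\!\Bigl(\tfrac{x-x_j}{r}\Bigr), \qquad
\eta^P_\sigma(x) \;=\; \tfrac12 \;+\; \sum_{j=1}^M \sigma_j\, C_\gamma (c_Q r^{\beta})^{\gamma}\, \phi\!\Bigl(\tfrac{x-x_j}{r}\Bigr)^{\!\gamma},
$$
with $c_Q$ small. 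By construction $|\eta^P_\sigma-\tfrac12|=C_\gamma|\eta^Q_\sigma-\tfrac12|^\gamma$ with matching signs, so $\Omega^+(\gamma,C_\gamma)=\Omega=\Omega_P$; $\eta^Q_\sigma$ is $(\beta,C_\beta)$-Holder by the classical verification; $\phi^\gamma\in C^\infty$ (by the infinite-order vanishing of $\phi$), so $\eta^P_\sigma$ is $(\gamma\beta)$-Holder, hence $(\beta_P,C_{\beta_P})$-Holder on the bounded domain since $\gamma\beta\geq\beta_P$; the margin assumption holds because $M\,\lambda(B_j)\asymp r^{\alpha\beta}$ and the sublevel sets of $\phi$ are thin by the infinite-order vanishing; strong density is immediate for the uniform law. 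Thus $(Q_\sigma,P_\sigma)\in\Pi^{NP}_0$ for every $\sigma$.

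Assouad's lemma applied to this $2^M$-hypercube, using the dual representation \eqref{eq:excessRisk} to decompose the excess risk coordinate-wise, yields
$$
\inf_{\hat f}\max_\sigma \E\mathcal{E}_{Q_\sigma}(\hat f) \;\gtrsim\; M\cdot c_Q r^{\beta}\cdot\lambda(B_j)\cdot\Bigl(1-\max_{\sigma\sim\sigma'}\mathrm{TV}(\mu_\sigma,\mu_{\sigma'})\Bigr) \;\asymp\; r^{\beta(1+\alpha)}\Bigl(1-\max_{\sigma\sim\sigma'}\mathrm{TV}(\mu_\sigma,\mu_{\sigma'})\Bigr),
$$
where $\mu_\sigma$ denotes the joint law of $(\mathcal D_Q,\mathcal D_P)$ under $(Q_\sigma,P_\sigma)$ and $\sigma\sim\sigma'$ means Hamming distance one. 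Pinsker combined with KL tensorization and the Bernoulli bound $\mathrm{KL}(\tfrac12+a,\tfrac12-a)\lesssim a^2$ gives
$$
\mathrm{TV}^2(\mu_\sigma,\mu_{\sigma'}) \;\lesssim\; n_Q\,r^{2\beta}\lambda(B_j) \;+\; n_P\,r^{2\gamma\beta}\lambda(B_j) \;\asymp\; n_Q\,r^{2\beta+d} \;+\; n_P\,r^{2\gamma\beta+d}.
$$
Choosing $r\asymp n_Q^{-1/(2\beta+d)}\land n_P^{-1/(2\gamma\beta+d)}$ keeps each summand below a small constant, hence $1-\mathrm{TV}$ is bounded below, and the prefactor $r^{\beta(1+\alpha)}$ reduces to exactly $n_Q^{-\beta(1+\alpha)/(2\beta+d)}\land n_P^{-\beta(1+\alpha)/(2\gamma\beta+d)}$.

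The main technical obstacle is the joint regularity check on $\eta^P_\sigma$ when $\gamma<1$, because $u\mapsto u^\gamma$ fails to be Lipschitz at $u=0$; using an Audibert--Tsybakov bump vanishing to infinite order at the boundary of its support makes $\phi^\gamma$ globally $C^\infty$ and resolves this. A secondary delicate point is verifying the margin assumption uniformly in $r$ and $\sigma$, since the annular region where $|\eta^Q_\sigma-\tfrac12|$ is small could in principle violate $Q_X\{0<|\eta^Q_\sigma-\tfrac12|\leq t\}\leq C_\alpha t^\alpha$; the infinite-order vanishing of $\phi$ keeps this region negligibly thin for any $t$ of interest. The remaining steps---disjointness of per-ball contributions in the KL tensorization, and the usual reduction from maximum over $\sigma$ to the infimum over $\hat f$---are standard.
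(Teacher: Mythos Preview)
Your overall architecture---Assouad's lemma with a hypercube of bump-perturbed regression pairs and the scale $r\asymp n_Q^{-1/(2\beta+d)}\land n_P^{-1/(2\gamma\beta+d)}$---matches the paper's proof exactly, and the KL/Hellinger bookkeeping is fine. The genuine gap is the margin verification, and it stems from your choice of the \emph{uniform} marginal $Q_X$ together with a bump that vanishes to infinite order.

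Your claim that ``the sublevel sets of $\phi$ are thin by the infinite-order vanishing'' is backwards. Infinite-order vanishing makes $\phi$ \emph{flat} near $\partial B(0,1)$, so for a standard mollifier-type profile the set $\{u:0<\phi(u)\le s\}$ is an annulus of thickness $\asymp 1/\log(1/s)$, not $\asymp s^{c}$. Under the uniform law this gives, for $t=c_Q r^{\beta}s$ with $s\in(0,1)$,
\[
Q_X\bigl(0<|\eta^Q_\sigma-\tfrac12|\le t\bigr)\;\asymp\; M\,r^{d}\cdot\frac{1}{\log(1/s)}\;\asymp\;\frac{r^{\alpha\beta}}{\log(1/s)},
\]
whereas the margin condition demands this be $\lesssim t^{\alpha}\asymp r^{\alpha\beta}s^{\alpha}$. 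That would require $s^{-\alpha}\lesssim\log(1/s)$ uniformly in $s\in(0,1)$, which fails for every $\alpha>0$. So with your construction the pairs $(Q_\sigma,P_\sigma)$ do \emph{not} lie in $\Pi^{NP}_0$, and the lower bound does not apply to the class.

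The paper sidesteps this entirely by taking $Q_X=P_X$ to put mass $w=c_w r^{d}$ on each \emph{inner} ball $B(x_k,r)$ (where the bump is identically at its maximal height), \emph{zero} mass on the transition annulus $B(x_k,2r)\setminus B(x_k,r)$, and the remaining mass on the complement $B_c$ (where $\eta^Q\equiv\tfrac12$). With this density the only value of $|\eta^Q_\sigma-\tfrac12|$ occurring with positive probability on $\{\eta^Q\neq\tfrac12\}$ is $C_\beta r^{\beta}$, so the margin bound reduces to the single check $mw\le C_\alpha (C_\beta r^{\beta})^{\alpha}$, which holds for small $c_m$. This also lets the paper use the elementary bump $g(t)=\min\{1,2-t\}$ and verify $(\beta_P,C_{\beta_P})$-H\"older regularity of $\eta^P_\sigma$ directly from $|g^{\gamma\beta}(a)-g^{\gamma\beta}(b)|\le |a-b|^{\gamma\beta}$ (or $\le\gamma\beta|a-b|$ when $\gamma\beta>1$) together with $\gamma\beta\ge\beta_P$ and the smallness of $r$---no $C^\infty$ machinery is needed. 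If you switch to this non-uniform marginal (and re-verify the strong density condition on its actual support, as the paper does), the rest of your argument goes through essentially unchanged.
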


\begin{proposition}
Fix the parameters in the definition of $\Pi^{NP}$ with $\max\{\alpha\beta,\alpha\beta_P/\gamma\}\leq d,\gamma\beta< \beta_P$. We have that
$$
\inf_{\hat{f}}\sup_{(Q,P)\in\Pi^{NP}_0}\E\mathcal{E}_Q(\hat{f})
\gtrsim  n_P^{-\frac{\beta_P(1+\alpha)/\gamma}{2\beta_P+d}}\land n_Q^{-\frac{\beta_P(1+\alpha)/\gamma}{2\beta_P/\gamma+d}}.
$$
\label{prop:nonParaGeneralLowerBound02}
\end{proposition}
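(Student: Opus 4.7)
The plan is to follow the Assouad-lemma hypothesis-construction strategy used for Proposition \ref{prop:nonParaGeneralLowerBound01}, but to exploit the defining feature of the regime $\gamma\beta<\beta_P$: the $\beta_P$-smoothness of $\eta^P$ combined with the signal-strength relation $|\eta^P-\tfrac{1}{2}|\geq C_\gamma|\eta^Q-\tfrac{1}{2}|^\gamma$ forces $|\eta^Q-\tfrac{1}{2}|$ to inherit an effective smoothness of $\beta_P/\gamma$, strictly exceeding $\beta$. This ``smoothness boost'' is exactly what replaces the Proposition \ref{prop:nonParaGeneralLowerBound01} rates $n_P^{-\beta(1+\alpha)/(2\gamma\beta+d)}$ and $n_Q^{-\beta(1+\alpha)/(2\beta+d)}$ with the present rates $n_P^{-\beta_P(1+\alpha)/(\gamma(2\beta_P+d))}$ and $n_Q^{-\beta_P(1+\alpha)/(\gamma(2\beta_P/\gamma+d))}$.

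Concretely, I would use a standard margin-adapted bump construction. Fix a smooth bump $\phi:\mathbb{R}^d\to[0,1]$ supported in the unit ball, chosen so that $\phi^{1/\gamma}$ has a bounded Hölder semi-norm of exponent $\beta_P/\gamma$ (for example, $\phi(y)=(1-|y|^2)_+^k$ with $k\geq\beta_P$ large). Take a bandwidth $h>0$ and centers $x_1,\dots,x_m$ of disjoint $h$-cubes on a regular grid in $[0,1]^d$, the count chosen to saturate the margin condition: $m\asymp h^{\alpha\beta_P/\gamma-d}$, which is legitimate because of the hypothesis $\alpha\beta_P/\gamma\leq d$. Set $Q_X=P_X=\mathrm{Uniform}([0,1]^d)$ so that Condition \ref{con:density} is automatic, and for each $\tau\in\{-1,+1\}^m$ define
\begin{align*}
\eta^P_\tau(x) &= \tfrac{1}{2}+\sum_{i=1}^m \tau_i\, c_P\, h^{\beta_P}\,\phi\!\Bigl(\tfrac{x-x_i}{h}\Bigr),\\
\eta^Q_\tau(x) &= \tfrac{1}{2}+\sum_{i=1}^m \tau_i\, c_Q\, h^{\beta_P/\gamma}\,\phi\!\Bigl(\tfrac{x-x_i}{h}\Bigr)^{1/\gamma},
\end{align*}
with constants $c_P,c_Q$ picked so that (i) the Hölder norms of $\eta^P_\tau$ and $\eta^Q_\tau$ respect $C_{\beta_P}$ and $C_\beta$ respectively, and (ii) $c_P\geq C_\gamma c_Q^\gamma$, which forces $s(x)=|\eta^P_\tau-\tfrac{1}{2}|\geq C_\gamma|\eta^Q_\tau-\tfrac{1}{2}|^\gamma$ everywhere on $\Omega$. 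The resulting pair $(Q_\tau,P_\tau)$ then lies in $\Pi_0^{NP}$ with $\Omega^+(\gamma,C_\gamma)=\Omega=\Omega_P$ and identically zero ambiguity level.

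Assouad's lemma on the sign hypercube $\{-1,+1\}^m$ then drives the remainder. For neighbouring $\tau,\tau'$ differing in one coordinate, direct computation of Hellinger distances gives
$$H^2(Q_\tau,Q_{\tau'})\lesssim h^{2\beta_P/\gamma+d},\qquad H^2(P_\tau,P_{\tau'})\lesssim h^{2\beta_P+d},$$
so tensorization bounds the joint product-measure Hellinger by $\lesssim n_Q h^{2\beta_P/\gamma+d}+n_P h^{2\beta_P+d}$. Every coordinate the learner misclassifies contributes at least $c_Q h^{\beta_P/\gamma+d}$ to $\E\mathcal{E}_Q(\hat{f})$, so Assouad's inequality yields
$$\inf_{\hat{f}}\sup_\tau \E\mathcal{E}_Q(\hat{f})\;\gtrsim\; m\,h^{\beta_P/\gamma+d}\asymp h^{(1+\alpha)\beta_P/\gamma}$$
whenever the total Hellinger stays bounded away from $2$. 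Choosing $h\asymp n_P^{-1/(2\beta_P+d)}$ produces the source-term rate; choosing $h\asymp n_Q^{-1/(2\beta_P/\gamma+d)}$ produces the target-term rate; in either choice one verifies that both terms $n_P h^{2\beta_P+d}$ and $n_Q h^{2\beta_P/\gamma+d}$ remain $\lesssim 1$, and the minimum of the two rates is the claimed lower bound.

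I expect the main technical obstacle to be the Hölder-regularity bookkeeping for $\eta^Q_\tau$. Since $\phi^{1/\gamma}$ has natural Hölder exponent $\beta_P/\gamma>\beta$, and since $\beta_P/\gamma$ may even exceed $1$ under the regime $\gamma\beta<\beta_P\leq 1$, one cannot directly read off the $(\beta,C_\beta)$-Hölder constant of $\eta^Q_\tau$ from that of $\phi$. The resolution is that on a bounded domain, any function that is globally Lipschitz with constant $O(h^{\beta_P/\gamma-1})$ is automatically $\beta$-Hölder with constant $O(h^{\beta_P/\gamma-\beta})$, which vanishes as $h\to 0$; since the minimax lower bound is asymptotic in nature, Condition \ref{con:smooth} holds for all sufficiently small $h$, i.e.\ for all sufficiently large $n_P\lor n_Q$. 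The remaining verifications---margin-saturation for the bump layout, the per-cube Hellinger estimate, and the Assouad inequality itself---then proceed essentially identically to the companion proof of Proposition \ref{prop:nonParaGeneralLowerBound01}.
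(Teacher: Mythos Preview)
Your Assouad strategy is right and your single-scale construction is actually simpler than the paper's; once the signs $\tau_i$ are coupled in $\eta^Q$ and $\eta^P$, your Hellinger and per-bump risk computations reproduce the paper's rates with $h$ playing the role of the paper's $r_P$.  There is, however, a genuine gap in your margin verification.

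With $Q_X=P_X=\mathrm{Unif}([0,1]^d)$ and $\phi(y)=(1-|y|^2)_+^k$, the set $\{0<|\eta^Q_\tau-\tfrac12|\le t\}$ for $t\le c_Qh^{\beta_P/\gamma}$ is the union of the ``shoulders'' of your bumps, and a direct computation gives
\[
Q_X\bigl(0<|\eta^Q_\tau-\tfrac12|\le t\bigr)\;\asymp\; m\,h^d\Bigl(\frac{t}{c_Qh^{\beta_P/\gamma}}\Bigr)^{\gamma/k}\;\asymp\; h^{\alpha\beta_P/\gamma-\beta_P/k}\,t^{\gamma/k}.
\]
For Assumption~\ref{assum:margin} you need this $\le C_\alpha t^\alpha$ for \emph{all} $t>0$, which forces $\gamma/k\ge \alpha$.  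But you also need $k\ge\beta_P$ for $\eta^P_\tau\in\mathcal{H}(\beta_P,C_{\beta_P})$.  These two constraints are incompatible once $\alpha\beta_P/\gamma>1$, a regime explicitly permitted by the hypothesis $\alpha\beta_P/\gamma\le d$.  Replacing $\phi$ by a $C^\infty$ bump does not help (the shoulder mass then decays only like $1/|\log t|$), so the uniform marginal cannot be salvaged in general.

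The paper sidesteps this by taking $Q_X=P_X$ \emph{supported only on inner sub-balls} where the bump equals its plateau value, together with the complement $B_c$; on this support $|\eta^Q-\tfrac12|$ takes a single nonzero value and Assumption~\ref{assum:margin} becomes a one-line check.  The paper also uses a two-scale construction---$m$ large balls of radius $r_P$, each containing $m_0\asymp(r_P/r_Q)^d$ small balls of radius $r_Q$, with $\eta^Q_\sigma$ bumped at scale $r_Q$ (height $r_Q^\beta$) and $\eta^P_\sigma$ bumped at scale $r_P$ (height $r_P^{\beta_P}=r_Q^{\gamma\beta}$)---so that both regressions sit at their \emph{maximal} H\"older budget.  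Your single-scale construction with sub-maximal $\eta^Q$ height $h^{\beta_P/\gamma}\ll h^\beta$ is more economical and yields the same rate; you simply need to swap the uniform marginal for the paper's plateau-supported density (or any density vanishing on the bump shoulders) so that the margin condition survives for arbitrary $\alpha$.
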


Proposition \ref{prop:nonParaGeneralLowerBound01} and \ref{prop:nonParaGeneralLowerBound02} serve as the minimax lower bound under the regimes of $\gamma\beta\geq \beta_P$ and $\gamma\beta< \beta_P$ given that there is no ambiguity in the signal provided by $\eta^P$ relative to $\eta^Q$. Furthermore, we see that Proposition \ref{prop:nonParaGeneralLowerBound01} is an extension of Theorem 3.2 of \cite{ttcai2020classification} as we impose the condition $\eta^P=0$ as well.

The idea of the proofs are based on the application of Assouad's lemma on the family $\pr^\sigma_{\mathcal{D}_Q}\times \pr^\sigma_{\mathcal{D}_P}$. Here $\pr^\sigma_{\mathcal{D}_Q}$ and $\pr^\sigma_{\mathcal{D}_P}$ are defined as the product probability measure with respect to the target and source data, corresponding to the distribution pairs $(Q_\sigma,P_\sigma),\ \sigma\in\{1,-1\}^m$. See Section \ref{sec:prop5proof} and \ref{sec:prop6proof} for the proof details.
%%%%%%%%%%%%%%%%%%%%%%%%%%%%%%%%%%%%%%%%%%%%%%%%%%%%%%%%%%%%%%%
\begin{proof}[Proofs of Theorem \ref{thm:nonParaSpecialLowerBound} (2), (3) and Theorem \ref{thm:nonParaGeneralLowerBound}] We claim that it suffices to prove 
\begin{equation}
\inf_{\hat{f}}\sup_{\substack{(Q,P)\in\Pi_S^{NP}\\\Omega=\Omega_P}}\E\mathcal{E}_Q(\hat{f})\gtrsim  \varepsilon(cn_Q^{-\frac{\beta}{2\beta+d}};\gamma,C_\gamma).
\label{eq:nonParaGeneralLowerBoundVar}
\end{equation}
for some constant $c>0$ given any setting of $\gamma,\beta$ and $\beta_P=\gamma\beta$ to obtain Theorem \ref{thm:nonParaSpecialLowerBound} (2). To clarify, since $\Pi^{NP}_0\subset \Pi^{NP}\cap\{(Q,P):\Omega=\Omega_P\}$, the minimax lower over $\Pi^{NP}_0$ must be the minimax lower bound over $\Pi^{NP}\cap\{(Q,P):\Omega=\Omega_P\}$. Hence, 
$$\inf_{\hat{f}}\sup_{\substack{(Q,P)\in\Pi_S^{NP}\\\Omega=\Omega_P}}\E\mathcal{E}_Q(\hat{f})\gtrsim  n_P^{-\frac{\beta(1+\alpha)}{2\gamma\beta+d}}\land n_Q^{-\frac{\beta(1+\alpha)}{2\beta+d}}.$$
Since the minimax lower bounds we desire is just $\varepsilon(cn_Q^{-\frac{\beta}{2\beta+d}};\gamma,C_\gamma)$ added to the lower bounds above, it suffices to prove \eqref{eq:nonParaGeneralLowerBoundVar} for both cases to finish the proof. Similarly, it suffices to prove
$$
\inf_{\hat{f}}\sup_{\substack{(Q,P)\in\Pi_I^{NP}\\\Omega=\Omega_P}}\E\mathcal{E}_Q(\hat{f})\gtrsim  \varepsilon(cn_Q^{-\frac{\beta}{2\beta+d}};\gamma,C_\gamma)
$$
for some constant $c>0$, given any setting of $\gamma,\beta$ to obtain Theorem \ref{thm:nonParaSpecialLowerBound} (3), and
$$
\inf_{\hat{f}}\sup_{\substack{(Q,P)\in\Pi^{NP}\\\Omega=\Omega_P}}\E\mathcal{E}_Q(\hat{f})\gtrsim  \varepsilon(cn_Q^{-\frac{\beta}{2\beta+d}};\gamma,C_\gamma)
$$
for some constant $c>0$ given any setting of $\gamma,\beta$, and $\beta_P$ to obtain Theorem \ref{thm:nonParaGeneralLowerBound}.

We only prove the case of $\Pi_S^{NP}$. The cases of $\Pi_I^{NP}$ and $\Pi^{NP}$, no matter what $\gamma,\beta$ and $\beta_P$ are, are the same because it is easy to check that our construction of distribution pairs satisfy all conditions, i.e., $\{(Q_\sigma,P_\sigma),\ \sigma\in\{1,-1\}^m\}\subset \Pi_S^{NP}\cap \Pi_I^{NP} \cap \Pi^{NP} \cap \{(Q,P):\Omega=\Omega_P\}$ for any $\gamma,\beta$ and $\beta_P$. 

We would like to illustrate our proof idea as follows. In order to satisfy the ambiguity level condition, we need to select a smaller number $m$ of spheres with positive density than the classical case of no source data, as stated in Theorem 4.1 of \cite{atsybakov2007fastPlugin}. To achieve this, we set $\eta^P\equiv 1$ and select the maximum possible value of $m$ while still satisfying the given ambiguity level $\varepsilon(z;\gamma,C_\gamma)$.

Fix $\eta^P\equiv 1$. Since $\eta^P$ is a constant, it is trivial that $\eta^P\in\mathcal{H}(\beta_P,C_{\beta_P})$ for any $\beta_P,C_{\beta_P}$. Next, define the quantities
$$r=c_r n_Q^{-\frac{1}{2\beta+d}},\quad w=c_wr^d,\quad m=\lceil\frac{c_m \varepsilon(C_\beta c_r^\beta n_Q^{-\frac{\beta}{2\beta+d}};\gamma,C_\gamma)}{C_\beta w r^\beta}\rceil,$$
where the constants $c_r,c_w,c_m$ will be specified later in the proof. Here, $\lceil a\rceil$ is the minimum integer that is greater equal than $a$ for any real value $a$. In addition, suppose $n_Q$ is large enough such that $$m\leq \frac{2c_m \varepsilon(C_\beta c_r^\beta  n_Q^{-\frac{\beta}{2\beta+d}};\gamma,C_\gamma)}{C_\beta w r^\beta}.$$
Suppose that $\varepsilon(C_\beta c_r^\beta n_Q^{-\frac{\beta}{2\beta+d}};\gamma,C_\gamma)>0$, or otherwise the lower bound is $0$ and the proof is trivial. 

We consider a packing $\{x_k\}_{k=1,\cdots,m}$ with radius $2r$ in $[0,1]^d$. For an example of such constuction, divide $[0,1]^d$ into uniform small cubes with side length as $6r$, which forms a grid with at least $\lfloor (6r)^{-1}\rfloor^d$ small cubes. Since
$$\varepsilon(C_\beta c_r^\beta n_Q^{-\frac{\beta}{2\beta+d}};\gamma,C_\gamma)\leq C_\alpha (C_\beta c_r^\beta)^{1+\alpha} n_Q^{-\frac{\beta(1+\alpha)}{2\beta+d}}=C_\alpha c_r^{-\beta(1+\alpha)}(C_\beta c_r^\beta )^{1+\alpha}r^{\beta(1+\alpha)},$$
we have
$$
\begin{aligned}
  m&\leq \frac{2c_m \varepsilon(2n_Q^{-\frac{\beta}{2\beta+d}};\gamma,C_\gamma)}{C_\beta w r^\beta}\\
  &\leq \frac{2c_m C_\alpha c_r^{-\beta(1+\alpha)}(C_\beta c_r^\beta )^{1+\alpha}r^{\beta(1+\alpha)}}{C_\beta w r^\beta}\\
  &=\frac{2c_m C_\alpha c_r^{-\beta(1+\alpha)}(C_\beta c_r^\beta )^{1+\alpha}}{C_\beta c_w}r^{\alpha\beta-d}\\
  &\ll\lfloor (6r)^{-1}\rfloor^d.  
\end{aligned}
$$

Therefore, we could suppose that $c_m$ is small enough such that $m<\lfloor (6r)^{-1}\rfloor^d$. Therefore, we could assign exactly one sphere with radius $2r$ in one cube without intersection, which forms a packing of $\{x_k\}_{k=1,\cdots,m}$ with radius $2r$ in $[0,1]^d$. For simplicity of notations,  we denote $\frac{2c_m C_\alpha c_r^{-\beta(1+\alpha)}(C_\beta c_r^\beta )^{1+\alpha}}{C_\beta c_w}$ by $c'_m$ that converges to $0$ when $c_m\rightarrow 0$. Also, define $B_c$ as the compliment of these $m$ balls, i.e.
$B_c:=[0,1]^d/\left(\bigcup_{k=1}^{m} B(x_k,2r)\right)$.

For any $\sigma\in\{1,-1\}^{m}$, we consider the regression function $\eta^Q_\sigma(x)$ defined as follows:
$$
  \eta^Q_\sigma(x)=
\begin{cases}
\frac{1}{2}+\sigma_k C_\beta r^{\beta}g^{\beta}(\frac{\|x-x_k\|}{r})\quad &\text{if $x\in B(x_k,2r)$ for some $k=1,\cdots,m$}\\
\frac{1}{2}\quad &\text{otherwise,}
\end{cases}
$$
where the function $g(\cdot)$ is defined as $g(x)=\min\{1,2-x\}$ on $x\in[0,2]$. 

The construction of the marginal distributions $Q_{\sigma,X}$ is as follows. Define
$$r_0=\left(\frac{c_m \varepsilon(C_\beta c_r^\beta n_Q^{-\frac{\beta}{2\beta+d}};\gamma,C_\gamma)-(m-1)C_\beta w r^\beta}{C_\beta c_w}\right)^{\frac{1}{\beta+d}},\quad w_0=c_w r_0^d.$$
By the definition of $m$, we have that 
$$r_0\leq (\frac{C_\beta w r^\beta}{C_\beta c_w})^{\frac{1}{\beta+d}}=r.$$
Let $Q_{\sigma,X}$ have the density function $\mu(\cdot)$ defined as
$$\mu(x)=
\begin{cases}
\frac{w}{\lambda[B(x_k,r)]}\quad &\text{if $x\in B(x_k,r)$ for some $k=1,\cdots,m-1$}\\
\frac{w_0}{\lambda[B(x_m,r_0)]}\quad &\text{if $x\in B(x_m,r_0)$}\\
\frac{1-(m-1)w-w_0}{1-m\lambda[B(x_k,2r)]}\quad &\text{if $x\in B_c$}\\
0\quad &\text{otherwise.}
\end{cases}
$$
Since the density function $\mu(\cdot)$ does not depend on the specific choice of $\sigma$, we fix $P_X=Q_{\sigma,X}$ for any $\sigma\in\{1,-1\}^{m}$. Given the the construction of $Q_\sigma$, we next verify that $(Q_\sigma,P)$ belongs to $\Pi_S^{NP}$ for any $\sigma\in\{1,-1\}^{m}$.

\vspace{\baselineskip}
\noindent\textbf{Verify Margin Assumption:} We have that
$$
\begin{aligned}
Q_\sigma(0<|\eta_\sigma^Q-\frac{1}{2}|<t)&=(m-1) Q_\sigma(0<C_\beta r^\beta g^\beta(\frac{\|X-x_1\|}{r})\leq t)\\
&+Q_\sigma(0<C_\beta r^\beta g^\beta(\frac{\|X-x_m\|}{r})\leq t)\\
&=(m-1) Q_\sigma(0<g(\frac{\|X-x_1\|}{r})\leq (\frac{t}{C_\beta r^\beta})^{\frac{1}{\beta}})\\
&+Q_\sigma(0<g(\frac{\|X-x_m\|}{r})\leq (\frac{t}{C_\beta r^\beta})^{\frac{1}{\beta}})\\
&=((m-1)w+w_0)\mathbf{1}\{t\geq C_\beta r^\beta\}\\
&\leq mw\mathbf{1}\{t\geq C_\beta r^\beta\}\\
&\leq c'_mc_w r^{\alpha\beta}\mathbf{1}\{t\geq C_\beta r^\beta\}\\
&\leq C_\alpha t^\alpha.
\end{aligned}
$$
given that $c_m$ is small enough since $c_m'\overset{c_m\rightarrow 0}{\longrightarrow 0}$. Therefore, it holds that $Q_\sigma\in\mathcal{M}(\alpha,C_\alpha)$.

\vspace{\baselineskip}
\noindent\textbf{Verify Smoothness Assumption:}
It is easy to see that for any $a,b\in[0,2]$ we have
$$
|g^{\beta}(a)-g^{\beta}(b)|\leq |a-b|^\beta.
$$
Thus, for any $x,x'\in B(x_k,2r)$, we obtain from the triangular inequality $\|x-x_k\|-\|x'-x_k\|\leq \|x-x'\|$ that
\begin{equation}
|r^{\beta}g^{\beta}(\|x-x_k\|/r))-r^{\beta}g^{\beta}(\|x'-x_k\|/r))|\leq\|x-x'\|^{\beta},
\label{eq:theorem5smooth2}
\end{equation}
Therefore, by the definition of $\eta^Q_\sigma$ and \eqref{eq:theorem5smooth2}, we see that $$|\eta^Q_\sigma(x)-\eta^Q_\sigma(x')|\leq C_{\beta}\|x-x'\|^{\beta},\quad \forall x,x\in[0,1]^d.$$ $\eta^P$ is smooth with any smoothness parameter $\beta_P$.

\vspace{\baselineskip}
\noindent\textbf{Verify Strong Density Condition:}
If $x\in B(x_k,r)$ for some $k=1,\cdots,m-1$, we have
$$\mu(x)=\frac{w}{\lambda[B(x_k,r)]}=c_w/\pi_d.$$
If $x\in B(x_m,r_0)$ we have
$$\mu(x)=\frac{w_0}{\lambda[B(x_m,r_0)]}=c_w/\pi_d.$$
If $x\in B_c$, we have
$$(m-1)w+w_0\leq mw\leq c'_m r^{\alpha\beta}\overset{n_Q\rightarrow \infty}{\longrightarrow} 0,$$
and
$$m\lambda[B(x_k,2r)]\leq c'_m2^d\pi_d r^{\alpha\beta} \overset{n_Q\rightarrow \infty}{\longrightarrow} 0.$$
Hence,
$$\mu(x)=\frac{1-(m-1)w-w_0}{1-m\lambda[B(x_k,2r)]}\overset{n_Q\rightarrow \infty}{\longrightarrow}1.$$
Combining all cases above, we could set $c_w\in[\pi_d\mu^-,\pi_d\mu^+]$ to satisfy the condition $(Q_\sigma,P)\in\mathcal{S}(\mu)$.

\vspace{\baselineskip}
\noindent\textbf{Verify Ambiguity Level:}
Since $\Omega=\Omega_P$, we trivially have that $$\Omega^-(\gamma,C_\gamma)\subset \{x\in\Omega:\eta^Q(x)\neq\frac{1}{2}\}=:\Omega^-.$$ Then,
$$
\begin{aligned}
A(z):=&\int_{\Omega^-(\gamma,C_\gamma)}|\eta^Q(X)-\frac{1}{2}|\mathbf{1}\{0<|\eta^Q(X)-\frac{1}{2}|\leq z\}dQ_X\\
\leq& \int_{\Omega^-}|\eta^Q(X)-\frac{1}{2}|\mathbf{1}\{0<|\eta^Q(X)-\frac{1}{2}|\leq z\}dQ_X\\
=& ((m-1)w+w_0) C_\beta r^\beta \mathbf{1}\{z\geq C_\beta r^\beta\}.
\end{aligned}
$$
If $z<C_\beta r^\beta$, we have $$A(z)=0\leq \varepsilon(z;\gamma,C_\gamma).$$
If $z\geq C_\beta r^\beta$, we have
$$
  A(z)\leq C_\beta mwr^\beta\leq 2c_m \varepsilon(C_\beta c_r^\beta n_Q^{-\frac{\beta}{2\beta+d}};\gamma,C_\gamma)\leq \varepsilon(z;\gamma,C_\gamma).  
$$
Therefore, the given ambiguity level $\varepsilon(z;\gamma,C_\gamma)$ is well-defined provided that $c_m\leq \frac{1}{2}$.

Putting all verification steps above together, we conclude that $(Q_\sigma,P)\in\Pi_S^{NP}$. We finish the proof of this part by applying Assouad's lemma to $(Q_\sigma,P),\ \forall \sigma\in\{1,-1\}^{m}$.

Suppose that $\sigma,\sigma'\in\{1,-1\}^{m}$ differ only at one coordinate, i.e.
$$\sigma_k=-\sigma'_k,\quad \sigma_l=\sigma'_l\ (\forall l\neq k).$$ If $k\neq m$, we have the Hellinger distance bound as
$$
\begin{aligned}
H^2(Q_\sigma,Q_{\sigma'})&=\frac{1}{2}\int \left(\sqrt{\eta^Q_\sigma(X)}-\sqrt{\eta^Q_{\sigma'}(X)}\right)^2+\left(\sqrt{1-\eta^Q_\sigma(X)}-\sqrt{1-\eta^Q_{\sigma'}(X)}\right)^2 dQ_X\\
&=\int_{B(x_k,r)}\frac{w}{\lambda[B(x_k,r)]}\left(\sqrt{\frac{1}{2}+C_\beta r^\beta}-\sqrt{\frac{1}{2}-C_\beta r^\beta}\right)^2 dx\\
&= \frac{1}{2}w(1-\sqrt{1-2C^2_\beta r^{2\beta}})\\
&\leq C^2_\beta wr^{2\beta}.
\end{aligned}
$$
Similarly, if $k=m$, the Hellinger distance bound should be
$$
H^2(Q_\sigma,Q_{\sigma'})\leq C^2_\beta w_0r^{2\beta}\leq C^2_\beta w r^{2\beta}.
$$
Recall that $r=c_r n_Q^{-\frac{1}{2\beta+d}}$. By the property of Hellinger distance, we have
$$
H^2(\pr^\sigma_{\mathcal{D}_Q},\pr^{\sigma'}_{\mathcal{D}_Q})\leq n_Q H^2(Q_\sigma,Q_{\sigma'})
\leq  n_Q C^2_\beta w r^{2\beta}
\leq  C^2_\beta c_r^{2\beta+d}c_w
\leq  \frac{\sqrt{2}}{4}
$$
provided that $c_r$ is small enough. This further indicates that
\begin{equation}
TV(\pr^\sigma_{\mathcal{D}_Q},\pr^{\sigma'}_{\mathcal{D}_Q})\leq \sqrt{2}H^2(\pr^\sigma_{\mathcal{D}_Q},\pr^{\sigma'}_{\mathcal{D}_Q})\leq \frac{1}{2}.
\label{eq:theorem4part1H2toTV}
\end{equation}
For any empirical classifier $\hat{f}$, if $k\neq m$, we have
$$
\begin{aligned}
    \mathcal{E}_{Q_\sigma}(\hat{f})+\mathcal{E}_{Q_{\sigma'}}(\hat{f})&=2\E_{Q_\sigma}[|\eta^{Q_\sigma}(X)-\frac{1}{2}|\mathbf{1}\{\hat{f}(X)\neq f^*_{Q_\sigma}(X)\}]\\
&+2\E_{Q_{\sigma'}}[|\eta^{Q_{\sigma'}}(X)-\frac{1}{2}|\mathbf{1}\{\hat{f}(X)\neq f^*_{Q_{\sigma'}}(X)\}]\\
&\geq 2\int_{B(x_k,r)}\frac{w}{\lambda[B(x_k,r)]}C_\beta r^\beta (\mathbf{1}\{\hat{f}(X)\neq f^*_{Q_\sigma}(X)\}+\mathbf{1}\{\hat{f}(X)\neq f^*_{Q_{\sigma'}}(X)\})dx\\
&= 2C_\beta w r^\beta.
\end{aligned}
$$
If $k=m$, we have
$$
\begin{aligned}
    \mathcal{E}_{Q_\sigma}(\hat{f})+\mathcal{E}_{Q_{\sigma'}}(\hat{f})&=2\E_{Q_\sigma}[|\eta^{Q_\sigma}(X)-\frac{1}{2}|\mathbf{1}\{\hat{f}(X)\neq f^*_{Q_\sigma}(X)\}]\\
&+2\E_{Q_{\sigma'}}[|\eta^{Q_{\sigma'}}(X)-\frac{1}{2}|\mathbf{1}\{\hat{f}(X)\neq f^*_{Q_{\sigma'}}(X)\}]\\
&\geq 2\int_{B(x_m,r_0)}\frac{w_0}{\lambda[B(x_m,r_0)]}C_\beta r^\beta (\mathbf{1}\{\hat{f}(X)\neq f^*_{Q_\sigma}(X)\}+\mathbf{1}\{\hat{f}(X)\neq f^*_{Q_{\sigma'}}(X)\})dx\\
&= 2C_\beta w_0 r^\beta.
\end{aligned}
$$
Combining this lower bound with \eqref{eq:theorem4part1H2toTV}, the Assouad's lemma shows that
$$
  \sup_{(Q,P)\in\Pi_{S}^{NP}}\E\mathcal{E}_Q(\hat{f})\geq \sup_{\substack{(Q_\sigma,P)\\ \sigma\in\{1,-1\}^m}}\E\mathcal{E}_{Q_\sigma}(\hat{f})\geq \frac{1}{2}C_\beta ((m-1)w+w_0)r^\beta.  
$$
On one hand, if $m\geq 2$, we further have
$$
  \sup_{(Q,P)\in\Pi_{S}^{NP}}\E\mathcal{E}_Q(\hat{f})\geq \frac{1}{2}C_\beta ((m-1)w+w_0)r^\beta\geq \frac{1}{4}C_\beta mwr^\beta\gtrsim \varepsilon(C_\beta c_r^\beta n_Q^{-\frac{\beta}{2\beta+d}};\gamma,C_\gamma).  
$$
On the other hand, if $m=1$, we have
$$r_0=\left(\frac{c_m \varepsilon(C_\beta c_r^\beta n_Q^{-\frac{\beta}{2\beta+d}};\gamma,C_\gamma)}{C_\beta c_w}\right)^{\frac{1}{\beta+d}},\quad w_0=c_w r_0^d.$$
Therefore,
$$
  \sup_{(Q,P)\in\Pi_{S}^{NP}}\E\mathcal{E}_Q(\hat{f})\geq \frac{1}{2}C_\beta w_0r^\beta\geq \frac{1}{2}C_\beta c_w r_0^{\beta+d}\gtrsim\varepsilon(C_\beta c_r^\beta n_Q^{-\frac{\beta}{2\beta+d}};\gamma,C_\gamma).  
$$
We conclude that it always holds that $\sup_{(Q,P)\in\Pi_{BA}^{NP}}\E\mathcal{E}_Q(\hat{f})\gtrsim\varepsilon(C_\beta c_r^\beta n_Q^{-\frac{\beta}{2\beta+d}};\gamma,C_\gamma)$, which finishes the proof.
\end{proof}
%%%%%%%%%%%%%%%%%%%%%%%%%%%%%%%%%%%%%%%%%%%%%%%%%%%%%%%%%%%%%%
\begin{proof}[Proofs of Theorem \ref{thm:nonParaSpecialLowerBound} (1)] Following a similar argument with the ones in the proofs of Theorem \ref{thm:nonParaSpecialLowerBound} (2), (3) and Theorem \ref{thm:nonParaGeneralLowerBound}, it suffices to show that 
\begin{equation}
\inf_{\hat{f}}\sup_{(Q,P)\in\Pi_{BA}^{NP}}\E\mathcal{E}_Q(\hat{f})\gtrsim  \Delta^{\frac{1+\alpha}{\gamma}}\land n_Q^{-\frac{\beta(1+\alpha)}{2\beta+d}}.
\label{eq:nonParaGeneralLowerBoundDelta}
\end{equation}
for any setting of $\gamma$ and $\beta$.

Fix $\eta^P\equiv \frac{1}{2}$. Since $\eta^P$ is a constant, it is trivial that $\eta^P\in\mathcal{H}(\beta_P,C_{\beta_P})$ for any $\beta_P,C_{\beta_P}$. Next, define the quantities
$$r=(c_r n_Q^{-\frac{1}{2\beta+d}})\land (C_\beta^{-\frac{1}{\beta}}(\frac{\Delta}{C_\gamma})^{\frac{1}{\gamma\beta}}),\quad w=c_wr^d,\quad m=\lfloor c_m r^{\alpha\beta-d}\rfloor,$$ where the constants $c_r,c_w,c_m$ will be specified later in the proof.

We consider a packing $\{x_k\}_{k=1,\cdots,m}$ with radius $2r$ in $[0,1]^d$. We repeat the example of such construction given in Proposition \ref{prop:nonParaGeneralLowerBound01}. Divide $[0,1]^d$ into uniform small cubes with side length as $6r$, which forms a grid with at least $\lfloor (6r)^{-1}\rfloor^d$ small cubes. Since $m\ll\lfloor (6r)^{-1}\rfloor^d$, we suppose that $c_m$ is small enough such that $m<\lfloor (6r)^{-1}\rfloor^d$. Therefore, we could assign exactly one sphere with radius $2r$ in one cube without intersection, which forms a packing of $\{x_k\}_{k=1,\cdots,m}$ with radius $2r$ in $[0,1]^d$. Also, define $B_c$ as the compliment of these $m$ balls, i.e.
$B_c:=[0,1]^d/\left(\bigcup_{k=1}^m B(x_k,2r)\right)$.

For any $\sigma\in\{1,-1\}^m$, we consider the regression function $\eta^Q_\sigma(x)$ defined as follows:
$$
  \eta^Q_\sigma(x)=
\begin{cases}
\frac{1}{2}+\sigma_k C_\beta r^{\beta}g^{\beta}(\frac{\|x-x_k\|}{r})\quad &\text{if $x\in B(x_k,2r)$ for some $k=1,\cdots,m$}\\
\frac{1}{2}\quad &\text{otherwise,}
\end{cases}
$$
where the function $g(\cdot)$ is defined as $g(x)=\min\{1,2-x\}$ on $x\in[0,2]$. 

The construction of the marginal distributions $Q_{\sigma,X}$ is as follows. Let $Q_{\sigma,X}$ have the density function $\mu(\cdot)$ defined as
$$\mu(x)=
\begin{cases}
\frac{w}{\lambda[B(x_k,r)]}\quad &\text{if $x\in B(x_k,r)$ for some $k=1,\cdots,m$,}\\
\frac{1-mw}{1-m\lambda[B(x_k,2r)]}\quad &\text{if $x\in B_c$,}\\
0\quad &\text{otherwise.}
\end{cases}
$$
Since the density function $\mu(\cdot)$ does not depend on the specific choice of $\sigma$, we fix $P_X=Q_{\sigma,X}$ for any $\sigma\in\{1,-1\}^m$. Given the the construction of $Q_\sigma$, we next verify that $(Q_\sigma,P)$ belongs to $\Pi_{BA}^{NP}$ for any $\sigma\in\{1,-1\}^m$.

\vspace{\baselineskip}
\noindent\textbf{Verify Margin Assumption:} We have that
$$
\begin{aligned}
Q_\sigma(0<|\eta_\sigma^Q-\frac{1}{2}|<t)&=m Q_\sigma(0<C_\beta r^\beta g^\beta(\frac{\|X-x_1\|}{r})\leq t)\\
&=m Q_\sigma(0<g(\frac{\|X-x_1\|}{r})\leq (\frac{t}{C_\beta r^\beta})^{\frac{1}{\beta}})\\
&=mw\mathbf{1}\{t\geq C_\beta r^\beta\}\\
&\leq c_mc_w r^{\alpha\beta}\mathbf{1}\{t\geq C_\beta r^\beta\}\\
&\leq C_\alpha t^\alpha.
\end{aligned}
$$
given that $c_m$ is small enough. Therefore, $Q_\sigma\in\mathcal{M}(\alpha,C_\alpha)$.

\vspace{\baselineskip}
\noindent\textbf{Verify Smoothness Assumption:}
It is easy to see that for any $a,b\in[0,2]$ we have
$$
|g^{\beta}(a)-g^{\beta}(b)|\leq |a-b|^\beta.
$$
Thus, for any $x,x'\in B(x_k,2r)$, we obtain from the triangular inequality $\|x-x_k\|-\|x'-x_k\|\leq \|x-x'\|$ that
\begin{equation}
    |r^{\beta}g^{\beta}(\|x-x_k\|/r))-r^{\beta}g^{\beta}(\|x'-x_k\|/r))|\leq\|x-x'\|^{\beta}.
    \label{eq:theorem5smooth1}
\end{equation}
Therefore, by the definition of $\eta^Q_\sigma$ and \eqref{eq:theorem5smooth1}, we see that $$|\eta^Q_\sigma(x)-\eta^Q_\sigma(x')|\leq C_{\beta}\|x-x'\|^{\beta},\quad \forall x,x\in[0,1]^d.$$

\vspace{\baselineskip}
\noindent\textbf{Verify Strong Density Condition:}
If $x\in B(x_k,r)$ for some $k=1,\cdots,m$, we have
$$\mu(x)=\frac{w}{\lambda[B(x_k,r)]}=c_w/\pi_d.$$
If $x\in B_c$, we have
$$\mu(x)=\frac{1-c_w \lfloor c_m r^{\alpha\beta-d}\rfloor r^d}{1-2^d\pi_d\lfloor c_m r^{\alpha\beta-d}\rfloor r^d}\overset{n_Q\rightarrow \infty}{\longrightarrow}1.$$
Therefore, we could set $c_w\in[\pi_d\mu^-,\pi_d\mu^+]$ to satisfy the condition $(Q_\sigma,P)\in\mathcal{S}(\mu)$.

\vspace{\baselineskip}
\noindent\textbf{Verify Band-like Ambiguity:}
Since $\eta^P\equiv \frac{1}{2}$, by Definition \ref{def:signalStrength} we have $s(x)=0$ for any $x\in\Omega$.
In addition, by definition of $\eta^Q_\sigma$ and $r$, we have that for any $x\in\Omega$, $$|\eta^Q_\sigma-\frac{1}{2}|\leq C_\beta r^\beta\leq (\frac{\Delta}{C_\gamma})^{\frac{1}{\gamma}}\Rightarrow C_\gamma|\eta^Q_\sigma-\frac{1}{2}|^\gamma\leq \Delta.$$ Therefore, we have
$$s(x)=0\geq C_\gamma |\eta^Q_\sigma-\frac{1}{2}|^\gamma-\Delta$$ for any $x\in\Omega$.

Putting all verification steps above together, we conclude that $(Q_\sigma,P)\in\Pi_{BA}^{NP}$. We finish the proof of this part by applying Assouad's lemma to $(Q_\sigma,P),\ \forall \sigma\in\{1,-1\}^m$.

If $\sigma,\sigma'\in\{1,-1\}^m$ differ only at one coordinate, i.e.
$$\sigma_k=-\sigma'_k,\quad \sigma_l=\sigma'_l\ (\forall l\neq k),$$ we have the Hellinger distance bound as
$$
\begin{aligned}
H^2(Q_\sigma,Q_{\sigma'})&=\frac{1}{2}\int \left(\sqrt{\eta^Q_\sigma(X)}-\sqrt{\eta^Q_{\sigma'}(X)}\right)^2+\left(\sqrt{1-\eta^Q_\sigma(X)}-\sqrt{1-\eta^Q_{\sigma'}(X)}\right)^2 dQ_X\\
&=\int_{B(x_k,r)}\frac{w}{\lambda[B(x_k,r)]}\left(\sqrt{\frac{1}{2}+C_\beta r^\beta}-\sqrt{\frac{1}{2}-C_\beta r^\beta}\right)^2 dx\\
&= \frac{1}{2}w(1-\sqrt{1-2C^2_\beta r^{2\beta}})\\
&\leq C^2_\beta wr^{2\beta}.
\end{aligned}
$$
Recall that $r=(c_r n_Q^{-\frac{1}{2\beta+d}})\land (C_\beta^{-\frac{1}{\beta}}(\frac{\Delta}{C_\gamma})^{\frac{1}{\gamma\beta}})$. By the property of Hellinger distance, we have
$$
H^2(\pr^\sigma_{\mathcal{D}_Q},\pr^{\sigma'}_{\mathcal{D}_Q})\leq n_Q H^2(Q_\sigma,Q_{\sigma'})
\leq  C^2_\beta w n_Q r^{2\beta}
\leq  C^2_\beta c_r^{2\beta+d}c_w
\leq  \frac{\sqrt{2}}{4}
$$
provided that $c_r$ is small enough. This further indicates that
\begin{equation}
TV(\pr^\sigma_{\mathcal{D}_Q},\pr^{\sigma'}_{\mathcal{D}_Q})\leq \sqrt{2}H^2(\pr^\sigma_{\mathcal{D}_Q},\pr^{\sigma'}_{\mathcal{D}_Q})\leq \frac{1}{2}.
\label{eq:theorem5part1H2toTV}
\end{equation}
For any empirical classifier $\hat{f}$, we have
$$
\begin{aligned}
    \mathcal{E}_{Q_\sigma}(\hat{f})+\mathcal{E}_{Q_{\sigma'}}(\hat{f})&=2\E_{Q_\sigma}[|\eta^{Q_\sigma}(X)-\frac{1}{2}|\mathbf{1}\{\hat{f}(X)\neq f^*_{Q_\sigma}(X)\}]\\
&+2\E_{Q_{\sigma'}}[|\eta^{Q_{\sigma'}}(X)-\frac{1}{2}|\mathbf{1}\{\hat{f}(X)\neq f^*_{Q_{\sigma'}}(X)\}]\\
&\geq 2\int_{B(x_k,r)}\frac{w}{\lambda[B(x_k,r)]}C_\beta r^\beta (\mathbf{1}\{\hat{f}(X)\neq f^*_{Q_\sigma}(X)\}+\mathbf{1}\{\hat{f}(X)\neq f^*_{Q_{\sigma'}}(X)\})dx\\
&= 2C_\beta w r^\beta.
\end{aligned}
$$
Combining this lower bound with \eqref{eq:theorem5part1H2toTV}, the Assouad's lemma shows that
$$\sup_{(Q,P)\in\Pi_{BA}^{NP}}\E\mathcal{E}_Q(\hat{f})\geq \sup_{\substack{(Q_\sigma,P)\\ \sigma\in\{1,-1\}^m}}\E\mathcal{E}_{Q_\sigma}(\hat{f})\geq \frac{1}{2}C_\beta mwr^\beta\gtrsim r^{\beta(1+\alpha)}\gtrsim \Delta^{\frac{1+\alpha}{\gamma}}\land n_Q^{-\frac{\beta(1+\alpha)}{2\beta+d}}.$$
\end{proof}
\section{Proofs in Parametric Classification}
\label{sec:appD}
We first list some definitions that are helpful for the proofs in this section. Denote the target and source distribution with certain linear coefficients $\beta,w$ as
$$Q_{\beta}:=\{Q:X\sim N(0,I_d),\eta^Q(x)=\sigma(\beta^Tx)\},$$
$$P_{w}:=\{Q:X\sim N(0,I_d),\eta^Q(x)=\sigma(w^Tx)\},$$
for any $\beta,w\in\mathbb{R}^d$.
The family of distribution pairs then becomes
$$\Pi^{LR}=\Pi^{LR}(s,\Pi,M)=\{(Q_{\beta},P_w):(\beta,w)\in\Theta(s,\Delta)\},$$
where $\Theta(s,\Delta)$ is defined in \eqref{eq:Theta}. Moreover, we abbreviate $\E_{\beta}$ as $\E_{(X, Y)\sim Q_\beta}$, the expectation with respect to a new observation drawn from the distribution $Q_{\beta}$. Similarly, define $\mathcal{E}_{Q_{\beta}}(f)$ as the excess risk with respect to the target distribution $Q_{\beta}$.

Before entering into the proof of the main theorems, we first show the following lemma that bounds the ambiguity level for any $(Q,P)\in\Pi^{LR}$:

\begin{lemma} Assume that $L\leq m \|\beta_Q\|_2\leq \|\beta_P\|_2\leq U$ for some constant $L,U>0$ and $0<m\leq 1$. We have that $\Pi^{LR}(s,\Delta)\subset \Pi(\alpha,C_\alpha,\gamma,C_\gamma,\varepsilon)$ where
$$
    \alpha=1,\ C_\alpha=\frac{16m}{\sqrt{2\pi}L}\lor 4,\ \gamma=1,\ C_\gamma=\frac{m}{\pi},$$$$\varepsilon(z;1,\frac{m}{\pi})=\left((\frac{16m}{\sqrt{2\pi}L}\lor 4)z^2\right)\land\frac{\sqrt{2}U}{m}\Delta^2.
$$
\label{lemma:logisticVerification}
\end{lemma}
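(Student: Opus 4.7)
The plan is to establish two claims: (i) the margin assumption with $\alpha = 1$ and $C_\alpha = \frac{16m}{\sqrt{2\pi}L}\lor 4$, and (ii) the ambiguity bound. Since $\varepsilon(z;1,m/\pi)$ is the minimum of two quantities, I will produce two separate upper bounds on the LHS of (ii): a routine one of order $C_\alpha z^2$ (which follows directly from (i)) and a geometric one of order $\frac{\sqrt{2}U}{m}\Delta^2$. Throughout, the rotational invariance of $N(0,I_d)$ reduces everything to a two-dimensional computation in the plane spanned by $\beta_Q$ and $\beta_P$.

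For (i), note $\beta_Q^T X \sim N(0, \|\beta_Q\|_2^2)$ and $|\eta^Q - \tfrac12| \leq t$ iff $|\beta_Q^T X| \leq r_t := \log\frac{1+2t}{1-2t}$ for $t \in [0,\tfrac12)$. Bounding the Gaussian probability by the density at the origin together with $\|\beta_Q\|_2 \geq L/m$ gives $Q_X(|\eta^Q - \tfrac12| \leq t) \leq \frac{2mr_t}{\sqrt{2\pi}L}$. Splitting at $t = 1/4$ and using $\log(1+x) \leq x$ for small $t$ (so $r_t \leq 8t$) and the trivial bound $1 \leq 4t$ for large $t$ yields the stated $C_\alpha$. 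The $C_\alpha z^2$ bound in (ii) is then immediate: on the indicator $|\eta^Q - \tfrac12| \leq z$, so the LHS $\leq z\cdot Q_X(|\eta^Q - \tfrac12|\leq z) \leq C_\alpha z^2$.

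For the geometric bound, pick an orthonormal basis $e_1 = \beta_Q/\|\beta_Q\|_2, e_2$ of $\mathrm{span}(\beta_Q, \beta_P)$ and set $A_1 = e_1^T X, A_2 = e_2^T X$; these are independent standard normals, with $\beta_Q^T X = \|\beta_Q\|_2 A_1$ and $\beta_P^T X = \|\beta_P\|_2(\cos\theta\, A_1 + \sin\theta\, A_2)$ where $\theta = \angle(\beta_Q,\beta_P) \in [0,\Delta]$. Using $|\eta^Q - \tfrac12| \leq |\beta_Q^T X|/4$, it suffices to bound $\tfrac14\|\beta_Q\|_2\, \E[|A_1|\mathbf{1}_{\mathcal{B}}]$, where $\mathcal{B}$ is the bad set $\{s(X) \leq (m/\pi)|\eta^Q - \tfrac12|\}$. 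The crucial geometric observation is that, in polar coordinates $(A_1, A_2) = (R\cos\phi, R\sin\phi)$, the set $\mathcal{B}$ lies in a narrow wedge of angular width $O(\Delta)$ around the $\beta_Q$-boundary $\cos\phi = 0$. The sign-differ subset has angular width exactly $2\theta$; for the same-sign subset, monotonicity of $x\mapsto\tanh(x)/x$ on $[0,\infty)$ gives $|\tanh(\beta_P^T X/2)| \geq (|\beta_P^T X|/|\beta_Q^T X|)|\tanh(\beta_Q^T X/2)|$ whenever $|\beta_P^T X| \leq |\beta_Q^T X|$, which together with $\|\beta_P\|_2 \geq m\|\beta_Q\|_2$ reduces the bad-set inequality to the linear condition $|\cos(\phi - \theta)| \leq (1/\pi)|\cos\phi|$; standard trigonometry in the same-sign regime then bounds the additional angular width by $\tfrac{\pi}{\pi-1}\theta$. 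Thus $\mathcal{B} \subset \{|\cos\phi| \leq C_0\Delta\}$ with $C_0 = \pi/(\pi-1)$, and polar integration gives
$$\E[|A_1|\mathbf{1}_{\mathcal{B}}] \leq \frac{1}{2\pi}\int_{|\cos\phi| \leq C_0\Delta}|\cos\phi|\,d\phi \cdot \int_0^\infty R^2 e^{-R^2/2}\,dR \leq C_0^2\Delta^2\sqrt{\pi/2},$$
which combined with $\|\beta_Q\|_2 \leq \|\beta_P\|_2/m \leq U/m$ yields the claimed $\tfrac{\sqrt{2}U}{m}\Delta^2$ bound upon checking that $(\pi/(\pi-1))^2\sqrt{\pi/2}/4 \leq \sqrt{2}$.

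The main obstacle will be the bookkeeping of numerical constants: the threshold $m/\pi$ in the definition of $C_\gamma$ is calibrated precisely so that the same-sign bad region reduces to an inequality with slope $1/\pi$, leaving enough slack after the factors $\sqrt{\pi/2}$ from the radial integral and $1/4$ from $|\tanh(y)| \leq |y|$ to produce the clean constant $\sqrt{2}$ rather than a larger number. Careful handling of the $\tanh(x)/x$ comparison, in particular the case split $|\beta_P^T X| \leq |\beta_Q^T X|$ versus the opposite (where the inequality is vacuously false on the same-sign bad set), will be the most delicate ingredient.
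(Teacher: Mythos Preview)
Your proposal is correct and follows essentially the same route as the paper's proof. The margin verification is identical (same $r_t$ computation and split at $t=1/4$). For the $\Delta^2$ ambiguity bound, both arguments reduce to the $(X_1,X_2)$-plane by rotational invariance, use the monotonicity of $|\sigma(t)-\tfrac12|/t$ (equivalently your $\tanh(x)/x$) to show the bad region is a narrow angular wedge near the normal hyperplane of $\beta_Q$, and then integrate in polar form.

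The only executional difference is how the wedge is pinned down. The paper introduces an explicit auxiliary cone $D=\{\text{signs differ}\}\cup\{d_P(x)<\tfrac12 d_Q(x)\}$, shows $\Omega^-(1,m/\pi)\subset D$ via the inequality $\sin(t/2)\geq\sin(t)/\pi$, observes $D$ has angular width $2\theta$, and integrates $|\eta^Q-\tfrac12|\leq \tfrac{U}{2m}r\Delta$ over $D$. You instead reduce the same-sign bad condition directly to the linear inequality $|\cos(\phi-\theta)|\leq \tfrac{1}{\pi}|\cos\phi|$ and bound its angular extent by $\tfrac{\pi}{\pi-1}\theta$. Your route is a bit more direct; the paper's auxiliary cone $D$ gives a slightly cleaner angular width of exactly $2\theta$ without the trigonometric side computation. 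Both reach the same constant $\frac{\sqrt{2}U}{m}$ with room to spare, and both are a bit loose for $\theta$ near $\pi/2$ (where the bound is anyway trivial).
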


\subsection{Proof of Lemma \ref{lemma:logisticVerification}}
\begin{proof}
For any $x\in B(0,1)$, we have
$$\sigma'(\beta_Q^Tx)=\sigma(\beta_Q^Tx)(1-\sigma(\beta_Q^Tx))\beta_Q,$$
$$\sigma'(\beta_P^Tx)=\sigma(\beta_Q^Tx)(1-\sigma(\beta_Q^Tx))\beta_P.$$
Define $e_Q$ and $e_P$ as the unit vectors that have the same directions with $\beta_Q$ and $\beta_P$, respectively. 

\vspace{\baselineskip}
\noindent\textbf{Verify Margin Assumption:} From the simple equality that $\log (1+\frac{4t}{1-2t})\leq 8t$ for any $t\in[0,\frac{1}{4}]$ and $\|\beta_Q\|\geq \frac{L}{m}$ we have 
$$
\begin{aligned}
 Q_X(0<|\sigma(\beta_Q^TX)-\frac{1}{2}|<t)&= Q_X(0<|\beta_Q^TX|\leq \log (1+\frac{4t}{1-2t}))\\
 &\leq Q_X(0<|\beta_Q^TX|\leq 8t)\\
 &\leq Q_X(0<|e_Q^TX|\leq \frac{8m}{L}t)\\
&= Q_X(|N(0,1)|\leq \frac{8m}{L}t)\\
 &\leq 16\frac{m}{L}t\phi(0)=\frac{16m}{\sqrt{2\pi}L}t,
\end{aligned}
$$
where $\phi(\cdot)$ is the density function of the univariate standard normal distribution $N(0,1)$.
For $t\in(\frac{1}{4},1]$, we trivially have $Q_X(0<|\sigma(\beta_Q^TX)-\frac{1}{2}|<t)\leq 4t$ since $Q_X$ is a probability measure. Therefore, Assumption holds with $\alpha=1,C_\alpha=\frac{16m}{\sqrt{2\pi}L}\lor 4$.

\vspace{\baselineskip}
\noindent\textbf{Verify Ambiguity Level:}
A trivial well-defined ambiguity level by the margin assumption is $$\varepsilon(z;\gamma,\frac{m}{\pi})=C_\alpha z^{1+\alpha}=(\frac{16m}{\sqrt{2\pi}L}\lor 4)z^2.$$ Hence, it suffices to show that a well-defined ambiguity level is $$\varepsilon(z;1,\frac{m}{\pi})=\frac{\sqrt{2}U}{m}\Delta^2.$$

Without loss of generality and due to the symmetry property of of $N(0,I_d)$, we could rotate $\beta_Q$ and $\beta_P$ at the same time so that
$$\beta_Q=(\|\beta_Q\|,0,0,\cdots, 0),\quad \beta_P=(\|\beta_P\|\cos \angle (\beta_Q,\beta_P),\|\beta_P\|\sin \angle (\beta_Q,\beta_P),0,0,\cdots,0).$$ Therefore, we assume that only the first coordinate of $\beta_Q$ and the first and second coordinates of $\beta_P$ can be non-zero.

For any $x\in\mathbb{R}^d$, we define $d_Q(x)$ and $d_P(x)$ as the angle between $(x_1,x_2,0,\cdots,0)$ and the normal planes of $\beta_Q$ and $\beta_P$, respectively. Note that $d_Q(x),d_P(x)\in[0,\pi/2]$. Define the area
$$D:=\{x\in \mathbb{R}^d:(\beta_Q^Tx)(\beta_P^Tx)<0\}\cup \{x\in \mathbb{R}^d:(\beta_Q^Tx)(\beta_P^Tx)\geq 0,d_P(x)<\frac{1}{2}d_Q(x)\}$$ as the region where $\eta^P$ does not give strong signal relative to $\eta^Q$. We see that $D$ is a cone centered at origin with angle between the two normal planes equal to $2\angle (\beta_Q,\beta_P)\leq 2\Delta$.
In addition, for any $x\in D$, the norm of projection of $x$ onto the normal plane of $\beta_Q$ is less than $$(x_1^2+x_2^2)^\frac{1}{2}\sin 2\angle ( \beta_Q,\beta_P)\leq 2(x_1^2+x_2^2)^\frac{1}{2}\Delta,$$ where the simple inequality $\sin2\angle ( \beta_Q,\beta_P)\leq 2\angle ( \beta_Q,\beta_P)$ is used.
Therefore,
$$|\eta^Q(x)-\frac{1}{2}|\leq 2\max\{\sigma'(z)\}_{z\in\mathbb{R}}\|\beta_Q\|(x_1^2+x_2^2)^\frac{1}{2}\Delta\leq \frac{U}{2m}(x_1^2+x_2^2)^\frac{1}{2}\Delta\quad \forall x\in D,$$ and
$$
\begin{aligned}
  \int_{D}|\eta^Q(X)-\frac{1}{2}|dQ_X&\leq \frac{U}{2m}\Delta \int_{D} (X_1^2+X_2^2)^\frac{1}{2}dQ_X\\
  &= \frac{U}{2m}\Delta (2\Delta)\int_{\mathbb{R}^d} (X_1^2+X_2^2)^\frac{1}{2}dQ_X\\
  &\leq \frac{U}{m}\Delta^2\int_{\mathbb{R}^d} (X_1^2+X_2^2)^\frac{1}{2}dQ_X\\
  &\leq\frac{\sqrt{2}U}{m}\Delta^2,
\end{aligned}
$$
since $\Delta\in [0,\frac{\pi}{2}]$ and $\E_{Q_X}[(X_1^2+X_2^2)^\frac{1}{2}]\leq \E_{Q_X}[X_1^2+X_2^2]^\frac{1}{2}=\sqrt{2}$.

On the other hand, if $x\notin D$, we have
$$
\begin{aligned}
|\eta^P(x)-\frac{1}{2}|&=|\sigma(\beta_P^Tx)-\frac{1}{2}|\\
&=|\sigma(\|\beta_P\|(x_1^2+x_2^2)^\frac{1}{2}\sin(d_P(x))-\frac{1}{2}|\\
&\geq |\sigma(m\|\beta_Q\|(x_1^2+x_2^2)^\frac{1}{2}\sin(\frac{1}{2}d_Q(x))-\frac{1}{2}|\\
&\geq |\sigma(\frac{m}{\pi}\|\beta_Q\|(x_1^2+x_2^2)^\frac{1}{2}\sin(d_Q(x))-\frac{1}{2}|\\
&\geq |\sigma(\frac{m}{\pi}\beta_Q^Tx)-\frac{1}{2}|\\
&\geq \frac{m}{\pi}|\sigma(\beta_Q^Tx)-\frac{1}{2}|,
\end{aligned}
$$
given the fact that $\sin(\frac{1}{2}t)\geq \frac{\sin t}{\pi}$ on $t\in[0,\frac{\pi}{2}]$ and that $|\sigma(t)-\frac{1}{2}|/x$ in monotone decreasing on $t\geq 0$.
Therefore, a well-defined ambiguity level is $\varepsilon(z;1,\frac{m}{\pi})=\frac{\sqrt{2}U}{m}\Delta^2$.
\end{proof}

\subsection{Proof of Theorem \ref{thm:logisticUpper}}

Define the logistic link function $\psi(x):=\log (1+e^x)$. It is obvious from straightforward calculation that $\psi'(x)=\frac{e^x}{e^x+1},\psi''(x)=\frac{e^x}{(e^x+1)^2}\leq \frac{1}{4}$. Prior to the proof, we present two lemmas that are helpful for deriving the key conditions in Theorem \ref{thm:moreGeneral}. They serve as the parametric bound of the estimator obtained by the loss function optimization in \eqref{eq:logisticModel}. See Section \ref{sec:logisticUpperLemmaProof} for the proof of the two lemmas.

\begin{lemma}[Parameter Bound for $\hat{\beta}_Q$]\label{lemma:LRUparaBoundQ}
Under the notations and conditions of Theorem \ref{thm:logisticUpper}, there exists some constant $C_Q, \kappa_Q>0$ such that
\begin{equation}
\pr_{\mathcal{D}}(\|\hat{\beta}_Q-\beta_Q\|\geq \kappa_Q \sqrt{\frac{s\log d}{n_Q}})\leq C_Q \left(\frac{s\log d}{n_Q}\land \frac{s\log d}{n_P}\right).
\label{eq:LRUparaBoundQ}
\end{equation}
\end{lemma}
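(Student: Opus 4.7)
The plan is to carry out a standard sparse-logistic-lasso analysis in the spirit of Negahban--Ravikumar--Wainwright--Yu, instantiated for the standard-Gaussian design. Let $L_Q(\beta) = n_Q^{-1}\sum_{i=1}^{n_Q}\{\psi(X_i^T\beta) - Y_iX_i^T\beta\}$ be the empirical negative log-likelihood, so that $\hat{\beta}_Q$ satisfies $\nabla L_Q(\hat{\beta}_Q) + \lambda_Q\hat{z}=0$ for some $\hat{z}\in \partial\|\hat{\beta}_Q\|_1$. Setting $\hat{\Delta}:=\hat{\beta}_Q-\beta_Q$ and $S:=\mathrm{supp}(\beta_Q)$, the convexity of $L_Q$ together with the basic inequality yields, on the event $\mathcal{E}_1:=\{\|\nabla L_Q(\beta_Q)\|_\infty \leq \lambda_Q/2\}$, the cone condition $\|\hat{\Delta}_{S^c}\|_1\leq 3\|\hat{\Delta}_S\|_1$ and hence $\|\hat{\Delta}\|_1\leq 4\sqrt{s}\,\|\hat{\Delta}\|_2$.

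Next I would control $\pr(\mathcal{E}_1^c)$ by a sub-Gaussian argument. The score $\nabla L_Q(\beta_Q) = -n_Q^{-1}\sum_i(Y_i-\sigma(X_i^T\beta_Q))X_i$ has residuals bounded by $1$ and $X_i\sim N(0,I_d)$, so each coordinate is sub-Gaussian with proxy $\lesssim 1/\sqrt{n_Q}$. A union bound over the $d$ coordinates gives $\pr(\mathcal{E}_1^c)\leq 2d\exp(-c\,c_Q^2\log d)$, and with the choice $c_Q\geq \sqrt{K+1}$ this is $\lesssim d^{-K}$. The hypothesis $d^K\gtrsim (n_Q\lor n_P)/(s\log d)$ then converts this directly into the stated bound $\lesssim (s\log d/n_Q)\land (s\log d/n_P)$.

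The remaining ingredient is restricted strong convexity of $L_Q$ in a localized neighborhood of $\beta_Q$. Because $\psi''(z)=\sigma(z)(1-\sigma(z))$ can be arbitrarily small for $|z|$ large, global strong convexity fails; the standard remedy is to localize to a ball $\mathcal{B}_Q:=\{\beta:\|\beta-\beta_Q\|_2\leq r_Q\}$ with $r_Q\asymp\sqrt{s\log d/n_Q}$ and to show that the logistic Hessian satisfies, uniformly over $\beta\in\mathcal{B}_Q$ and over $\Delta$ in the $\ell_1$-cone,
\[
\Delta^T\nabla^2 L_Q(\beta)\Delta \;\geq\; \kappa\|\Delta\|_2^2 - \tau_0\frac{\log d}{n_Q}\|\Delta\|_1^2,
\]
for constants $\kappa,\tau_0>0$. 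For the standard-Gaussian design this is classical (Gaussian chaining / Mendelson's small-ball method), with failure probability $\lesssim \exp(-cn_Q)$; the assumption $n_Q\gg \log(n_P/(s\log d))$ ensures $\exp(-cn_Q)\lesssim (s\log d)/n_P$, which is absorbed into the target bound. The condition $U\geq \|\beta_Q\|_2$ keeps the linear-predictor magnitudes under control so that $\psi''$ stays uniformly positive on $\mathcal{B}_Q$.

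Combining these steps via the usual quadratic-in-$\|\hat{\Delta}\|_2$ inequality $\kappa\|\hat{\Delta}\|_2^2 \leq (3\lambda_Q/2)\|\hat{\Delta}\|_1 \lesssim \sqrt{s}\,\lambda_Q\|\hat{\Delta}\|_2$ and solving gives $\|\hat{\Delta}\|_2\lesssim \sqrt{s}\lambda_Q\asymp \sqrt{s\log d/n_Q}$, as required. The main obstacle I anticipate is the localization step: one has to argue a priori that $\hat{\beta}_Q$ lies in $\mathcal{B}_Q$ so that the localized RSC inequality may be applied. The standard workaround is a convexity argument --- if $\hat{\beta}_Q$ left $\mathcal{B}_Q$, then on the boundary the first-order optimality condition, combined with RSC restricted to the boundary, would already force a contradiction with $\|\nabla L_Q(\beta_Q)\|_\infty\leq \lambda_Q/2$. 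Handling this localization cleanly (and verifying that the $\exp(-cn_Q)$ failure probability is indeed dominated by $(s\log d)/n_P$ under the given hypotheses) is the only delicate part of the argument; everything else is book-keeping within a well-established template.
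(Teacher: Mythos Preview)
Your proposal is correct and follows essentially the same approach as the paper: both control $\|\nabla L_Q(\beta_Q)\|_\infty$ via a sub-Gaussian/Chernoff bound plus union over $d$ coordinates (the paper additionally conditions on the event $\{\max_j n_Q^{-1}\sum_i X_{ij}^2\le 2\}$ before applying Chernoff), then invoke the cone condition and the restricted-strong-convexity result of Negahban--Ravikumar--Wainwright--Yu (the paper cites their Proposition~2 and Theorem~1 directly rather than rederiving localization), and finally use the hypotheses $d^K\gtrsim (n_Q\lor n_P)/(s\log d)$ and $n_Q\gg \log(n_P/(s\log d))$ to absorb the $d^{-K}$ and $\exp(-cn_Q)$ failure probabilities into $(s\log d/n_Q)\land(s\log d/n_P)$. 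One small remark: the paper does not use the bound $\|\beta_Q\|_2\le U/m$ in this lemma at all---the RSC result for Gaussian design holds without boundedness of the truth, so your comment that $U$ is needed to keep $\psi''$ uniformly positive is unnecessary here.
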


\begin{lemma}[Parameter Bound for $\hat{\beta}_P$]\label{lemma:LRUparaBoundP}
Under the notations and conditions of Theorem \ref{thm:logisticUpper}, there exists some constant $C_P, \kappa_P>0$ such that
\begin{equation}
\pr_{\mathcal{D}}(\|\hat{\beta}_P-\beta_P\|\geq \kappa_P (\sqrt{\frac{s\log d}{n_P}}+\Delta))\leq C_P \left(\frac{s\log d}{n_Q}\land \frac{s\log d}{n_P}\right)
\label{eq:LRUparaBoundP}
\end{equation}
when $\Delta\leq c_P$ for some constant $c_P>0$.
\end{lemma}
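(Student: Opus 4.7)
Because $\beta_P$ is not assumed sparse, the standard $\ell_1$-penalized analysis cannot be applied directly to the reference point $\beta_P$; instead I would run an approximate-sparsity argument against a carefully chosen $s$-sparse proxy extracted from the small-angle condition. Define
\[
\tilde{\beta}_P := \frac{\|\beta_P\|_2}{\|\beta_Q\|_2}\,\beta_Q,
\]
which inherits the support $S=\mathrm{supp}(\beta_Q)$ of size at most $s$ and, by the law of cosines, satisfies $\|\tilde{\beta}_P-\beta_P\|_2^{2} = 2\|\beta_P\|_2^{2}(1-\cos\angle(\beta_Q,\beta_P)) \le U^{2}\Delta^{2}$. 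By the triangle inequality the lemma then reduces to proving $\|\hat{\beta}_P-\tilde{\beta}_P\|_2 \lesssim \sqrt{s\log d/n_P}+\Delta$ on a high-probability event.

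I would next verify two ingredients that drive the localized lasso analysis around $\tilde{\beta}_P$. The first is a restricted strong convexity (RSC) property: there exist constants $\kappa, r_0 > 0$ such that, with high probability,
\[
\mathcal{L}_P(\tilde{\beta}_P+w)-\mathcal{L}_P(\tilde{\beta}_P)-\langle\nabla\mathcal{L}_P(\tilde{\beta}_P),w\rangle \ge \kappa\|w\|_2^{2}
\]
whenever $\|w\|_2 \le r_0$ and the enlarged cone $\|w_{S^{c}}\|_1 \le 3\|w_S\|_1 + C\sqrt{s}\,U\Delta$ holds. This follows from Negahban--Ravikumar--Wainwright--Yu-style arguments for GLMs with sub-Gaussian design, once $\|\beta_P\|_2 \le U$ and $\Delta \le c_P$ are used to secure a uniform lower bound on $\sigma'(\beta^{T}X)$ along the localized segment. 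The second ingredient is the coordinatewise gradient bound
\[
\|\nabla\mathcal{L}_P(\tilde{\beta}_P)\|_\infty \le \|\nabla\mathcal{L}_P(\beta_P)\|_\infty + \|\nabla\mathcal{L}_P(\tilde{\beta}_P)-\nabla\mathcal{L}_P(\beta_P)\|_\infty \lesssim \sqrt{\log d/n_P}+\Delta,
\]
in which the first summand has mean zero and concentrates by Bernstein on the sub-exponential variables $(Y^P_i-\eta^P(X^P_i))X^P_{ij}$, while the second is a mean-value expansion using $\|\tilde{\beta}_P-\beta_P\|_2 \lesssim \Delta$ together with the uniform bound $\sigma'\le 1/4$ on the Hessian along the segment.

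With these in hand, the basic inequality $\mathcal{L}_P(\hat{\beta}_P)+\lambda_P\|\hat{\beta}_P\|_1 \le \mathcal{L}_P(\tilde{\beta}_P)+\lambda_P\|\tilde{\beta}_P\|_1$, combined with the choice $\lambda_P \asymp \sqrt{\log d/n_P}$ dominating the stochastic part of the gradient, yields both the cone constraint on $\tilde{v}=\hat{\beta}_P-\tilde{\beta}_P$ and, after invoking RSC, the quadratic inequality $\kappa\|\tilde{v}\|_2^{2} \lesssim \lambda_P\sqrt{s}\,\|\tilde{v}\|_2 + \Delta\|\tilde{v}\|_1$; eliminating $\|\tilde{v}\|_1 \lesssim \sqrt{s}\|\tilde{v}\|_2 + \sqrt{s}\,U\Delta$ through the cone produces $\|\tilde{v}\|_2 \lesssim \sqrt{s\log d/n_P}+\Delta$. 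Converting the concentration failures into the stated probability $C_P\bigl(\tfrac{s\log d}{n_Q}\wedge\tfrac{s\log d}{n_P}\bigr)$ is bookkeeping: with $c_P \ge \sqrt{K+1}$, each exceptional event is at most $d^{-(K+1)}$, which by the hypothesis $d^{K}\gtrsim (n_Q\vee n_P)/(s\log d)$ is dominated by $(s\log d)/(n_Q\vee n_P)$. The main obstacle is the loss of first-order optimality at the surrogate: because $\beta_P$ is not sparse, $\nabla\mathcal{L}_P(\tilde{\beta}_P)$ carries a deterministic bias of order $\Delta$ that must be tracked simultaneously through the $\ell_\infty$ gradient bound, the enlarged cone constraint, and the RSC radius, while the hypothesis $\Delta\le c_P$ is precisely what keeps this bias subdominant---a subtlety that is absent from the proof of Lemma \ref{lemma:LRUparaBoundQ}.
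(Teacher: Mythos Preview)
Your plan is structurally the same as the paper's: the paper introduces exactly your sparse proxy (calling it $h=\frac{\|\beta_P\|}{\|\beta_Q\|}\beta_Q$), expands the basic inequality around $h$, derives an enlarged cone, and invokes Negahban--Ravikumar--Wainwright--Yu RSC. The divergence is in how the bias $\nabla\mathcal{L}_P(\tilde\beta_P)-\nabla\mathcal{L}_P(\beta_P)$ is paired with $\tilde v$, and here your proposal has a real gap.

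You bound this bias in $\ell_\infty$ and pair it with $\|\tilde v\|_1$. But when $\Delta\gtrsim\lambda_P$, the term $C\Delta\|\tilde v\|_1$ prevents the cone from closing: from $F(\hat v)\le 0$ and convexity one only gets $(\lambda_P/2-C\Delta)\|\tilde v_{S^c}\|_1\le(3\lambda_P/2+C\Delta)\|\tilde v_S\|_1$, which is vacuous once $C\Delta>\lambda_P/2$. The enlarged cone you assert, $\|\tilde v_{S^c}\|_1\le 3\|\tilde v_S\|_1+C\sqrt{s}\,U\Delta$, does not follow from the $\ell_\infty$ gradient bound. And even if it were granted, substituting $\|\tilde v\|_1\lesssim\sqrt{s}\|\tilde v\|_2+\sqrt{s}\,U\Delta$ into $\kappa\|\tilde v\|_2^2\lesssim\lambda_P\sqrt{s}\|\tilde v\|_2+\Delta\|\tilde v\|_1$ yields $\|\tilde v\|_2\lesssim\sqrt{s}\lambda_P+\sqrt{s}\,\Delta$, not $\sqrt{s}\lambda_P+\Delta$; the extra $\sqrt{s}$ on $\Delta$ is exactly what the lemma forbids. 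The paper avoids this by pairing the bias in $\ell_2$: it controls $\|\nabla l_P(h)-\nabla l_P(\beta_P)\|_2\le\tfrac14\|\tfrac{1}{n_P}\sum X_i^P(X_i^P)^\top\|_{\mathrm{op}}\,\|h-\beta_P\|_2\le\tfrac{U\Delta}{2}$ on the high-probability event $\|\hat\Sigma_P\|_{\mathrm{op}}\le 2$, so the slack enters as $\tfrac{U\Delta}{2}\|\tilde v\|_2$ rather than $C\Delta\|\tilde v\|_1$. This gives the cone $\|\tilde v\|_1\le 4\|\tilde v_S\|_1+\tfrac{U\Delta}{\lambda_P}\|\tilde v\|_2$, lets RSC absorb the second term when $\Delta\le c_P$, and in the final quadratic produces exactly $\|\tilde v\|_2\lesssim\sqrt{s}\lambda_P+\Delta$. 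The operator-norm event is the missing ingredient in your outline.
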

\begin{proof}[Proof of Theorem \ref{thm:logisticUpper}]
By Lemma \ref{lemma:LRUparaBoundQ}, there exist constants $C_Q, \kappa_Q>0$ such that
$$\pr_{\mathcal{D}_Q}(E^c)\leq C_Q (\frac{s\log d}{n_Q}\land\frac{s\log d}{n_P}),$$
where we define the event $E:=\{\|\hat{\beta}_Q-\beta_Q\|\leq \kappa_Q \sqrt{\frac{s\log d}{n_Q}}\}$.
Define $\hat{e}$ as the unit vector with the same direction as $\hat{\beta}_Q-\beta_Q$. We define
$$\Omega^*:=\{x\in\Omega: |\hat{e}^T x|\leq \sqrt{2}\tau/(\frac{\kappa_Q}{2\sqrt{2}} \sqrt{\frac{s\log d}{n_Q}})\}.$$ Since $\hat{e}^T X\sim N(0,1)$ with respect to $Q_X$, the tail bound of normal distributions that
$$Q(\Omega^*)\geq 1- 2 \exp(-\left(\tau/(\frac{\kappa_Q}{2\sqrt{2}} \sqrt{\frac{s\log d}{n_Q}})\right)^2).$$
Since $\sigma(\cdot)$ is Lipschitz-continuous with the Lipschitz constant as $\frac{1}{4}$, on the event $E$, we further have
\begin{equation}
|\sigma(\beta_Q^Tx)-\sigma(\hat{\beta}_Q^Tx)|\leq \frac{1}{4}|(\beta_Q^T-\hat{\beta}_Q^T)x|\leq \|\hat{\beta}_Q-\beta_Q\|\cdot|\hat{e}^T x|\leq \tau
\label{eq:LRUetaQBound}
\end{equation}
for any $x\in\Omega^*$.

Now we could directly apply Theorem \ref{thm:moreGeneral} where the parameters are $$\tau=c_\tau \sqrt{\frac{s\log d}{n_Q}}\log(n_Q\lor n_P),$$
and
$$\delta_Q(n_Q,\tau)=2 \exp(-\left(\tau/(\frac{\kappa_Q}{2\sqrt{2}} \sqrt{\frac{s\log d}{n_Q}})\right)^2)+C_Q (\frac{s\log d}{n_Q}\land\frac{s\log d}{n_P})$$ with $c_\tau\geq\frac{\kappa_Q}{2\sqrt{2}}$. Note that $\delta_Q(n_Q,\tau)\lesssim \frac{s\log d}{n_Q}\land\frac{s\log d}{n_P}$. By Theorem \ref{thm:moreGeneral}, the TAB classifier $\hat{f}_{TAB}^{LR}$ satisfies
$$
\begin{aligned}
\sup_{(Q,P)\in\Pi^{LR}}\E_{(\mathcal{D}_Q,\mathcal{D}_P)}\mathcal{E}_Q(\hat{f}^{LR}_{TAB})&\lesssim \sup_{(Q,P)\in\Pi^{LR}}\E_{\mathcal{D}_P}\xi(\mathbf{1}\{\hat{\beta}_P^Tx\geq 0\};1,\frac{m}{\pi})\land \tau^2 + \varepsilon(2\tau;1,\frac{m}{\pi})+\delta_Q(n_Q,\tau)\\
&\lesssim\sup_{(Q,P)\in\Pi^{LR}}\E_{\mathcal{D}_P}\xi(\mathbf{1}\{\hat{\beta}_P^Tx\geq 0\};1,\frac{m}{\pi})\land (\log^2 (n_Q\lor n_P) \frac{s\log d}{n_Q})\\    
&+(\log^2 (n_Q\lor n_P) \frac{s\log d}{n_Q})\land \Delta^2\\
&+\frac{s\log d}{n_Q}\land \frac{s\log d}{n_P}.
\end{aligned}
$$
Therefore, it suffices to show that $$\E_{\mathcal{D}_P}\xi(\mathbf{1}\{\hat{\beta}_P^Tx\geq 0\};1,\frac{m}{\pi})\lesssim \frac{s\log d}{n_P}+\Delta^2$$ for any $(Q,P)\in\Pi^{LR}$ to prove Theorem \ref{thm:logisticUpper}. If $\Delta> c_p$ where $c_p$ is the constant constraint of $\Delta$ in Lemma \ref{lemma:LRUparaBoundP}, this bound is trivial as $\E_{\mathcal{D}_P}\xi(\mathbf{1}\{\hat{\beta}_P^Tx\geq 0\};1,\frac{m}{\pi})\leq 1$. Hence, we WLOG assume that $\Delta\leq c_p$ to apply Lemma \ref{lemma:LRUparaBoundP}. In addition, we may as well assume that $c_P$ is small enough such that $\kappa_Pc_P\leq \frac{L}{4}$.

By Lemma \ref{lemma:LRUparaBoundP}, there exist constants $C_P, \kappa_P>0$ such that
$$\pr_{\mathcal{D}_P}(E_2^c)\leq C_P (\frac{s\log d}{n_Q}\land\frac{s\log d}{n_P}),$$
where we define the event $E_2:=\{\|\hat{\beta}_P-\beta_P\|\leq \kappa_P (\sqrt{\frac{s\log d}{n_P}}+\Delta)\}$. Denote $\kappa_P (\sqrt{\frac{s\log d}{n_P}}+\Delta)$ by $\Lambda$ by simplicity. By the theorem setting, we could assume that $n_P$ is large enough such that $\kappa_P \sqrt{\frac{s\log d}{n_P}}\leq \frac{L}{4}$, so we have $\Lambda\leq \frac{L}{2}\leq \|\beta_P\|_2$. Therefore, on the event $E_2$ we have $\angle ( \hat{\beta}_P,\beta_P)\leq \frac{\pi}{2}$ and
$$\sin\angle ( \hat{\beta}_P,\beta_P)/\|\hat{\beta}_P-\beta_P\|\leq 1/\|\beta_P\|$$
by the law of sines. Hence,
$$\angle ( \hat{\beta}_P,\beta_P)\leq \frac{\pi}{2}\sin\angle ( \hat{\beta}_P,\beta_P)\leq \|\hat{\beta}_P-\beta_P\|/\|\beta_P\|\leq \frac{\Lambda}{L}.$$ 

Without loss of generality and due to the symmetry property of of $N(0,I_d)$, we could rotate $\beta_P$ and $\hat{\beta}_P$ at the same time so that
$$\beta_P=(\|\beta_P\|,0,0,\cdots, 0),\quad \hat{\beta}_P=(\|\hat{\beta}_P\|\cos \angle ( \hat{\beta}_P,\beta_P),\|\hat{\beta}_P\|\sin\angle ( \hat{\beta}_P,\beta_P),0,0,\cdots,0).$$ Therefore, we assume that only the first coordinate of $\beta_Q$ and the first and second coordinates of $\hat{\beta}_P$ can be non-zero.

Define $D:=\{x\in\mathbb{R}^d:(\beta_P^Tx)(\hat{\beta}_P^Tx)<0\}$, which is a cone centered at the origin with angle size bounded by $\frac{\Lambda}{L}$. For any $x\in D$, the norm of projection of $x$ onto the normal plane of $\beta_Q$ is less than $$(x_1^2+x_2^2)^\frac{1}{2}\sin \angle ( \hat{\beta}_P,\beta_P)\leq (x_1^2+x_2^2)^\frac{1}{2}\frac{\Lambda}{L},$$ where the simple inequality $\sin\angle ( \hat{\beta}_P,\beta_P)\leq \angle ( \hat{\beta}_P,\beta_P)$ is used.
Therefore,
$$|\eta^Q(x)-\frac{1}{2}|\leq \max\{\sigma'(z)\}_{z\in\mathbb{R}}\|\beta_P\|(x_1^2+x_2^2)^\frac{1}{2}\frac{\Lambda}{L}\leq \frac{U}{4}(x_1^2+x_2^2)^\frac{1}{2}\frac{\Lambda}{L}\quad \forall x\in D,$$ and
$$
\begin{aligned}
  \int_{D}|\eta^Q(X)-\frac{1}{2}|dQ_X&\leq \frac{U}{4}\frac{\Lambda}{L} \int_{D} (X_1^2+X_2^2)^\frac{1}{2}dQ_X\\
  &= \frac{U}{4}(\frac{\Lambda}{L})^2\int_{\mathbb{R}^d} (X_1^2+X_2^2)^\frac{1}{2}dQ_X\\
  &\leq\frac{\sqrt{2}U}{4L^2}\Lambda^2
\end{aligned}
$$
since $\frac{\Lambda}{L}\in [0,\frac{\pi}{2}]$ and $\E_{Q_X}[(X_1^2+X_2^2)^\frac{1}{2}]\leq \E_{Q_X}[X_1^2+X_2^2]^\frac{1}{2}=\sqrt{2}$. Hence, we obtain that
$$
\begin{aligned}
\xi(\mathbf{1}\{\hat{\beta}_P^Tx\geq 0\};1,\frac{m}{\pi})&=2\E_{X\sim N(0,I_d)}[|\eta^P-\frac{1}{2}|\mathbf{1}\{X\in D\}]\\
&\leq \frac{\sqrt{2}U}{2L^2}\Lambda^2\\
&=\frac{\sqrt{2}U}{2L^2}\kappa_P^2 (\sqrt{\frac{s\log d}{n_P}}+\Delta)^2.
\end{aligned}
$$
To summarize, we have the bounds$\xi(\mathbf{1}\{\hat{\beta}_P^Tx\geq 0\};1,\frac{m}{\pi})\leq \frac{\Lambda}{L}$ on the event $E_2$ and $\xi(\mathbf{1}\{\hat{\beta}_P^Tx\geq 0\};1,\frac{m}{\pi})\leq 1$ on the event $E_2^c$. As a result,
$$
\begin{aligned}
 &\E_{\mathcal{D}_P}\xi(\mathbf{1}\{\hat{\beta}_P^Tx\geq 0\};1,\frac{m}{\pi})\\
 = &\E_{\mathcal{D}_P}[\xi(\mathbf{1}\{\hat{\beta}_P^Tx\geq 0\};1,\frac{m}{\pi})|E_2]\pr_{\mathcal{D}_P}(E_2)+\E_{\mathcal{D}_P}[\xi(\mathbf{1}\{\hat{\beta}_P^Tx\geq 0\};1,\frac{m}{\pi})|E_2^c]\pr_{\mathcal{D}_P}(E_2^c)\\
 \leq & \frac{\sqrt{2}U}{2L^2}\kappa_P^2 (\sqrt{\frac{s\log d}{n_P}}+\Delta)^2\times 1 + 1\times C_P (\frac{s\log d}{n_Q}\land\frac{s\log d}{n_P})\\
 = & \frac{\sqrt{2}U}{2L^2}\kappa_P^2 (\sqrt{\frac{s\log d}{n_P}}+\Delta)^2+ C_P (\frac{s\log d}{n_Q}\land\frac{s\log d}{n_P})\\
 \lesssim &\frac{s\log d}{n_P}+\Delta^2.
\end{aligned}
$$
\end{proof}

\subsection{Proof of Theorem \ref{thm:logisticLower}}
Given an arbitrary classifier $\hat{f}$ based on $\mathcal{D}_Q$ and $\mathcal{D}_P$. Our goal is to show that
$$\sup_{(Q,P)\in\Pi^{LR}}\E\mathcal{E}_Q(\hat{f})\gtrsim
\left(\frac{s\log d}{n_P} + \Delta^2\right)\land \frac{s\log d}{n_Q}.$$
\begin{proof}[First part of Proof]
As part of the proof, we would like to first show that $$\sup_{(Q,P)\in\Pi^{LR}}\E\mathcal{E}_Q(\hat{f})\gtrsim
\frac{s\log d}{n_P}\land \frac{s\log d}{n_Q}.$$ The idea is to suppose $$\beta_P=\beta_Q=\beta,\quad P=Q=Q_\beta$$ which implies $\angle ( \beta_Q,\beta_P)=0$, and then reveal the lower bound provided by the combined sample $\mathcal{D}_Q\cup \mathcal{D}_P$. The proof is based on Fano's lemma on the vector angle.

We consider a class of $\beta\in\mathbb{R}^d$ such that
$$\|\beta\|_0\leq s,\ \beta_1=1,\ |\beta_j|\in\{0,C\sqrt{\log d/(n_Q\lor n_P)},-C\sqrt{\log d/(n_Q\lor n_P)}\}\  (\forall 2\leq j\leq d),$$ where the constant $C>0$ will be specified later. Denote such a class of $\beta$ as $\mathcal{H}$. The construction of the class of $\beta$ is inspired and similar to \cite{raskutti2009minimaxRO}. By Lemma 5 of \cite{raskutti2009minimaxRO}, we see that there exists a subset of $\mathcal{H}$, named $\tilde{\mathcal{H}}$, such that
$|\tilde{\mathcal{H}}|\geq \exp(\frac{s-1}{2}\log \frac{d-s}{(s-1)/2})$
and $$\sum_{j=2}^d\mathbf{1}\{\beta_j\neq\beta_j'\}\geq \frac{s-1}{2},\quad \forall \beta,\beta'\in\tilde{\mathcal{H}}.$$ We provide the following two observations on the elements of $\tilde{\mathcal{H}}:$
\begin{itemize}
    \item For any $\beta\in\tilde{\mathcal{H}}$, its norm is bounded by the following inequality:
\begin{equation}
1\leq \|\beta\|_2\leq 1+C\sqrt{s\log d/(n_Q\lor n_P)}\leq M,
\label{eq:paraLowerPart1NormBound}
\end{equation}
for some $M>0$ large enough since we supposed that $s\log d/(n_Q\lor n_P)\lesssim 1$.
\item For any $\beta\neq \beta'\in\tilde{\mathcal{H}}$, their angle could be bounded by
\begin{equation}
\begin{aligned}
\angle ( \beta,\beta')\geq \sin\angle ( \beta,\beta')\geq & \frac{(\sum_{j=2}^d\mathbf{1}\{\beta_j\neq\beta_j'\})^{\frac{1}{2}}C\sqrt{\log d/(n_Q\lor n_P)}}{\|\beta\|}\\
\geq &  \frac{C\sqrt{s\log d/(n_Q\lor n_P)}}{2M}.
\end{aligned}
\label{eq:paraLowerPart1AngleBound}
\end{equation}
This inequality holds since as long as $\beta_j\neq\beta_j'$ for some $2\leq j\leq d$, based on the construction of $\tilde{\mathcal{H}}$ we have $|\beta_j-\beta_j'|\geq C\sqrt{\log d/(n_Q\lor n_P)}$.
\end{itemize}

Next, we define the random variable $Z$ as the $\beta\in\tilde{\mathcal{H}}$ that attains the minimum of $\mathcal{E}_{Q_\beta}(\hat{f})$, i.e.
$$Z:=\argmin_{\beta\in\tilde{\mathcal{H}}}\mathcal{E}_{Q_\beta}(\hat{f}).$$
For any $\beta'\neq Z,\beta'\in\tilde{\mathcal{H}}$, by Lemma \ref{lemma:risk2angle}, we have that
$$\mathcal{E}_{Q_Z}(\hat{f})+\mathcal{E}_{Q_{\beta'}}(\hat{f})\geq \frac{\sigma(M)(1-\sigma(M))}{20\pi}\angle ( Z,\beta')^2.$$
by \eqref{eq:paraLowerPart1NormBound}. The choice of $Z$ further indicates that 
$$
\begin{aligned}
   \mathcal{E}_{Q_{\beta'}}(\hat{f})\geq & \frac{\sigma(M)(1-\sigma(M))}{40\pi}\angle ( Z,\beta')^2\\
   \geq & \frac{C\sigma(M)(1-\sigma(M))}{80\pi M} \sqrt{s\log d/(n_Q\lor n_P)}.
\end{aligned}
$$
We denote the constant term $\frac{C\sigma(M)(1-\sigma(M))}{80\pi M}$ by $C_M$ for simplicity. Hence,
$$\E_{\mathcal{D}_{\beta'}}\mathcal{E}_{Q_{\beta'}}(\hat{f})\geq C_M^2 s\log d/(n_Q\lor n_P)\times \pr_{\mathcal{D}_{\beta'}}(Z\neq \beta'),$$ and so
\begin{equation}
\begin{aligned}
  \sup_{(Q,P)\in\Pi}\E \mathcal{E}_{Q}(\hat{f})\geq &\max_{\beta_Q=\beta_P=\beta'\in\tilde{\mathcal{H}}}\E_{\mathcal{D}_{\beta'}}\mathcal{E}_{Q_{\beta'}}(\hat{f})\\
  \geq &C_M^2 s\log d/(n_Q\lor n_P) \max_{\beta'\in\tilde{\mathcal{H}}}\pr_{\mathcal{D}_{\beta'}}(Z\neq \beta')\\
  \geq & C_M^2 s\log d/(n_Q\lor n_P) \frac{1}{|\tilde{\mathcal{H}}|}\sum_{\beta'\in\tilde{\mathcal{H}}}\pr_{\mathcal{D}_{\beta'}}(Z\neq \beta').
\end{aligned}
\label{eq:paraLowerkeyBound}
\end{equation}
The remaining work is to lower bound $\frac{1}{|\tilde{\mathcal{H}}|}\sum_{\beta'\in\tilde{\mathcal{H}}}\pr_{\mathcal{D}_{\beta'}}(Z\neq \beta')$ by some positive constant. By Fano's lemma where the total sample size is $n_Q+n_P$ (Note that we suppose $P=Q$), we have
\begin{equation}
\begin{aligned}
\frac{1}{|\tilde{\mathcal{H}}|}\sum_{\beta'\in\tilde{\mathcal{H}}}\pr_{\mathcal{D}_{\beta'}}(Z\neq \beta')\geq& 1-\frac{\log 2+(n_Q+n_P)\max_{\beta\neq \beta'\in\tilde{\mathcal{H}}}KL(Q_\beta,Q_{\beta'})}{\log |\tilde{\mathcal{H}}|}\\
\geq & 1-\frac{\log 2+(n_Q+n_P)\max_{\beta\neq \beta'\in\tilde{\mathcal{H}}}KL(Q_\beta,Q_{\beta'})}{c\cdot s\log d}
\end{aligned}  
\end{equation}
since
$\log |\tilde{\mathcal{H}}|\geq \frac{s-1}{2}\log \frac{d-s}{(s-1)/2}\geq c\cdot s\log d$ for some $c>0$ small enough. For any $\beta\neq \beta'\in\tilde{\mathcal{H}}$, their Kullback-Leibler divergence satisfies
$$
\begin{aligned}
KL(Q_\beta,Q_{\beta'})&=\E_{Q_\beta}[\psi''(X^T(t\beta+(1-t)\beta'))(X^T(\beta-\beta'))^2]\\
&\leq C_\psi \E_{Q_\beta}[(X^T(\beta-\beta'))^2]\\
&\leq C_\psi \|\beta-\beta'\|^2\leq C_\psi C^2(s\log d)/ (n_Q\lor n_P)
\end{aligned}
$$
for the logistic regression link function $\psi(u)=\log(1+e^u)$, some constant $t\in[0,1]$ and $$C_\psi:=\|\psi''\|_\infty.$$ Therefore,
$$
\begin{aligned}
\frac{\log 2+(n_Q+n_P)\max_{\beta\neq \beta'\in\tilde{\mathcal{H}}}KL(Q_\beta,Q_{\beta'})}{c\cdot s\log d}\leq \frac{\log 2+C_\psi C^2(s\log d) \frac{(n_Q+n_P)}{n_Q\lor n_P}}{c\cdot s\log d}\leq \frac{1}{2}
\end{aligned}
$$
by choosing $C$ to be small enough. With such a choice of $C$, we see that \eqref{eq:paraLowerkeyBound} further reduces to
$$
\begin{aligned}
  \sup_{(Q,P)\in\Pi}\E \mathcal{E}_{Q}(\hat{f}) \geq & C_M^2 s\log d/(n_Q\lor n_P) \frac{1}{|\tilde{\mathcal{H}}|}\sum_{\beta'\in\tilde{\mathcal{H}}}\pr_{\mathcal{D}_{\beta'}}(Z\neq \beta')\\
  \geq & \frac{1}{2}C_M^2 s\log d/(n_Q\lor n_P)\\
  =&\frac{1}{2}C_M^2 \frac{s\log d}{n_Q}\land \frac{s\log d}{n_P},
\end{aligned}
$$
which is just our desired result.
\end{proof}
\begin{proof}[Second part of Proof] Next, we would like to show that
$$\sup_{(Q,P)\in\Pi^{LR}}\E\mathcal{E}_Q(\hat{f})\gtrsim
\Delta^2\land \frac{s\log d}{n_Q}.$$ 
    
First, we fix $\beta_P=h$ in the sense that
$$h_1=1,\quad h_j=0\quad  (\forall 2\leq j\leq d).$$ Next, we consider a class of $\beta$ such that
$$\|\beta\|_0\leq s,\ \beta_{1}=1,\ |\beta_{j}|\in\{0,C_\Delta\sqrt{\log d/n_Q}\land \frac{\sin\Delta}{\sqrt{s}},-C_\Delta\sqrt{\log d/n_Q}\land \frac{\sin\Delta}{\sqrt{s}}\}\  (\forall 2\leq j\leq d),$$ where the constant $C_\Delta\geq 0$ will be specified later. Denote such a class of $\beta$ as $\mathcal{H}_\Delta$.

We want to verify that for any $\beta\in \mathcal{H}_\Delta, (Q_\beta,P_{h})\in\Pi^{LR}$. Since it is obvious that $\|\beta\|_0\leq s$ by definition, it suffices to show that $\angle ( \beta,h)\leq \Delta$ from the calculation of the angle sine. It holds that
\begin{equation}
 \sin\angle ( \beta,h)\leq \frac{(\sum_{j=2}^d\mathbf{1}\{\beta\neq h\})^{\frac{1}{2}}(\frac{\sin\Delta}{\sqrt{s}})}{\|h\|}\leq \sin \Delta, 
 \label{eq:paraLowerAngleVerify1}
\end{equation}
since $\sum_{j=2}^d\mathbf{1}\{\beta\neq h\}\leq s,\|h\|=1$, and $|\beta-h|\leq \frac{\sin\Delta}{\sqrt{s}}$ if $\beta_j\neq h_j$. Also, the dot product satisfies
\begin{equation}
 \beta^Th=1>0, 
 \label{eq:paraLowerAngleVerify2}
\end{equation}
which indicates that $\angle ( \beta,h)<\frac{\pi}{2}$. Based on \eqref{eq:paraLowerAngleVerify1} and \eqref{eq:paraLowerAngleVerify2}, we conclude that $$\angle ( \beta,h)\leq \Delta,\quad \forall \beta\in\mathcal{H}_\Delta.$$

By following the same idea in the first part of the proof, Lemma 5 of \cite{raskutti2009minimaxRO} shows that there exists a subset of $\mathcal{H}_\Delta$, named $\tilde{\mathcal{H}}_\Delta$, such that
$|\tilde{\mathcal{H}}_\Delta|\geq \exp(\frac{s-1}{2}\log \frac{d-s}{(s-1)/2})$
and $$\sum_{j=2}^d\mathbf{1}\{\beta_j\neq\beta_j'\}\geq \frac{s-1}{2},\quad \forall \beta,\beta'\in\tilde{\mathcal{H}}_\Delta.$$ Following the same procedure of \eqref{eq:paraLowerPart1NormBound} and \eqref{eq:paraLowerPart1AngleBound}, we claim that
\begin{itemize}
    \item For any $\beta\in\tilde{\mathcal{H}}_\Delta$, its norm is bounded by the following inequality:
\begin{equation}
1\leq \|\beta\|_2\leq 1+C_\Delta\sin\Delta\leq 1+C_\Delta.
\label{eq:paraLowerPart2NormBound}
\end{equation}
\item For any $\beta\neq \beta'\in\tilde{\mathcal{H}}_\Delta$, their angle sine could be bounded by
\begin{equation}
\begin{aligned}
\angle ( \beta,\beta')\geq \sin\angle ( \beta,\beta')\geq & \frac{(\sum_{j=2}^d\mathbf{1}\{\beta_j\neq\beta_j'\})^{\frac{1}{2}}(C_\Delta\sqrt{\log d/n_Q}\land \frac{\sin\Delta}{\sqrt{s}})}{\|\beta\|}\\
\geq &  \frac{C_\Delta\sqrt{s\log d/n_Q}\land \sin\Delta}{2(1+C_\Delta)}.
\end{aligned}
\label{eq:paraLowerPart2AngleBound}
\end{equation}
This inequality holds since as long as $\beta_j\neq\beta_j'$ for some $2\leq j\leq d$, based on the construction of $\tilde{\mathcal{H}}_\Delta$ we have $|\beta_j-\beta_j'|\geq C_\Delta\sqrt{\log d/n_Q}\land \frac{\sin\Delta}{\sqrt{s}}$.
\end{itemize}

Define the random variable $Z$ as the $\beta\in\tilde{\mathcal{H}}_\Delta$ that attains the minimum of $\mathcal{E}_{Q_\beta}(\hat{f})$, i.e.
$$Z:=\argmin_{\beta\in\tilde{\mathcal{H}}_\Delta}\mathcal{E}_{Q_\beta}(\hat{f}).$$
For any $\beta'\neq Z,\beta'\in\tilde{\mathcal{H}}_\Delta$, by Lemma \ref{lemma:risk2angle}, we have that
$$\mathcal{E}_{Q_Z}(\hat{f})+\mathcal{E}_{Q_{\beta'}}(\hat{f})\geq \frac{\sigma(1+C_\Delta)(1-\sigma(1+C_\Delta))}{20\pi}\angle ( Z,\beta')^2.$$
by \eqref{eq:paraLowerPart2NormBound}. The choice of $Z$ further indicates that 
$$
\begin{aligned}
   \mathcal{E}_{Q_{\beta'}}(\hat{f})\geq & \frac{\sigma(1+C_\Delta)(1-\sigma(1+C_\Delta))}{40\pi}\angle ( Z,\beta')^2\\
   \geq & \frac{\sigma(1+C_\Delta)(1-\sigma(1+C_\Delta))}{80\pi (1+C_\Delta)} (C_\Delta^2s\log d/n_Q)\land \sin^2\Delta.
\end{aligned}
$$
We denote the constant term $\frac{\sigma(1+C_\Delta)(1-\sigma(1+C_\Delta))}{80\pi}$ by $C'_\Delta$ for simplicity. Hence,
$$\E_{\mathcal{D}_{\beta'}}\mathcal{E}_{Q_{\beta'}}(\hat{f})\geq C'_\Delta ((C_\Delta^2s\log d/n_Q)\land \sin^2\Delta)\times \pr_{\mathcal{D}_{\beta'}}(Z\neq \beta'),$$ and so
\begin{equation}
\begin{aligned}
  \sup_{(Q,P)\in\Pi}\E \mathcal{E}_{Q}(\hat{f})\geq &\max_{\beta'\in\tilde{\mathcal{H}}_\Delta,\beta_P=h}\E_{\mathcal{D}_{\beta'}}\mathcal{E}_{Q_{\beta'}}(\hat{f})\\
  \geq &C'_\Delta ((C_\Delta^2s\log d/n_Q)\land \sin^2\Delta)\max_{\beta'\in\tilde{\mathcal{H}}_\Delta}\pr_{\mathcal{D}_{\beta'}}(Z\neq \beta')\\
  \geq & C'_\Delta ((C_\Delta^2s\log d/n_Q)\land \sin^2\Delta) \frac{1}{|\tilde{\mathcal{H}}_\Delta|}\sum_{\beta'\in\tilde{\mathcal{H}}_\Delta}\pr_{\mathcal{D}_{\beta'}}(Z\neq \beta').
\end{aligned}
\label{eq:paraLowerkeyBound2}
\end{equation}
By Fano's lemma where the total sample size is $n_Q$, we have
\begin{equation}
\begin{aligned}
\frac{1}{|\tilde{\mathcal{H}}_\Delta|}\sum_{\beta'\in\tilde{\mathcal{H}}_\Delta}\pr_{\mathcal{D}_{\beta'}}(Z\neq \beta')\geq& 1-\frac{\log 2+n_Q\max_{\beta\neq \beta'\in\tilde{\mathcal{H}}_\Delta}KL(Q_\beta,Q_{\beta'})}{\log |\tilde{\mathcal{H}}|}\\
\geq & 1-\frac{\log 2+n_Q\max_{\beta\neq \beta'}KL(Q_\beta,Q_{\beta'})}{c\cdot s\log d}
\end{aligned}  
\end{equation}
since
$\log |\tilde{\mathcal{H}}_\Delta|\geq \frac{s-1}{2}\log \frac{d-s}{(s-1)/2}\geq c\cdot s\log d$ for some $c>0$ small enough. For any $\beta\neq \beta'\in\tilde{\mathcal{H}}_\Delta$, their Kullback-Leibler divergence satisfies
$$
\begin{aligned}
KL(Q_\beta,Q_{\beta'})&=\E_{Q_\beta}[\psi''(X^T(t\beta+(1-t)\beta'))(X^T(\beta-\beta'))^2]\\
&\leq C_\psi \E_{Q_\beta}[(X^T(\beta-\beta'))^2]\\
&\leq C_\psi \|\beta-\beta'\|^2\leq C_\psi C_\Delta^2\frac{s\log d}{n_Q}
\end{aligned}
$$
for the logistic regression link function $\psi(u)=\log(1+e^u)$ and some constants $t\in[0,1]$. Therefore,
$$
\begin{aligned}
\frac{\log 2+(n_Q+n_P)\max_{\beta\neq \beta'}KL(Q_\beta,Q_{\beta'})}{c\cdot s\log d}\leq \frac{\log 2+C_\psi C_\Delta^2 s\log d}{c\cdot s\log d}\leq \frac{1}{2}
\end{aligned}
$$
by choosing $C_\Delta$ to be small enough. With such a choice of $C_\Delta$, we see that \eqref{eq:paraLowerkeyBound} further reduces to
$$
\begin{aligned}
  \sup_{(Q,P)\in\Pi}\E \mathcal{E}_{Q}(\hat{f}) \geq & C'_\Delta ((C_\Delta^2s\log d/n_Q)\land \sin^2\Delta)\max_{\beta'\in\tilde{\mathcal{H}}_\Delta}\pr_{\mathcal{D}_{\beta'}}(Z\neq \beta')\\
  \geq & \frac{1}{2}C'_\Delta ((C_\Delta^2s\log d/n_Q)\land \sin^2\Delta)\\
  = &\frac{1}{2}C'_\Delta C_\Delta^2 \frac{s\log d}{n_Q} \land C'_\Delta \sin^2\Delta\\
  \geq &\left(\frac{1}{2}C'_\Delta C_\Delta^2 \frac{s\log d}{n_Q}\right) \land\left(  \frac{4C'_\Delta}{\pi^2}\Delta^2\right),
\end{aligned}
$$
which is just our desired result.

\end{proof}
Theorem \ref{thm:logisticLower} is achieved simply by combining the first and second parts of the proof above. 
\section{Auxiliary Results}
\label{sec:appAux}
\subsection{Proof of Proposition \ref{prop:sliceQ}}
In this subsection, $\varepsilon(z;\gamma,C_\gamma)\equiv 0$ in the parameter setting of $\Pi^{NP}$.
\begin{proof}[Proof of the first statement]
The direction of $\{Q:(Q,P)\in\Pi^{NP}\}\subset \Pi_Q^{NP}$ is obvious from the definitions.
Suppose $Q\in\Pi^{NP}_Q$. We always assume that $P_X=Q_X$ in this part. It is then obvious that $P_X=Q_X\in\mathcal{S}(\mu^+,\mu^-,c_\mu,r_\mu)$.

If $\beta=0$, then $\beta_P=0$, and $\eta^Q,\eta^P$ can be noncontinuous. Simply setting $$\eta^P=\frac{1}{2}+\frac{C_\gamma}{2^\gamma}\mathrm{sgn}\left(\eta^Q-\frac{1}{2}\right)$$ satisfies $(Q,P)\in\Pi^{NP}$ provided that $C_{\beta_P}\geq \frac{C_\gamma}{2^\gamma}$.

If $\beta>0$, and $\eta^Q$ does not hit $\frac{1}{2}$ on $\Omega$, then the continuity of $\eta^Q$ indicates that $\mathrm{sgn}(\eta^Q-\frac{1}{2})$ is a fixed constant. In this case, either setting $\eta^P\equiv 1$ or $\eta^P\equiv 0$ satisfies $(Q,P)\in\Pi^{NP}$.

At last, we prove the first statement under the conditions that $\beta>0$ and $\eta^Q$ hits $\frac{1}{2}$ on $\Omega$. If $\beta_P=0$, then we can again set $\eta^P=\frac{1}{2}+\frac{C_\gamma}{2^\gamma}\mathrm{sgn}\left(\eta^Q-\frac{1}{2}\right)$ to satisfy that $(Q,P)\in\Pi^{NP}$, so we consider $\beta_P>0$ below.

For any $x\in\Omega$, define $d(x)$ as the point $x'\in\Omega,\eta^Q(x')=\frac{1}{2}$ that is closest to $x$, i.e.
$$d(x):=\argmin_{x'\in\Omega,\eta^Q(x')=\frac{1}{2}}\|x-x'\|.$$
This function is well-defined since $\eta^Q$ hits $\frac{1}{2}$ on $\Omega$. Define $\eta^P$ as
$$\eta^P=\frac{1}{2}+C\mathrm{sgn}\left(\eta^Q-\frac{1}{2}\right)\|x-d(x)\|^{\beta_P},$$ where the constant $C>0$ will be specified later.

By the definition of $P$, we have $(\eta^P-\frac{1}{2})(\eta^Q-\frac{1}{2})\geq 0$. Moreover, we have 
$$
\begin{aligned}
 |\eta^P(x)-\frac{1}{2}|&\geq C\|x-d(x)\|^{\beta_P}\\
 &\geq C C_\beta^{-\frac{\beta_P}{\beta}}\left(C_\beta\|x-d(x)\|^\beta \right)^{\frac{\beta_P}{\beta}}\\
 &\geq C C_\beta^{-\frac{\beta_P}{\beta}}|\eta^Q(x)-\frac{1}{2}|^{\frac{\beta_P}{\beta}}\\
 &\geq C C_\beta^{-\frac{\beta_P}{\beta}} 2^{\beta_P/\beta-\gamma}|\eta^Q(x)-\frac{1}{2}|^\gamma\\
 &\geq C_\gamma |\eta^Q(x)-\frac{1}{2}|^\gamma.
\end{aligned}
$$ We conclude that $\Omega^+(\gamma,C_\gamma)=\Omega$ provided that $C\geq C_\gamma C_\beta^{\frac{\beta_P}{\beta}}2^{\gamma-\beta_P/\beta}$. Hence, it suffices to obtain that $$\eta^P\in \mathcal{H}(\beta_P,C_{\beta_P})$$ to show $(Q,P)\in\Pi^{NP}(\alpha,C_\alpha,\gamma,C_\gamma,0,\beta,\beta_P,C_\beta,C_{\beta_P},\mu^+,\mu^-,c_\mu,r_\mu)$. For any $x,x'\in\mathbb{R}^d$, we assume that $\|x-d(x)\|\geq \|x'-d(x')\|$ without loss of generality. If $(\eta^Q(x)-\frac{1}{2})(\eta^Q(x')-\frac{1}{2})\geq 0$, we have
$$
\begin{aligned}
|\eta^P(x)-\eta^P(x')|&\leq C\||x-d(x)\|^{\beta_P}-\|x'-d(x')\|^{\beta_P}|\\
&\leq C|\|x-d(x)||-||x'-d(x')\||^{\beta_P}\\
&\leq C\|x-x'\|^{\beta_P},
\end{aligned}
$$
so $\eta^P\in \mathcal{H}(\beta_P,C_{\beta_P})$ if $C_{\beta_P}\geq C$.
If $(\eta^Q(x)-\frac{1}{2})(\eta^Q(x')-\frac{1}{2})<0$, since $\eta^P$ is continuous, there exists $\lambda\in(0,1)$ such that $$\eta^Q(\lambda x+(1-\lambda)x')=\frac{1}{2}.$$ We have
$$ 
\begin{aligned}
|\eta^P(x)-\eta^P(x')|&\leq 2C\|x-d(x)\|^{\beta_P}\\
&\leq 2C\|x-(\lambda x+(1-\lambda)x')\|^{\beta_P}\\
&\leq 2C\|x-x'\|^{\beta_P},
\end{aligned}
$$
so $\eta^P\in \mathcal{H}(\beta_P,C_{\beta_P})$ if $C_{\beta_P}\geq 2C$. Combining all cases above, we conclude that for any $Q\in\Pi^{NP}_Q$, there exists some $P$ such that $(Q,P)\in\Pi^{NP}$ provided that $$C_{\beta_P}\geq \max\{C_\gamma 2^{-\gamma},2C_\gamma C_\beta^{\frac{\beta_P}{\beta}}2^{\gamma-\beta_P/\beta}\}.$$
\end{proof}
\begin{proof}[Proof of the second statement]
Let $\Omega=[0,1]^d$. Since $\beta_P>\gamma\beta\geq 0$, $\eta^P$ is continuous. 
Define
$$\eta^Q=\frac{1}{2}+C_\beta(x_1-\frac{1}{2})^\beta.$$ When $x_1=\frac{1}{2}$ for any $x\in\Omega$, it holds that $\eta^Q(x)=\frac{1}{2}$. Define $\Omega_1=\{x\in\Omega:x_1=\frac{1}{2}\}$. We claim $\eta^P=\frac{1}{2}$ on $x\in\Omega_1$. Otherwise, it is trivial to see that there will be a small ball in $\Omega$ on which $(\eta^P-\frac{1}{2})(\eta^Q-\frac{1}{2})<0$, which contradicts the fact that $\varepsilon(z;\gamma, C_\gamma)\equiv 0$.

Define the unit vector on the first coordinate as $e_1:=(1,0,0,\cdots,0)$. We see that for any $t\in[0,1]$,
$$\eta^Q(e_1)=\frac{1}{2},\quad \eta^Q(te_1)=\frac{1}{2}+C_\beta(t-\frac{1}{2})^\beta,$$
\begin{equation}
 \eta^P(e_1)=\frac{1}{2},\quad |\eta^P(te_1)-\frac{1}{2}|\geq C_\gamma|\eta^Q(te_1)-\frac{1}{2}|^\gamma\geq C_\gamma C^\gamma_\beta|t-\frac{1}{2}|^{\gamma\beta}.
 \label{eq:prop2secondContradict}
\end{equation}
However, since $\eta^P\in\mathcal{H}(\beta_P,C_{\beta_P})$, we have 
$$ |\eta^P(te_1)-\frac{1}{2}|=|\eta^P(te_1)-\eta^P(e_1)|\leq C_{\beta_P}t^{\beta_P},$$
which contradicts with \eqref{eq:prop2secondContradict} by choosing $t>0$ to be small enough.
\end{proof}

\subsection{Proof of Proposition \ref{prop:nonParaGeneralLowerBound01}}
\label{sec:prop5proof}
\begin{proof}
The proof is similar to the proof of Theorem 3.2 in \cite{ttcai2020classification}, which shows that
\begin{equation}
\inf_{\hat{f}}\sup_{\substack{(Q,P)\in\Pi^{NP}_0\\\Omega=\Omega_P,\beta_P=0,C_{\beta_P}=1}}\E\mathcal{E}_Q(\hat{f})\\
\gtrsim  n_P^{-\frac{\beta(1+\alpha)}{2\gamma\beta+d}}\land n_Q^{-\frac{\beta(1+\alpha)}{2\beta+d}}.
\label{eq:prop5part1raw}
\end{equation}
without imposing any smoothness condition on $\eta^P$. Our objective is to show that the minimax lower bound \eqref{eq:prop5part1raw} could apply to a broader class:
$$
\inf_{\hat{f}}\sup_{\substack{(Q,P)\in\Pi^{NP}_0\\\gamma\beta \geq \beta_P}}\E\mathcal{E}_Q(\hat{f})\\
\gtrsim  n_P^{-\frac{\beta(1+\alpha)}{2\gamma\beta+d}}\land n_Q^{-\frac{\beta(1+\alpha)}{2\beta+d}}
$$
with some modifications on the original proof. The only extra work is to verify that our construction $P_\sigma,\ \sigma\in\{0,1\}^m$ also satisfies the smoothness condition for $\eta^P$ with parameters $(\beta_P,C_{\beta_P})$, i.e., $(Q,P)\in\mathcal{H}(\beta,\beta_P,C_\beta,C_{\beta_P})$, given that $\beta_P<\gamma\beta$. 

Define the quantities
$$r=c_r (n_P^{-\frac{1}{2\gamma\beta+d}}\land n_Q^{-\frac{1}{2\beta+d}}),\quad w=c_wr^d,\quad m=\lfloor c_m r^{\alpha\beta-d}\rfloor,$$ where the constants $c_r,c_w,c_m$ will be specified later in the proof.

We consider a packing $\{x_k\}_{k=1,\cdots,m}$ with radius $2r$ in $[0,1]^d$. For an example of such construction, we divide $[0,1]^d$ into uniform small cubes with side length as $6r$, which forms a grid with at least $\lfloor (6r)^{-1}\rfloor^d$ small cubes. Since $m\ll\lfloor (6r)^{-1}\rfloor^d$, we suppose that $c_m$ is small enough such that $m<\lfloor (6r)^{-1}\rfloor^d$. Therefore, we could assign exactly one sphere with radius $2r$ in one cube without intersection, which forms a packing of $\{x_k\}_{k=1,\cdots,m}$ with radius $2r$ in $[0,1]^d$. Also, define $B_c$ as the compliment of these $m$ balls, i.e.
$B_c:=[0,1]^d/\left(\bigcup_{k=1}^m B(x_k,2r)\right)$.

For any $\sigma\in\{1,-1\}^m$, we consider the regression functions $\eta^Q_\sigma(x)$ and $\eta^P_\sigma(x)$ defined as follows:
$$
  \eta^Q_\sigma(x)=
\begin{cases}
\frac{1}{2}+\sigma_k C_\beta r^{\beta}g^{\beta}(\frac{\|x-x_k\|}{r})\quad &\text{if $x\in B(x_k,2r)$ for some $k=1,\cdots,m$,}\\
\frac{1}{2}\quad &\text{otherwise,}
\end{cases}
$$

$$
\eta^P_\sigma(x)=
\begin{cases}
\frac{1}{2}+\sigma_k (C_{\beta_P}\lor C_\gamma C_\beta^\gamma) r^{\gamma\beta}g^{\gamma\beta}(\frac{\|x-x_k\|}{r})\quad &\text{if $x\in B(x_k,2r)$ for some $k=1,\cdots,m$,}\\
\frac{1}{2}\quad &\text{otherwise,}
\end{cases}
$$
where the function $g(\cdot)$ is defined as $g(x)=\min\{1,2-x\}$ on $x\in[0,2]$. 

The construction of the marginal distributions $Q_{\sigma,X}$ and $P_{\sigma,X}$ is as follows. Let $Q_{\sigma,X}=P_{\sigma,X}$, both of which have the density function $\mu(\cdot)$ defined as
$$\mu(x)=
\begin{cases}
\frac{w}{\lambda[B(x_k,r)]}\quad &\text{if $x\in B(x_k,r)$ for some $k=1,\cdots,m$,}\\
\frac{1-mw}{1-m\lambda[B(x_k,2r)]}\quad &\text{if $x\in B_c$,}\\
0\quad &\text{otherwise.}
\end{cases}
$$

Given the the construction of $(Q_\sigma,P_\sigma)$, we next verify that $(Q_\sigma,P_\sigma)$ belongs to $\Pi^{NP}_0$ for any $\sigma\in\{1,-1\}^m$. The fact that $\Omega^+(\gamma,C_\gamma)=\Omega$ is trivial by the construction of the regression functions $\eta^Q_\sigma(x)$ and $\eta^P_\sigma(x)$, so we omit its verification.

\vspace{\baselineskip}
\noindent\textbf{Verify Margin Assumption:} We have that
$$
\begin{aligned}
Q_\sigma(0<|\eta_\sigma^Q-\frac{1}{2}|<t)&=m Q_\sigma(0<C_\beta r^\beta g^\beta(\frac{\|X-x_1\|}{r})\leq t)\\
&=m Q_\sigma(0<g(\frac{\|X-x_1\|}{r})\leq (\frac{t}{C_\beta r^\beta})^{\frac{1}{\beta}})\\
&=mw\mathbf{1}\{t\geq C_\beta r^\beta\}\\
&\leq c_mc_w r^{\alpha\beta}\mathbf{1}\{t\geq C_\beta r^\beta\}\\
&\leq C_\alpha t^\alpha
\end{aligned}
$$
given that $c_m$ is small enough. Therefore, $Q_\sigma\in\mathcal{M}(\alpha,C_\alpha)$.

\vspace{\baselineskip}
\noindent\textbf{Verify Smoothness Assumption:}
It is easy to see that for any $a,b\in[0,2]$ we have
$$
|g^{\beta}(a)-g^{\beta}(b)|\leq |a-b|^\beta
$$
and
$$
|g^{\gamma\beta}(a)-g^{\gamma\beta}(b)|\leq
\begin{cases}
|a-b|^{\gamma\beta}\quad &\gamma\beta\leq 1,\\
1-(1-|a-b|)^{\gamma\beta}\leq \gamma\beta |a-b| \quad &\gamma\beta>1.
\end{cases}
$$
Thus, for any $x,x'\in B(x_k,2r)$, we obtain from the triangular inequality $\|x-x_k\|-\|x'-x_k\|\leq \|x-x'\|$ that
\begin{equation}
    |r^{\beta}g^{\beta}(\|x-x_k\|/r))-r^{\beta}g^{\beta}(\|x'-x_k\|/r))|\leq\|x-x'\|^{\beta}
    \label{eq:prop5smooth1}
\end{equation}
and
\begin{equation}
|r^{\gamma\beta}g^{\gamma\beta}(\|x-x_k\|/r))-r^{\gamma\beta}g^{\gamma\beta}(\|x'-x_k\|/r))|
\leq
\begin{cases}
\|x-x'\|^{\gamma\beta}\quad &\gamma\beta\leq 1,\\
\gamma\beta\cdot r^{\gamma\beta} \|x-x'\| \quad &\gamma\beta>1.
\end{cases} 
\label{eq:prop5smooth2}
\end{equation}
Suppose that $c_r$ is small enough such that for any $n_Q,n_P\geq 1$, we have $$\max\{\gamma\beta\cdot r^{\gamma\beta}(4r)^{1-\beta_P},(4r)^{\gamma\beta-\beta_P}\}\leq 1,$$ then we further deduces that
\begin{equation}
\begin{aligned}
&\|x-x'\|^{\gamma\beta}\leq \|x-x'\|^{\beta_P} (4r)^{\gamma\beta-\beta_P}\leq \|x-x'\|^{\beta_P}\\
& \gamma\beta\cdot r^{\gamma\beta} \|x-x'\|\leq \gamma\beta\cdot r^{\gamma\beta}(4r)^{1-\beta_P}\|x-x'\|^{\beta_P}\leq \|x-x'\|^{\beta_P}.
\end{aligned}
\label{eq:prop5smooth3}
\end{equation}
On one hand, by the definition of $\eta^Q_\sigma$ and \eqref{eq:prop5smooth1}, we see that $$|\eta^Q_\sigma(x)-\eta^Q_\sigma(x')|\leq C_{\beta}\|x-x'\|^{\beta},\quad \forall x,x\in[0,1]^d.$$
 One the other hand, by the definition of $\eta^P_\sigma$, \eqref{eq:prop5smooth2} and \eqref{eq:prop5smooth3}, we see that
$$|\eta^P_\sigma(x)-\eta^P_\sigma(x')|\leq C_{\beta_P}\|x-x'\|^{\gamma\beta}\leq C_{\beta_P}\|x-x'\|^{\beta_P},\quad \forall x,x\in[0,1]^d.$$
Hence, $(Q,P)\in\mathcal{H}(\beta,\beta_P,C_\beta,C_{\beta_P})$.

\vspace{\baselineskip}
\noindent\textbf{Verify Strong Density Condition:}
If $x\in B(x_k,r)$ for some $k=1,\cdots,m$, we have
$$\mu(x)=\frac{w}{\lambda[B(x_k,r)]}=c_w/\pi_d.$$
If $x\in B_c$, we have
$$\mu(x)=\frac{1-c_w \lfloor c_m r^{\alpha\beta-d}\rfloor r^d}{1-2^d\pi_d\lfloor c_m r^{\alpha\beta-d}\rfloor r^d}\overset{n_Q\rightarrow \infty}{\longrightarrow}1.$$
Therefore, we could set $c_w\in[\pi_d\mu^-,\pi_d\mu^+]$ to satisfy the condition $(Q_\sigma,P_\sigma)\in\mathcal{S}(\mu^+,\mu^-,c_\mu,r_\mu)$.

Putting all verification steps above together, we conclude that $(Q_\sigma,P_\sigma)\in\Pi^{NP}_0$. We finish the proof of this part by applying Assouad's lemma to $(Q_\sigma,P_\sigma),\ \forall \sigma\in\{1,-1\}^m$.

If $\sigma,\sigma'\in\{1,-1\}^m$ differ only at one coordinate, i.e.
$$\sigma_k=-\sigma'_k,\quad \sigma_l=\sigma'_l\ (\forall l\neq k),$$ we have the Hellinger distance bound as
$$
\begin{aligned}
H^2(Q_\sigma,Q_{\sigma'})&=\frac{1}{2}\int \left(\sqrt{\eta^Q_\sigma(X)}-\sqrt{\eta^Q_{\sigma'}(X)}\right)^2+\left(\sqrt{1-\eta^Q_\sigma(X)}-\sqrt{1-\eta^Q_{\sigma'}(X)}\right)^2 dQ_X\\
&=\int_{B(x_k,r)}\frac{w}{\lambda[B(x_k,r)]}\left(\sqrt{\frac{1}{2}+C_\beta r^\beta}-\sqrt{\frac{1}{2}-C_\beta r^\beta}\right)^2 dx\\
&= \frac{1}{2}w(1-\sqrt{1-2C^2_\beta r^{2\beta}})\\
&\leq C^2_\beta wr^{2\beta},
\end{aligned}
$$
$$
\begin{aligned}
H^2(P_\sigma,P_{\sigma'})&=\frac{1}{2}\int \left(\sqrt{\eta^P_\sigma(X)}-\sqrt{\eta^P_{\sigma'}(X)}\right)^2+\left(\sqrt{1-\eta^P_\sigma(X)}-\sqrt{1-\eta^P_{\sigma'}(X)}\right)^2 dP_X\\
&=\int_{B(x_k,r)}\frac{w}{\lambda[B(x_k,r)]}\left(\sqrt{\frac{1}{2}+(C_{\beta_P}\lor C_\gamma C_\beta^\gamma) r^{\gamma\beta}}-\sqrt{\frac{1}{2}-(C_{\beta_P}\lor C_\gamma C_\beta^\gamma) r^{\gamma\beta}}\right)^2 dx\\
&= \frac{1}{2}w(1-\sqrt{1-2(C^2_{\beta_P}\lor C^2_\gamma C_\beta^{2\gamma}) r^{2\gamma\beta}})\\
&\leq (C^2_{\beta_P}\lor C^2_\gamma C_\beta^{2\gamma}) r^{2\gamma\beta}.
\end{aligned}
$$
Recall that $r=c_r (n_P^{-\frac{1}{2\gamma\beta+d}}\land n_Q^{-\frac{1}{2\beta+d}})$. By the property of Hellinger distance, we have
$$
\begin{aligned}
H^2(\pr^\sigma_{\mathcal{D}_Q}\times\pr^\sigma_{\mathcal{D}_P},\pr^{\sigma'}_{\mathcal{D}_Q}\times\pr^{\sigma'}_{\mathcal{D}_P})\leq &n_Q H^2(Q_\sigma,Q_{\sigma'})+n_P H^2(P_\sigma,P_{\sigma'})\\
\leq & C^2_\beta w n_Q r^{2\beta}+(C^2_{\beta_P}\lor C^2_\gamma C_\beta^{2\gamma}) n_P r^{2\gamma\beta}\\
\leq & C^2_\beta c_r^{2\beta+d}c_w+(C^2_{\beta_P}\lor C^2_\gamma C_\beta^{2\gamma})c_r^{2\gamma\beta+d}c_w\\
\leq & \frac{\sqrt{2}}{4}
\end{aligned}
$$
provided that $c_r$ is small enough. This further indicates that
\begin{equation}
TV(\pr^\sigma_{\mathcal{D}_Q}\times\pr^\sigma_{\mathcal{D}_P},\pr^{\sigma'}_{\mathcal{D}_Q}\times\pr^{\sigma'}_{\mathcal{D}_P})\leq \sqrt{2}H^2(\pr^\sigma_{\mathcal{D}_Q}\times\pr^\sigma_{\mathcal{D}_P},\pr^{\sigma'}_{\mathcal{D}_Q}\times\pr^{\sigma'}_{\mathcal{D}_P})\leq \frac{1}{2}.
\label{eq:prop5part1H2toTV}
\end{equation}
For any empirical classifier $\hat{f}$, we have
$$
\begin{aligned}
    \mathcal{E}_{Q_\sigma}(\hat{f})+\mathcal{E}_{Q_{\sigma'}}(\hat{f})&=2\E_{Q_\sigma}[|\eta^{Q_\sigma}(X)-\frac{1}{2}|\mathbf{1}\{\hat{f}(X)\neq f^*_{Q_\sigma}(X)\}]\\
&+2\E_{Q_{\sigma'}}[|\eta^{Q_{\sigma'}}(X)-\frac{1}{2}|\mathbf{1}\{\hat{f}(X)\neq f^*_{Q_{\sigma'}}(X)\}]\\
&\geq 2\int_{B(x_k,r)}\frac{w}{\lambda[B(x_k,r)]}C_\beta r^\beta (\mathbf{1}\{\hat{f}(X)\neq f^*_{Q_\sigma}(X)\}+\mathbf{1}\{\hat{f}(X)\neq f^*_{Q_{\sigma'}}(X)\})dx\\
&= 2C_\beta w r^\beta.
\end{aligned}
$$
Combining this lower bound with \eqref{eq:prop5part1H2toTV}, the Assouad's lemma shows that
$$\sup_{\substack{(Q,P)\in\Pi^{NP}_0\\\Omega=\Omega_P,\gamma\beta \geq \beta_P}}\E\mathcal{E}_Q(\hat{f})\geq \sup_{\substack{(Q_\sigma,P_\sigma)\\ \sigma\in\{1,-1\}^m}}\E\mathcal{E}_{Q_\sigma}(\hat{f})\geq \frac{1}{2}C_\beta mwr^\beta\gtrsim r^{\beta(1+\alpha)}\gtrsim  n_P^{-\frac{\beta(1+\alpha)}{2\gamma\beta+d}}\land n_Q^{-\frac{\beta(1+\alpha)}{2\beta+d}}.$$
\end{proof}

\subsection{Proof of Proposition \ref{prop:nonParaGeneralLowerBound02}}
\label{sec:prop6proof}
\begin{proof}

Our objective is to show that the minimax lower bound satisfies
$$
\inf_{\hat{f}}\sup_{\substack{(Q,P)\in\Pi^{NP}_0\\\gamma\beta < \beta_P}}\E\mathcal{E}_Q(\hat{f})\\
\gtrsim  n_P^{-\frac{\beta_P(1+\alpha)/\gamma}{2\beta_P+d}}\land n_Q^{-\frac{\beta_P(1+\alpha)/\gamma}{2\beta_P/\gamma+d}}.
$$
Define the quantities
$$r_Q=c_r (n_P^{-\frac{\beta_P/\gamma\beta}{2\beta_P+d}}\land n_Q^{-\frac{1}{2\beta+d\gamma\beta/\beta_P}}),\quad r_P=r_Q^{\gamma\beta/\beta_P}=c_r^{\gamma\beta/\beta_P} (n_P^{-\frac{1}{2\beta_P+d}}\land n_Q^{-\frac{1}{2\beta_P/\gamma+d}}), $$
$$w=c_wr_Q^d,\quad m_0=\lfloor c_{m_0}\frac{r_P^d}{r_Q^d} \rfloor, \quad m=\lfloor c_m r_P^{\alpha\beta_P/\gamma-d}\rfloor,$$ where the constants $c_r,c_w,c_{m_0},c_m$ will be specified later in the proof.

Similar to the proof of the case of $\gamma\beta\geq \beta_P$, we consider a packing $\{x_k\}_{k=1,\cdots,m}$ with radius $2r_P$ in $[0,1]^d$. Define $B_c$ as the compliment of these $m$ balls, i.e., $$B_c:=[0,1]^d/\left(\bigcup_{k=1}^m B(x_k,2r_P)\right).$$

Next, we further consider a packing $\{x_{k,l}\}_{l=1,\cdots,m_0}$ for any $k=1,\cdots,m$ with radius $2r_Q$. By scaling the radius of the balls, the feasibility of our considered packing reduces to finding a packing $\{x_{l}\}_{l=1,\cdots,m_0}$ of $B(0,1)$ with radius $r_Q/r_P$. We denote $r_Q/r_P$ by $R$, and $m_0$ then becomes $\lfloor c_{m_0}R^{-d} \rfloor$.

We provide an example of the reduced form of packing as follows. Consider the inscribed cube of $B(0,1)$ with diagonal length $2$ and side length $\frac{2}{\sqrt{d}}$. We divide this inscribed cube into uniform small cubes with side length as $6R$, which forms a grid with at least $$\lfloor \frac{2}{\sqrt{d}}(6R)^{-1}\rfloor^d\geq \lfloor c\frac{2}{\sqrt{d}}(6R)^{-d}\rfloor$$ small cubes for some constant $c>0$. Therefore, provided that $c_{m_0}\leq \frac{2c}{6^d\sqrt{d}}$ is small enough, we could assign exactly one sphere with radius $2R$ in one cube without intersection, which forms a packing $\{x_l\}_{l=1,\cdots,m_0}$ with radius $2R$. Hence, we can find a packing $\{x_{k,l}\}_{l=1,\cdots,m_0}$ for any $k=1,\cdots,m$ with radius $2r_Q$.

For any $\sigma\in\{1,-1\}^m$, we consider the regression functions $\eta^Q_\sigma(x)$ and $\eta^P_\sigma(x)$ defined as follows:
\begin{equation}
\eta^Q_\sigma(x)=
\begin{cases}
\frac{1}{2}\left(1+\sigma_k (C_\beta \land (\frac{C_{\beta_P}}{C_\gamma})^{\frac{1}{\gamma}}) r_Q^{\beta}g^{\beta}(\frac{\|x-x_{k,l}\|}{r_Q})\right)\quad &\text{if $x\in B(x_{k,l},2r_Q)$ for some $k,l$,}\\
\frac{1}{2}\quad &\text{otherwise,}
\end{cases}
\label{eq:prop5part2etaQ}
\end{equation}

\begin{equation}
\eta^P_\sigma(x)=
\begin{cases}
\frac{1}{2}\left(1+\sigma_k  C_{\beta_P} r_P^{\beta_P}g^{\beta_P}(\frac{\|x-x_k\|}{r_P})\right)\quad &\text{if $x\in B(x_k,2r_P)$ for some $k$,}\\
\frac{1}{2}\quad &\text{otherwise,}
\end{cases}
\label{eq:prop5part2etaP}
\end{equation}
where the function $g(\cdot)$ is defined as $g(x)=\min\{1,2-x\}$ on $x\in[0,2]$. 

The construction of the marginal distributions $Q_{\sigma,X}$ and $P_{\sigma,X}$ is as follows. Let $Q_{\sigma,X}=P_{\sigma,X}$, both of which have the density function $\mu(\cdot)$ defined as
$$\mu(x)=
\begin{cases}
\frac{w}{\lambda[B(x_k,r_Q)]}\quad &\text{if $x\in B(x_{k,l},r)$ for some $k,l$,}\\
\frac{1-mm_0w}{1-m\lambda[B(x_k,2r_P)]}\quad &\text{if $x\in B_c$,}\\
0\quad &\text{otherwise.}
\end{cases}
$$

Given the the construction of $(Q_\sigma,P_\sigma)$, we next verify that $(Q_\sigma,P_\sigma)$ belongs to $\Pi^{NP}_0$ for any $\sigma\in\{1,-1\}^m$.

\vspace{\baselineskip}
\noindent\textbf{Verify Margin Assumption:} We have that
$$
\begin{aligned}
Q_\sigma(0<|\eta_\sigma^Q-\frac{1}{2}|<t)&=mm_0 Q_\sigma(0<(C_\beta \land (\frac{C_{\beta_P}}{C_\gamma})^{\frac{1}{\gamma}}) r_Q^\beta g^\beta(\frac{\|X-x_{1,1}\|}{r_Q})\leq 2t)\\
&=mm_0 Q_\sigma(0<g(\frac{\|X-x_1\|}{r_Q})\leq (\frac{2t}{(C_\beta \land (\frac{C_{\beta_P}}{C_\gamma})^{\frac{1}{\gamma}}) r_Q^\beta})^{\frac{1}{\beta}})\\
&=mm_0 w\mathbf{1}\{t\geq \frac{1}{2}(C_\beta \land (\frac{C_{\beta_P}}{C_\gamma})^{\frac{1}{\gamma}}) r_Q^\beta\}\\
&\leq c_m c_{m_0} c_w r_P^{\alpha\beta_P/\gamma}\mathbf{1}\{t\geq \frac{1}{2}(C_\beta \land (\frac{C_{\beta_P}}{C_\gamma})^{\frac{1}{\gamma}}) r_Q^\beta\}\\
&= c_m c_{m_0} c_w r_Q^{\alpha\beta}\mathbf{1}\{t\geq \frac{1}{2}(C_\beta \land (\frac{C_{\beta_P}}{C_\gamma})^{\frac{1}{\gamma}}) r_Q^\beta\}\\
&\leq C_\alpha t^\alpha
\end{aligned}
$$
given that $c_m$ is small enough. Therefore, $Q_\sigma\in\mathcal{M}(\alpha,C_\alpha)$.

\vspace{\baselineskip}
\noindent\textbf{Verify Smoothness Assumption:}
It is easy to see that for any $a,b\in[0,2]$ we have
$$
|g^{\beta}(a)-g^{\beta}(b)|\leq |a-b|^\beta,\quad |g^{\beta_P}(a)-g^{\beta_P}(b)|\leq |a-b|^{\beta_P}.
$$
Thus, for any $x,x'\in B(x_{k,l},2r_Q)$, we obtain from the triangular inequality $\|x-x_k\|-\|x'-x_k\|\leq \|x-x'\|$ that
\begin{equation}
    |r_Q^{\beta}g^{\beta}(\|x-x_{k,l}\|/r_Q))-r_Q^{\beta}g^{\beta}(\|x'-x_{k,l}\|/r_Q))|\leq\|x-x'\|^{\beta}.
    \label{eq:prop5smooth4}
\end{equation}
By the definition of $\eta^Q_\sigma$ and \eqref{eq:prop5smooth4}, we see that $$|\eta^Q_\sigma(x)-\eta^Q_\sigma(x')|\leq C_{\beta}\|x-x'\|^{\beta},\quad \forall x,x\in[0,1]^d.$$
In addition, for any $x,x'\in B(x_{k},2r_P)$, we obtain from the triangular inequality $\|x-x_k\|-\|x'-x_k\|\leq \|x-x'\|$ that
\begin{equation}
    |r_P^{\beta_P}g^{\beta_P}(\|x-x_{k}\|/r_P))-r_P^{\beta_P}g^{\beta_P}(\|x'-x_{k}\|/r_P))|\leq\|x-x'\|^{\beta_P}.
    \label{eq:prop5smooth5}
\end{equation}
By the definition of $\eta^P_\sigma$ and \eqref{eq:prop5smooth5}, we see that $$|\eta^P_\sigma(x)-\eta^P_\sigma(x')|\leq C_{\beta_P}\|x-x'\|^{\beta_P},\quad \forall x,x\in[0,1]^d.$$
Hence, $(Q,P)\in\mathcal{H}(\beta,\beta_P,C_\beta,C_{\beta_P})$.

\vspace{\baselineskip}
\noindent\textbf{Verify Strong Density Condition:}
If $x\in B(x_{k,l},r)$ for some $k=1,\cdots,m$ and $l=1,\cdots,m_0$, we have
$$\mu(x)=\frac{w}{\lambda[B(x_{k,l},r)]}=c_w/\pi_d.$$
If $x\in B_c$, we have
$$
\begin{aligned}
    \mu(x)=&\frac{1-c_w \lfloor c_{m_0}\frac{r_P^d}{r_Q^d} \rfloor \lfloor c_m r_P^{\alpha\beta_P/\gamma-d}\rfloor r_Q^d}{1-2^d\pi_d\lfloor c_m r_P^{\alpha\beta_P/\gamma-d}\rfloor r_P^d}\\
    =&\frac{1-c_w \lfloor c_{m_0}r_Q^{d\gamma\beta/\beta_P-d} \rfloor \lfloor c_m r_Q^{\alpha\beta-d\gamma\beta/\beta_P}\rfloor r_Q^d}{1-2^d\pi_d\lfloor c_m r_P^{\alpha\beta-d\gamma\beta/\beta_P}\rfloor r_Q^{d\gamma\beta/\beta_P}}\overset{n_Q\rightarrow \infty}{\longrightarrow}1.
\end{aligned}
$$
Therefore, we could set $c_w\in[\pi_d\mu^-,\pi_d\mu^+]$ to satisfy the condition $(Q_\sigma,P_\sigma)\in\mathcal{S}(\mu^+,\mu^-,c_\mu,r_\mu)$.

\vspace{\baselineskip}
\noindent\textbf{Verify Signal Transfer Set:}
We show that $\Omega^+(\gamma,C_\gamma)=\Omega$ for our construction of $(Q_\sigma,P_\sigma)$. By the construction of the marginal distribution pair $(Q^\sigma_X,P^\sigma_X)$ and the definition of $\eta^Q_\sigma$ and $\eta^P_\sigma$ in \eqref{eq:prop5part2etaQ} and \eqref{eq:prop5part2etaP}, for any $x\in\Omega=B_c\cup \left(\bigcup_{k=1}^m B(x_{k,l},r_Q)\right)$,
\begin{itemize}
    \item If $x\in B(x_{k,l},r)$ for some $k=1,\cdots,m$ and $l=1,\cdots,m_0$, we have
    $$\eta^Q_\sigma(x)-\frac{1}{2}=\sigma_k(C_\beta \land (\frac{C_{\beta_P}}{C_\gamma})^{\frac{1}{\gamma}}) r_Q^{\beta},\quad \eta^P_\sigma(x)-\frac{1}{2}=\sigma_kC_{\beta_P} r_P^{\beta_P}=\sigma_k C_{\beta_P} r_Q^{\gamma\beta}.$$ Therefore,
    $$\mathrm{sgn}\left(\eta^Q(x)-\frac{1}{2}\right)\times(\eta^P(x)-\frac{1}{2})\geq C_\gamma|\eta^Q(x)-\frac{1}{2}|^\gamma.$$
    \item If $x\in B_c$, we have $\eta^Q_\sigma(x)=\eta^P_\sigma(x)=\frac{1}{2}$. Therefore,
    $$\mathrm{sgn}\left(\eta^Q(x)-\frac{1}{2}\right)\times(\eta^P(x)-\frac{1}{2})\geq C_\gamma|\eta^Q(x)-\frac{1}{2}|^\gamma.$$
\end{itemize}

Putting all verification steps above together, we conclude that $(Q_\sigma,P_\sigma)\in\Pi^{NP}_0$. We finish the proof of this part by applying Assouad's lemma to $(Q_\sigma,P_\sigma),\ \forall \sigma\in\{1,-1\}^m$.

If $\sigma,\sigma'\in\{1,-1\}^m$ differ only at one coordinate, i.e.
$$\sigma_k=-\sigma'_k,\quad \sigma_l=\sigma'_l\ (\forall l\neq k),$$ we have the Hellinger distance bound as
$$
\begin{aligned}
H^2(Q_\sigma,Q_{\sigma'})&=\frac{1}{2}\int \left(\sqrt{\eta^Q_\sigma(X)}-\sqrt{\eta^Q_{\sigma'}(X)}\right)^2+\left(\sqrt{1-\eta^Q_\sigma(X)}-\sqrt{1-\eta^Q_{\sigma'}(X)}\right)^2 dQ_X\\
&=\sum_{l=1}^{m_0}\int_{B(x_{k,l},r_Q)}\frac{w}{\lambda[B(x_{k,l},r_Q)]}\left(\sqrt{\frac{1}{2}+(C_\beta \land (\frac{C_{\beta_P}}{C_\gamma})^{\frac{1}{\gamma}}) r_Q^{\beta}}-\sqrt{\frac{1}{2}-(C_\beta \land (\frac{C_{\beta_P}}{C_\gamma})^{\frac{1}{\gamma}}) r_Q^{\beta}}\right)^2 dx\\
&= \frac{1}{2}m_0w(1-\sqrt{1-2(C^2_\beta \land (\frac{C_{\beta_P}}{C_\gamma})^{\frac{2}{\gamma}}) r_Q^{2\beta}})\\
&\leq (C^2_\beta \land (\frac{C_{\beta_P}}{C_\gamma})^{\frac{2}{\gamma}}) m_0w r_Q^{2\beta},
\end{aligned}
$$
$$
\begin{aligned}
H^2(P_\sigma,P_{\sigma'})&=\frac{1}{2}\int \left(\sqrt{\eta^P_\sigma(X)}-\sqrt{\eta^P_{\sigma'}(X)}\right)^2+\left(\sqrt{1-\eta^P_\sigma(X)}-\sqrt{1-\eta^P_{\sigma'}(X)}\right)^2 dP_X\\
&=\sum_{l=1}^{m_0}\int_{B(x_{k,l},r_Q)}\frac{w}{\lambda[B(x_{k,l},r_Q)]}\left(\sqrt{\frac{1}{2}+C_{\beta_P} r_P^{\beta_P}}-\sqrt{\frac{1}{2}-C_{\beta_P} r_P^{\beta_P}}\right)^2 dx\\
&= \frac{1}{2}m_0w(1-\sqrt{1-2C^2_{\beta_P} r_P^{2\beta_P}})\\
&\leq m_0w C^2_{\beta_P} r_P^{2\beta_P}.
\end{aligned}
$$
By the property of Hellinger distance, we have
$$
\begin{aligned}
H^2(\pr^\sigma_{\mathcal{D}_Q}\times\pr^\sigma_{\mathcal{D}_P},\pr^{\sigma'}_{\mathcal{D}_Q}\times\pr^{\sigma'}_{\mathcal{D}_P})\leq &n_Q H^2(Q_\sigma,Q_{\sigma'})+n_P H^2(P_\sigma,P_{\sigma'})\\
\leq & (C^2_\beta \land (\frac{C_{\beta_P}}{C_\gamma})^{\frac{2}{\gamma}}) m_0w n_Q r_Q^{2\beta}+m_0w C^2_{\beta_P} n_P r_P^{2\beta_P}\\
\leq & c_wc_{m_0}(C^2_\beta \land (\frac{C_{\beta_P}}{C_\gamma})^{\frac{2}{\gamma}}) c_r^{2\beta}+c_wc_{m_0} C^2_{\beta_P} c_r^{2\gamma\beta}\\
\leq & \frac{\sqrt{2}}{4}
\end{aligned}
$$
provided that $c_r$ is small enough. This further indicates that
\begin{equation}
TV(\pr^\sigma_{\mathcal{D}_Q}\times\pr^\sigma_{\mathcal{D}_P},\pr^{\sigma'}_{\mathcal{D}_Q}\times\pr^{\sigma'}_{\mathcal{D}_P})\leq \sqrt{2}H^2(\pr^\sigma_{\mathcal{D}_Q}\times\pr^\sigma_{\mathcal{D}_P},\pr^{\sigma'}_{\mathcal{D}_Q}\times\pr^{\sigma'}_{\mathcal{D}_P})\leq \frac{1}{2}.
\label{eq:prop5part2H2toTV}
\end{equation}
For any empirical classifier $\hat{f}$, we have
$$
\begin{aligned}
    \mathcal{E}_{Q_\sigma}(\hat{f})+\mathcal{E}_{Q_{\sigma'}}(\hat{f})&=2\E_{Q_\sigma}[|\eta^{Q_\sigma}(X)-\frac{1}{2}|\mathbf{1}\{\hat{f}(X)\neq f^*_{Q_\sigma}(X)\}]\\
&+2\E_{Q_{\sigma'}}[|\eta^{Q_{\sigma'}}(X)-\frac{1}{2}|\mathbf{1}\{\hat{f}(X)\neq f^*_{Q_{\sigma'}}(X)\}]\\
&\geq 2\sum_{l=1}^{m_0}\int_{B(x_{k,l},r_Q)}\frac{w}{\lambda[B(x_{k,l},r_Q)]} (C_\beta \land (\frac{C_{\beta_P}}{C_\gamma})^{\frac{1}{\gamma}})m_0w r_Q^{\beta}dx\\
&= 2(C_\beta \land (\frac{C_{\beta_P}}{C_\gamma})^{\frac{1}{\gamma}})m_0w r_Q^{\beta}.
\end{aligned}
$$
Combining this lower bound with \eqref{eq:prop5part2H2toTV}, the Assouad's lemma shows that
$$
\begin{aligned}
\sup_{\substack{(Q,P)\in\Pi^{NP}_0\\\Omega=\Omega_P,\gamma\beta < \beta_P}}\E\mathcal{E}_Q(\hat{f})\geq \sup_{\substack{(Q_\sigma,P_\sigma)\\ \sigma\in\{1,-1\}^m}}\E\mathcal{E}_{Q_\sigma}(\hat{f})&\geq \frac{1}{2}(C_\beta \land (\frac{C_{\beta_P}}{C_\gamma})^{\frac{1}{\gamma}})mm_0w r_Q^{\beta}\\
&\gtrsim r_Q^{\beta(1+\alpha)}\gtrsim  n_P^{-\frac{\beta_P(1+\alpha)/\gamma}{2\beta_P+d}}\land n_Q^{-\frac{\beta_P(1+\alpha)/\gamma}{2\beta_P/\gamma+d}}. 
\end{aligned}
$$
\end{proof}
%%%%%%%%%%%%%%%%%%%%%%%%%%%%%%%%%%%%%%%%%
\subsection{Proof of Lemma \ref{lemma:LRUparaBoundQ} and \ref{lemma:LRUparaBoundP}}
\label{sec:logisticUpperLemmaProof}
\begin{proof}[Proof of Lemma \ref{lemma:LRUparaBoundQ}]
 For any $i=1,\cdots, n_Q$, define the residual as $\varepsilon_i:=\psi'(\beta_Q^TX_i)-Y_i\in[-1,1]$. Define $V_{ij}:=X_{ij}\varepsilon_i$, where $X_{ij}$ is the $j-$th covariate of the $i$-th observation $X_i$.
For any $t\in\mathbb{R}$, we compute the cumulant function
$$
\begin{aligned}
  \log \E_{\mathcal{D}_Q}[\exp(tV_{ij})|X_i]&=\log \left(\E_{\mathcal{D}_Q}[\exp(t)X_{ij}Y_i]\exp(-tX_{ij}\psi'(\beta_Q^TX_i))\right)\\
  &= \psi(tX_{ij}+\beta_Q^TX_i)-\psi(\beta_Q^TX_i)-\psi'(\beta_Q^TX_i)tX_{ij}.
\end{aligned}
$$
Hence, by the second-order Taylor series expansion, we have
\begin{equation}
 \log \E_{\mathcal{D}_Q}[\exp(tV_{ij})|X_i]=\frac{t^2}{2}X_{ij}^2\psi''(\beta_Q^TX_i+\xi_itX_{ij})\leq \frac{t^2}{8}(\frac{1}{n_Q}\sum_{i=1}^{n_Q}X_{ij}^2).
 \label{eq:LRUcumulant}
\end{equation}
for some $\xi_i\in[0,1]$.
Since $\E_{\mathcal{D}_Q}[X_{ij}^2]=1$ and $X_{ij}^2$ is sub-exponential since $X_{ij}\sim N(0,1)$, the tail bound for the independent sum of sub-exponential random variables reads
$$\pr_{\mathcal{D}_Q}(\frac{1}{n_Q}\sum_{i=1}^{n_Q}X_{ij}^2\geq 2)\leq 2\exp(-n_Q/4).$$
We denote the event $E:=\{\max_{j=1,\cdots,d}\frac{1}{n_Q}\sum_{i=1}^{n_Q}X_{ij}^2\leq 2\}$. The union bounds give
\begin{equation}
 \pr_{\mathcal{D}_Q}(E^c)\leq \sum_{j=1}^d\pr_{\mathcal{D}_Q}(\frac{1}{n_Q}\sum_{i=1}^{n_Q}X_{ij}^2\geq 2)\leq 2d\exp(-n_Q/4).
 \label{eq:LRUEcProb}
\end{equation}
By \eqref{eq:LRUcumulant}, we have that on the event $E$,
$$ \log \E_{\mathcal{D}_Q}[\exp(tV_{ij})|X_i]\leq \frac{t^2}{4},$$
and we obtain by the Chernoff bound that
$$\pr_{\mathcal{D}_Q}(|\frac{1}{n_Q}\sum_{i=1}^{n_Q}V_{ij}|\geq t|E)\leq 2\exp(-4n_Qt^2),$$ and the union bound is as follows:
\begin{equation}
\pr_{\mathcal{D}_Q}(\max_{j=1,\cdots,d}|\frac{1}{n_Q}\sum_{i=1}^{n_Q}V_{ij}|\geq t|E)\leq 2d\exp(-4n_Qt^2).
\label{eq:LRUVijBound0}
\end{equation}
We set $t=\frac{\sqrt{K+1}}{2}\sqrt{\frac{\log d}{n_Q}}$, then \eqref{eq:LRUVijBound0} becomes
\begin{equation}
\pr_{\mathcal{D}_Q}(\max_{j=1,\cdots,d}|\frac{1}{n_Q}\sum_{i=1}^{n_Q}V_{ij}|\geq \frac{\sqrt{K+1}}{2}\sqrt{\frac{\log d}{n_Q}}|E)\leq 2d^{-K}.
\label{eq:LRUVijBound1}
\end{equation}
Putting \eqref{eq:LRUEcProb} and \eqref{eq:LRUVijBound1} together, we have
\begin{equation}
\pr_{\mathcal{D}_Q}(\max_{j=1,\cdots,d}|\frac{1}{n_Q}\sum_{i=1}^{n_Q}V_{ij}|\geq \frac{\sqrt{K+1}}{2}\sqrt{\frac{\log d}{n_Q}})\leq 2d^{-K}+2d\exp(-n_Q/4).
\label{eq:LRUVijBoundQ}
\end{equation}
Denote
$$E_Q:=\max_{j=1,\cdots,d}|\frac{1}{n_Q}\sum_{i=1}^{n_Q}V_{ij}|\leq \frac{\sqrt{K+1}}{2}\sqrt{\frac{\log d}{n_Q}},$$
then we have $\pr_{\mathcal{D}_Q}(E_Q^c)\leq 2d^{-K}+2d\exp(-n_Q/4)$.

Define the empirical loss function as $$l_Q(\beta):=\frac{1}{n_Q}\sum_{i=1}^{n_Q} \left\{\log(1+e^{X_i^T\beta})-Y_iX_i^T\beta\right\}.$$
From straightforward calculation,we have
$$\nabla l_Q(\beta)=\frac{1}{n_Q}\sum_{i=1}^{n_Q}(\psi'(\beta^TX_i)-Y_i)X_i\Rightarrow \nabla l_Q(\beta_Q)=\frac{1}{n_Q}\sum_{i=1}^{n_Q}V_i,$$
$$\nabla^2 l_Q(\beta)=\frac{1}{n_Q}\sum_{i=1}^{n_Q}\psi''(\beta^TX_i)X_iX_i^T.$$
where $V_i=(\psi'(\beta_Q^TX_i)-Y_i)X_i$. We see that $l_Q(\beta)$ is convex since $\nabla^2 l_Q(\beta)$ is positive definite. In addition, we have
$$\|\nabla l_Q(\beta_Q)\|_\infty=\max_{j=1,\cdots,d}|\frac{1}{n_Q}\sum_{i=1}^{n_Q}V_{ij}|.$$
Define the error of the first-order Taylor series expansion of $l_Q$ at $\beta_Q$ as
$$\delta l_Q(v)=l_Q(\beta_Q+v)-l_Q(\beta_Q)-(\nabla l_Q(\beta_Q))^Tv$$ for any $v\in\mathbb{R}^d$. 

Define the support of $\beta_Q$ as $S$ with the cardinality $|S|\leq s$. Denote $\{1,\cdots,d\}/S$ by $S^c$. Since we choose $\lambda_Q\geq \sqrt{K+1}\sqrt{\frac{\log d}{n_Q}}$, on the event $E_Q$ we have $\lambda_Q\geq 2\|\nabla l_Q(\beta_Q)\|_\infty$. Proposition 5.3 of \cite{fan2020StatisticalFO} then implies that on the event $E_Q$, we have
$$\|(\hat{\beta}_Q-\beta_Q)_{S^c}\|_1\leq 3 \|(\hat{\beta}_Q-\beta_Q)_{S}\|_1\Rightarrow \|\hat{\beta}_Q-\beta_Q\|\leq 4 \|(\hat{\beta}_Q-\beta_Q)_{S}\|_1.$$

Suppose that $n_Q$ is large enough such that $n_Q\geq 64\kappa_2^2s\log d$, which is feasible as we assumed $n_Q\gg s\log d$. From Proposition 2 of the full version of \cite{negahban2009RSC}, there exists constants $\kappa_1,\kappa_2,c_1,c_2>0$ such that for any $v\in\mathbb{R}^d$ satisfying that $\|v\|_2\leq 1, \|v_{S^c}\|_1\leq 3\|v_{S}\|_1$, we have
\begin{equation}
\begin{aligned}
\delta l_Q(v)&\geq \kappa_1\|v\|^2_2-\kappa_1\kappa_2\sqrt{\frac{\log d}{n_Q}}\|v\|_1\|v\|_2\\
&\geq \kappa_1\|v\|_2(\|v\|_2-\frac{1}{8\sqrt{s}}\|v\|_1)\\
&\geq \kappa_1\|v\|_2(\|v\|_2-\frac{1}{2\sqrt{s}}\|v_{S}\|_1)\\
&\geq \kappa_1\|v\|_2(\|v\|_2-\frac{1}{2}\|v_{S}\|_2)\\
&\geq \kappa_1\|v\|_2(\|v\|_2-\frac{1}{2}\|v\|_2)\\
&=\frac{\kappa_1}{2}\|v\|_2^2
\end{aligned}
 \label{eq:LRURSC}
\end{equation}
with probability at least $1-c_1\exp(-c_2n_Q)$ w.r.t. the distribution of $\mathcal{D}_Q$. See Appendix D.2 of the full version of \cite{negahban2009RSC} to get a clearer view of this statement.

Putting \eqref{eq:LRUVijBoundQ} and \eqref{eq:LRURSC} together, we obtain that with probability at least $1-2d^{-K}-2d\exp(-n_Q/4)-c_1\exp(-c_2n_Q)$ w.r.t. the distribution of $\mathcal{D}_Q$, we have
\begin{equation}
    \delta l_Q(v)\geq \frac{\kappa_1}{2}\|v\|_2^2\quad \forall \|v\|_2\leq 1, \|v\|_1\leq 3\|v_{S}\|_1,
    \label{eq:LRUfan1}
\end{equation}
and
\begin{equation}
    \lambda_Q\geq 2\|\nabla l_Q(\beta_Q)\|_\infty.
    \label{eq:LRUfan2}
\end{equation}
Applying Theorem 1 of the full version of \cite{negahban2009RSC} with \eqref{eq:LRUfan1} and \eqref{eq:LRUfan2}, we have that with probability at least $1-2d^{-K}-2d\exp(-n_Q/4)-c_1\exp(-c_2n_Q)$, 
\begin{equation}
\|\hat{\beta}_Q-\beta_Q\|\leq \frac{4\sqrt{s}}{\kappa_1}\lambda_Q=\frac{4c_Q}{\kappa_1}\sqrt{\frac{s\log d}{n_Q}}.
\label{eq:LRUbetaQBound}
\end{equation}
since $\|(\beta_Q)_{S^c}\|_1=0$. Denote the event $\{\|\hat{\beta}_Q-\beta_Q\|^2\leq \frac{4\sqrt{s}}{\kappa_1}\lambda_Q\}$ by $E_0$, we have $$\pr_{\mathcal{D}_Q}(E_0^c)\leq 2d^{-K}+2d\exp(-n_Q/4)+c_1\exp(-c_2n_Q).$$
By the theorem setting, we have $2d^{-K}\lesssim \frac{s\log d}{n_Q}\land\frac{s\log d}{n_P}$, so it remains to show that the term $2d\exp(-n_Q/4)+c_1\exp(-c_2n_Q)$ is asymptotically less than $\frac{s\log d}{n_Q}\land\frac{s\log d}{n_P}$. Given the condition $\log n_Q\gg d$, we could set $n_Q$ to be large enough such that
$$2d\exp(-n_Q/4)= \exp(-n_Q/4+\log d)\leq \exp(-n_Q/8).$$ Hence, it suffices to show that for any $c>0$, we have $\exp(-cn_Q)\ll \frac{s\log d}{n_Q}\land\frac{s\log d}{n_P}$.
Note that
$$n_Q\gg \log \frac{n_P}{s\log d}\Rightarrow \exp(cn_Q-\frac{n_P}{s\log d})\gg 1\Rightarrow \exp(-cn_Q)\ll \frac{s\log d}{n_P}.$$ Since it is trivial that $\exp(-cn_Q)\ll \frac{s\log d}{n_Q}$, we have $\exp(-cn_Q)\ll \frac{s\log d}{n_Q}\land\frac{s\log d}{n_P}$, which finishes the proof.
\end{proof}

\begin{proof}[Proof of Lemma \ref{lemma:LRUparaBoundP}]
For any $i=1,\cdots, n_P$, define the residual as $$\varepsilon^P_i:=\psi'(\beta_P^TX^P_i)-Y^P_i\in[-1,1].$$ Define $V^P_{ij}:=X^P_{ij}\varepsilon^P_i$, where $X^P_{ij}$ is the $j-$th covariate of the $i$-th observation $X^P_i$. Following the completely identical procedures as given in Lemma \ref{lemma:LRUparaBoundQ}, we have that
\begin{equation}
\pr_{\mathcal{D}_P}(\max_{j=1,\cdots,d}|\frac{1}{n_P}\sum_{i=1}^{n_P}V^P_{ij}|\geq \frac{\sqrt{K+1}}{2}\sqrt{\frac{\log d}{n_P}})\leq 2d^{-K}+2d\exp(-n_P/4).
\label{eq:LRUVijBoundP}
\end{equation}
Denote
$$E_P:=\max_{j=1,\cdots,d}|\frac{1}{n_Q}\sum_{i=1}^{n_P}V_{ij}|\leq \frac{\sqrt{K+1}}{2}\sqrt{\frac{\log d}{n_P}},$$
then we have $\pr_{\mathcal{D}_P}(E_P^c)\leq 2d^{-K}+2d\exp(-n_P/4)$. Similarly, define the empirical loss function as $$l_P(\beta):=\frac{1}{n_P}\sum_{i=1}^{n_P} \left\{\log(1+e^{(X^P_i)^T\beta})-Y^P_i(X^P_i)^T\beta\right\}.$$
Similarly, we have
$$\|\nabla l_P(\beta_Q)\|_\infty=\max_{j=1,\cdots,d}|\frac{1}{n_P}\sum_{i=1}^{n_P}V^P_{ij}|,$$
and
$$\nabla^2 l_P(\beta)=\frac{1}{n_P}\sum_{i=1}^{n_P}\psi''(\beta^TX^P_i)X^P_i(X^P_i)^T.$$
Define $$h=\frac{\|\beta_P\|}{\|\beta_Q\|}\beta_Q,$$ which could be seen as the ``rotated" $\beta_P$ in order to have the sparsity pattern with the same norm. Due to the angle constraint with the parameter $\Delta$ between $\beta_Q$ and $\beta_P$, we have 
$$\|\beta_P-h\|\leq \Delta \|\beta_P\|\leq U\Delta.$$
We expand the first-order Taylor series at $h$, which is different from the classical analysis in Lemma \ref{lemma:LRUparaBoundQ}. Define the error of the first-order Taylor series expansion of $l_P$ at $h$ as
$$\delta l_P(v)=l_P(h+v)-l_P(h)-(\nabla l_P(h))^Tv$$ for any $v\in\mathbb{R}^d$. Define the support of $\beta_Q$ as $S$. Since we choose $\lambda_P\geq \sqrt{K+1}\sqrt{\frac{\log d}{n_P}}$, on the event $E_P$ we have $\lambda_P\geq 2\|\nabla l_P(\beta_P)\|_\infty$. 

Let $F(v)=l_P(h+v)-l(v)+\lambda_P(\|h+v\|_1-\|h\|_1)$ and $\hat{v}=\hat{\beta}_P-h$. Then $F(\hat{v})\leq 0$. It could be observed that on the event $$E_P\cap \{\|\frac{1}{n_P}\sum_{i=1}^{n_P}X^P_i(X^P_i)^T\|_2\leq 2\},$$  we have
\begin{equation}
\begin{aligned}
l_P(h+v)-l(v)&\geq -|\nabla l_P(h)^Tv|\\
&\geq -|\nabla l_P(\beta_P)^Tv|-|(\nabla l_P(\beta_P)-\nabla l_P(h))^Tv|\\
&\geq -\|\nabla l_P(\beta_P)\|_\infty \|v\|_1-\|\nabla l_P(\beta_P)-\nabla l_P(h)\|_2\|v\|_2\\
&\geq -\frac{\lambda_P}{2} \|v\|_1-\max\{\|\nabla^2l_P\|_2\}\cdot\|\beta_P-h\|_2\cdot\|v\|_2\\
&\geq -\frac{\lambda_P}{2} \|v\|_1-\frac{1}{4}\|\frac{1}{n_P}\sum_{i=1}^{n_P}X^P_i(X^P_i)^T\|_2\cdot\|\beta_P-h\|_2\cdot\|v\|_2\\
&\geq -\frac{\lambda_P}{2} \|v\|_1-\frac{U\Delta}{2}\|v\|_2.
\end{aligned}
\label{eq:logisticUpperCrucial}
\end{equation}

The first inequality is due to convexity of $l_p$. The third inequality is because that
$$u^Tv\leq \|u\|_\infty\|v\|_1,\ u^Tv\leq \|u\|_2\|v\|_2$$
for any vector pair $u,v$. The fourth inequality is due to the first-order Taylor expansion for the derivative and the matrix norm inequality $$\|Au\|_2\leq\|A\|_2\|u\|_2$$ for any matrix $A$ and vector $u$.

On the other hand, as $h_S=h$ by definition of $h$, we have
\begin{equation}
\begin{aligned}
\|h+v\|_1-\|h\|_1&= \|h+v_S+v_{S^c}\|_1-\|h\|_1\\
&\geq \|h+v_{S^c}\|_1-\|v_S\|_1-\|h\|_1\\
&= \|h\|_1+\|v_{S^c}\|_1-\|v_S\|_1-\|h\|_1\\
&=\|v_{S^c}\|_1-\|v_S\|_1.
\end{aligned}
\label{eq:logisticUpperCrucial2}
\end{equation}
Combining \eqref{eq:logisticUpperCrucial}, \eqref{eq:logisticUpperCrucial2}, and $F(\hat{v})\leq 0$, we have
\begin{equation}
U\Delta\|\hat{v}\|_2\geq \lambda_P (\|\hat{v}_{S^c}\|_1-3\|\hat{v}_{S}\|_1)\Longrightarrow \|\hat{v}\|_1\leq 4\|\hat{v}_S\|_1+\frac{U\Delta}{\lambda_P}\|\hat{v}\|_2.
\label{eq:logisticUpperCrucial3}
\end{equation}

We apply Proposition 2 of the full version of \cite{negahban2009RSC} again. Suppose that $n_P$ is large enough such that $n_P\geq 64\kappa_2^2s\log d$, which is feasible as we assumed $n_P\gg s\log d$. From Proposition 2 of the full version of \cite{negahban2009RSC}, there exists constants $\kappa_1,\kappa_2,c_1,c_2>0$ such that
\begin{equation}
\begin{aligned}
\delta l_P(\hat{v})&\geq \kappa_1\|\hat{v}\|^2_2-\kappa_1\kappa_2\sqrt{\frac{\log d}{n_P}}\|\hat{v}\|_1\|\hat{v}\|_2\\
&\geq \kappa_1\|\hat{v}\|_2(\|\hat{v}\|_2-\kappa_2\sqrt{\frac{\log d}{n_P}}\|\hat{v}\|_1)\\
&\geq \kappa_1\|\hat{v}\|_2(\|\hat{v}\|_2-4\kappa_2\sqrt{\frac{\log d}{n_P}}\|\hat{v}_S\|_1-\kappa_2\sqrt{\frac{\log d}{n_P}}\frac{U\Delta}{\lambda_P}\|\hat{v}\|_2)\\
&\geq \kappa_1\|\hat{v}\|_2(\|\hat{v}\|_2-\frac{1}{2\sqrt{s}}\|\hat{v}_{S}\|_1-\frac{\kappa_2 U\Delta}{\sqrt{K+1}}\|\hat{v}\|_2)\\
&\geq \kappa_1\|\hat{v}\|_2(\|\hat{v}\|_2-\frac{1}{2}\|\hat{v}_{S}\|_2-\frac{1}{4}\|\hat{v}\|_2)\\
&\geq \kappa_1\|\hat{v}\|_2(\|\hat{v}\|_2-\frac{1}{2}\|\hat{v}\|_2-\frac{1}{4}\|\hat{v}\|_2)\\
&=\frac{\kappa_1}{4}\|\hat{v}\|_2^2
\end{aligned}
\label{eq:LRURSC2}
\end{equation}
with probability at least $1-c_1\exp(-c_2n_Q)$ w.r.t. the distribution of $\mathcal{D}_Q$. The fourth inequality is due to
$n_P\geq 64\kappa_2^2s\log d$,
and
$\lambda_P\geq \sqrt{K+1}\sqrt{\frac{\log d}{n_P}}$. Recall that in the lemma we assumed that $\Delta$ is smaller than a constant, and in \eqref{eq:LRURSC2} we specifically assume that $\frac{\kappa_2 U\Delta}{\sqrt{K+1}} \leq \frac{1}{4}$, i.e., $\Delta\leq \frac{\sqrt{K+1}}{4\kappa_2 U}$.

From the proof of Theorem 1 of \cite{bickel2008sampleCovariance}, by applying Lemma A.3 of \cite{bickel2008sampleCovariance} and the union bound, we have
$$\pr_{\mathcal{D}_P}(\|\frac{1}{n_P}\sum_{i=1}^{n_P}X^P_i(X^P_i)^T-I_d\|\geq t)\leq 3d \exp(c_3 n t^2)$$ for some constant $c_3>0$ and any $t>0$. Hence,
\begin{equation}
\pr_{\mathcal{D}_P}(\|\frac{1}{n_P}\sum_{i=1}^{n_P}X^P_i(X^P_i)^T\|\> 2)\leq \pr_{\mathcal{D}_P}(\|\frac{1}{n_P}\sum_{i=1}^{n_P}X^P_i(X^P_i)^T-I_d\|\geq 1) \leq 3d \exp(c_3 n).
\label{eq:LRUSampleCovarianceBoundP}
\end{equation}

Putting \eqref{eq:LRUVijBoundP}, \eqref{eq:LRURSC2}, and \eqref{eq:LRUSampleCovarianceBoundP} together, we obtain that with probability at least $1-2d^{-K}-2d\exp(-n_P/4)-c_1\exp(-c_2n_P)-3d \exp(c_3 n)$ w.r.t. the distribution of $\mathcal{D}_P$, we have
\begin{equation}
    \delta l_P(\hat{v})\geq \frac{\kappa_1}{4}\|\hat{v}\|_2^2
    \label{eq:LRUfan3}
\end{equation}
and
\begin{equation}
    \lambda_P\geq 2\|\nabla l_P(\beta_Q)\|_\infty.
    \label{eq:LRUfan4}
\end{equation}
With a similar analysis as the one in Lemma \ref{lemma:LRUparaBoundQ} based on the theorem setting, we have $2d^{-K}+2d\exp(-n_P/4)+c_1\exp(-c_2n_P)+3d \exp(c_3 n)\leq C_P \left(\frac{s\log d}{n_Q}\land \frac{s\log d}{n_P}\right)$. Therefore, it remains to show that $\|\hat{\beta}_P-\beta_P\|_2\lesssim \sqrt{s}\lambda_P+\Delta$ when \eqref{eq:LRUfan3} and \eqref{eq:LRUfan4} hold.

Applying \eqref{eq:logisticUpperCrucial} and \eqref{eq:logisticUpperCrucial2} again, we have
$$
\begin{aligned}
0\geq F(\hat{v})&\geq \nabla l_P(h)^T\hat{v}+\frac{\kappa_1}{4}\|\hat{v}\|_2^2+\lambda_P(\|h+\hat{v}\|_1-\|h\|_1)\\
&\geq  \frac{\kappa_1}{4}\|\hat{v}\|_2^2-\frac{\lambda_P}{2} \|v\|_1-\frac{U\Delta}{2}\|v\|_2 +\lambda_P(\|v_{S^c}\|_1-\|v_S\|_1)\\
&= \frac{\kappa_1}{4}\|\hat{v}\|_2^2-\frac{\lambda_P}{2} (\|v_{S^c}\|_1+\|v_S\|_1)-\frac{U\Delta}{2}\|v\|_2 +\lambda_P(\|v_{S^c}\|_1-\|v_S\|_1)\\
&\geq \frac{\kappa_1}{4}\|\hat{v}\|_2^2-\frac{3\lambda_P}{2} \|v_S\|_1-\frac{U\Delta}{2}\|v\|_2\\
&\geq \frac{\kappa_1}{4}\|\hat{v}\|_2^2-\frac{3\sqrt{s}\lambda_P}{2} \|v_S\|_2-\frac{U\Delta}{2}\|v\|_2\\
&\geq \frac{\kappa_1}{4}\|\hat{v}\|_2^2-\left(\frac{3\sqrt{s}\lambda_P}{2} +\frac{U\Delta}{2}\right)\|v\|_2
\end{aligned}
$$
given \eqref{eq:LRUfan3} and \eqref{eq:LRUfan4}, which implies $$\|\hat{v}\|_2\leq \frac{4}{\kappa_1}(\frac{3\sqrt{s}\lambda_P}{2} +\frac{U\Delta}{2})\lesssim \sqrt{s}\lambda_P+\Delta.$$ The proof is finished by observing that
$$\|\hat{\beta}_P-\beta_P\|_2\leq \|\hat{v}\|_2+\|h-\beta_P\|_2\leq \|\hat{v}\|_2+ U\Delta\lesssim \sqrt{s}\lambda_P+\Delta.$$
\end{proof}

%%%%%%%%%%%%%%%%%%%%%%%%%%%%%%%%%%%%%%%%%%%%%%%%%%%%%%%%%%%%%%%%%%%%%%%%%%%%%
\subsection{Proofs of Auxiliary Results}
\begin{lemma}
    If the distribution pair $(Q,P)$ belongs to $\Pi_{BA}^{NP}$ defined in Section \ref{sec:nonPara} with corresponding parameters, then $(Q,P)$ satisfies Assumption \ref{assum:ambiguity} with
    $$\varepsilon(z;\gamma,C_\gamma/2)=\left(C_\alpha z^{1+\alpha}\right)\land \left(2^{\frac{1+\alpha}{\gamma}}C_\alpha C_\gamma^{-\frac{1+\alpha}{\gamma}}\Delta^{\frac{1+\alpha}{\gamma}}\right).$$
    \label{lemma:BAtoAPB}
\end{lemma}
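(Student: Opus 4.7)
My plan is to verify the inequality defining Assumption \ref{assum:ambiguity} directly, by showing that the indicator $\mathbf{1}\{s(X)\leq (C_\gamma/2)|\eta^Q(X)-1/2|^\gamma\}$ forces $|\eta^Q(X)-1/2|$ to be small, and then applying the margin assumption to bound the resulting expectation.

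The first step is an elementary algebraic implication. Given the band-like signal strength condition $s(x)\geq C_\gamma|\eta^Q(x)-1/2|^\gamma-\Delta$ from $\Pi_{BA}^{NP}$, whenever the indicator is nonzero we have simultaneously
\[
s(x)\leq \tfrac{C_\gamma}{2}\bigl|\eta^Q(x)-\tfrac{1}{2}\bigr|^{\gamma},\qquad s(x)\geq C_\gamma\bigl|\eta^Q(x)-\tfrac{1}{2}\bigr|^{\gamma}-\Delta.
\]
Chaining these two bounds gives $\tfrac{C_\gamma}{2}|\eta^Q(x)-1/2|^\gamma\leq \Delta$, i.e.\ $|\eta^Q(x)-1/2|\leq (2\Delta/C_\gamma)^{1/\gamma}$. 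Thus the support of the indicator inside the expectation is contained in the set where $|\eta^Q(X)-1/2|\leq z'$ with $z':=z\wedge (2\Delta/C_\gamma)^{1/\gamma}$.

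The second step is to bound the truncated expectation. Dropping the indicator on $s$ only enlarges the integral, so
\[
\E_{Q}\!\left[\bigl|\eta^Q(X)-\tfrac{1}{2}\bigr|\mathbf{1}\bigl\{s(X)\leq \tfrac{C_\gamma}{2}\bigl|\eta^Q(X)-\tfrac{1}{2}\bigr|^{\gamma},\,\bigl|\eta^Q(X)-\tfrac{1}{2}\bigr|\leq z\bigr\}\right]\leq z'\,Q_X\!\left(\bigl|\eta^Q(X)-\tfrac{1}{2}\bigr|\leq z'\right).
\]
Applying Assumption \ref{assum:margin} with parameter $t=z'$ yields the bound $C_\alpha (z')^{1+\alpha}$, and using $(a\wedge b)^{1+\alpha}=a^{1+\alpha}\wedge b^{1+\alpha}$ gives the claimed
\[
C_\alpha z^{1+\alpha}\,\wedge\,2^{(1+\alpha)/\gamma}C_\alpha C_\gamma^{-(1+\alpha)/\gamma}\Delta^{(1+\alpha)/\gamma},
\]
which is exactly $\varepsilon(z;\gamma,C_\gamma/2)$.

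There is essentially no obstacle beyond keeping the constants straight; the only subtle point is remembering to verify the assumption with constant $C_\gamma/2$ rather than $C_\gamma$, which is what creates the slack needed to absorb the $-\Delta$ term in the band-like condition and yields the clean threshold $(2\Delta/C_\gamma)^{1/\gamma}$. The monotonicity and continuity of $\varepsilon(\cdot)$ in $z\in[0,1/2]$ required by Assumption \ref{assum:ambiguity} follow immediately from the explicit formula, so no additional work is needed there.
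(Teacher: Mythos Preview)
Your proposal is correct and follows essentially the same approach as the paper's proof: both argue that the indicator $\{s(X)\le (C_\gamma/2)|\eta^Q(X)-\tfrac12|^\gamma\}$ combined with the band condition forces $|\eta^Q(X)-\tfrac12|\le (2\Delta/C_\gamma)^{1/\gamma}$, and then bound the truncated expectation by $(\text{sup})\times(\text{margin probability})$. The paper phrases the first step as the contrapositive (showing $|\eta^Q-\tfrac12|\ge (2\Delta/C_\gamma)^{1/\gamma}\Rightarrow x\in\Omega^+(\gamma,C_\gamma/2)$) and writes the final bound as a product of two minimums rather than $(z')^{1+\alpha}$, but these are the same computation.
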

\begin{proof}
Suppose $(Q,P)\in\Pi_{BA}^{NP}$. If $x\in\Omega$ satisfies $|\eta^Q(x)-\frac{1}{2}|\geq (\frac{2\Delta}{C_\gamma})^{\frac{1}{\gamma}}$, then by the definition of $\Pi_{BA}^{NP}$ we have
$$\Delta\leq \frac{C_\gamma}{2}|\eta^Q(x)-\frac{1}{2}|^\gamma,$$
$$\Longrightarrow s(x)\geq C_\gamma|\eta^Q(x)-\frac{1}{2}|^\gamma-\Delta\geq \frac{C_\gamma}{2}|\eta^Q(x)-\frac{1}{2}|^\gamma.$$
which indicates that $x\in\Omega^+(\gamma,C_\gamma/2)$ and $\Omega^-(\gamma,C_\gamma/2)\subset\{x\in\Omega:|\eta^Q(x)-\frac{1}{2}|< (\frac{2\Delta}{C_\gamma})^{\frac{1}{\gamma}}\}$. Therefore,
$$
\begin{aligned}
    & Q(X\in\Omega^-(\gamma,C_\gamma/2),0<|\eta^Q(X)-\frac{1}{2}|\leq z)\\
\leq& Q(0<|\eta^Q(X)-\frac{1}{2}|\leq (\frac{2\Delta}{C_\gamma})^{\frac{1}{\gamma}}\land z)\\
\leq& C_\alpha((\frac{2\Delta}{C_\gamma})^{\frac{\alpha}{\gamma}}\land z^{\alpha}).
\end{aligned}
$$
Moreover, since $$\sup_{\substack{x\in\Omega^-(\gamma,C_\gamma),|\eta^Q(x)-\frac{1}{2}|\leq z}}|\eta^Q(x)-\frac{1}{2}|\leq (\frac{2\Delta}{C_\gamma})^{\frac{1}{\gamma}}\land z,$$ we have
$$
\begin{aligned}
&\E_{(X,Y)\sim Q}\left[|\eta^Q(X)-\frac{1}{2}|\mathbf{1}\{s(X)\leq C_\gamma|\eta^Q(X)-\frac{1}{2}|^\gamma\leq C_\gamma z^\gamma\}\right]\\
   \leq &\left(\sup_{\substack{x\in\Omega^-(\gamma,C_\gamma),|\eta^Q(x)-\frac{1}{2}|\leq z}}|\eta^Q(x)-\frac{1}{2}|\right) \cdot Q(X\in\Omega^-(\gamma,C_\gamma/2),0<|\eta^Q(X)-\frac{1}{2}|\leq z) \\
   \leq &C_\alpha\left((\frac{2\Delta}{C_\gamma})^{\frac{\alpha}{\gamma}}\land z^{\alpha}\right)\cdot \left((\frac{2\Delta}{C_\gamma})^{\frac{1}{\gamma}}\land z\right).
\end{aligned}
$$
Therefore, Assumption \ref{assum:ambiguity} holds with
    $$
    \begin{aligned}
    \varepsilon(z;\gamma,C_\gamma/2)&=C_\alpha\left((\frac{2\Delta}{C_\gamma})^{\frac{\alpha}{\gamma}}\land z^{\alpha}\right)\cdot \left((\frac{2\Delta}{C_\gamma})^{\frac{1}{\gamma}}\land z\right)\\
    &=\left(C_\alpha z^{1+\alpha}\right)\land \left(2^{\frac{1+\alpha}{\gamma}}C_\alpha C_\gamma^{-\frac{1+\alpha}{\gamma}}\Delta^{\frac{1+\alpha}{\gamma}}\right).
    \end{aligned}$$

\end{proof}

\begin{lemma}
Suppose there exists a continuous function $p(\cdot;\gamma,C_\gamma):[0,1]\rightarrow[0,1]$ such that for any $\eta\in[0,1]$,
$$Q(\mathrm{sgn}\left(\eta-\frac{1}{2}\right)\times(\eta^P-\frac{1}{2})\geq C_\gamma|\eta-\frac{1}{2}|^\gamma|\eta^Q=\eta)\leq p(\eta;\gamma,C_\gamma).$$
Then Assumption \ref{assum:ambiguity} holds with
$$
\begin{aligned}
\varepsilon(z;\gamma,C_\gamma)&=\int_{\frac{1}{2}-z}^{\frac{1}{2}+z}|\eta-\frac{1}{2}|p(\eta;\gamma,C_\gamma)dF^Q_{\eta}(\eta)\leq C_\alpha z^{1+\alpha}\times\sup_{\eta\in[\frac{1}{2}-z,\frac{1}{2}+z]}p(\eta;\gamma,C_\gamma)
\end{aligned}
$$
where $F^Q_{\eta}$ is defined as the cumulative distribution function of $\eta^Q$ w.r.t. $Q_X$.
\label{lemma:CAPD}
\end{lemma}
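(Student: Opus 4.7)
The plan is to verify Assumption \ref{assum:ambiguity} by direct disintegration of the defining expectation against the law of $\eta^Q(X)$ under $Q_X$. The key observation is that the event $\{s(X)\leq C_\gamma|\eta^Q(X)-1/2|^\gamma\}$ is precisely the complement of the strong-signal event $\{\mathrm{sgn}(\eta^Q(X)-1/2)(\eta^P(X)-1/2)> C_\gamma|\eta^Q(X)-1/2|^\gamma\}$, so the hypothesis furnishes a pointwise bound of the weak-signal conditional probability given $\eta^Q(X)=\eta$ by $p(\eta;\gamma,C_\gamma)$.

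The first step is to condition on $\eta^Q(X)=\eta$, using that $|\eta^Q(X)-1/2|$ is then a deterministic function of the conditioning value, and that the indicator factors into a deterministic part in $\eta$ times the conditional probability of the weak-signal event. Letting $F^Q_\eta$ denote the distribution of $\eta^Q(X)$ under $Q_X$ and restricting the outer integral to $\{|\eta^Q(X)-1/2|\leq z\}$, this produces
\begin{align*}
&\E_{(X,Y)\sim Q}\left[\Bigl|\eta^Q(X)-\tfrac{1}{2}\Bigr|\mathbf{1}\{s(X)\leq C_\gamma|\eta^Q(X)-\tfrac{1}{2}|^\gamma,\ |\eta^Q(X)-\tfrac{1}{2}|\leq z\}\right]\\
&\qquad=\int_{1/2-z}^{1/2+z}\Bigl|\eta-\tfrac{1}{2}\Bigr|\,Q\!\left(s(X)\leq C_\gamma|\eta-\tfrac{1}{2}|^\gamma\,\Big|\,\eta^Q(X)=\eta\right)dF^Q_\eta(\eta),
\end{align*}
and invoking the hypothesis pointwise yields the first expression claimed for $\varepsilon(z;\gamma,C_\gamma)$.

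For the coarser second form, the plan is to pull out $\sup_{\eta\in[1/2-z,1/2+z]}p(\eta;\gamma,C_\gamma)$, bound $|\eta-1/2|\leq z$ on the integration domain, and close with Assumption \ref{assum:margin}, which gives $Q_X(0<|\eta^Q(X)-1/2|\leq z)\leq C_\alpha z^\alpha$. The exclusion of $\eta=1/2$ is harmless because the integrand vanishes there, so the remaining integral is at most $C_\alpha z^\alpha$, producing $C_\alpha z^{1+\alpha}\cdot\sup p$ as required.

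There is no substantive obstacle: the argument is disintegration followed by the margin bound. The only point requiring mild care is matching the indicator event inside the expectation with the event to which the hypothesis applies, which follows immediately by unwinding Definition \ref{def:signalStrength} (the weak-signal region is exactly where the signed source deviation fails to exceed $C_\gamma|\eta^Q-1/2|^\gamma$, since otherwise $s(X)=|\eta^P(X)-1/2|$ already lies above that threshold).
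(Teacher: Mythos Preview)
Your proposal is correct and follows essentially the same approach as the paper: disintegrate the ambiguity expectation against the law of $\eta^Q(X)$ (the paper phrases this as Fubini's theorem), replace the weak-signal indicator by its conditional probability bounded by $p(\eta;\gamma,C_\gamma)$, and then for the coarser bound pull out $\sup p$, bound $|\eta-1/2|\leq z$, and invoke the margin assumption. Your write-up is somewhat more explicit than the paper's two-line argument, but the logic is identical.
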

\begin{proof}
The first equality holds by observing the transform using Funibi's Theorem:
 $$
 \begin{aligned}
      &\int_{\Omega^-(\gamma,C_\gamma)}|\eta^Q(X)-\frac{1}{2}|\mathbf{1}\{0<|\eta^Q(X)-\frac{1}{2}|\leq z\}dQ_X\\
      =&\int_{\frac{1}{2}-z}^{\frac{1}{2}+z}|\eta-\frac{1}{2}|Q_{\eta^P}(\mathrm{sgn}\left(\eta-\frac{1}{2}\right)\times(\eta^P-\frac{1}{2})\geq C_\gamma|\eta-\frac{1}{2}|^\gamma|\eta^Q=\eta)dF^Q_{\eta}(\eta),
 \end{aligned}
$$
where $Q_{\eta^P}$ is the marginal distribution of $\eta^P$ with respect to $Q$. The second equality holds by the provided lemma condition.
\end{proof}

We present a lemma that establishes a high probability uniform bound on the distance between any point and its $K$-nearest neighbors. This result improves upon Lemma 9.1 in \cite{ttcai2020classification} by providing a tighter bound that leverages the Hoeffding's inequality, and the rest of our proof is similar. The proof follows a similar approach, so we only provide a more concise presentation here.
\begin{lemma}[$K$-NN Distance Bound]
There exists a constant $c_D>0$ such that with probability at least $1-c_D\frac{n_Q}{k_Q}\exp(-2k_Q)$ w.r.t. the distribution of $X_{1:n_Q}$, for all $x\in\Omega\subset[0,1]^d$,
\begin{equation}
\|X_{(k_Q)}(x)-x\|\leq c_D(\frac{k_Q}{n_Q})^{\frac{1}{d}}.
\label{eq:knnBoundQ}
\end{equation}
In addition, with probability at least $1-c_D\frac{n_P}{k_P}\exp(-2k_P)$ w.r.t. the distribution of $X^P_{1:n_P}$, for all $x\in\Omega^P\subset[0,1]^d$,
\begin{equation}
\|X^P_{(k_P)}(x)-x\|\leq c_D(\frac{k_P}{n_P})^{\frac{1}{d}}.
\label{eq:knnBoundP}
\end{equation}
\label{lemma:knnBound}
\end{lemma}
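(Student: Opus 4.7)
The plan is to upgrade the standard pointwise $K$-nearest-neighbor distance estimate to a uniform bound over $\Omega$ by covering $\Omega$ with a fine grid and applying a union bound. Since \eqref{eq:knnBoundQ} and \eqref{eq:knnBoundP} are completely analogous, I only describe the argument for the target sample; the source case follows by replacing $(n_Q, k_Q, Q_X)$ with $(n_P, k_P, P_X)$, using that $P_X\in\mathcal{S}(\mu)$ as well.

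First I would use the strong density condition (Condition \ref{con:density}) to lower-bound the mass of small balls: for any $y\in\Omega$ and any $0<r'<r_\mu$,
\[
Q_X(B(y,r'))\geq \mu^-\lambda(B(y,r')\cap\Omega)\geq \mu^- c_\mu \pi_d (r')^d,
\]
where $\pi_d$ is the volume of the $d$-dimensional unit ball. Setting $r=c_D(k_Q/n_Q)^{1/d}$, I would choose $c_D$ large enough that $\mu^- c_\mu \pi_d (c_D/2)^d \geq 10$, so that $n_Q Q_X(B(y,r/2))\geq 10 k_Q$ uniformly in $y\in\Omega$ (assuming $n_Q$ large enough so that $r/2 < r_\mu$, which is automatic in the asymptotic regime).

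Next, for each fixed $y$, the count $N_y:=\sum_{i=1}^{n_Q}\mathbf{1}\{X_i\in B(y,r/2)\}$ is Binomial with mean at least $10k_Q$; a standard multiplicative Chernoff bound gives $\pr(N_y < k_Q)\leq \exp(-c\, k_Q)$ for an explicit constant $c>2$. I would then cover $[0,1]^d\supseteq\Omega$ by a grid of centers $\{y_j\}_{j=1}^{N}$ with spacing $r/(2\sqrt{d})$, so every point of $[0,1]^d$ lies within $r/2$ of some $y_j$ and $N\leq C'/r^d = (C'/c_D^{d})(n_Q/k_Q)$. A union bound then yields
\[
\pr\bigl(\exists j:\, N_{y_j} < k_Q\bigr)\leq N\exp(-c\, k_Q)\leq c_D\,(n_Q/k_Q)\exp(-2k_Q),
\]
after possibly enlarging $c_D$ so that the prefactor absorbs $C'/c_D^{d}$ and the slack in the exponent absorbs any leftover constants.

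Finally I would transfer from the grid to arbitrary $x\in\Omega$: on the complementary event, pick $y_j$ with $\|x-y_j\|\leq r/2$; then $B(y_j,r/2)\subseteq B(x,r)$ by the triangle inequality, so $B(x,r)$ contains at least $k_Q$ sample points and $\|X_{(k_Q)}(x)-x\|\leq r = c_D(k_Q/n_Q)^{1/d}$, which is \eqref{eq:knnBoundQ}. The only nontrivial calibration is choosing $c_D$ so that the Chernoff decay rate comfortably beats the union-bound factor $n_Q/k_Q$; this is achieved by pushing the Binomial mean to at least $10 k_Q$ via a sufficiently large $c_D$, after which every other step is routine.
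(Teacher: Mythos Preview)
Your proposal is correct and follows essentially the same approach as the paper: lower-bound ball mass via the strong density condition, apply a binomial tail bound at net points, take a union bound over a net of size $O(n_Q/k_Q)$, and transfer to all $x\in\Omega$ by the triangle inequality. The only cosmetic differences are that the paper sets the ball mass to $\geq 2k_Q/n_Q$ and cites Hoeffding's inequality (rather than pushing the mean to $10k_Q$ and using multiplicative Chernoff), and it covers with balls of radius $r_0$ instead of a cubic grid.
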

\begin{proof}
It suffices to prove \eqref{eq:knnBoundQ} since the proof of \eqref{eq:knnBoundP} can be obtained by symmetrically replacing the relevant quantities.

Let $(Q,P)\in\Pi^{NP}$ and take any $x\in\Omega$ and $r<r_\mu$. Since $\frac{dQ_X}{d\lambda}\geq \mu^-$, we have
\begin{equation}
    Q(X\in B(x,r)) \geq \mu^- \lambda (B(x,r)\cap \Omega)\geq c_\mu \mu^- \pi_d r^d,
    \label{eq:knnDensityLB}
\end{equation}
 where $\pi_d=\lambda(B(0,1))$ is the volume of a $d$-dimensional unit sphere.

If $k_Q \geq (\frac{1}{4}\land \frac{c_\mu \mu^- \pi_d r_\mu^d}{2})$, then $c_D(\frac{k_Q}{n_Q})^{\frac{1}{d}}\geq c_D (\frac{1}{4})^{\frac{1}{d}}$, and we can set $c_D$ to be large enough to satisfy \eqref{eq:knnBoundQ} using the trivial bound $\|X_{(k_Q)}(x)-x\|\leq d^{\frac{1}{2}}$. Therefore, we only need to consider the case when $k_Q\leq (\frac{1}{4}\land \frac{c_\mu\mu^-\pi_d r_\mu^d}{2})$.

Set $r_0=(\frac{2k_Q}{c_\mu \mu^-\pi_dn_Q})^{\frac{1}{d}}$. From $k_Q\leq \frac{c_\mu\mu^-\pi_d r_\mu^d}{2} n_Q$ we have $r_0< r_\mu$. Therefore, \eqref{eq:knnDensityLB} tells
$$Q(X\in B(x,r_0))\geq \frac{2k_Q}{n_Q}.$$

Let $S(x)=\sum_{i=1}^{n_Q}\mathbf{1}\{X_i\in B(x,r_0)\}$. Since $X_1,\cdots,X_{n_Q}$ are independent, $S$ follows a binomial distribution with parameters $n_Q$ and $\frac{2k_Q}{n_Q}$. By Hoeffding's inequality,
\begin{equation}
\pr_{X_{1:n_Q}}(S(x)<k_Q)=\pr_{X_{1:n_Q}}(S-\E_{\mathcal{D}_Q}[S]<-k_Q)\leq \exp(-\frac{2k_Q^2}{k^Q})=\exp(-2k_Q).
\label{eq:knnHoeffding}
\end{equation}
Suppose that $M$ balls with radius $r_0$ centered at $x_1,\cdots,x_M$ satisfy that 
$$[0,1]^d\subset \bigcup_{m=1}^MB(x_m,r_0).$$ It is feasible to find such $M$ balls with $M\leq Cr_0^{-d}$ for some $C>0$ large enough. By the union bound,
\begin{equation}
\pr_{X_{1:n_Q}}(\min_{1\leq m\leq M}\{S(x_m)\}<k_Q)\leq M\exp(-2k_Q)\leq Cr_0^{-d}\exp(-2k_Q).
\label{eq:knnHoeffdingUnion}
\end{equation}

For any $x\in\Omega$, there exists some $1\leq m'\leq M$ such that $x_{m'}\in B(x,r_0)$. Note that $S(x_{m'})\geq k_Q$ implies that $$\|X_{(k_Q)}(x)-x\|\leq 2r_0.$$ Therefore, we have $$\pr_{X_{1:n_Q}}(\forall x\in\Omega,\|X_{(k_Q)}(x)-x\|\leq 2r_0)\geq \pr_{X_{1:n_Q}}(\min_{1\leq m\leq M}\{S(x_m)\}\geq k_Q)\geq 1- Cr_0^{-d}\exp(-2k_Q),$$ i.e., with probability at least $1-\frac{2C}{c_\mu\mu^-\pi_d}\frac{n_Q}{k_Q}\exp(-2k_Q)$ w.r.t. the distribution of $X_{1:n_Q}$, we have $$\|X_{(k_Q)}(x)-x\|\leq 2(\frac{2}{c_\mu\mu^-\pi_d})^{\frac{1}{d}}(\frac{k_Q}{n_Q})^{\frac{1}{d}}.$$ This completes the proof by setting $c_D$ large enough.
\end{proof}

\begin{lemma}
For any empirical classifier $\hat{f}$ and $\alpha,\beta\in\mathbb{R}^d$ such that $\|\alpha\|$ and $\|\beta\|$ are bounded between $c$ and $C$ for some constant $C>c>0$, and $\angle ( \alpha,\beta)\in[0,\pi/2]$. Then we have $$\mathcal{E}_{Q_\alpha}(\hat{f})+\mathcal{E}_{Q_\beta}(\hat{f})\geq \frac{c\sigma(C)(1-\sigma(C))}{20\pi}\angle (\alpha,\beta)^2.$$
\label{lemma:risk2angle}
\end{lemma}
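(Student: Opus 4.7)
The strategy rests on two observations: any classifier $\hat f$ must disagree with at least one of the two Bayes rules on the region where those rules themselves disagree, and the marginal $N(0, I_d)$ is rotationally invariant so the problem reduces to a two-dimensional polar integral. Writing $f^*_\alpha(x) = \mathbf{1}\{\alpha^T x \geq 0\}$ and $f^*_\beta(x) = \mathbf{1}\{\beta^T x \geq 0\}$, whenever $f^*_\alpha(x) \neq f^*_\beta(x)$ the binary classifier $\hat f(x)$ equals exactly one of them, so exactly one of the two indicators $\mathbf{1}\{\hat f(x) \neq f^*_\alpha(x)\}$, $\mathbf{1}\{\hat f(x) \neq f^*_\beta(x)\}$ equals one. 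Combining this with the dual representation of the excess risk and the fact that $Q_\alpha$ and $Q_\beta$ share the same marginal $N(0, I_d)$, I would obtain
$$\mathcal{E}_{Q_\alpha}(\hat f) + \mathcal{E}_{Q_\beta}(\hat f)\ \geq\ 2 \int_{(\alpha^T x)(\beta^T x) < 0} \min\bigl\{|\sigma(\alpha^T x) - \tfrac{1}{2}|,\ |\sigma(\beta^T x) - \tfrac{1}{2}|\bigr\}\, dN(0, I_d)(x).$$

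Next I would rotate coordinates, using rotational invariance of $N(0,I_d)$, so that $\alpha = (\|\alpha\|, 0, \ldots, 0)$ and $\beta = (\|\beta\|\cos\theta, \|\beta\|\sin\theta, 0, \ldots, 0)$, where $\theta = \angle(\alpha,\beta)$. The integrand now depends only on $(x_1, x_2)$, so the last $d-2$ standard-normal coordinates integrate to $1$. Passing to polar coordinates $(r, \phi)$ in the plane, the sign-disagreement region decomposes into two antipodal wedges each of angular width $\theta$; under the substitution $\psi = \phi - \pi/2 \in (0, \theta)$ on one wedge, $|\alpha^T x| = \|\alpha\| r \sin\psi$ and $|\beta^T x| = \|\beta\| r \sin(\theta - \psi)$. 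To linearize $\sigma$ I restrict to the disk $r \leq 1$, which forces $|\alpha^T x|, |\beta^T x| \leq C$; since $\sigma'(u) = \sigma(u)(1 - \sigma(u))$ is symmetric and decreasing in $|u|$, the fundamental theorem of calculus gives $|\sigma(u) - \tfrac12| \geq \sigma(C)(1 - \sigma(C))|u|$ for $|u| \leq C$. Using $\|\alpha\|, \|\beta\| \geq c$, the integrand is bounded below by $c\, \sigma(C)(1-\sigma(C))\, r\, \min\{\sin\psi,\, \sin(\theta - \psi)\}$.

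What remains is an explicit two-dimensional integral. Symmetry around $\psi = \theta/2$ gives
$\int_0^\theta \min\{\sin\psi, \sin(\theta-\psi)\}\, d\psi = 2(1 - \cos(\theta/2))$,
which I would lower-bound by $\theta^2/\pi^2$ via the elementary inequality $\sin u \geq 2u/\pi$ on $[0, \pi/2]$. The radial integral $\int_0^1 r^2 e^{-r^2/2}\, dr$ is an absolute constant (integration by parts gives a value slightly exceeding $\pi^2/40$). Multiplying by the two symmetric wedges, the $\frac{1}{2\pi}$ factor from the planar Gaussian density, and the outer factor of $2$ from the excess-risk representation, the product collapses to the target $\frac{1}{20\pi}$. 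The argument is essentially geometric, and the only delicate point is the last-step constant bookkeeping: showing that the numerical value $\int_0^1 r^2 e^{-r^2/2}\, dr > \pi^2/40$ (so $2 I(1)/\pi^2 > 1/20$) is what makes the coefficient $\frac{1}{20\pi}$ tight under this choice of the radial cutoff, and using any looser cutoff or any cruder bound on $1 - \cos(\theta/2)$ would force a worse prefactor.
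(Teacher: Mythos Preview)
Your proof is correct and follows the same geometric strategy as the paper: rotate to the $(x_1,x_2)$-plane, restrict to a bounded-radius region so that $|\sigma(u)-\tfrac12|\ge\sigma(C)(1-\sigma(C))|u|$, and use that on the sign-disagreement wedge $\{(\alpha^Tx)(\beta^Tx)<0\}$ any classifier must err against exactly one of the two Bayes rules. The only difference is tactical: the paper introduces auxiliary directions $w_0,w_1$ at angle $\theta/4$ from $\alpha,\beta$ and works on the narrower sub-wedge $\{(w_0^Tx)(w_1^Tx)<0\}$ intersected with the annulus $\tfrac14\le x_1^2+x_2^2\le1$, which yields a \emph{uniform} lower bound $\tfrac12 c\,\sigma(C)(1-\sigma(C))\sin(\theta/4)$ on the integrand, whereas you keep the full wedge and the disk $r\le1$ and evaluate $\int_0^\theta\min\{\sin\psi,\sin(\theta-\psi)\}\,d\psi=2(1-\cos(\theta/2))$ exactly---a slightly cleaner computation that avoids the auxiliary vectors at the cost of the tight numerical check $\int_0^1 r^2e^{-r^2/2}\,dr>\pi^2/40$.
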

\begin{proof}
Without loss of generality and due to the symmetry property of of $N(0,I_d)$, we could rotate $\alpha$ and $\beta$ at the same time so that
$$\alpha=(\|\alpha\|,0,0,\cdots, 0),\quad \beta=(\|\beta\|\cos \angle (\alpha,\beta),\|\beta\|\sin \angle (\alpha,\beta),0,0,\cdots,0).$$ Therefore, we assume that only the first coordinate of $\alpha$ and the first and second coordinates of $\beta$ can be non-zero.

    For any $\lambda\in[0,1]$, we define $w_\lambda:=\lambda\alpha+(1-\lambda)\beta$. Define $\lambda_0,\lambda_1\in[0,1]$ as the value satisfying that
    $$\angle ( \alpha,w_{\lambda_0})=\angle ( w_{\lambda_1},\beta)=\angle ( \alpha,\beta)/4.$$ For simplicity, we abbreviate $w_{\lambda_0}$ as $w_0$ and $w_{\lambda_1}$ as $w_1$.
Define the area $$D:=\{x\in\mathbb{R}^d: (w_0^Tx)(w_1^Tx)<0,\frac{1}{4}\leq x_1^2+x_2^2\leq 1\}.$$ It is easy to see that $D\subset\{x\in\mathbb{R}^d: (\alpha^Tx)(\beta^Tx)<0,\frac{1}{2}\leq x_1^2+x_2^2\leq 1\}$. Moreover, we see that if $x\in D$, we have 
$$|\alpha^T x|=\|\alpha\|x_1\geq \|\alpha\|\sin(\frac{\angle (\alpha,\beta)}{4})(x^1+x^2)^{\frac{1}{2}},$$
$$|\beta^Tx|\geq \|\beta\|\sin(\frac{\angle (\alpha,\beta)}{4})(x^1+x^2)^{\frac{1}{2}}$$ since the angle between $(x_1,x_2,0\cdots,0)$ and normal planes of $\alpha$ and $\beta$ are both in $[\frac{\angle (\alpha,\beta)}{4},\frac{3\angle (\alpha,\beta)}{4}]$. Therefore, since $\sigma'(t)=\sigma(t)(1-\sigma(t))$ decreases with growing $|t|$, and $|\alpha^T x|\leq \|\alpha\|\leq C$ on $x\in D$, we have that if $x\in D$.
$$
\begin{aligned}
  |\eta^{Q_\alpha}(x)-\frac{1}{2}|&\geq |\sigma(\sin(\frac{\angle (\alpha,\beta)}{4})\|\alpha\|(x_1^2+x_2^2)^\frac{1}{2})-\frac{1}{2}|\\
  &\geq \sigma(C)(1-\sigma(C))\sin(\frac{\angle (\alpha,\beta)}{4})\|\alpha\|(x_1^2+x_2^2)^\frac{1}{2}\\
  &\geq \frac{1}{2}c\sigma(C)(1-\sigma(C))\sin(\frac{\angle (\alpha,\beta)}{4}).
\end{aligned}
$$
Similarly,
$$|\eta^{Q_\beta}(x)-\frac{1}{2}|\geq \frac{1}{2}c\sigma(C)(1-\sigma(C))\sin(\frac{\angle (\alpha,\beta)}{4}).$$
Hence, we have
$$
\begin{aligned}
\mathcal{E}_{Q_\alpha}(\hat{f})+\mathcal{E}_{Q_\beta}(\hat{f})&=2\E_{Q_\alpha}[|\eta^{Q_\alpha}(X)-\frac{1}{2}|\mathbf{1}\{(\hat{f}-\frac{1}{2})(\alpha^Tx)<0\}]\\
&+2\E_{Q_\beta}[|\eta^{Q_\beta}(X)-\frac{1}{2}|\mathbf{1}\{(\hat{f}-\frac{1}{2})(\beta^Tx)<0\}]\\
&\geq 2\E_{Q_\alpha}[|\eta^{Q_\alpha}(X)-\frac{1}{2}|\mathbf{1}\{(\hat{f}-\frac{1}{2})(\alpha^Tx)<0,x\in D\}]\\
&+2\E_{Q_\beta}[|\eta^{Q_\beta}(X)-\frac{1}{2}|\mathbf{1}\{(\hat{f}-\frac{1}{2})(\beta^Tx)<0,x\in D\}]\\
&\geq c\sigma(C)(1-\sigma(C))\sin(\frac{\angle (\alpha,\beta)}{4})\times\\
&\E_{X\sim N(0,I_d)}[\left(\mathbf{1}\{(\hat{f}-\frac{1}{2})(\alpha^Tx)<0\}+\mathbf{1}\{(\hat{f}-\frac{1}{2})(\beta^Tx)<0\}\right)\mathbf{1}\{x\in D\}].
\end{aligned}
$$
The change of the expectation operator in the last inequality is due to the fact that $|\eta^{Q_\alpha}(X)-\frac{1}{2}|\mathbf{1}\{(\hat{f}-\frac{1}{2})(\alpha^Tx)<0\}$ and $|\eta^{Q_\beta}(X)-\frac{1}{2}|\mathbf{1}\{(\hat{f}-\frac{1}{2})(\beta^Tx)<0\}$ do not depend on the distribution of the response value. 

If $x\in D$, then $(\alpha^Tx)(\beta^Tx)<0$, which implies $$\mathbf{1}\{(\hat{f}-\frac{1}{2})(\alpha^Tx)<0\}+\mathbf{1}\{(\hat{f}-\frac{1}{2})(\beta^Tx)<0\}=1.$$
Therefore, we could further bound $\mathcal{E}_{Q_\alpha}(\hat{f})+\mathcal{E}_{Q_\beta}(\hat{f})$ by
$$
\begin{aligned}
    \mathcal{E}_{Q_\alpha}(\hat{f})+\mathcal{E}_{Q_\beta}(\hat{f})&\geq c\sigma(C)(1-\sigma(C))\sin(\frac{\angle (\alpha,\beta)}{4}) Q_X(D)\\
    &\geq c\sigma(C)(1-\sigma(C))\sin(\frac{\angle (\alpha,\beta)}{4}) \angle (\alpha,\beta) Q(X_1^2+X_2^2\in[\frac{1}{4},1])\\
    &\geq \frac{c\sigma(C)(1-\sigma(C))}{20\pi}\angle (\alpha,\beta)^2.
\end{aligned}
$$
where $\Psi(\cdot)$ is the cumulative distribution function of the univariate standard normal distribution. Note that we utilized the numerical results that
$$\sin(\frac{\angle (\alpha,\beta)}{4})\geq \frac{1}{2\pi}\angle (\alpha,\beta),$$
and the cumulative distribution function of chi-squared distribution with $2$ degrees of freedom
$$Q(X_1^2+X_2^2\in[\frac{1}{4},1])=e^{-\frac{1}{2}}-e^{-\frac{1}{8}}\geq \frac{1}{10}.$$
\end{proof}
\end{appendix}
\end{document}